\documentclass[]{fairmeta}

\usepackage[utf8]{inputenc} 
\usepackage[T1]{fontenc}    

\usepackage{booktabs}       
\usepackage{amsfonts}       
\usepackage{nicefrac}       
\usepackage{microtype}      
\usepackage[dvipsnames]{xcolor}         
\usepackage{pifont}
\usepackage{fullpage}
\usepackage{natbib}
\usepackage{enumitem}
\usepackage{amsmath}
\usepackage{amsthm}
\usepackage{amssymb}
\usepackage{wrapfig}
\usepackage{diagbox}
\usepackage{mathtools}

\usepackage{titletoc}

\usepackage{tcolorbox}
\usepackage{adjustbox}

\usepackage{lmodern}

\usepackage{mleftright}

\usepackage{algorithm}
\usepackage{algpseudocode}

\usepackage{graphicx}       

\usepackage{tikz}
\usepgflibrary{shapes.geometric}
\usetikzlibrary{angles, quotes, calc, automata, positioning, arrows.meta, backgrounds, matrix, arrows,shapes.multipart,fit}
\definecolor{gold(metallic)}{rgb}{0.83, 0.69, 0.22}

\usepackage{dsfont}
\theoremstyle{plain}
\newtheorem{theorem}{Theorem}
\newtheorem{proposition}{Proposition}
\newtheorem{lemma}{Lemma}
\newtheorem{corollary}{Corollary}
\theoremstyle{definition}
\newtheorem{definition}{Definition}
\theoremstyle{remark}
\newtheorem{remark}{Remark}

\usepackage{comment}
\excludecomment{mycomment}


\newcommand{\innerprod}[2]{\left\langle #1,#2 \right\rangle}

\newcommand{\thetab}{\bm{\theta}}

\renewcommand{\tilde}{\widetilde}

\newcommand{\code}[1]{{\fontfamily{lmtt}\selectfont\texttt{#1}}}

\usepackage{bm}
\usepackage{nicematrix}

\title{How Reinforcement Learning After Next-Token Prediction Facilitates Learning}

\author[1,*]{Nikolaos Tsilivis}
\author[2,\ddagger]{Eran Malach}
\author[3,\dagger]{Karen Ullrich}
\author[1,3,\dagger]{Julia Kempe}

\affiliation[1]{New York University}
\affiliation[2]{Harvard University}
\affiliation[3]{FAIR, Meta}

\contribution[*]{Work done during an internship at Meta}
\contribution[\ddagger]{Currently at Apple}
\contribution[\dagger]{Joint senior authors}

\abstract{Recent advances in reasoning domains with neural networks have primarily been enabled by a training recipe that optimizes Large Language Models, previously trained to predict the next-token in a sequence, with reinforcement learning algorithms. We introduce a framework to study the success of this paradigm, and we theoretically expose the optimization mechanisms by which reinforcement learning improves over next-token prediction in this setting.

We study learning from mixture distributions of short and long ``chain-of-thought'' sequences encoding a single task. In particular, when the task consists of predicting the parity of $d$ bits and long sequences are rare, we show how reinforcement learning after next-token prediction enables autoregressive transformers to generalize, whereas mere next-token prediction requires extreme statistical or computational resources to do so. We further explain how reinforcement learning leverages increased test-time computation, manifested in longer responses, to facilitate this learning process. In a simplified setting, we theoretically prove that autoregressive linear models following this training recipe can efficiently learn to predict the parity of $d$ bits as long as the proportion of long demonstrations in the data mix is not exponentially small in the input dimension $d$.
Finally, we demonstrate these same phenomena in other settings, including the post-training of Llama-series models on mixture variations of common mathematical reasoning benchmarks.}

\date{\today}
\correspondence{Nikolaos Tsilivis at \email{nt2231@nyu.edu}}

\begin{document}

\maketitle

\section{Introduction}

The application of reinforcement learning techniques to large neural networks has led to many machine learning advances and successes~\citep{Sil+16,Sil+17,Ber+19,Bak+23}.
In the realm of Large Language Models (LLMs), a recent paradigm that leverages reinforcement learning consists of two main ingredients: training a large model, such as an autoregressive transformer~\citep{Vas+17} on diverse types of data (sequences) from the web with a next-token prediction objective~\citep{Rad+19}, followed by a fine-tuning stage with a reinforcement learning
algorithm that seeks to improve model generations with respect to a reward function\footnote{There is sometimes an intermediate stage of supervised fine-tuning that aims to hone specific model skills, but we treat it as part of the first stage of next-token prediction optimization in our discussion.}.
The latter part is, simply put, a \textit{guess-and-check} procedure~\citep{Rec25}: the model generates one or more guesses per prompt, a reward function evaluates them (usually for correctness), and training proceeds on (prompts, guesses) with a next-token prediction loss that weights each sequence according to its reward. When this procedure is applied  on a frontier model using a mathematical, logical or reasoning reward, it leads to rapid increase in the model's capabilities in related domains and is often accompanied by a significant length increase in model response.
Prominent examples of such ``reasoning'' language models include OpenAI's o1 model~\citep{Jae+24} 
and Deepseek's R1 model~\citep{Deep+25}. The prolonged LLM generation that precedes an answer has been dubbed the ``thinking process'' of the model, as it often resembles the ``chain-of-thought''~\citep{Nye+21,Wei+22} of a human expert solving said task. 

\begin{figure}
    \centering
    \raisebox{22pt}{\includegraphics[scale=0.125]{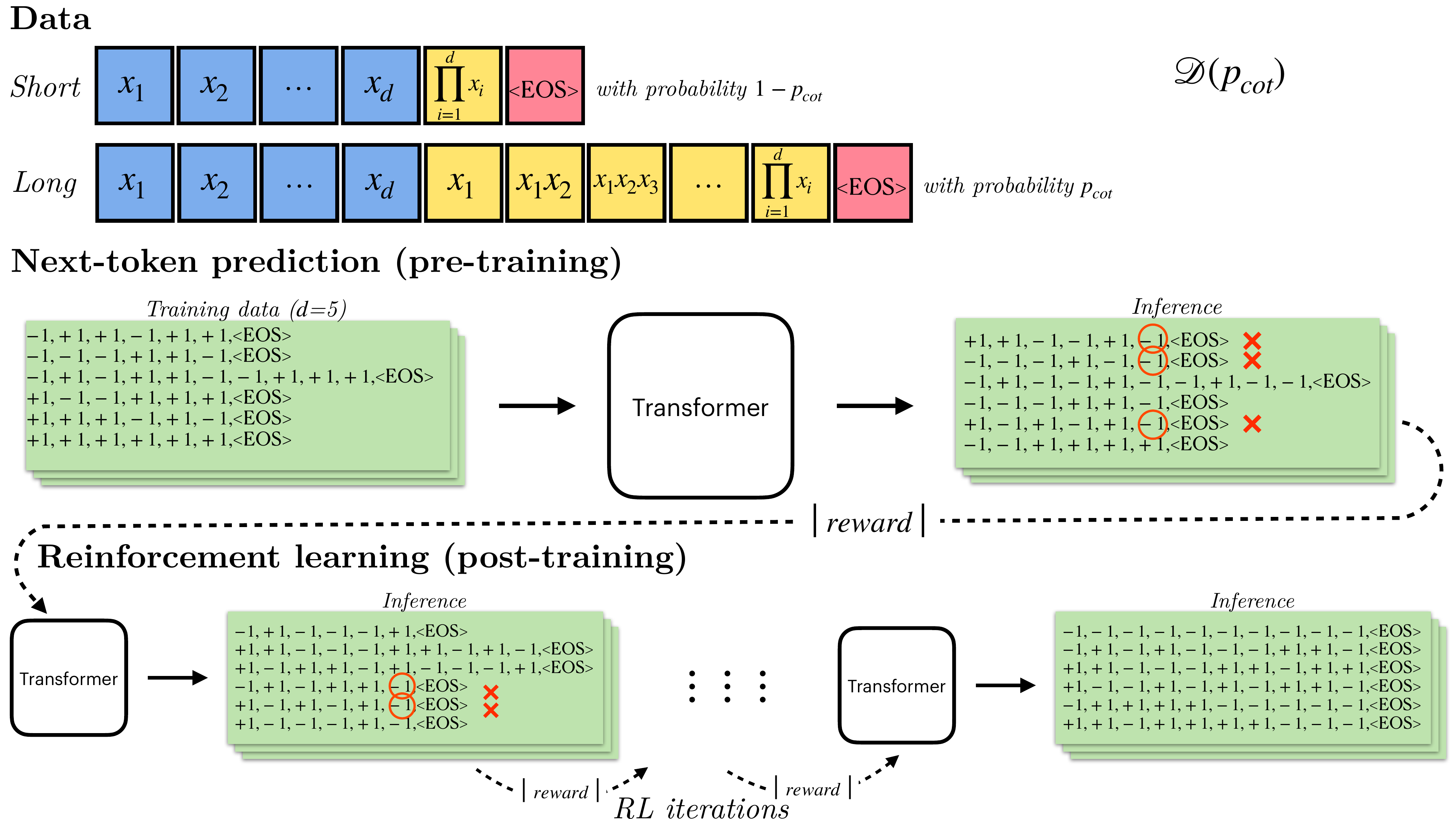}}
    \hspace{10pt}
    \includegraphics[scale=0.22]{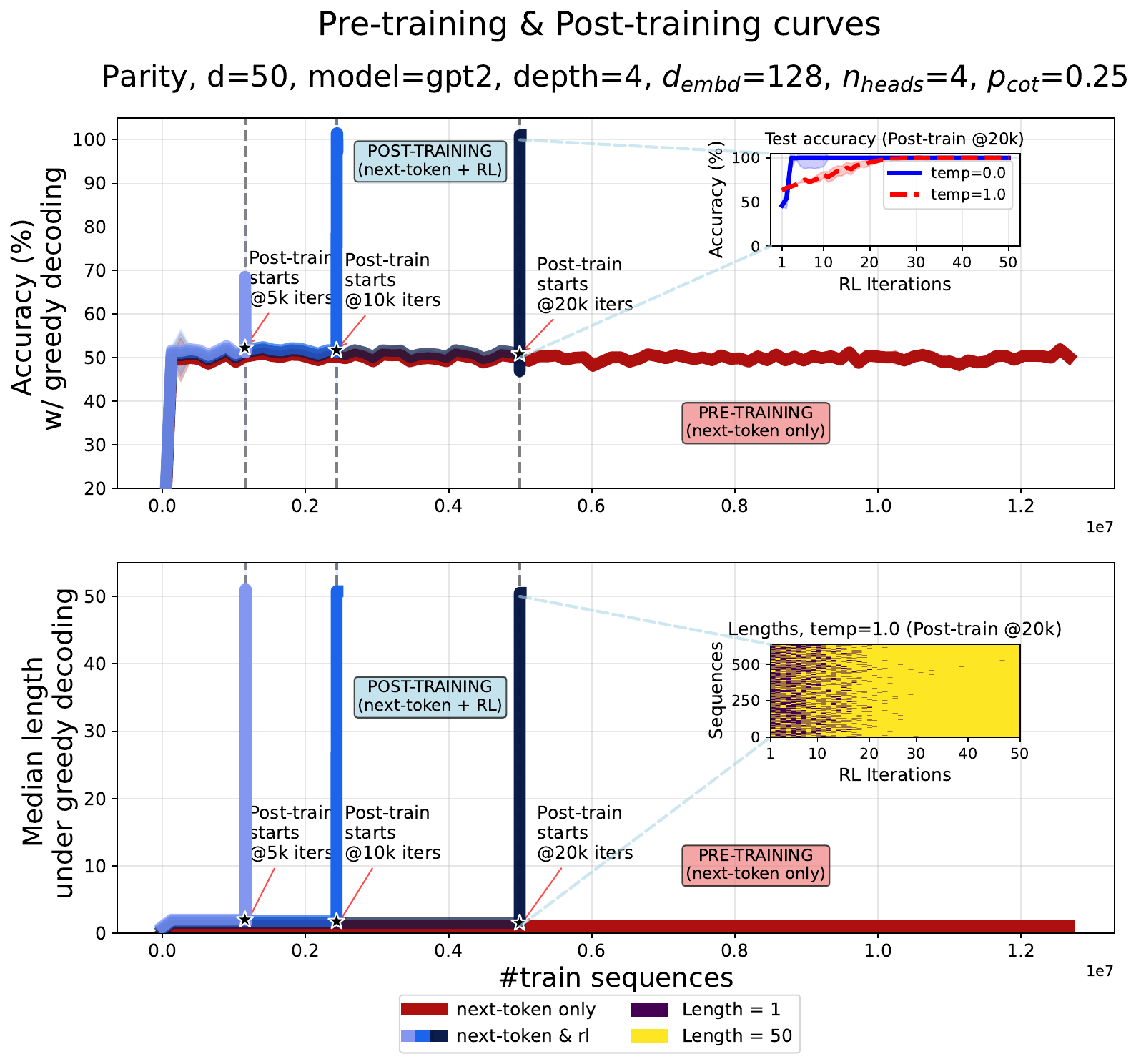}
    \caption{\textit{Left}: \textbf{An illustration of our main learning setting:} a mixture of long and short sequences encoding the parity of $d$ bits, along with a representation of pre-training and post-training for $d$=5. \textit{Right}: \textbf{Advantage of next-token prediction followed by reinforcement learning over mere next-token prediction} in predicting the parity of $d$=50 bits with a transformer trained from scratch. The red line corresponds to pre-training using next-token prediction, while blue lines correspond to the same pre-training runs, each followed by GRPO (with a final token accuracy reward) from a different checkpoint. Lines correspond to median across 3 seeds. \textit{Top}: Test accuracy under greedy decoding. \textit{Bottom}: Median length of model response under greedy decoding. The inset figures zoom-in on the curves during post-training. Accuracy plateaus around 50\% (random guess) during pre-training and then immediately grows during post-training. Also, length increases during post-training.}
    \label{fig:ad_figure}
\end{figure}

In this paper, we study how autoregressive models succeed in solving challenging prediction tasks by following this training recipe (next-token prediction followed by reinforcement learning 
\footnote{In the argot of LLMs, these two stages are often called pre-training and post-training, respectively.}). To provide a theoretical account, we make a central simplification: we assume that pre-training data already contain correct and elaborate but perhaps \textit{rare} demonstrations for a task of interest. We argue this is a natural and productive way to view internet-scale data, even for the most challenging tasks.
Based on this modeling assumption, we study and explain:
\begin{enumerate}
    \item Why in certain cases it is difficult for a model to generalize during pre-training.
    \item How reinforcement learning (RL) leads to a rapid improvement in terms of samples, and
    \item What optimization pressures cause increase
    in the length of the response.
\end{enumerate}

Specifically, we study learning from a distribution that contains both short and long, chain-of-thought-like, sequences encoding a single task. The first part of the paper (Sections~\ref{sec:setup}, \ref{sec:parity_experiments}, \ref{sec:theory}) is devoted to the task of predicting the \textit{parity} of $d$ bits, while later on we consider more variations of this learning setup. Predicting the parity, or XOR, of objects has been an influential learning problem in the study of artificial neural networks~\citep{MiPa69,SSS17,DaMa20,AAM23,Bar+22,Gla24}, statistical learning more broadly~\citep{Kea98} and biological learning in animals~\citep{How+22}.
Through extensive empirical simulations on transformers trained from scratch (Section~\ref{sec:parity_experiments}), we show that if the task complexity is high (large input dimension $d$) and long demonstrations are scarce, then mere next-token prediction results in a model that fails to predict the parity on unseen examples.
On the other hand, when switching to an RL algorithm with a correctness reward, model performance improves rapidly and the length of the generated sequences increases. The results hold for a large array of model choices, problem and optimization hyperparameters, and reinforcement learning algorithms. Figure~\ref{fig:ad_figure} (Left: Illustration, Right: Simulations) summarizes the landscape.

We demonstrate that this learning behaviour is primarily driven by two factors: (a) during pre-training, the model learns much faster from long demonstrations than from short ones and remains calibrated 
in how often it generates long versus short responses, and (b) during post-training, when we reward successful generations and train with a reward-weighted loss, we tend to amplify the presence of long responses in the training batches, as the model has a much higher probability of being correct when the response is long rather than short. This is the mechanism that leads to length increase which in turn enables generalization. We must emphasize that the transformer's inability to learn during pre-training in this setup is not due to insufficient model size or depth, but rather due to insufficient number of samples; an estimation, not an approximation, limitation.

In Section~\ref{sec:theory}, we theoretically capture most of the previously observed phenomena in a simplified setting. Let $p_{\mathrm{cot}}$ be the proportion of long data in the parity mixture distribution $\mathcal{D}(p_{\mathrm{cot}})$.
We analyze pre-training (next-token prediction) with Stochastic Gradient Descent on $\mathcal{D}(p_{\mathrm{cot}})$ for an autoregressive architecture that consists of a series of linear models~\citep{Mal24}, while for post-training, we consider the STaR algorithm~\citep{Zel+22} (which is a REINFORCE~\citep{Wil92}-type algorithm) with a reward that verifies correctness of the whole chain-of-thought. In this setting, we prove that the model: (a) fails to generalize under greedy decoding during the course of pre-training if long demonstrations are rare (i.e., if $p_{\mathrm{cot}} < 1/3$, which matches the empirical threshold observed with transformers) (Theorem~\ref{thm:pretraining_main}), (b) learns fast from long demonstrations and remains length-calibrated (Theorem~\ref{thm:pretraining_main}), and (c) succeeds in generalizing after the completion of both pre- and post-training in $O(\mathrm{poly}(d))$ SGD iterations, provided that long demonstrations are not exponentially rare in the data distribution, while this is accompanied with length increase during RL (Theorem~\ref{thm:posttraining_main}). The analysis captures the immediate progress during RL seen in practice: as we show, only $O\left(\log \frac{1-p_{\mathrm{cot}}}{p_{\mathrm{cot}}}\right)$ RL rounds suffice to obtain a generalizing model. To the best of our knowledge, this suite of results provides the first theoretical separation between next-token prediction and next-token prediction combined with RL in the autoregressive setting, as well as the first optimization result demonstrating length increase during RL in LLMs.

Finally, in Section~\ref{sec:exprs_math}, we experimentally probe the same behavior in other settings, including GPT2~\citep{Rad+19} models trained from scratch on the task of number multiplication, and pre-trained Llama 3-8B models~\citep{Dub+24} that are fine-tuned (first to predict the next-token and then with RL) on a short/long mixture version of common reasoning benchmarks (GSM8K~\citep{Cob+21} and MATH~\citep{Hen+21} datasets).

\section{Setup: Mixture of Long and Short Sequences Encoding Parity}\label{sec:setup}

In this section, we present our main learning problem and the models and algorithms we consider.

\paragraph{Notation} We denote the set $\{1, 2, \ldots, n\}$ by $[n]$. We denote vectors with bold latin letters, e.g. $\mathbf{x} \in \mathbb{R}^d$. We denote sequences of characters from a set $\mathcal{V}$ using parentheses, e.g. $(a_1, a_2, \ldots, a_n), \, a_i \in \mathcal{V}, \, i \in [n]$. The notation $z \in \mathcal{V}^\ast$ denotes a finite concatenation of elements of $\mathcal{V}$ to produce sequence $z$ (Kleene star). When we write an equality between sequences, for instance $(a_1, a_2, \ldots, a_{n_1}) = (b_1, b_2, \ldots, b_{n_2})$, we overload notation and allow either exact match $n_1 = n_2, \, a_i = b_i, \, \forall \, i \in [n_1]$ or substring match $n_1 > n_2,\,  a_i = b_i, \, \forall \, i \in [n_2]$. We denote a Bernoulli random variable with parameter $p$ as $\mathrm{Ber}(p)$, a Rademacher random variable (uniform in $\{\pm 1\}$) as $\mathrm{Rad}(1/2)$, and we use the tensor product symbol for product distributions, e.g. $\mathbf{x} \sim \mathrm{Rad}(1/2)^{\otimes d}$, when $x_i \sim \mathrm{Rad(1/2)}$ for all $i$. We use asymptotic notation $f(n) = O(g(n))$ if there exists $c >0, n_0 \in \mathbb{R}$ such that $f(n) \leq c g(n), \, \forall \, n \geq n_0$. We hide logarithmic terms with the notation $f(n) = \tilde{O}\left( g(n) \right)$ as a shorthand for $f(n) = O\left( g(n) \log^{k} n  \right)$ for some constant $k \in \mathbb{N}$. The notation $a \lesssim b$ denotes $a \leq C b$ for some constant $C > 0$.

\paragraph{Data} 
We study the task of predicting the parity of $d$ bits given access to a source of sequences which either consist of: (i) input bits and their parity, or (ii) input bits, intermediate computations and the final parity -- see Figure~\ref{fig:ad_figure} for a demonstration. 
Let $\mathcal{X} = \{\pm 1\}^d$ be the input space of $d$ bits 
and $\mathcal{Y} = \{\pm 1, \code{<EOS>}\}^\ast$ be the output space of sequences, where $\code{<EOS>}$ is a special symbol denoting the end of a string. Let $\mathcal{D}(p_{\mathrm{cot}})$ be a distribution over $\mathcal{X} \times \mathcal{Y}$, parameterized by $p_{\mathrm{cot}} \in [0, 1]$, such that: $x_1, \ldots, x_d \sim \mathrm{Rad}(1/2)$
and $\left(y_1, \ldots, y_{d+1}\right) = Z (x_1, x_1x_2, \ldots, \prod_{i = 1}^d x_i, \code{<EOS>}) + (1-Z) \left(\prod_{i = 1}^d x_i, \code{<EOS>} \right)$ where $Z \sim \mathrm{Ber}(p_{\mathrm{cot}})$.
We will primarily be focused on the case where the mixture weight $p_{\mathrm{cot}}$ is small, i.e., where the distribution contains more short responses than long. 

Predicting the parity of uniform random bits is a difficult learning problem for neural networks, as it is believed to require exponentially many samples/iterations~\citep{SSS17,DaMa20,ShSh25}. On the other hand, when intermediate computations are available in the form of an elaborate chain of thought, the parity function, as any other efficiently computable binary function, can be learned efficiently~\citep{Jos+25,Mal24}.

\paragraph{Models}

In our experiments, we train decoder-only autoregressive, transformers~\citep{Vas+17}, a standard approach for language and sequence modeling. 
We experiment with the GPT-2~\citep{Rad+19} and Mistral~\citep{Jia+23} variants of the transformer architecture. See Appendix~\ref{ssec:parity_more} for further details on the specifics of the architectures. 

\paragraph{Pre-training} In the first stage of training (pre-training), we train a transformer to estimate the probability of the next token on sequences drawn from $\mathcal{D}(p_{\mathrm{cot}})$~\citep{Rad+19}. 
We consider online learning, sampling a fresh batch of data at each iteration. 

\paragraph{Post-training} After several steps of pre-training, we switch training algorithm and further optimize the model with reinforcement learning methods. 
We consider three reinforcement learning algorithms: \textit{STaR}~\citep{Zel+22}, \textit{REINFORCE}~\citep{Wil92} and \textit{GRPO}~\citep{Sha+24}. At a high level, all three algorithms seek to maximize the reward of model generations. We defer their description to Appendix~\ref{sec:app_algorithms}.
We focus on two types of rewards:
\begin{itemize}
    \item \textit{End-to-end correctness}: The reward function assesses whether the last token is equal to the correct answer:
    $
        r_{\text{e2e}}(\mathbf{x}, y) = \mathds{1} \left\{ y[-1] = \prod_{i = 1}^d x_i \right\},
    $
    where $y[-1] \in \{\pm 1\}$ denotes the token appearing right before the $\code{<EOS>}$ token in sequence $y$.
    \item \textit{Chain-of-thought correctness}: The reward function assesses whether the whole sequence is valid:
    $
        r_{\text{cot}}(\mathbf{x}, y) = 
            \mathds{1} \left\{y  = \left(x_1, x_1 x_2, \ldots, \prod_{i=1}^d x_i, \code{<EOS>} \right) \; \lor \; y = \left( \prod_{i=1}^d x_i, \code{<EOS>} \right) \right\}.
    $
\end{itemize}

For all post-train methods, we can optionally set the sampling temperature $\tau_{\mathrm{RL}}$ to a value different than 1. 
\section{Experiments on Parity}\label{sec:parity_experiments}

We present our experiments with autoregressive transformers trained on the mixture distribution $\mathcal{D}(p_{\mathrm{cot}})$ of long and short sequences encoding the parity of $d$ bits. Throughout training, we evaluate (test) accuracy in the prediction task, by measuring whether the token generated before $\texttt{<EOS>}$ equals the parity of the input. We also measure the length of the generated sequence. 

\subsection{Pre-training vs (Pre- \& Post-training)}

\begin{figure}
    \centering
    \includegraphics[scale=0.35]{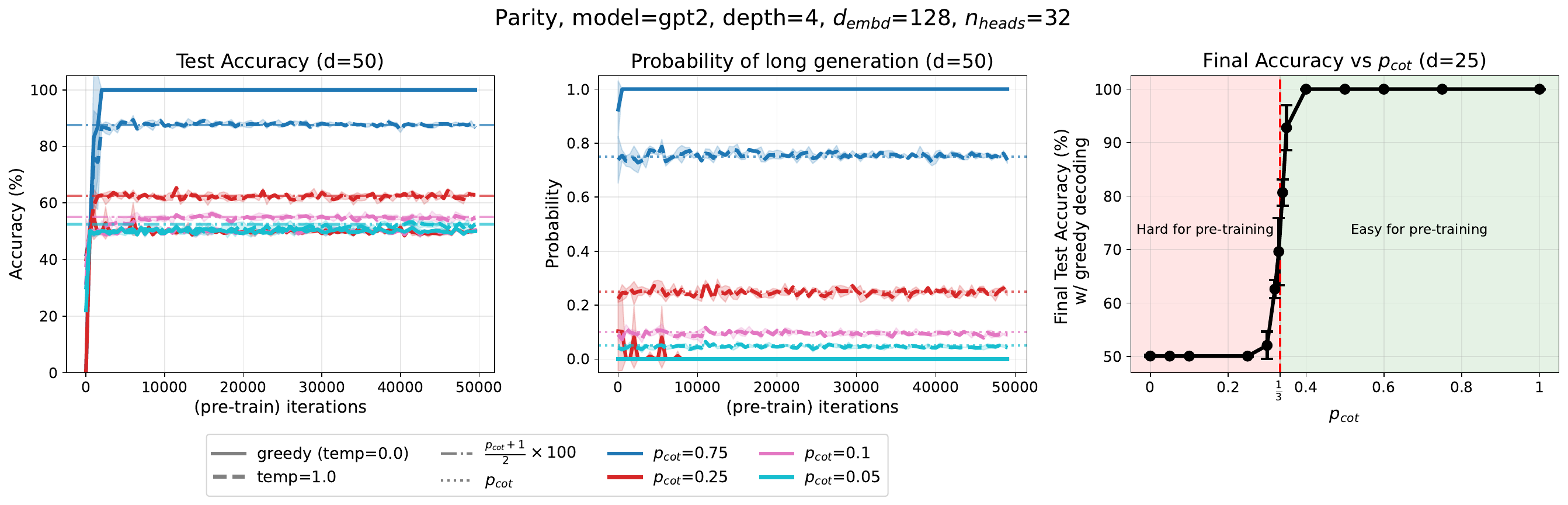}
    \caption{\textbf{Pre-training of transformers on mixture of long and short sequences encoding the parity of $d$ bits}. \textit{Left}: Test accuracy during the course of pre-training. \textit{Center}: The probability that a model's generation length is equal to the maximum length present in the training distribution (which equals $d$). Solid lines correspond to greedy decoding, while dashed lines correspond to sampling with temperature 1. Each color denotes a training distribution with a separate mixture coefficient $p_{\mathrm{cot}}$. Figure shows average and 1 standard deviation across 3 seeds. \textit{Right}: Test accuracy with greedy decoding at the end of pre-training (50k iterations) versus mixture coefficient $p_{\mathrm{cot}}$. The red dashed line corresponds to the critical threshold. Each bullet is the median of 3 runs. 
    }
    \label{fig:parity_pretrain}
\end{figure}

First, we fix the input dimension $d$ to a large value (e.g. $d$=50) 
and consider pre-training (next-token prediction) on data sampled from $\mathcal{D}(p_{\mathrm{cot}})$ for various values of mixture weight $p_{\mathrm{cot}}$. We present results in Figure~\ref{fig:parity_pretrain} for a GPT2 architecture. As we can see, performance under greedy decoding is determined by the value of $p_{\mathrm{cot}}$: if the percentage of long sequences in the data is large enough (e.g. $p_{\mathrm{cot}}$=0.75), then the model quickly learns to predict the parity of the input correctly by generating long responses and continues to generalize perfectly until the end of training. On the other hand, when the long sequences are under-represented in the source distribution, the model -- under greedy decoding -- generates short responses and performs on par with random guessing even after training on several millions of sequences (256 sequences per iteration for 50,000 iterations). In fact, the critical threshold is around 
$p_{\mathrm{cot}}$=$1/3$ -- see Figure~\ref{fig:parity_pretrain} (\textit{right}) for a plot of test accuracy at the end of pre-training versus $p_{\mathrm{cot}}$ for $d$=25.

For the distributions with small $p_\mathrm{cot}$ where the models did not manage to generalize during pre-training, we consider switching to post-training: at some iteration, we stop training with the next-token objective of \eqref{eq:ntp_main} and resume model training with a reinforcement learning algorithm. We show results in Figure~\ref{fig:parity_posttrain} for $p_{\mathrm{cot}}$=0.25, where we plot post-training accuracy as a function of RL iterations for all post-train algorithms described in Sections~\ref{sec:setup}, \ref{sec:app_algorithms} and for a few different values of sampling temperature $\tau_{\mathrm{RL}}$.
Progress during RL is immediate: after training on only a handful of sequences, model performance under greedy decoding grows from random guessing to 100\% accuracy. This improvement is accompanied with concurrent increase in the length of the model's generations, which grows from 1 (min train length) to 50 (max train length). The previous observations are robust with respect to changes on the reinforcement learning algorithm and the hyperparameters of post-training (e.g. sampling temperature), with the notable exception of STaR with high sampling temperature $\tau_{\mathrm{RL}}$ and end-to-end reward, where the lack of negative rewards and the sparsity of the reward function $r_{\mathrm{e2e}}$ cause unstable post-training. 

To further emphasize the difference between the two paradigms, we show aggregated results (pre- \& post-training combined) in Figure~\ref{fig:ad_figure} for $d$=50, $p_{\mathrm{cot}}$=0.25 and post-training with GRPO starting from various checkpoints. Observe that if post-training starts too early, then it might not lead to a generalizing model despite the length increase. 
However, GRPO, when started from later checkpoint, improves performance rapidly, and the number of post-training samples required for generalization is a miniscule fraction of the overall number of samples.
On the other hand, mere next-token prediction does not result in a generalizing model (and the median greedy response remains short) even with access to $\sim$$10^7$ training samples. This demonstrates the enormous sample-complexity gap between the two approaches for this task. Note that when the input dimension $d$ is smaller, pre-training eventually does lead to a well-generalizing model (if we optimize for sufficiently long) and post-training does not seem strictly necessary (see Figure~\ref{fig:parity_pretrain_works_d1112} for $d\in\{11,12\}$). However, for larger values of $d$, the amount of SGD iterations required is unreasonably large.  In other words, the transformers' inability to succeed during pre-training \textbf{does not stem from insufficient model capacity} (small number of parameters or lack of depth\footnote{A shallow transformer can approximate the parity function for any input dimension $d$; uniform attention for aggregating the bit values and a 2-layer MLP for computing their XOR -- see Lemma 6 in~\citep{Liu+23}.}), but it is rather because next-token prediction on this distribution requires many \textbf{samples} to lead to a good predictor. In fact, as the complexity of the task increases, so does the advantage of reinforcement learning after next-token prediction over mere next-token prediction (Figure~\ref{fig:gap_parity}).
Appendix~\ref{ssec:parity_more} contains more experimental results for various model and task hyperparameters.

\subsection{Different speeds of learning \& growth of test-time compute}

We now describe what makes reinforcement learning so effective and why post-training leads to length increase in our simple setting.

First, notice the curves in Figure~\ref{fig:parity_pretrain} that correspond to sampling with temperature $1$ during pre-training. Even when $p_{\mathrm{cot}} < 1/3$, the accuracy of the models at the end of pre-training is greater than $50\%$ -- Figure~\ref{fig:parity_pretrain} (Left). In fact, it is equal to $\frac{p_{\mathrm{cot}}+1}{2} \times 100 \%$ for all cases of $p_{\mathrm{cot}}$. Furthermore, notice in the length plot of the same figure (Figure~\ref{fig:parity_pretrain} (Center)) that the models are \textit{calibrated} with respect to length, i.e., they generate long responses with probability $p_{\mathrm{cot}}$. These two simple observations directly suggest that the models learn the two parts of the mixture ``in parallel'' during pre-training: after a few training iterations, when prompted with $d$ bits $x_1, \ldots, x_d$, the models learn to generate long and correct responses containing the parity $\prod_{i = 1}^d x_i$ with probability $p_{\mathrm{cot}}$ and short responses with probability $1-p_{\mathrm{cot}}$. As mentioned before, learning from the short sequences requires learning the parity of $d$ bits in a single prediction step, which belongs to a class of computationally difficult problems~\citep{Kea98} and is believed to be hard for standard neural networks to learn in practice~\citep{SSS17,ShSh25}. 
As a result, we do not expect the model to learn using any reasonable amount of samples, and hence it resorts to short, random guessing with probability $1-p_{\mathrm{cot}}$. Learning from long demonstrations, on the other hand, can be performed efficiently~\citep{Mal24}. This asymmetric learning difficulty leads to accuracy equal to $p_{\mathrm{cot}} \times 100\% + \left( 1 - p_{\mathrm{cot}}\right) \times 50 \% = \frac{p_{\mathrm{cot}}+1}{2} \times 100 \%$. 

\begin{figure}
    \centering
    \includegraphics[scale=0.25]{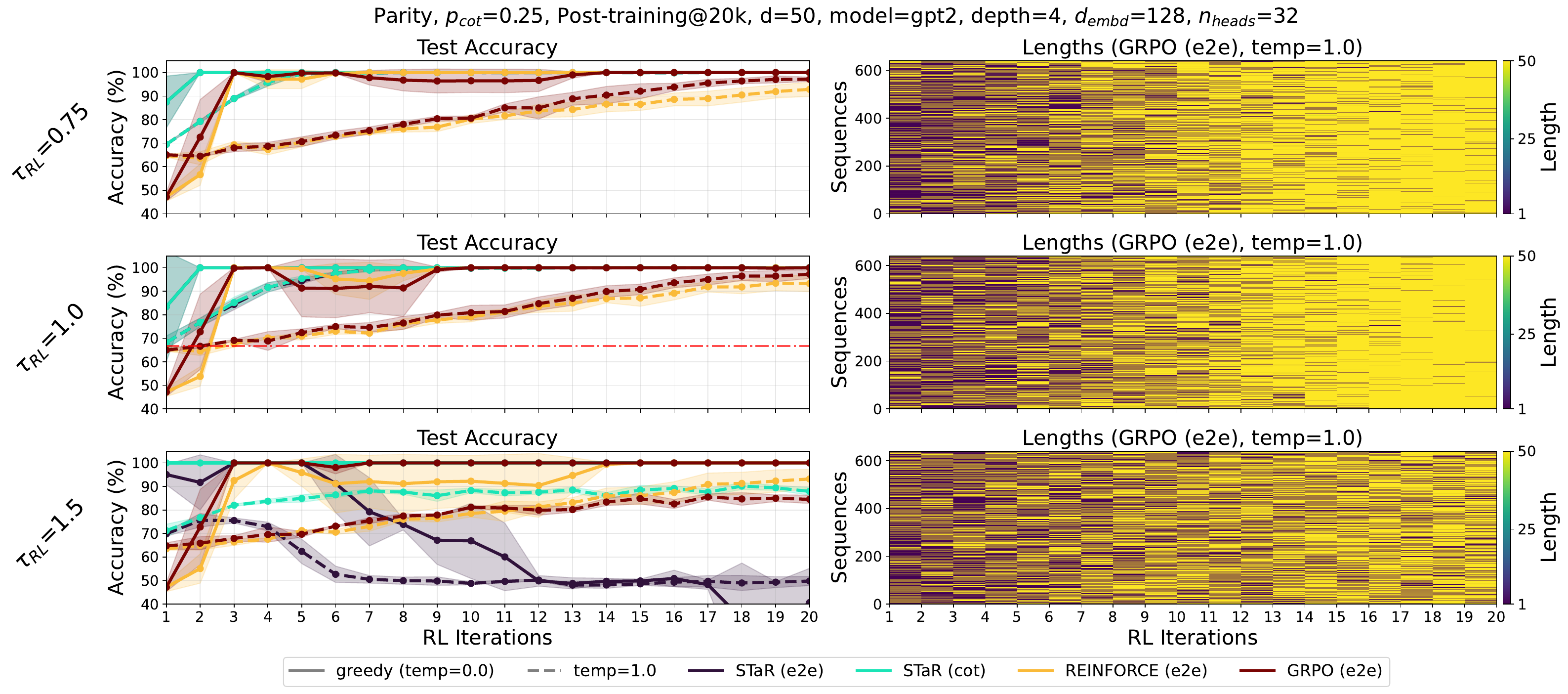}
    \caption{\textbf{Post-training of transformers on mixture of long and short sequences encoding the parity of $d$ bits with various RL methods and generation temperatures ($\tau_{\mathrm{RL}}$)}. \textit{Left}: Test accuracy during the course of post-training with greedy decoding (solid lines) and sampling with temperature 1 (dashed lines) for various RL methods. Figure shows average and 1 standard deviation across 3 seeds.
    \textit{Right}: Length of generated response (sampled with temperature 1) during the course of a post-training run (GRPO with end-to-end reward) for 640 test inputs, after
    $20$k pre-training iterations. \textbf{Note:} The sample size $n$ of each RL iteration differs amongst the RL algorithms: $n$=64 for GRPO, REINFORCE and $n$=3,200 sequences for STaR.}
    \label{fig:parity_posttrain}
\end{figure}

The learning process during pre-training sets the stage for what follows in post-training. It is perhaps simpler to understand the RL dynamics in the case of the STaR algorithm with the chain-of-thought correctness reward $r_{\mathrm{cot}}$ (cyan lines in Figure~\ref{fig:parity_posttrain}).
The objective of \eqref{eq:star_objective} implements next-token prediction solely on model-generated sequences which contain a correct chain of thought and a correct final answer. 
If, at the end of pre-training, the probability of a long, correct generation is $p_{\mathrm{cot}}$ and the probability of a short, correct one is $\left(1 - p_{\mathrm{cot}}\right)/{2}$ (as suggested by Figure~\ref{fig:parity_pretrain}), then the model fits a data mixture which contains both long and short, correct sequences with proportions equal to $p_1 := \frac{p_{\mathrm{cot}}}{p_{\mathrm{cot}} + \left(1-p_{\mathrm{cot}}\right)/2}$ and $\frac{\left(1-p_{\mathrm{cot}}\right)/2}{p_{\mathrm{cot}} + \left(1-p_{\mathrm{cot}}\right)/2}$, respectively. If, further, the model succeeds in fitting them, then the conditional distribution of model generations at the end of the first round of STaR has the same weights, provided the model remains length-calibrated and did not manage to learn from the short sequences. Continuing inductively, at the start of the $n$'th round,
we effectively sample from distribution $\mathcal{D}(p_n)$, where $p_0 = p_{\mathrm{cot}}$ and $p_n=\frac{2p_{n-1}}{1+p_{n-1}}$ converges to $1$ exponentially fast. Once the effective coefficient $p_n$ becomes larger than $\frac{1}{3}$ (which happens when accuracy with temp. $1$ exceeds $\frac{1/3 + 1}{2} = \frac{2}{3}$ -- red dashed line in Figure~\ref{fig:parity_posttrain}), the model starts generating long, correct responses with greedy decoding, thus generalizing perfectly at the task. This is the mechanism that causes length increase in model response, which in turns allows the model to learn to generalize.

In Appendices~\ref{ssec:partial_cot_more} and~\ref{ssec:noise}, we consider several variations of our setting. In particular, in Appendix~\ref{ssec:partial_cot_more}, we consider data distributions that include a third type of sequence consisting of a partial cot. We demonstrate how this affects length distributions at the end of RL, and the effectiveness of length penalties during RL as a function of prior pre-train iterations. In Appendix~\ref{ssec:noise}, we study how pre-training noise affects the conclusions of this section.

\section{Theory}\label{sec:theory}

Next, we theoretically capture the main empirical findings of the previous section and prove the identified mechanisms.
We seek a theoretical model that, when $p_{\mathrm{cot}}$ is small, fails to generalize during pre-training, while being capable of learning the long component of the distribution.
As we show, the analysis of a \textit{linear autoregressive model}~\citep{Mal24} satisfies the above desiderata.

\sloppy
\paragraph{Architecture} We study an autoregressive model that consists of a series of $d$+1 linear predictors. 
We reduce next-token prediction on $(\mathbf{x}, y) \sim \mathcal{D}(p_{\mathrm{cot}})$ to a series of binary classification problems.
At a high-level, the feature embedding used in each one of them involves (at most) second-degree monomials of the input, which is a canonical embedding choice.
In particular, for the first output position, we consider a linear hypothesis class $\mathcal{H}_1$ = $\left\{ \mathbf{x} \mapsto \innerprod{\mathbf{w}_1}{\mathbf{x}}: \mathbf{x} \in \{ \pm 1 \}^d, \mathbf{w}_1 \in \mathbb{R}^d \right\}$. For the second output position, we need to learn a mapping from $\{\pm 1 \}^{d+1}$ to $\{ \pm 1, \code{<EOS>} \}$, as all three characters are valid second output tokens under distribution $\mathcal{D}(p_\mathrm{cot})$. We build such a mapping in two stages: first, a linear model decides whether the halting symbol needs to be outputted and, if not, we then make a decision between $\pm 1$ with a different linear model. This makes the analysis of the output of SGD simpler than directly considering a single multiclass model. That is, we define $\mathcal{H}_{2a}$ = $\left\{\mathbf{x} \mapsto \innerprod{\mathbf{w}_{2a}}{\phi_2(\mathbf{x})} + b_{2a}: \mathbf{x} \in \{ \pm 1 \}^{d+1}, \mathbf{w}_{2a} \in \mathbb{R}^{2d+1}, b \in \mathbb{R} \right\}$, where $\phi_2$ is a feature map that augments the input with all second-degree monomials involving the last input bit.  For the second piece of the second position and for the rest of the positions, we proceed in a similar fashion\footnote{For notational brevity in the proofs, we use subscript $2$ to denote the second piece of the second position.}: we define $\mathcal{H}_l$ = $\left\{\mathbf{x} \mapsto \innerprod{\mathbf{w}_{l}}{\phi_l(\mathbf{x})}: \mathbf{x} \in \{\pm 1 \}^{d+l-1}, \mathbf{w}_l \in \mathbb{R}^{2d + l -1} \right\}$. The embedding function is defined as $\phi_l$: $\mathbf{x} \mapsto \begin{bmatrix} x_1, \ldots, x_{d+l-1}, x_{d+l-1} x_{1}, \ldots, x_{d+l-1} x_{d} \end{bmatrix}^T$ for $2\leq l \leq d$. 
Our final model is a function $h$ that belongs to the product of these classes $\mathcal{H}^{\mathrm{Lin}}_{\mathrm{AR}}$ = $\mathcal{H}_1 \times \mathcal{H}_{2a} \times \mathcal{H}_2 \ldots \times \mathcal{H}_d$. 
For all $d$+1 models, the output space is $\{ \pm 1 \}$ and we map the label to a character in $\{ \pm 1, \code{<EOS>}, \epsilon\}$ accordingly. Note that the token at position $d$+1 is a constant function of the input and hence we omit it from our analysis. Furthermore, we define $\code{GREEDY}$ to be the operation of greedy decoding from the model logits (takes the sign of each model's output). By composing each individual component of $h$ with the sigmoid function, we also define the probability measure induced by $h$, namely $\pi_h(\cdot \mid \mathbf{x}) \in \Delta(\mathcal{Y})$. This corresponds to sampling from the model with temperature 1. 
Despite their apparent simplicity, linear autoregressive models can be efficient universal learners~\citep{Mal24,Jos+25}. Our architecture with our embedding choices, in particular, is capable of identifying and learning any sparse parity function, hence we argue the model is not particularly tailored to the full parity function. Appendix~\ref{sec:app_proofs} contains background and formal definitions.

Our first result characterizes the model during pre-training. We consider minimization of the (regularized) next-token prediction objective (logistic loss evaluated at all output positions) with SGD. 

\sloppy
\begin{theorem}{(Pre-training, Informal)}\label{thm:pretraining_main}
    Let $d \geq 2,\, p_{\mathrm{cot}} \in(0, 3/4)$. Consider distribution $\mathcal{D}(p_{\mathrm{cot}})$, as defined in Section~\ref{sec:setup}, and linear hypothesis classes $\mathcal{H}_1, \mathcal{H}_{2a}, \mathcal{H}_2, \ldots, \mathcal{H}_d$ as defined earlier. Consider running Stochastic Gradient Descent (SGD) for minimizing the next-token prediction objective over $\mathcal{H}^{\mathrm{Lin}}_{\mathrm{AR}} = \mathcal{H}_1 \times \mathcal{H}_{2a} \times \mathcal{H}_2 \ldots \times \mathcal{H}_d$ with an additional $\ell_2$ regularization term. Then, for any $0 < \varepsilon < \left(d+1\right) \min \left\{\frac{1}{2}, \ln{\frac{1+p_{\mathrm{cot}}}{1-p_{\mathrm{cot}}}}, \left| \ln{\frac{1-p_{\mathrm{cot}}}{2p_{\mathrm{cot}}}} \right|\right\}$ and after $O(\mathrm{poly}(d, 1/\varepsilon, 1/p_{\mathrm{cot}}))$ iterations, with high probability over sampling from $\mathcal{D}(p_{\mathrm{cot}})$, SGD returns model $h_{\mathrm{pre}} \in \mathcal{H}^{\mathrm{Lin}}_{\mathrm{AR}}$ for which the following hold:
    \begin{enumerate}
        \item \textbf{Length calibration:} When prompted with input $\mathbf{x} \in \{ \pm 1 \}^d$, model $h_{\mathrm{pre}}$ generates a long, correct sequence with probability approximately equal to $p_{\mathrm{cot}}$ and a short, correct one with probability approximately equal to $\frac{1 - p_{\mathrm{cot}}}{2}$. That is,
        \begin{equation}
            \begin{split}    
                & \left| \pi_{h_{\mathrm{pre}}} \left( \left( x_1, x_1x_2, \ldots, \prod_{i = 1}^d x_i, \code{<EOS>} \right) \middle| \mathbf{x} \right)   - p_{\mathrm{cot}} \right| \lesssim \varepsilon, \\
                & \left| \pi_{h_{\mathrm{pre}}} \left( \left(\prod_{i = 1}^d x_i, \code{<EOS>} \right) \middle| \mathbf{x} \right) - \frac{1-p_{\mathrm{cot}}}{2} \right| \lesssim \varepsilon.
            \end{split}
        \end{equation}
        \item \textbf{Failure of greedy decoding:} For any $\mathbf{x} \in \{ \pm 1\}^d$, it holds:
        \begin{equation}
            \code{GREEDY}({h_{\mathrm{pre}}}(\mathbf{x})) = \begin{cases}
                \left(x_1, \code{<EOS>}\right), \, \text{if } p_{\mathrm{cot}} < 1/3, \\
                \left(x_1, x_1x_2, \ldots, \prod_{i = 1}^d x_i, \code{<EOS>}\right), \,  \text{otherwise.}
            \end{cases}
        \end{equation}
        That is, test accuracy under greedy decoding is on par with random chance if $p_{\mathrm{cot}} < 1/3$, and perfect otherwise.
    \end{enumerate}
\end{theorem}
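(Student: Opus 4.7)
The plan is to decompose the population next-token prediction objective across output positions, which reduces the analysis to $d+1$ independent $\ell_2$-regularized logistic regression problems, one over each of $\mathcal{H}_1, \mathcal{H}_{2a}, \mathcal{H}_2, \ldots, \mathcal{H}_d$. Because every feature map $\phi_l$ takes values in a bounded Boolean cube and the regularized objective is strongly convex, standard SGD guarantees give that after $O(\mathrm{poly}(d, 1/\varepsilon, 1/p_{\mathrm{cot}}))$ iterations every learned weight vector is within $O(\varepsilon/(d+1))$ of the unique population minimizer in logit space (and, via Lipschitzness of the sigmoid, in probability space). Both conclusions of the theorem will then be read off from closed-form expressions for those minimizers.

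Next I would compute the population minimizer at each position. At position~$1$ the label $y_1$ equals $x_1$ with probability $p_{\mathrm{cot}}$ and equals the parity $\tau := \prod_i x_i$ with probability $1-p_{\mathrm{cot}}$; because linear functions are orthogonal to $\tau$ over the uniform hypercube, the Bayes-optimal linear predictor depends only on $x_1$, with $\Pr(y_1 = 1 \mid x_1 = 1) = (1+p_{\mathrm{cot}})/2$ fixing the optimal logit at $\ln\frac{1+p_{\mathrm{cot}}}{1-p_{\mathrm{cot}}}$ on the $x_1$ coordinate. For position~$2a$, conditioning on $u := y_1 x_1$ and on whether $\tau = x_1$ gives $\Pr(\mathrm{halt} \mid u = 1) = \frac{1-p_{\mathrm{cot}}}{1+p_{\mathrm{cot}}}$ and $\Pr(\mathrm{halt} \mid u = -1) = 1$; a computation on the orthonormal basis $\{1, y_1, x_i, y_1 x_i\}_i$ of the hypercube then shows that the optimum inside $\phi_2$ collapses to a function of $u$ alone, every other coefficient vanishing as $O(\lambda)$. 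For each subsequent value position $l \geq 2$ the training loss conditions on the long branch, where $y_l = y_{l-1} x_l$ holds identically; since $y_{l-1} x_l$ is a feature in $\phi_l$, its logistic coefficient grows to $\Omega(\log(d/\varepsilon))$ and the corresponding bit is predicted correctly with probability at least $1 - O(\varepsilon/(d+1))$.

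I would then chain these per-position estimates to obtain both claims. For length calibration, the long correct path carries probability $\frac{1+p_{\mathrm{cot}}}{2} \cdot \frac{2 p_{\mathrm{cot}}}{1+p_{\mathrm{cot}}} \cdot (1 - O(\varepsilon/(d+1)))^{d-1} = p_{\mathrm{cot}} \pm O(\varepsilon)$, and the short correct path splits into the $\tau = x_1$ and $\tau = -x_1$ cases, which respectively give $\frac{1+p_{\mathrm{cot}}}{2}\cdot\frac{1-p_{\mathrm{cot}}}{1+p_{\mathrm{cot}}}$ and $\frac{1-p_{\mathrm{cot}}}{2}\cdot 1$, both collapsing to $(1-p_{\mathrm{cot}})/2 \pm O(\varepsilon)$. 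For greedy decoding, position~$1$ always emits $x_1$ (as $w_1^\star > 0$), so greedy halting at position~$2a$ triggers iff $\Pr(\mathrm{halt} \mid u = 1) > 1/2$, i.e., iff $p_{\mathrm{cot}} < 1/3$; in that regime the output is $(x_1, \code{<EOS>})$, which matches the true parity exactly when $\prod_{i \geq 2} x_i = 1$ (probability $1/2$), while outside that regime the model continues past position~$2a$ and the value-position analysis guarantees a correct long chain up to $O(\varepsilon)$.

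The main obstacle I anticipate is the position-$2a$ step: the true halting probability depends on $\tau$, which no linear function of $\phi_2$ can represent, so I must argue carefully that this misspecification does not leak into the coefficients of the inert features $x_i$ and $y_1 x_i$ for $i \geq 2$. I would handle this by minimizing the regularized logistic loss directly in the Fourier-like basis of the hypercube and using the independence of the non-$x_1$ coordinates from the halting label given $u$ to show that all inert coefficients shrink to $O(\lambda)$. A second, more technical point is choosing $\lambda = \Theta(\varepsilon/\mathrm{poly}(d))$ so that the per-position logit errors compose benignly across the $d$-fold product; the upper bound $\varepsilon < (d+1)\min\{1/2,\, \ln\frac{1+p_{\mathrm{cot}}}{1-p_{\mathrm{cot}}},\, |\ln\frac{1-p_{\mathrm{cot}}}{2 p_{\mathrm{cot}}}|\}$ in the theorem is precisely calibrated so that the per-step probabilities stay bounded away from $0$ and from $1/2$ at every position, which is what makes the propagation tight and, crucially, what keeps the greedy halting decision on the correct side of the $p_{\mathrm{cot}} = 1/3$ threshold even after the $\varepsilon$-slack of SGD.
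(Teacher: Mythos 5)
Your proposal is correct and follows essentially the same route as the paper's proof: decompose the objective into $d{+}1$ regularized logistic regressions, use strong convexity to turn the SGD excess-risk guarantee into parameter- and probability-space proximity to the population minimizers (which are computed in closed form, with the coefficient on $x_1$, on $x_1x_{d+1}$ plus bias, and on $x_{d+l-1}x_l$ respectively, all other coordinates vanishing by the same orthogonality argument you sketch), then chain the per-position $O(\varepsilon/(d+1))$ errors and read off the greedy threshold from the sign of the position-$2a$ logit, i.e.\ $p_{\mathrm{cot}}\lessgtr 1/3$. The only step left implicit is the counting argument (a multiplicative Chernoff bound in the paper) ensuring that enough long sequences are observed to train positions $2,\ldots,d$, which is where the $1/p_{\mathrm{cot}}$ factor in the iteration bound actually enters.
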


Theorem~\ref{thm:pretraining_main} proves the main phenomena identified in the pre-training of transformers in Section~\ref{sec:parity_experiments}, i.e., length calibration and failure of greedy decoding, in our linear autoregressive setting. Its formal statement (Theorem~\ref{thm:pretrain_appendix}) and proof can be found in Appendix~\ref{sec:app_proofs}. The proof proceeds by analyzing the output of SGD on each binary classification problem independently. Precisely, a standard convex learning result~\citep{SB14} implies that SGD's output has generalization error comparable to the best-in-class. Then, strong convexity (afforded by the regularization term) and a case-to-case analysis establishes calibration and performance under greedy for each model.
\begin{remark}
    The model’s length calibration at the end of pre-training arises from the use of the logistic loss, which is a proper loss, in the next-token prediction objective.
\end{remark}
\begin{remark}
    The critical threshold $p_{\mathrm{cot}}$=1/3, which is the same as the empirical critical threshold observed in transformers in Section~\ref{sec:parity_experiments}, arises from the model's greedy decision at the first and second token. Indeed, in absence of a parity feature $\prod_{i = 1}^{d} x_i$,
    it holds: $\mathbb{P} \left[ y_1 = x_1 \middle | \mathbf{x} \right] =  \frac{1+p_{\mathrm{cot}}}{2} > \frac{1-p_{\mathrm{cot}}}{2} = \mathbb{P} \left[ y_1 = - x_1 \right]$. Thus, a calibrated model's ``hard'' decision for the first token will be $x_1$ for any $p_{\mathrm{cot}}$. For the second step, we calculate the conditional probabilities
    : $\mathbb{P} \left[ y_2 = x_1 x_2 \middle | y_1 = x_1, \mathbf{x} \right] = \frac{\mathbb{P} \left[ y_2 = x_1 x_2, y_1 = x_1 \middle | \mathbf{x} \right]}{\mathbb{P} \left[ y_1=x_{1} \middle | \mathbf{x} \right]} = \frac{2p_{\mathrm{cot}}}{1+p_{\mathrm{cot}}}$, while $\mathbb{P} \left[ y_2 = \code{<EOS>} \middle | y_1 = x_1 \right] = 1 - \frac{2p_{\mathrm{cot}}}{1+p_{\mathrm{cot}}} = \frac{1 - p_{\mathrm{cot}}}{1 + p_{\mathrm{cot}}}$. That is, as long as $p_{\mathrm{cot}} < 1/3$, the prediction for the second token after $x_1$ will be $\code{<EOS>}$. That is, greedy decoding will generate a short sequence $(x_1, \code{<EOS>})$. Theorem~\ref{thm:pretraining_main} asserts that SGD after sufficient many iterations returns a model that approximately matches these conditional probabilities. More details on the above calculations are provided in Appendix~\ref{ssec:crit_thr}. In particular, we also show there that the critical threshold is the same for any model ``without'' a parity feature, where this assumption is formalized using Fourier analysis (Lemma~\ref{lem:critical_threshold}).
\end{remark}

Next, we study reinforcement learning with (regularized) STaR objective and with chain of thought reward $r_{\mathrm{cot}}$ (defined in Section~\ref{sec:setup}). Theorem~\ref{thm:posttraining_main} below is the main result of the paper: it shows that pre-training followed by post-training can be an efficient recipe for learning the parity from $\mathcal{D}(p_{\mathrm{cot}})$, and an indispensable element of this success is the increase of model response during RL.

\begin{theorem}{(Post-training, Informal)}\label{thm:posttraining_main}
    Let ${h_{\mathrm{pre}}}$ be the output of SGD after pre-training under the conditions of Theorem~\ref{thm:pretraining_main} for $p_{\mathrm{cot}} < 1/3$ and $\varepsilon \leq c_0 \frac{p_{\mathrm{cot}}}{1-p_{\mathrm{cot}}}$ for a sufficiently small constant $c_0 > 0$. Consider post-training of ${h_{\mathrm{pre}}}$ with the STaR algorithm using reward  
    $r_{\mathrm{cot}}$. Suppose each STaR round uses $O(\mathrm{poly}(d, 1/\varepsilon, 1/p_{\mathrm{cot}}))$ SGD iterations, and let $h^{(n)}$ denote the model after $n$ rounds. Then, there exists an integer $n^\star = O \left( \log \frac{1 - p_{\mathrm{cot}}}{p_{\mathrm{cot}}}\right)$ such that with high probability over sampling from $\mathcal{D}(p_{\mathrm{cot}})$, and the model‑sampled outputs used by STaR in rounds $1, \ldots, n^\star-1$, the following hold:
    \begin{enumerate}
        \item \textbf{Length increases:} The probability of a long generation increases:
        \begin{equation}
            \left| \pi_{h^{(n)}} \left( \left( x_1, x_1x_2, \ldots, \prod_{i = 1}^d x_i, \code{<EOS>} \right) \middle| \mathbf{x} \right) - q_n \right| \lesssim \varepsilon,
        \end{equation}
        where $\left| q_{n} - p_{n} \right| \lesssim 2^n \varepsilon$ and $p_{n+1} = \frac{2p_{n}}{1+p_n}$ for all $n \leq n^\star$ with $p_0 = p_{\mathrm{cot}}$.
        \item \textbf{Perfect generalization:} After $n^\star$ RL rounds, the model under greedy decoding is only generating long responses and generalizes perfectly:
        \begin{equation}
            \code{GREEDY}\left(h^{(n^\star)}(\mathbf{x})\right) = \left(x_1, x_1x_2, \ldots, \prod_{i = 1}^d x_i, \code{<EOS>}\right).
        \end{equation}
    \end{enumerate}
\end{theorem}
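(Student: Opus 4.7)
The plan is to induct on the STaR round index $n$, reducing each round to an application of Theorem~\ref{thm:pretraining_main} on an effective distribution whose long-sequence mass evolves under a scalar recursion.

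First I would observe that when STaR samples from a length-calibrated model $h^{(n-1)}$ (with long-correct probability $q_{n-1}$ and short-correct probability $(1-q_{n-1})/2$, each up to $O(\varepsilon)$) and filters by the chain-of-thought reward $r_{\mathrm{cot}}$, the conditional distribution of surviving $(\mathbf{x},y)$ pairs is exactly the mixture $\mathcal{D}(q_n)$ with $q_n = \tfrac{2 q_{n-1}}{1+q_{n-1}}$, up to an $O(\varepsilon)$ total-variation error; a Chernoff/union bound over the $\mathrm{poly}(d)$ sampled prompts upgrades this to a high-probability statement about the empirical STaR batch. The inner SGD step then solves exactly the pre-training problem of Theorem~\ref{thm:pretraining_main} but on $\mathcal{D}(q_n)$ instead of $\mathcal{D}(p_{\mathrm{cot}})$; applying that theorem yields a new model $h^{(n)}$ whose long-generation probability equals $q_n$ up to $O(\varepsilon)$ and whose short-correct mass is $(1-q_n)/2$, closing the induction.

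Next I would solve the scalar recursion $p_{n+1} = \tfrac{2 p_n}{1+p_n}$ with $p_0 = p_{\mathrm{cot}}$. Substituting $a_n = 1/p_n$ linearises it to $a_{n+1} = \tfrac{1}{2} + \tfrac{1}{2} a_n$, giving the closed form $p_n = \tfrac{p_{\mathrm{cot}} 2^n}{p_{\mathrm{cot}} 2^n + (1 - p_{\mathrm{cot}})}$, so $p_n$ crosses $1/3$ after $n^\star = O\!\left( \log \tfrac{1-p_{\mathrm{cot}}}{p_{\mathrm{cot}}} \right)$ rounds. Combining this with the induction yields the ``length increases'' conclusion with the stated bound $|q_n - p_n| \lesssim 2^n \varepsilon$. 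For the ``perfect generalization'' claim, at round $n^\star$ the effective coefficient satisfies $q_{n^\star} > 1/3$ (after controlling accumulated error, below), so the second bullet of Theorem~\ref{thm:pretraining_main} applied to $\mathcal{D}(q_{n^\star})$ gives $\code{GREEDY}(h^{(n^\star)}(\mathbf{x})) = (x_1, x_1 x_2, \ldots, \prod_{i=1}^d x_i, \code{<EOS>})$.

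The main obstacle I anticipate is controlling the accumulated error across rounds. The map $p \mapsto \tfrac{2p}{1+p}$ has derivative $\tfrac{2}{(1+p)^2}$, which is close to $2$ for small $p$, so a fresh $O(\varepsilon)$ perturbation introduced at round $n$ is magnified to roughly $2^{\,n^\star - n}\varepsilon$ by the final round; summing gives $|q_{n^\star} - p_{n^\star}| \lesssim 2^{n^\star}\varepsilon \lesssim \tfrac{1-p_{\mathrm{cot}}}{p_{\mathrm{cot}}}\,\varepsilon$, which is precisely why the hypothesis $\varepsilon \leq c_0 \tfrac{p_{\mathrm{cot}}}{1-p_{\mathrm{cot}}}$ is imposed: it keeps the cumulative drift well below the margin $p_{n^\star} - 1/3$, so the greedy-decoding inequalities established in the proof of Theorem~\ref{thm:pretraining_main} still hold at round $n^\star$. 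A secondary, more routine technicality is showing that each inductive step preserves length-calibration pointwise in $\mathbf{x}$ and not merely in expectation; this follows from the same case-by-case analysis of the per-position binary classifiers used for pre-training, since the embeddings $\phi_l$ are unchanged and only the sequence distribution has shifted from $\mathcal{D}(p_{\mathrm{cot}})$ to $\mathcal{D}(q_n)$.
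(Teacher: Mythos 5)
Your proposal is correct and follows essentially the same route as the paper's proof: recasting each STaR round as next-token prediction on a reweighted mixture $\mathcal{D}(q_n)$, re-invoking the pre-training theorem per round, solving $p_{n+1}=\frac{2p_n}{1+p_n}$ via the $1/p_n$ substitution, and bounding the drift $|q_n-p_n|\lesssim 2^n\varepsilon$ so that $\varepsilon \leq c_0\frac{p_{\mathrm{cot}}}{1-p_{\mathrm{cot}}}$ preserves the margin at the hitting time $n^\star$. The only cosmetic difference is that the paper tracks the perturbation inside the recursion (via a Taylor expansion of the update map) rather than as a total-variation error on the filtered distribution, which yields the same $e_{n+1}\leq 2e_n+O(\varepsilon)$ bound you obtain from the derivative of the map.
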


The proof, which appears in Appendix~\ref{app:post_train_proof}, proceeds in two steps. First, we observe that the STaR objective for the $n$'th round is equivalent to next-token prediction with respect to a re-weighted version $\mathcal{D}(q_n)$ of the original distribution.
Second, we bound the deviation of sequence $q_n$ from a ``noise-less'' sequence $p_n = \frac{2 p_{n-1}}{1+p_{n-1}}$ that describes the evolution of the proportion of long data in the data mix, and we characterize how many rounds are needed so that $p_n$ exceeds $1/3$. Note that the odds of long correct responses double at each round since $\frac{p_{n+1}}{1-p_{n+1}} = 2 \frac{p_n}{1-p_n}$, so a logarithmic hitting time follows.
\begin{remark}
    Consider the case where long demonstrations are very rare, that is $p_{\mathrm{cot}} \to 0$ as $d \to \infty$, which is perhaps the most interesting case. Theorem~\ref{thm:posttraining_main} shows that as long as $p_{\mathrm{cot}} \in \Omega(d^{-\kappa})$ for some constant $\kappa \in \mathbb{N}$, then we can learn the parity using $O(\mathrm{poly}(d))$ SGD steps. This should be contrasted with the well-known hardness results on learning parities~\citep{Kea98,ShSh25}.
\end{remark}
\section{Experiments with Mathematical Reasoning Benchmarks}\label{sec:exprs_math}

A shortcoming of the previous sections is that they are devoted to the study of a computationally shallow problem. The parity of $d$ bits can be represented by a small, shallow transformer~\citep{Liu+23}, yet the current post-training paradigm targets tasks that appear to require greater computational depth, involving sequences of dependent operations that cannot be collapsed into a single ``step''.
To accommodate such tasks, the iterative application of the transformer in the form of the autoregressive generation process enhances the computational depth of the model. Without this generated chain-of-thought, models may fail to approximate solutions to certain tasks as context length grows~\citep{Feng+23, MeSa24, Li+24}. Can reinforcement learning after next-token prediction be effective when learning from mixture distributions, such as those of Section~\ref{sec:setup}, that encode such deeper problems? In other words, can RL be as effective when representation, in addition to learning, is also an issue?

To study this question, we now consider less idealized settings (including pre-trained models) and computationally deeper tasks, such as the multiplication of 5-digit numbers and mathematical reasoning benchmarks.

\subsection{Number Multiplication}

First, we study the task of multiplying \texttt{n}-digit numbers. It has been reported that even LLMs as capable as GPT4~\citep{Ach+23} have struggled with accurate multiplication of 4-digit numbers~\citep{She+23} without the use of an external calculator program\footnote{Otherwise, the problem reduces to copying input (from the calculator program) to output and transformers are well-suited for it~\citep{Jel+24}.}.
We encode the task in sequences of characters, including the $\code{n}$ digits of the multiplier and multiplicand (both in reversed order), the execution trace of the grade-school multiplication algorithm, and the final answer. Prior work has reported that reversing the input order helps transformers with absolute positional embeddings generalize better on arithmetic tasks~\citep{She+23}. 
\begin{wrapfigure}{r}{0.4\textwidth}
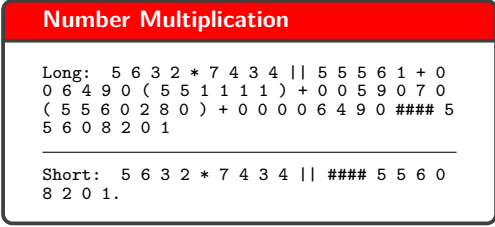

    \vspace{-2mm}
    \centering
        \begin{tcolorbox}[title={\small \textbf{Number Multiplication}}, width=\linewidth, colback=white, colbacktitle=red]
            \label{fig:digit-mult}
            \fontsize{7pt}{7pt}\selectfont
            \ttfamily
            Long: 5 6 3 2 * 7 4 3 4 || 5 5 5 6 1 + 0 0 6 4 9 0 ( 5 5 1 1 1 1 ) + 0 0 5 9 0 7 0 ( 5 5 6 0 2 8 0 ) + 0 0 0 0 6 4 9 0 \#\#\#\# 5 5 6 0 8 2 0 1 \\
            \hrule
            \vspace{2mm}
            Short: 5 6 3 2 * 7 4 3 4 || \#\#\#\# 5 5 6 0 8 2 0 1.
        \end{tcolorbox}
  \caption{Example of a 4x4 sequence, encoding 2365*4374 (digits appear in reverse order). 
  Top row: long format with full sequence. Bottom row: short format without chain of thought.
  }
  \vspace{-5mm}
  \label{fig:digit_mult_sample}
\end{wrapfigure}
Following our previous setup, we construct datasets that contain the $\texttt{n}$ digits of the multiplier and multiplicand, together with the execution trace of the grade-school multiplication algorithm and the final answer, with probability $p_{\mathrm{cot}}$, and just question, answer with probability $1-p_{\mathrm{cot}}$ (Figure~\ref{fig:digit_mult_sample}). 

We pre-train randomly initialized \code{GPT2} transformers on either $\code{4} \times \code{4}$, $\code{5} \times \code{5}$ or $\code{7} \times \code{7}$ datasets for multiple epochs over a training set of 808k examples using the next-token prediction objective. At various checkpoints, we switch to reinforcement learning (with GRPO) and compare performance during pre-training vs post-training. Full experimental details are provided in Appendix~\ref{ssec:details_mult}.

\Cref{fig:4x4_pretrain,fig:5x5_pretrain,fig:7x7_pretrain} show pre-training results for all values of $n \in \{4, 5, 7\}$ and $p_{\mathrm{cot}} \in \{0, 0.1, 0.25, 0.5, 1.0\}$.
When $p_{\mathrm{cot}}$=1.0 (i.e., the algorithm's trace is included in all samples), models during pre-training quickly learn to answer correctly with long responses, leveraging the intermediate calculations present in the training data. By contrast, when $p_{\mathrm{cot}}$ is small, models struggle to 
generalize: performance under greedy decoding
remains close to 0\% throughout training. In some runs, we observe sudden increase in accuracy under sampling with temperature 1.0, jumping to approximately $ p_{\mathrm{cot}} \times 100 \%$. This outcome is consistent with the parity experiments: the objective enforces length calibration and, once the model learns from long sequences, its output becomes a mixture of short random guesses (with probability of success $10^{-2 \times \code{n}}$) 
and long correct responses (with probability of success 1). Similar to the parity experiments, we observed unstable behavior under greedy decoding when training with $p_{\mathrm{cot}}$ close to the critical threshold (which is close to $0.5$ here). 

After switching to reinforcement learning (Figure~\ref{fig:reasoning_combined} (left) for ($\code{n}$=5, $p_{\mathrm{cot}}$=$0.25$) and Figure~\ref{fig:multiplication-aggregated} for the rest of the values), the model's accuracy improves rapidly and the average response length increases, provided the pre-training checkpoint has developed some correlation with the target. These observations hold for most values of $\code{n}$ and $p_{\mathrm{cot}}$. For the most challenging setting we considered (largest $\code{n}$, smallest $p_{\mathrm{cot}}$ -- $(\code{n}, p_{\mathrm{cot}}) = (7, 0.1)$), the model failed to learn from long sequences under any of the 3 random seeds, even after 38 GPU hours, preventing post-training from being successful in this case. Note that this section provides an example of a task where the capability obtained during RL (accuracy under greedy decoding on task of multiplication) appears to be almost absent during pre-training (accuracy goes from $\approx 0$ to $100\%$) -- for example, when $p_{\mathrm{cot}}$ is equal to 0.1 or 0.25.

\begin{figure}
    \centering
    \includegraphics[scale=0.265]{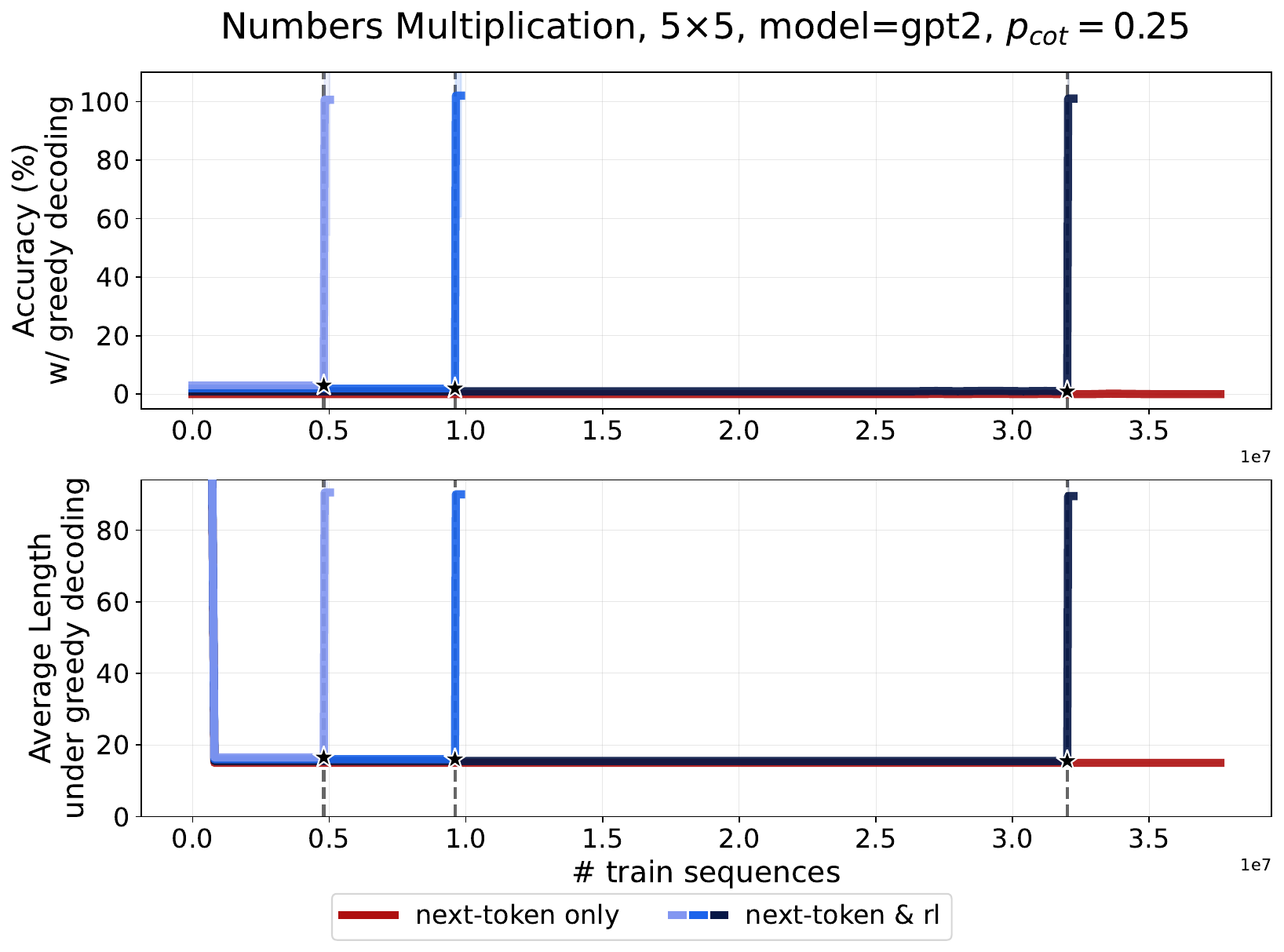}
    \hspace{5mm}
    \includegraphics[scale=0.265]{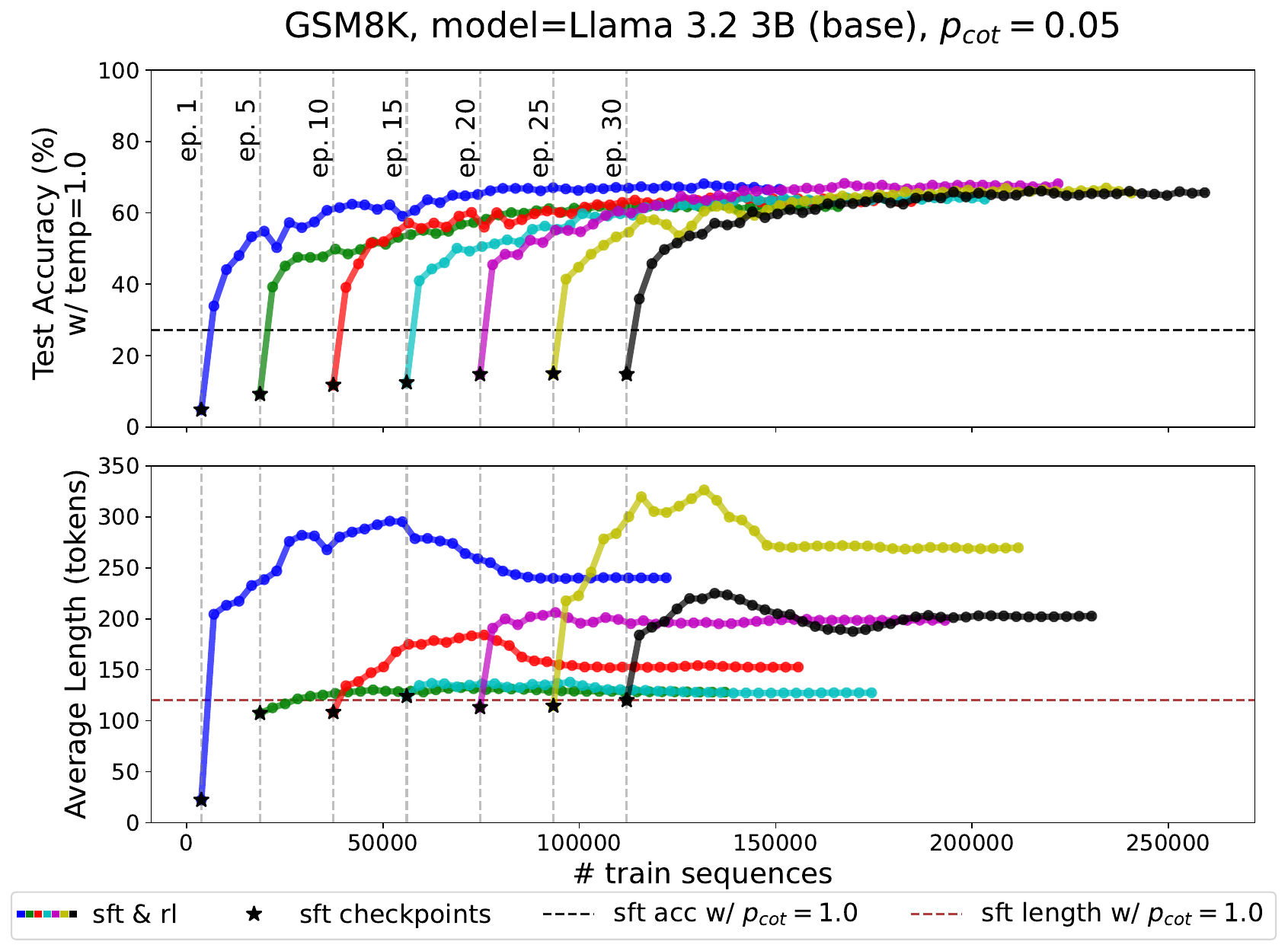}
    \caption{\textbf{Advantage of next-token prediction followed by reinforcement learning over mere next-token prediction} in (\textit{Left}) multiplying two 5-digit numbers with a GPT2 model trained from scratch, and in (\textit{Right}) solving grade-school math problems with Llama 3.2 3B (base). \textit{Left}: Red line: pre-training on a $\texttt{5x}\texttt{5}$ dataset with $p_{\mathrm{cot}}$=$0.25$ using next-token prediction. Blue lines: the same pre-training runs, each followed by GRPO from a different checkpoint. Top: median test accuracy under greedy decoding (3 seeds). Bottom: median average response length (median over seeds, averaged over sequences). \textit{Right}: $\star$ markers: supervised fine-tuning checkpoints on GSM8k with $p_{\mathrm{cot}}$=$0.05$ (epochs 1–30). Colored curves: GRPO post-training from these checkpoints. Training sequences are reused across epochs; during post-training, multiple generations per input are counted.}
    \label{fig:reasoning_combined}
\end{figure}

\subsection{Grade School \& High School Mathematics (GSM8K \& MATH datasets)}

Finally, we experiment with pre-trained LLMs and mathematical reasoning benchmarks. We train Llama pre-trained models~\citep{Dub+24} on variations of two standard mathematical benchmarks (GSM8k~\citep{Cob+21} and MATH~\citep{Hen+21}).
We format the data by surrounding the chain of thought of each sample with special ``thinking'' tokens. As previously, the chain of thought is replaced by the empty string with probability $p_{\mathrm{cot}}$, creating a mixture distribution of long and short reasoning sequences. We first perform supervised fine-tuning (SFT) on these mixture datasets, and then switch to RL with GRPO and a reward that assesses both correctness of the response and consistency with the specified data format. For experimental details, see Appendices~\ref{ssec:details_gsm8k},~\ref{ssec:details_math}.

We plot results for Llama 3.2 3B (base) on GSM8k, $p_{\mathrm{cot}}$=$0.05$, in Figure~\ref{fig:reasoning_combined} (right). We show test accuracy and average response length for the SFT checkpoints (stars) and the post-training curves starting from these checkpoints. As a baseline, we also plot the best SFT model trained with full chain of thought data; the $p_{\mathrm{cot}}$=$1.0$ runs can be found in Figure~\ref{fig:gsm8k-additional}. Early in SFT, accuracy is low and the average length of the model response is much smaller than the baseline. As training with the next-token objective continues, accuracy improves and length increases, but accuracy does not reach the baseline and appears to saturate at a lower value. Once we switch to RL with GRPO, the model rapidly reaches baseline accuracy while increasing its average response length. Notice how few samples RL requires to reach the baseline, even when starting from the first epoch checkpoint.
This sample count should be contrasted with the number of SFT updates needed to reach comparable accuracy. 
Furthermore, in \Cref{fig:completion_gsm8k_pcot005_epoch1_rl1,fig:completion_gsm8k_pcot005_epoch1_rl58,fig:completion_gsm8k_pcot005_epoch1_rl117}, we provide examples of model completions during RL for this first checkpoint. We observe that early on, the model respects the SFT format but primarily generates short responses (Figure~\ref{fig:completion_gsm8k_pcot005_epoch1_rl1}). As RL progresses, the model learns to produce longer responses with greater probability (Figure~\ref{fig:completion_gsm8k_pcot005_epoch1_rl58}), eventually reaching a point where it consistently generates correct elaborate responses that mimic the chain of thoughts in the long-form part of the training distribution 
(Figure~\ref{fig:completion_gsm8k_pcot005_epoch1_rl117}). These training patterns are consistent with the mechanisms observed in the parity setting of the previous sections. In Figure~\ref{fig:gsm8k-additional}, we repeat this experiment for various $p_{\mathrm{cot}}$ values.

Beyond the point where accuracy exceeds the SFT baseline, the model continues to improve and its response length grows even further. This phase of post-training differs from the situations observed in the previous sections and is likely due to the model leveraging pre-training data. Indeed, in Figure~\ref{fig:completion_gsm8k_pcot005_epoch1_rl4676}, we show that model generations at the end of RL are qualitatively different from the train sequences. Based on these observations, we hypothesize that the initial steep phase of RL reflects in-distribution learning (where the model ``mines'' SFT data), while the later phase reflects out-of-distribution gains.

MATH results with the instruct version of Llama 3.1 8B and Llama 3.2 3B models are presented in~\Cref{fig:math-3b,fig:math-8b}. As in the GSM8k experiments, we observe that SFT on a dataset with small $p_{\mathrm{cot}}$ does not enable the model to generalize as well as SFT with full chain-of-thought data. During RL, checkpoints with smaller $p_{\mathrm{cot}}$ benefit the most in terms of relative performance gains, which are accompanied by an increase in response length. However, we also find quantitative evidence that this improvement is sometimes (\Cref{fig:completion_math_pcot001_epoch20_rl1,fig:completion_math_pcot001_epoch20_rl181}) (but not always \Cref{fig:completion_math_pcot001_epoch1_rl1,fig:completion_math_pcot001_epoch1_rl87,fig:completion_math_pcot001_epoch1_rl92,fig:completion_math_pcot001_epoch1_rl102}) due to out-of-distribution gains, as model completions can differ significantly from the SFT dataset. We believe that these models were heavily finetuned on the MATH dataset prior to its release, introducing additional confounding factors.
For more details, see Appendix~\ref{ssec:app_reasoning}.

\section{Discussion}

In this work, we introduced a theoretical framework to study the success of reinforcement learning applied after next-token prediction in Large Language Models. We demonstrated and proved that when the data contain rare elaborate sequences encoding a challenging target function, RL can help the model to learn by effectively up-sampling the presence of the long demonstrations in the data mix. Future work can address the limitations of our setting, by understanding better, for example, how noisy chains of thought affect the conclusions, as well as considering separate pre-train and post-train target functions.

\paragraph{Learning with chain of thought data} Recently, a few theoretical works have attempted to capture the success of autoregressive modeling with LLMs~\citep{WLS23,Mal24,Jos+25}. In particular, \citet{Jos+25,Mal24} showed that next-token prediction can lead to computationally efficient learning of any efficiently computable binary function, which stands in sharp contrast to standard supervised learning where only a limited class of functions can be learned efficiently. However, these results rely on the assumption that a learner has access to a dataset with perfect chain of thought data. While this has turned out to be a very productive assumption for the theoretical study of autoregressive learning, it nonetheless constitutes a strong assumption. Our work takes a step forward in relaxing this requirement; it instead assumes that the dataset contains a small but nonzero (possibly polynomially small in the context length) fraction of chain of thought data. This is arguably a more natural assumption for modeling the presence of elaborate ``good'' demonstrations in the vast ocean of internet text. As we showed in Section~\ref{sec:theory}, this is enough to guarantee efficient learning of the parity function and, interestingly, the guarantee is achieved through a variation of the popular post-training recipe used in state-of-the-art LLMs. We believe our proof strategy can be modified to show that pre-training followed by post-training can lead to efficient learning for other ``hard'' functions. We leave such a study for future work.

\paragraph{Length increase as a learning phenomenon} The common wisdom in the literature has been that the chain of thought during RL grows to enable the approximation of complex algorithmic tasks. Indeed, some computational problems require large computational depth to be executed with reasonable resources~\citep{Has87} and the standard decoder-only transformer architecture with context length $n$ appears limited in what it can represent exactly with constant depth. On the other hand, a transformer augmented with $O(\mathrm{poly}(n))$ chain of thought can simulate any function described by a circuit of polynomial size~\citep{Feng+23,Li+24,MeSa23}, as the additional chain of thought provides computational depth to the model. As such, it seems theoretically satisfying that when chain of thought grows during RL, the model's performance improves on complex tasks. One shortcoming of this perspective is that it does not aim to explain why or how optimization pressures lead to length increase, which was the main focus of our work.
On a more abstract level, the approximation advantage of long responses suggested by prior work is indeed fundamental, as one cannot hope to learn a task without the capacity to represent it\footnote{The actual situation is a bit more nuanced, as exact representation is not a strict requirement for learning through approximation (which is the goal in e.g. PAC learning). See~\cite{KMS20,MaSh22} for some alternative definitions of approximation which can be fruitful for understanding learning.}. Nevertheless, the learning advantage we captured in this paper can be more prevalent as it even appears in cases where representation is not an issue (such as the parity task we considered).
Based on the above, we suggest interpreting the growth of the length during reinforcement learning of autoregressive models as a bona fide learning phenomenon.

\section*{Acknowledgements}
JK thanks the Simons Foundation for support through the Collaborative Grant “The Physics of Learning and Neural Computation”. NT and JK acknowledge support by the NSF through NRT Award 1922658. 

\bibliography{refs}
\bibliographystyle{apalike}

\newpage

\appendix

\section*{Table of Contents}
\startcontents[sections]
\printcontents[sections]{l}{1}{\setcounter{tocdepth}{2}}

\section{Models and Algorithms}\label{sec:app_algorithms}

Herein, we describe the main models and training algorithms we consider in the parity experiments.

\subsection{Architecture}
A transformer is a sequence-to-sequence neural network, which in its simplest form consists of layers of self-attention followed by a position-wise feed-forward network. The term ``autoregressive'' indicates that the prediction of the model for each new token is conditioned only on tokens that appeared earlier in the sequence.
To obtain a distribution over the next token, it is common to compose the output of the transformer with a soft-max layer across the vocabulary index, which can be potentially parameterized by a temperature value.

\subsection{Pre-training} The next-token prediction objective consists of the log loss of the model token distribution evaluated at output positions $1$ to $d$. We omit the first $d$ input positions of the sequence from the loss objective, as input bits $x_1, \ldots, x_d$ are distributed uniformly at random. If we denote the predicted sequence-level distribution by $\pi_{\vartheta}(y \mid \mathbf{x}) = \prod_{j = 1}^{|y|} \pi_{\vartheta, j}(y_j \mid \mathbf{x}, y_{<j}) \in (0, 1)$, which is induced by the prediction of the transformer at each position composed with the soft-max, then training corresponds to:
\begin{equation}\label{eq:ntp_main}
    \min_{\vartheta} \mathbb{E}_{\left(\mathbf{x}, y\right) \sim \mathcal{D}(p_{cot})} \left[ \sum_{j = 1}^{|y|} - \ln \pi_{\vartheta, j}\left(y_j \mid  \mathbf{x}, y_{<j}\right)\right]
\end{equation}

\subsection{Post-training} We consider three main reinforcement learning algorithms: \textit{Self-Taught Reasoner} (STaR)~\citep{Zel+22}, \textit{REward Increment=Nonnegative Factor times Offset Reinforcement times Characteristic Eligibility} (REINFORCE)~\citep{Wil92} and \textit{Group Relative Policy Optimization} (GRPO)~\citep{Sha+24}. Let $r:\mathcal{X} \times \mathcal{Y} \mapsto \{0, 1\}$  be a binary reward function. We consider outcome-based rewards, which assign one reward value per sequence.

\paragraph{STaR} In STaR (Algorithm~\ref{alg:star}), we first query the pre-trained model to generate responses for inputs. We then filter out those with 0 reward and fine-tune the model on the rest using the next-token prediction objective. We repeat this process for several rounds, each time sampling from the model returned by the previous round. The algorithm over one round corresponds to:
\begin{equation}\label{eq:star_objective}
    \min_{\vartheta} \mathbb{E}_{\substack{\mathbf{x} \sim \mathrm{Rad}(1/2)^{\otimes d}, \\ y \sim \pi_{\vartheta_{\mathrm{prev}}}(\cdot \mid \mathbf{x}) }} \left[ \sum_{j = 1}^{|y|} - \ln \pi_{\vartheta, j}\left(y_j \mid  \mathbf{x}, y_{<j}\right) \Bigg | r(\mathbf{x}, y ) = 1 \right],
\end{equation}
where $\vartheta_{prev} \in \mathbb{R}^p$ corresponds to parameters returned in the previous round of the algorithm.

The pseudocode of STaR can be found in Algorithm~\ref{alg:star}.

\begin{algorithm}
\caption{Self-Taught Reasoner (STaR) Algorithm}
\label{alg:star}
\begin{algorithmic}[1]
\Require Pre-trained model parameters $\vartheta_0$, RL rounds $n$, Fine-tuning epochs per round $E$, Input distribution $\mathcal{D}_{\mathbf{x}} = \mathrm{Rad}(1/2)^{\otimes d}$, Reward function $r(\mathbf{x}, y)$, Number of samples to generate per round $N$.

\State Set $\vartheta = \vartheta_0$

\For{$r = 1$ to $n$}
    \State Set $\mathcal{S} = \emptyset$
    \For{$i = 1$ to $N$}
        \State Sample $\mathbf{x} \sim \mathcal{D}_{\mathbf{x}}$.
        \State Sample $y \sim \pi_{\vartheta}(\cdot \mid \mathbf{x})$.
        \If{$r(\mathbf{x}, y) = 1$}
            \State Set $\mathcal{S} = \mathcal{S} \cup \{(\mathbf{x}, y)\}$.
        \EndIf
    \EndFor
    \For{epoch = $1$ to $E$}
        \For{each $(\mathbf{x}, y) \in \mathcal{S}$}
            \State Update $\vartheta$ by taking a gradient step on the next-token prediction loss for $(\mathbf{x}, y)$.
        \EndFor
    \EndFor
\EndFor
\State \Return Final model parameters $\vartheta$.
\end{algorithmic}
\end{algorithm}

\paragraph{REINFORCE} REINFORCE is a standard policy gradient algorithm. It seeks to maximize the expected reward of a sequence generated by the model:
\begin{equation}
    \max_{\vartheta} \mathbb{E}_{\substack{\mathbf{x} \sim \mathrm{Rad}(1/2)^{\otimes d}, \\ y \sim \pi_{\vartheta}(\cdot \mid \mathbf{x}) }} \left[ r(\mathbf{x}, y) \right].
\end{equation}
The gradient of the objective function can be written as:
\begin{equation}
    \mathbb{E}_{\substack{\mathbf{x} \sim \mathrm{Rad}(1/2)^{\otimes d}, \\ y \sim \pi_{\vartheta}(\cdot \mid \mathbf{x}) }} \left[ \nabla_{\vartheta} \ln \pi_{\vartheta} (y | \mathbf{x}) r(\mathbf{x}, y) \right],
\end{equation}
which implies that the algorithm can be recast as:
\begin{equation}
    \min_{\vartheta} \mathbb{E}_{\substack{\mathbf{x} \sim \mathrm{Rad}(1/2)^{\otimes d}, \\ y \sim \pi_{\vartheta}(\cdot \mid \mathbf{x}) }} \left[- \ln \pi_{\vartheta}\left(y \mid  \mathbf{x}\right)  r(\mathbf{x}, y ) \right],
\end{equation}
where the random variable $y$ is treated as a constant and not as a function of $\vartheta$. 
This is the surrogate objective that is optimized in practice. 
Observe the similarities with the STaR objective of \eqref{eq:star_objective}. 
Furthermore, a widely adopted option is to center the rewards in order to reduce the variance of the updates during the execution of a stochastic optimization algorithm. This leads to the final version of the REINFORCE method:
\begin{equation}
    \min_{\vartheta} \mathbb{E}_{\substack{\mathbf{x} \sim \mathrm{Rad}(1/2)^{\otimes d}, \\ y \sim \pi_{\vartheta}(\cdot \mid \mathbf{x}) }} \left[ \sum_{j = 1}^{|y|} - \ln \pi_{\vartheta, j}\left(y_j \mid  \mathbf{x}, y_{<j}\right)  A(\mathbf{x}, y) \right],
\end{equation}
where $A(\mathbf{x}, y)=r(\mathbf{x}, y ) - \mathbb{E}_{\substack{\mathbf{x} \sim \mathrm{Rad}(1/2)^{\otimes d}, \\ y \sim \pi_{\vartheta}(\cdot \mid \mathbf{x}) }} \left[ r(\mathbf{x}, y ) \right]$ is often called the advantage function. 

\paragraph{GRPO} Group Relative Policy Optimization, or GRPO for short, is a policy gradient algorithm, motivated by the class of Proximal Policy Optimization (PPO) algorithms~\citep{Sch+17}, that is widely used for post-training large language models with reinforcement learning. Its objective amounts to:

\[
\mathbb{E}_{\substack{\mathbf{x}\sim\mathrm{Rad}(1/2)^{\otimes d}\\ y^{(1)}, \ldots, y^{(N)}\sim\pi_{\mathrm{old}}(\cdot\mid\mathbf{x})}}
\Bigg[\frac{1}{N}\sum_{i=1}^N \frac{1}{|y^{(i)}|}\sum_{j=1}^{|y^{(i)}|}
\min\!\big\{ r_{i,j} A_i,\ \operatorname{clip}(r_{i,j},1-\epsilon_{\mathrm{clip}},1+\epsilon_{\mathrm{clip}})\,A_i\big\}\Bigg],
\]
where $A_i = \frac{r(\mathbf{x}, y^{(i)}) - \mathrm{mean} \left( r\left(\mathbf{x}, y^{(1)}\right), \ldots, r\left(\mathbf{x}, y^{(N)}\right) \right)}{\mathrm{std} \left( r\left(\mathbf{x}, y^{(1)}\right), \ldots, r\left(\mathbf{x}, y^{(N)}\right) \right)}$ is a normalized reward, $\epsilon_{\mathrm{clip}}$ is a hyperparameter and $\pi_{\mathrm{old}}$ is a predicted distribution that corresponds to an earlier model checkpoint during the execution of the algorithm. We defined $r_{i,j}:=\frac{\pi_{\vartheta,j}\left(y^{(i)}_j\mid \mathbf{x},y^{(i)}_{<j}\right)}{\pi_{\mathrm{old},j}\left(y^{(i)}_j\mid \mathbf{x},y^{(i)}_{<j}\right)}$. The term ``Group'' indicates sampling of a group of $N$ responses per input. 

\section{Experimental Details}

The experiments are implemented using PyTorch~\citep{Pas+19}.

\subsection{Parity}\label{ssec:details_parity}

We use the Transformers library~\citep{Wol+20} for our parity experiments.

\paragraph{Architecture}

The transformer is initialized with context length $d^\prime$=$2d$, to be able to process the whole sequence, where recall $d$ is the number of input bits. 
The two architectures we consider primarily differ in the type of positional encodings: GPT2 uses absolute positional encodings of the input sequence, while Mistral uses relative positional encodings~\citep{Su+24}. We use the \code{GPT2Config}, \code{MistralConfig} config classes to define the models. We consider several model hyperparameters: depth (number of transformer blocks) $L \in \{2, 4, 8\}$, embedding dimension $d_{\mathrm{emdb}} \in \{128, 256\}$, number of heads $n_{\mathrm{heads}} \in \{4, 32\}$. By the default convention, each head has dimension $d_{\mathrm{emdb}} / n_{\mathrm{heads}}$.

\paragraph{Pre-training}
We consider distributions with $$p_{\mathrm{cot}} \in \{0, 0.05, 0.1, 0.25, 0.3, 0.32, 0.33, 0.34, 0.35, 0.4, 0.5, 0.6, 0.75, 1.0 \}.$$
We use the Adam~\citep{KiBa15} optimizer with learning rate $\eta_{\mathrm{pre}}$=$10^{-3}$ and a batch size $B_{\mathrm{pre}}$=$256$. Test statistics are estimated using 2,560 new samples.

In Figure~\ref{fig:parity_pretrain} (right), each bullet corresponds to the final test accuracy with greedy decoding over 3 random seeds. The final accuracy of each run is defined as the average accuracy from the last 5 checkpoints of the pre-trained run (that is, average accuracy at iterations 47,500 to 49,500).

\paragraph{Post-training}
We continue post-training from the last checkpoint of pre-training. For STaR, we generate $3,200$ samples per RL iteration and train on the correct ones for 3 epochs with batch size 64. For REINFORCE and GRPO, we use batch size 64. We found it necessary to use a larger sample size per RL iteration for STaR. The GRPO experiments use $N=4$ rollouts per input and clip parameter $\epsilon_{\mathrm{clip}} = 0.2$. No KL penalty is applied (as outlined in Section~\ref{sec:app_algorithms}). We use the Adam~\citep{KiBa15} optimizer with learning rate $\eta_{\mathrm{pre}}$=$10^{-4}$. Test statistics are estimated using 640 new samples.

\subsection{Number Multiplication}\label{ssec:details_mult}

\paragraph{Architecture} We use the GPT2~\citep{Rad+19} architecture which contains 124,439,808 parameters and the default GPT2 tokenizer that includes a vocabulary of size 50,257 and utilizes Byte Pair Encoding~\citep{SHB16}.

\paragraph{Pre-training} We consider datasets with $p_{\mathrm{cot}} \in \{0.1, 0.25, 0.5, 1 \}$. The dataset and the pre-training code is based on the public repository of~\citep{DCS24}. We train with the next-token prediction objective applied to the chain of thought (if it exists) and the answer tokens of the sequence. We use the AdamW~\citep{LoHu19} optimizer with learning rate $\eta_{\mathrm{pre}}$=$5\cdot10^{-5}$, batch size $B_{\mathrm{pre}}$=32, and maximum gradient norm equal to 1.0. We train for 15 epochs for the $\code{4} \times \code{4}$ task and 50 epochs for $\code{5} \times \code{5}$ and $\code{7} \times \code{7}$. All 3 training datasets consist of 808,000 unique samples. We train each model (3 random seeds per task) in a single GPU (NVIDIA A100-SXM4-80GB). For $\code{4} \times \code{4}$, each run takes approximately 3-5 hours to complete (depending on the value of $p_{\mathrm{cot}}$). For $\code{5} \times \code{5}$ and $\code{7} \times \code{7}$, the completion time ranges in 21-38 hours.

\paragraph{Post-training} We train with GRPO with an end-to-end reward function (that is, 1 if the tokens generated after $\code{\#\#\#\#}$ correspond to the correct answer\footnote{That is, ``complete number match''.} and 0 otherwise). We use the AdamW~\citep{LoHu19} optimizer with learning rate $\eta_{\mathrm{post}}$=$3\cdot10^{-6}$ (and Huggingface's Transformers'~\citep{Wol+20} default rest of hyperparameters), batch size $B_{\mathrm{post}}$=16, group size (for GRPO) equal to 4, 2 steps of gradient accumulation (which makes the effective batch size equal to 32), no KL penalty, $\epsilon_{\mathrm{clip}}=0.2$. We use generation temperature $\tau_{\mathrm{RL}}=1.0$. We use the TRL library for efficient post-training~\citep{Wer+20}.
We found it necessary to implement a wrapper around the model forward's function to force the model to be in evaluation mode during the generation of the rollouts for RL (that is, to turn off any sources of non-determinism in the model, such as dropout layers). Otherwise, under TRL's default implementation, the model was not able to generate correct responses, even during the very first RL iteration. We perform post-training for each random seed in a single GPU (NVIDIA A100-SXM4-80GB) which concludes in about 3.5 hours.

Test accuracy is estimated using a set of 1000 unseen pairs of numbers.

Estimate of total GPU hours needed to reproduce the pre-training and post-training results: 900

\subsection{GSM8K}\label{ssec:details_gsm8k}

Our code is based on the torchtune library~\citep{torchtune}.

\paragraph{Architecture} We use the Llama 3.2 3B base model for our GSM8K experiments.

We split each dataset into equal parts, perform supervised fine-tuning (SFT) on the first half with the next-token prediction objective applied to the answer portion of the sequence, and then switch to RL on the second half.
 
\paragraph{Supervised fine-tuning} We train with the next-token prediction objective applied to the chain of thought and answer tokens of the sequence. We structure the data with preamble prompt (not calculating loss):
\code{
        "A conversation between User and Assistant. The user asks a question, and the Assistant solves it. 
        The assistant first thinks about the reasoning process in the mind and then provides the user with the answer. 
        The reasoning process and answer are enclosed within <think></think> and <answer></answer> tags, respectively, 
        i.e., <think>reasoning process here</think> <answer>answer here</answer>. User: \{question\} Assistant: "}
and structure the following sequence as: \code{<think>{cot}</think> <answer>{answer}</answer>}, where \code{cot} is everything that exists before characters \code{\#\#\#\#} in the original GSM8k dataset. As before, we drop the \code{cot} from each sample with probability $p_{\mathrm{cot}}$.

We use the AdamW~\citep{LoHu19} optimizer with learning rate $\eta_{\mathrm{sft}}$=$10^{-5}$, batch size $B_{\mathrm{sft}}$=256, and bf16 precision. We train for 30 epochs. We consider datasets with $p_{\mathrm{cot}} \in \{0.1, 0.25, 0.5, 1 \}$. We use PyTorch 
for distributed training. Each run is performed in 8 GPUs (NVIDIA A100-SXM4-80GB) and takes 1.5 hour to complete.

\paragraph{Post-training} We train with GRPO with a reward function that has stepwise structure: 5 points for the presence of the answer tag, 5 points for the presence of the thinking tag, 20 points if it contains the correct answer in part of the response, and 100 points for a correct answer at the appropriate place. We found it important using the \code{math-verify} module of Huggingface for simplifying mathematical expressions and accurately rewarding generations.
We use the AdamW~\citep{LoHu19} optimizer with learning rate $\eta_{\mathrm{post}}$=$10^{-5}$, cosine learning rate scheduler, batch size $B_{\mathrm{post}}$=1, group size (for GRPO) equal to 32, no KL penalty, $\epsilon_{\mathrm{clip}}=0.2$. 
We use generation temperature $\tau_{\mathrm{RL}}=1.0$.   We perform post-training in 8 GPUs (NVIDIA A100-SXM4-80GB) and each run concludes in 31-35hours and 20hours for $p_{\mathrm{cot}} =0$.

Test statistics are estimated on the test set of GSM8k using 4 generations per question and taking the average.

Estimate of total GPU hours needed to reproduce the pre-training and post-training results: 11k

\subsection{MATH}\label{ssec:details_math}

Our code is based on the torchtune library~\citep{torchtune}.

\paragraph{Architecture} We use the Llama 3.2 3B, and 3.1 8B instruct models.

We split each dataset into equal parts\footnote{We split the datasets, since they are of small size and we perform multi-epoch training during SFT.}, perform supervised fine-tuning (SFT) on the first half with the next-token prediction objective applied to the answer portion of the sequence, and then switch to RL on the second half.
 
\paragraph{Supervised fine-tuning} We train with the next-token prediction objective applied to the chain of thought and answer tokens of the sequence. We structure the data with preamble prompt (not calculating loss):
\begin{verbatim}
<|begin of text|><|start header id|>system<|end header id|>
Cutting Knowledge Date: "
"December 2023
Today Date: 26 Jul 2024
A conversation between User and Assistant. The user "
"asks a question, and the Assistant solves it. "
"The assistant first thinks about the reasoning "
"process in the mind and then provides the user with the answer. "
"The reasoning process and "
"answer are enclosed within <think></think> "
"and <answer></answer> tags, respectively, i.e., "
"<think>reasoning process here</think> <answer>answer "
"here</answer>. Inside the answer tag, put only "
"the answer and no additional commentary. <|eot id|><|start "
"header id |>user<|end header id|> {question} "
"<|eot id|><|start header id|>assistant<|end header id|>
\end{verbatim}
and structure the sequence as: \code{<think>{cot}</think> <answer>{answer}</answer>}, where \code{cot} is everything that exists in the original MATH sequence before the boxed answer. We drop the \code{cot} with probability $p_{\mathrm{cot}}$.

We use the AdamW~\citep{LoHu19} optimizer with learning rate $\eta_{\mathrm{sft}}$=$10^{-5}$, batch size $B_{\mathrm{sft}}$=24 for the 3.2 3B model and $B_{\mathrm{sft}}$=16 for the 3.1 8B model, and bf16 precision. We train for 20 epochs. We consider datasets with $p_{\mathrm{cot}} \in \{0.1, 0.25, 0.5, 1 \}$. We use PyTorch for distributed training. Each run is performed in 8 GPUs (NVIDIA A100-SXM4-80GB) and takes 1.5 hour to complete.

\paragraph{Post-training} We train with GRPO with a reward function that has stepwise structure: 5 points for the presence of the answer tag, 5 points for the presence of the thinking tag, 20 points if it contains the correct answer in part of the response, and 100 points for a correct answer at the appropriate place. We found it important using the \code{math-verify} module of Huggingface for simplifying mathematical expressions and accurately rewarding generations.
We use the AdamW~\citep{LoHu19} optimizer with learning rate $\eta_{\mathrm{post}}$=$10^{-5}$ (and Huggingface's
default rest of hyperparameters), batch size $B_{\mathrm{post}}$=1, group size (for GRPO) equal to 20, 2 steps of gradient accumulation (which makes the effective batch size equal to 2), no KL penalty, $\epsilon_{\mathrm{clip}}=0.2$.
We use generation temperature $\tau_{\mathrm{RL}}=1.0$. We perform post-training in 8 GPUs (NVIDIA A100-SXM4-80GB) for 300 steps which concludes in about 14-29 hours.

\section{Additional experimental results}

\subsection{More experiments on main parity setting}\label{ssec:parity_more}

\paragraph{Model hyperparameters ablation} We present aggregated results for pre-training and post-training on $\mathcal{D}(p_{\mathrm{cot}}), p_{\mathrm{cot}}$=0.25 for GPT2 and Mistral architectures of varying embedding dimension and number of heads in Figure~\ref{fig:parity_combined_additional_L2} (depth $L$=2), Figure~\ref{fig:parity_combined_additional_L4} (depth  $L$=4) and Figure~\ref{fig:parity_combined_additional_L8} (depth  $L$=8). Post-training uses GRPO with end-to-end reward function $r_{\mathrm{e2e}}$ and sampling temperature $\tau_{\mathrm{RL}}$=1.0 for all configurations. Notice that the Mistral architecture exhibits more unstable learning (pre-training) than GPT2. We observe that pre-training alone does not induce a model capable of consistent generalization in any of the cases. We plot number of training sequences on the x-axis (as opposed to, say, SGD iterations to allow a simpler comparison with post-training). For the number of post-train samples, we take the multiplicity of GRPO samples into consideration. 

\begin{figure}
    \centering
    \includegraphics[scale=0.2]{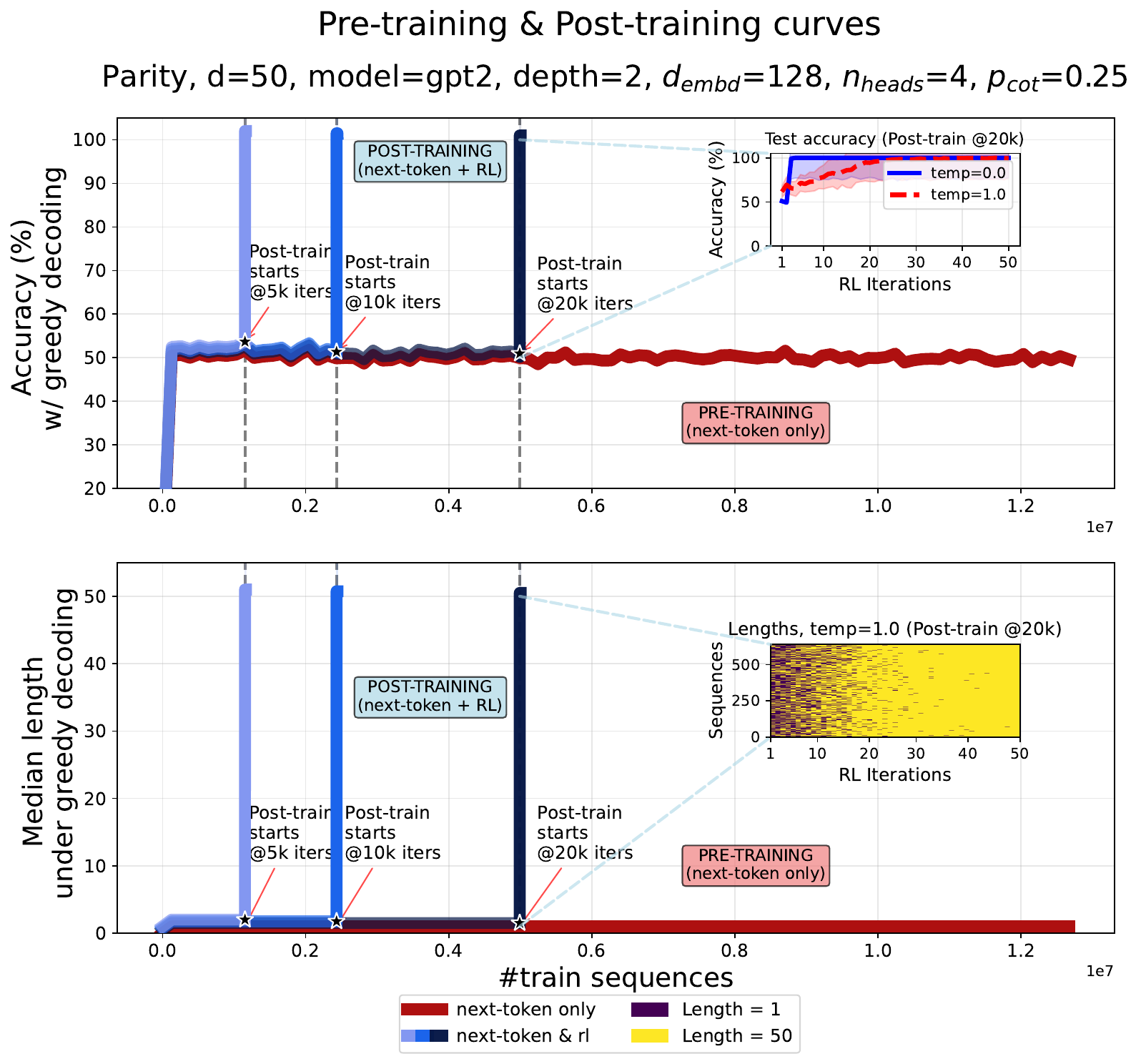}
    \includegraphics[scale=0.2]{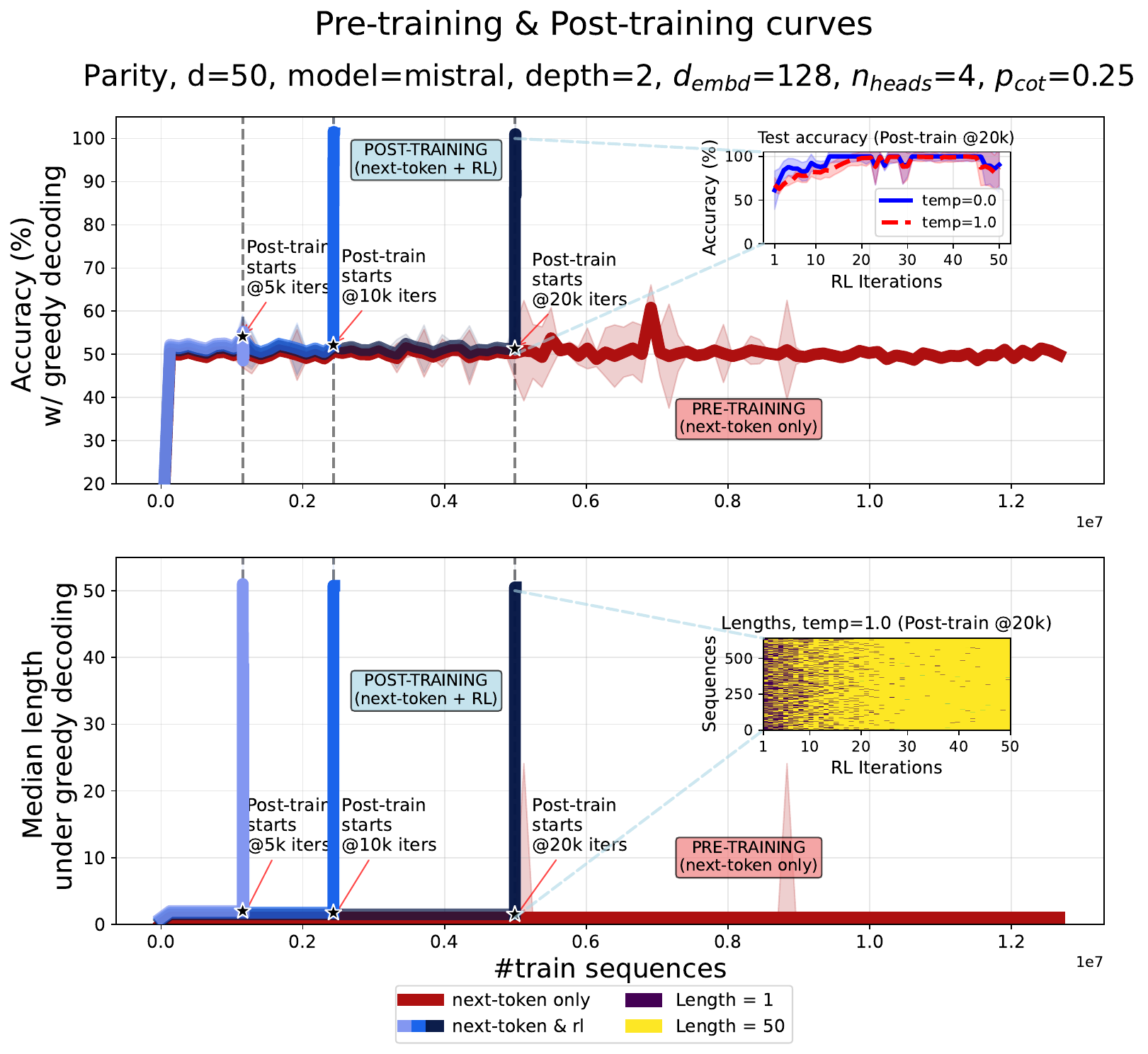}
    \includegraphics[scale=0.2]{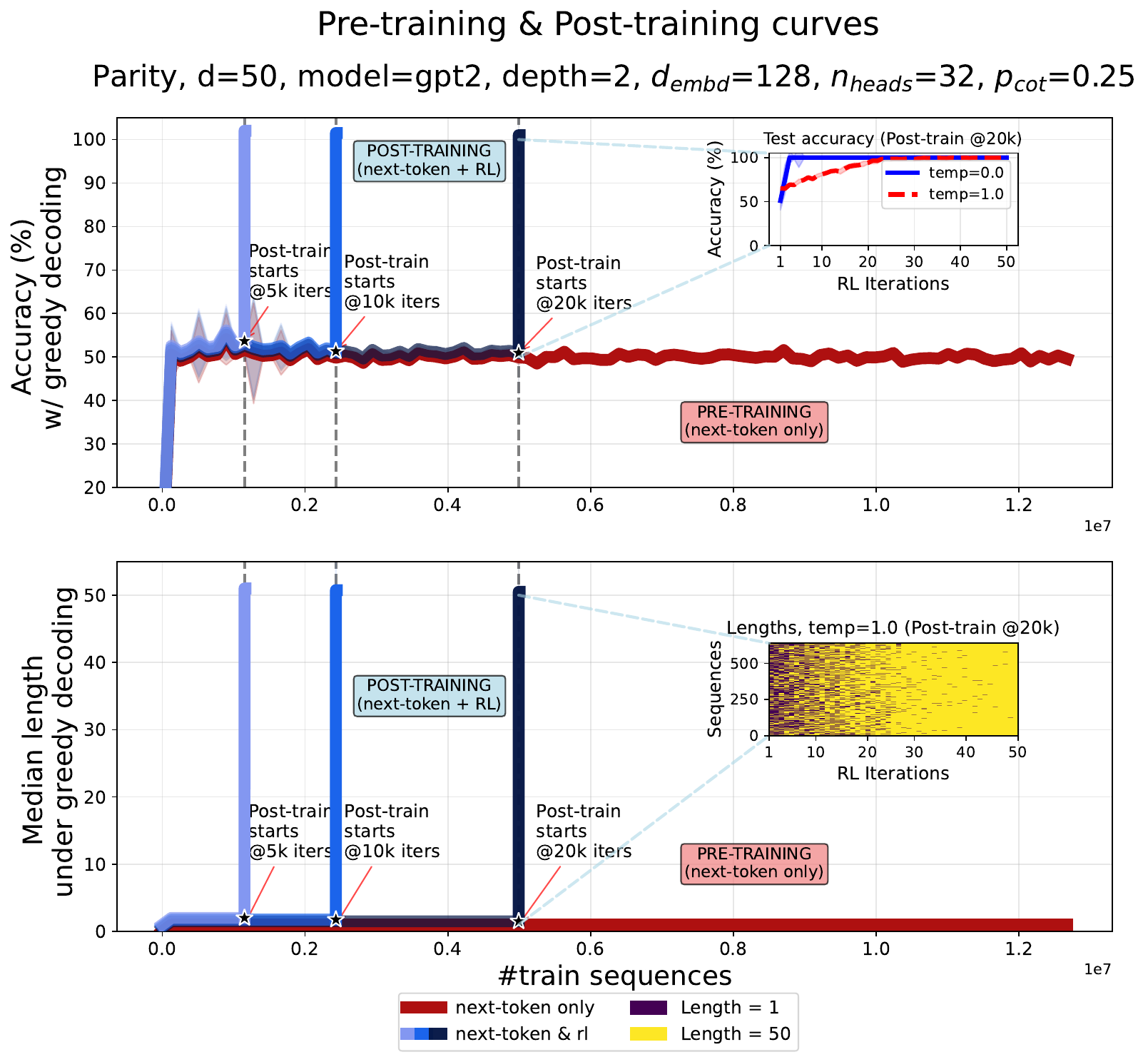}
    \includegraphics[scale=0.2]{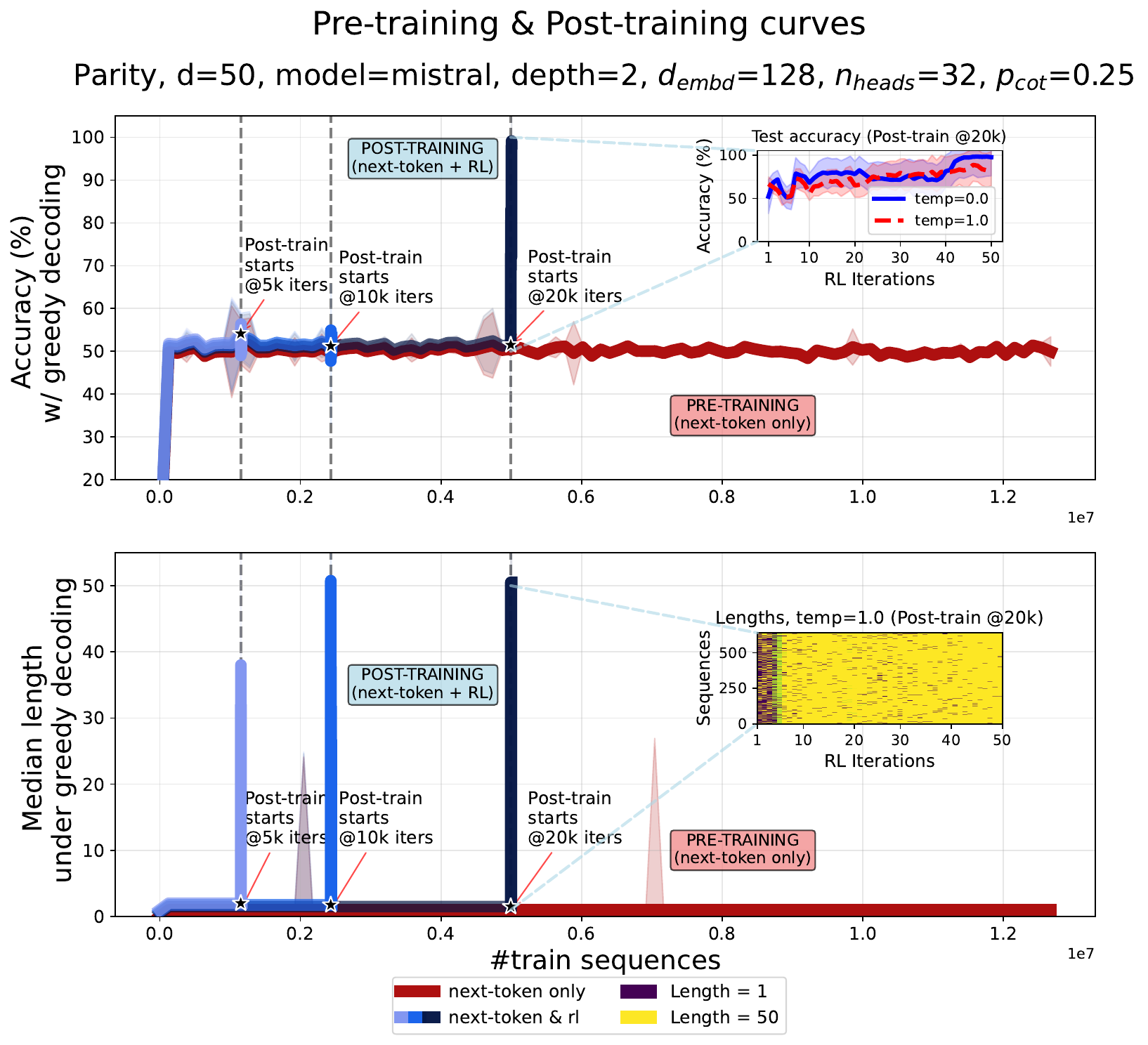}
    \includegraphics[scale=0.2]{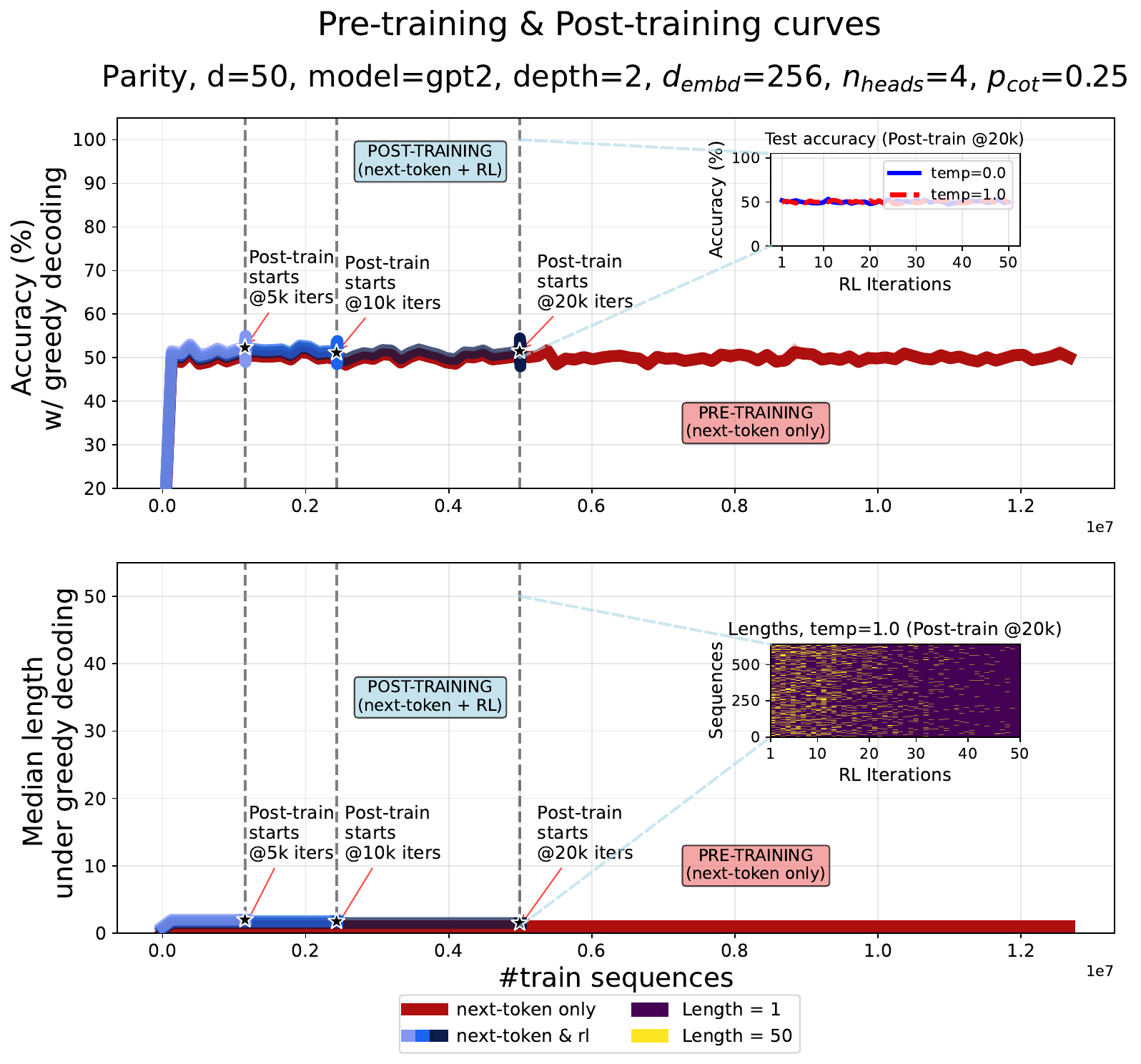}
    \includegraphics[scale=0.2]{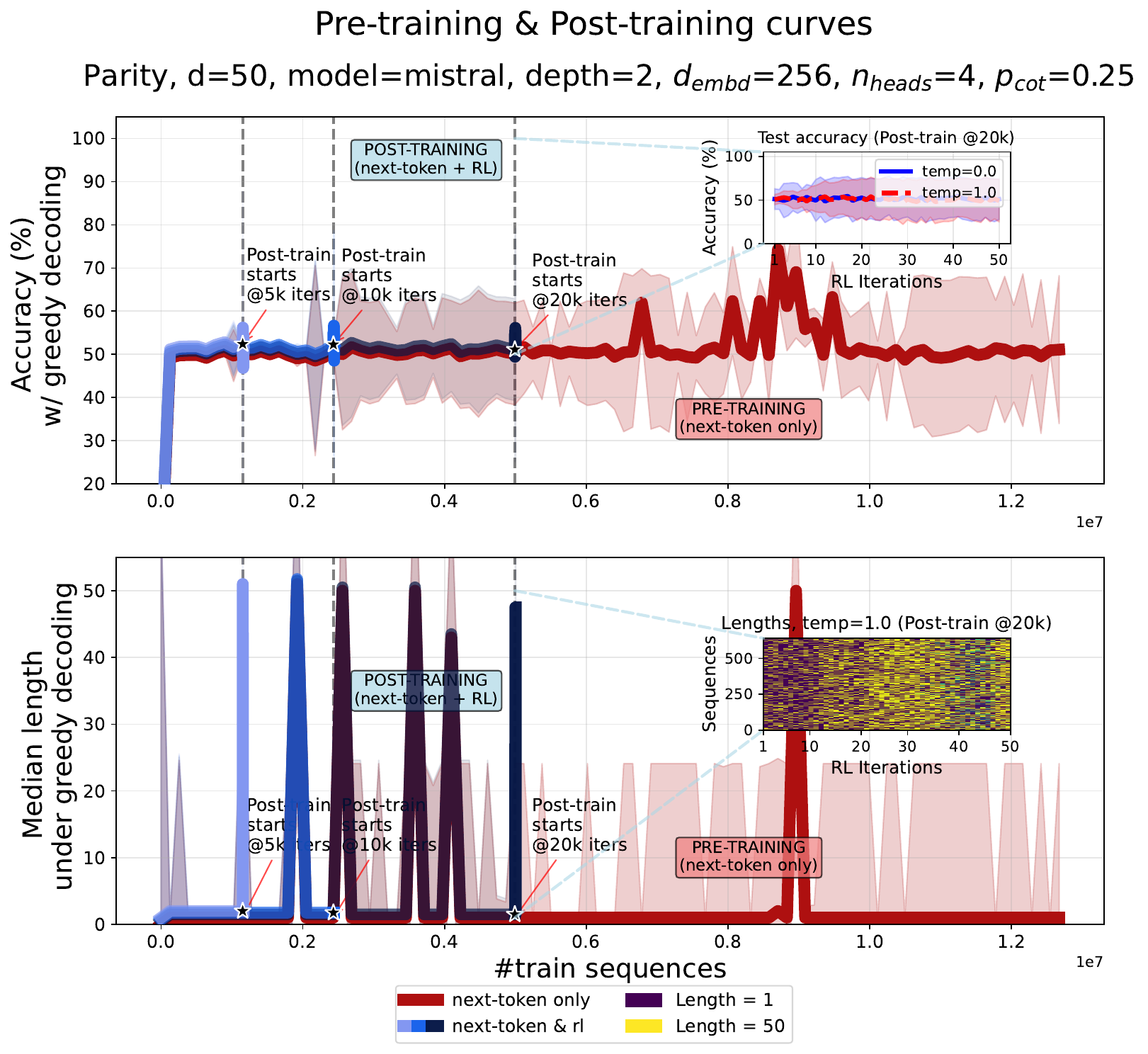}
    \includegraphics[scale=0.2]{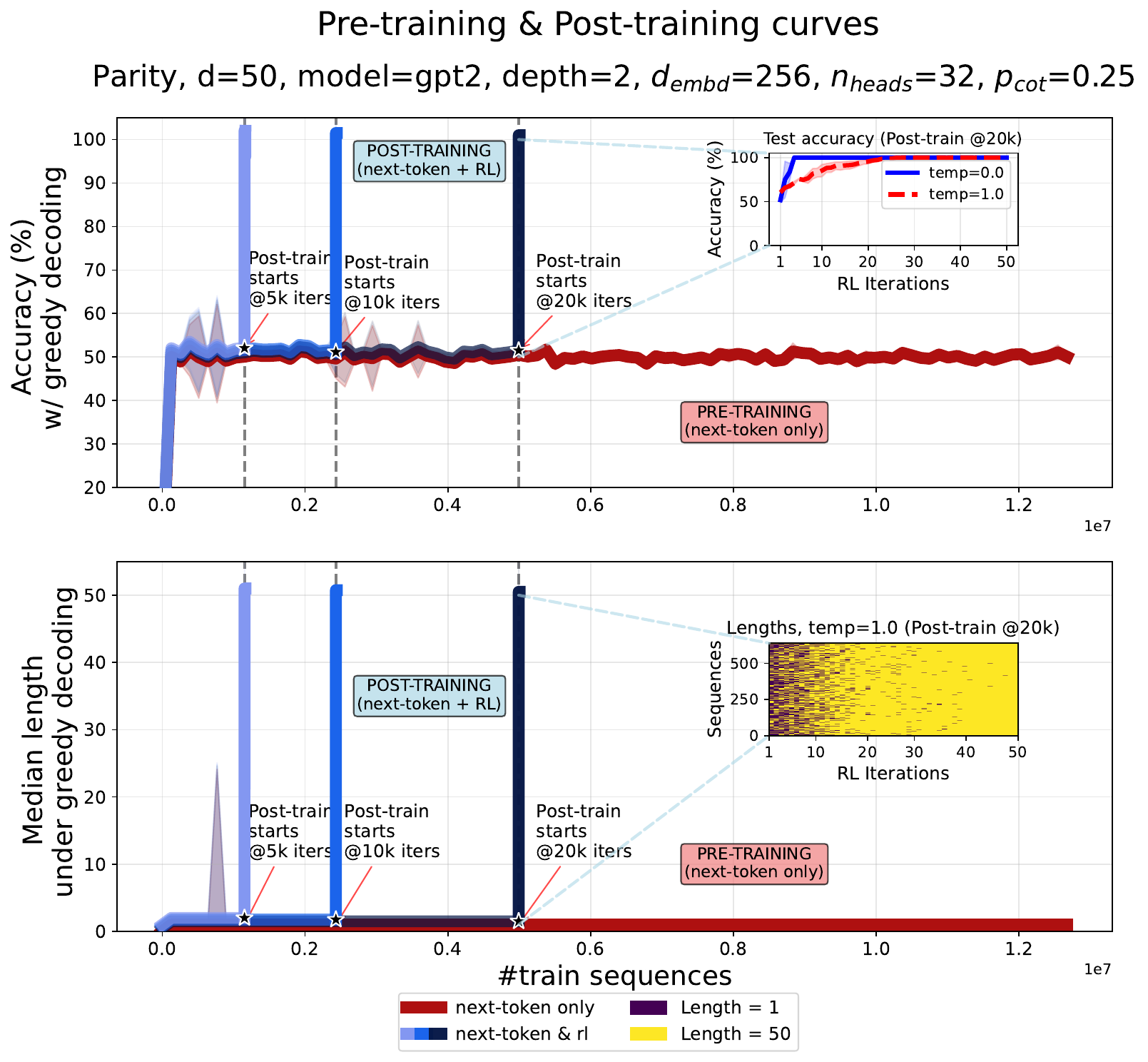}
    \includegraphics[scale=0.2]{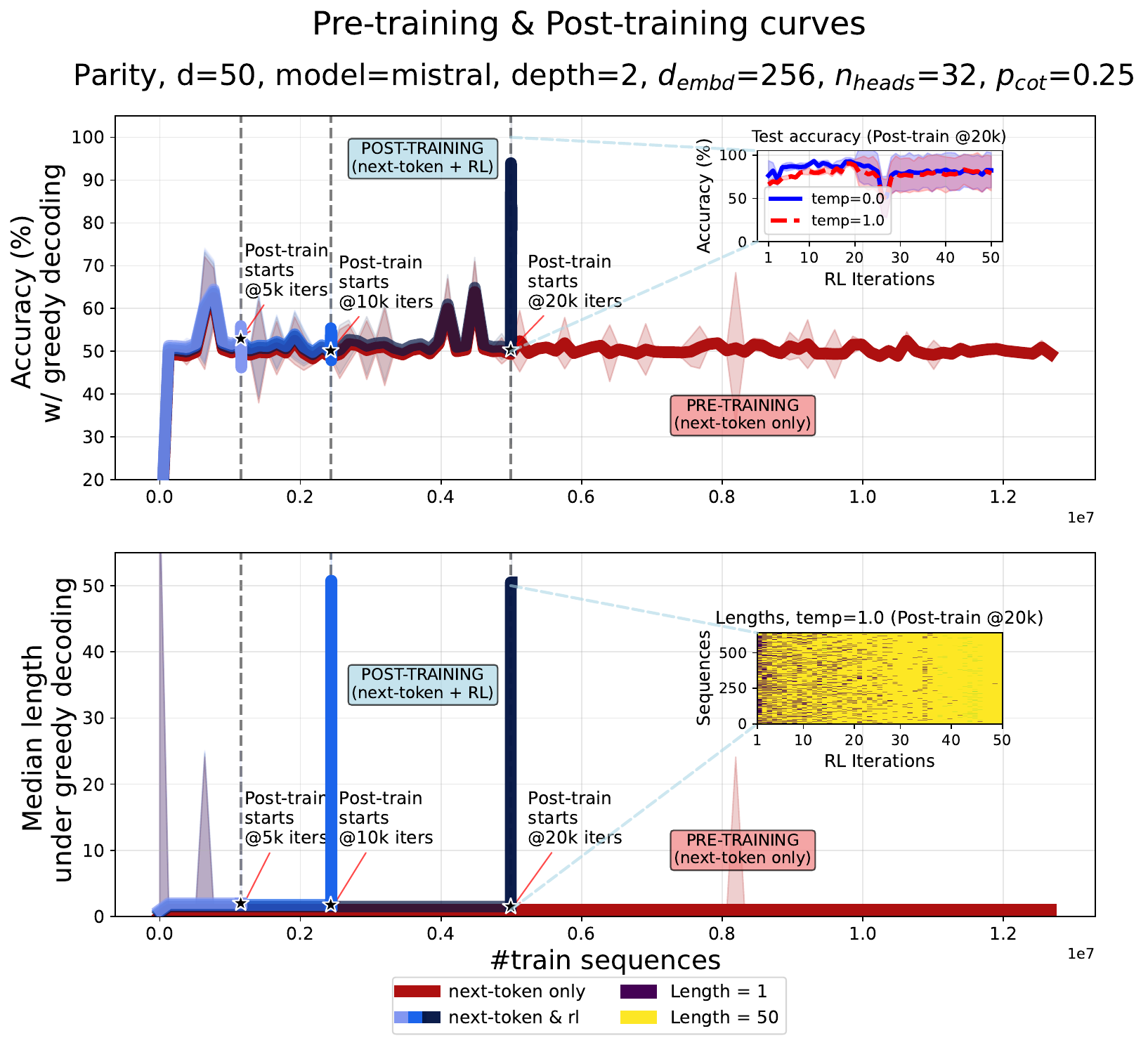}
    \caption{\textbf{Pre-training and post-training curves combined on the parity task for GPT2 and Mistral architectures.} We vary the embedding dimension and the number of heads. Depth $L$=2.}
    \label{fig:parity_combined_additional_L2}
\end{figure}

\begin{figure}
    \centering
    \includegraphics[scale=0.2]{figures/parity/full_training_curves_C2_T_50_depth_4_d_embd_128_n_heads_4_ratio_0.25_model_gpt2_vol1.pdf}
    \includegraphics[scale=0.2]{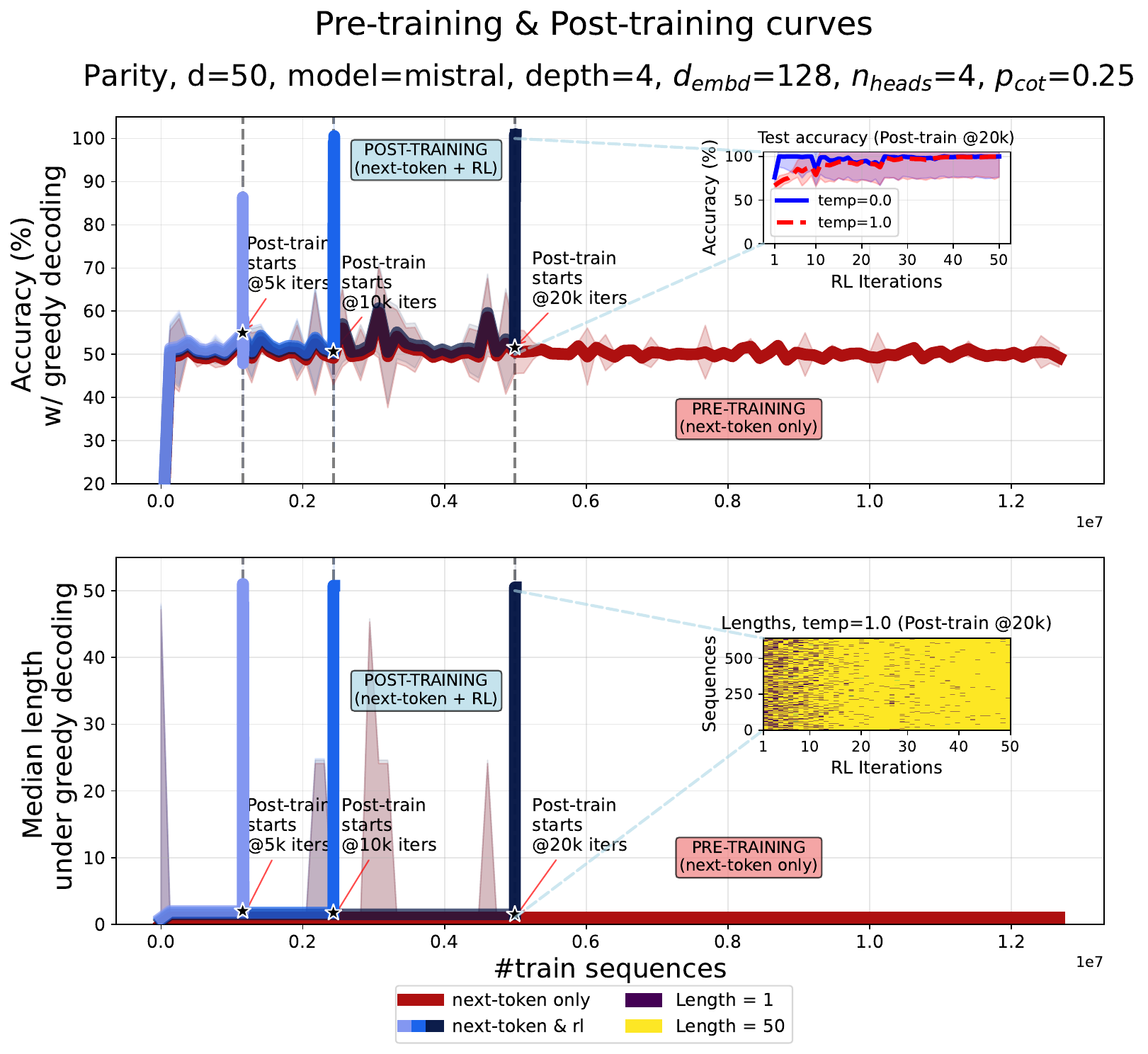}
    \includegraphics[scale=0.2]{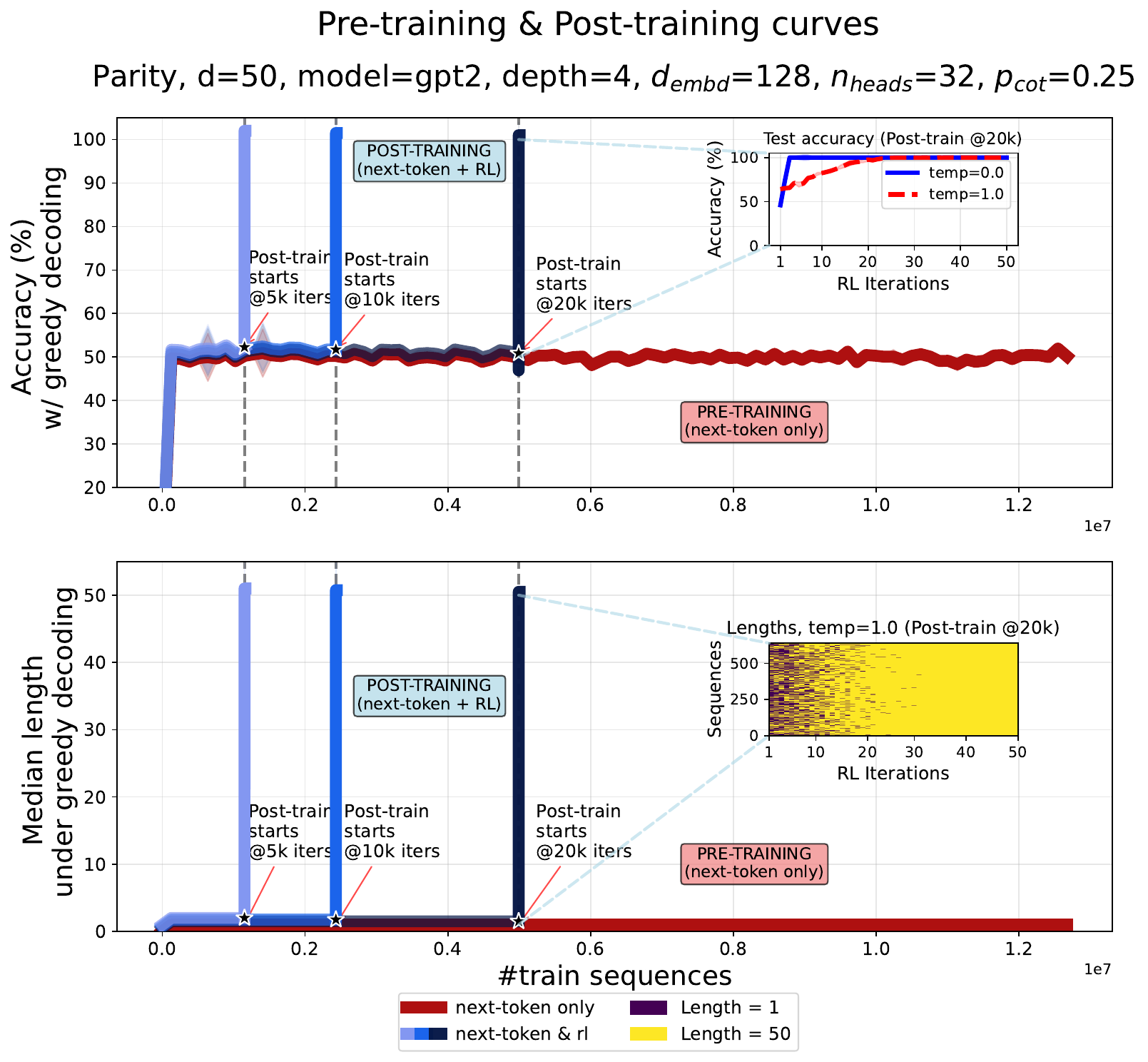}
    \includegraphics[scale=0.2]{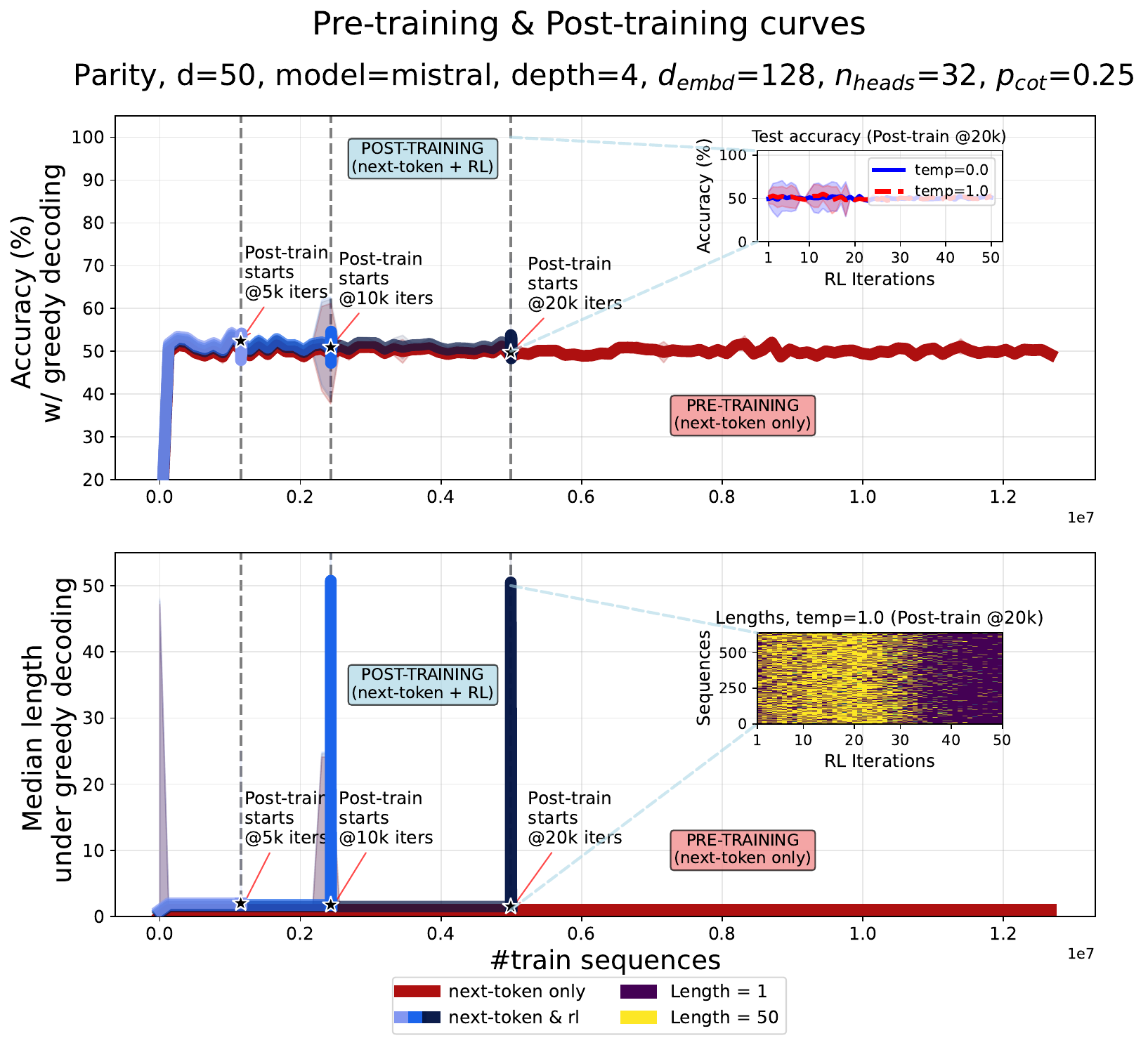}
    \includegraphics[scale=0.2]{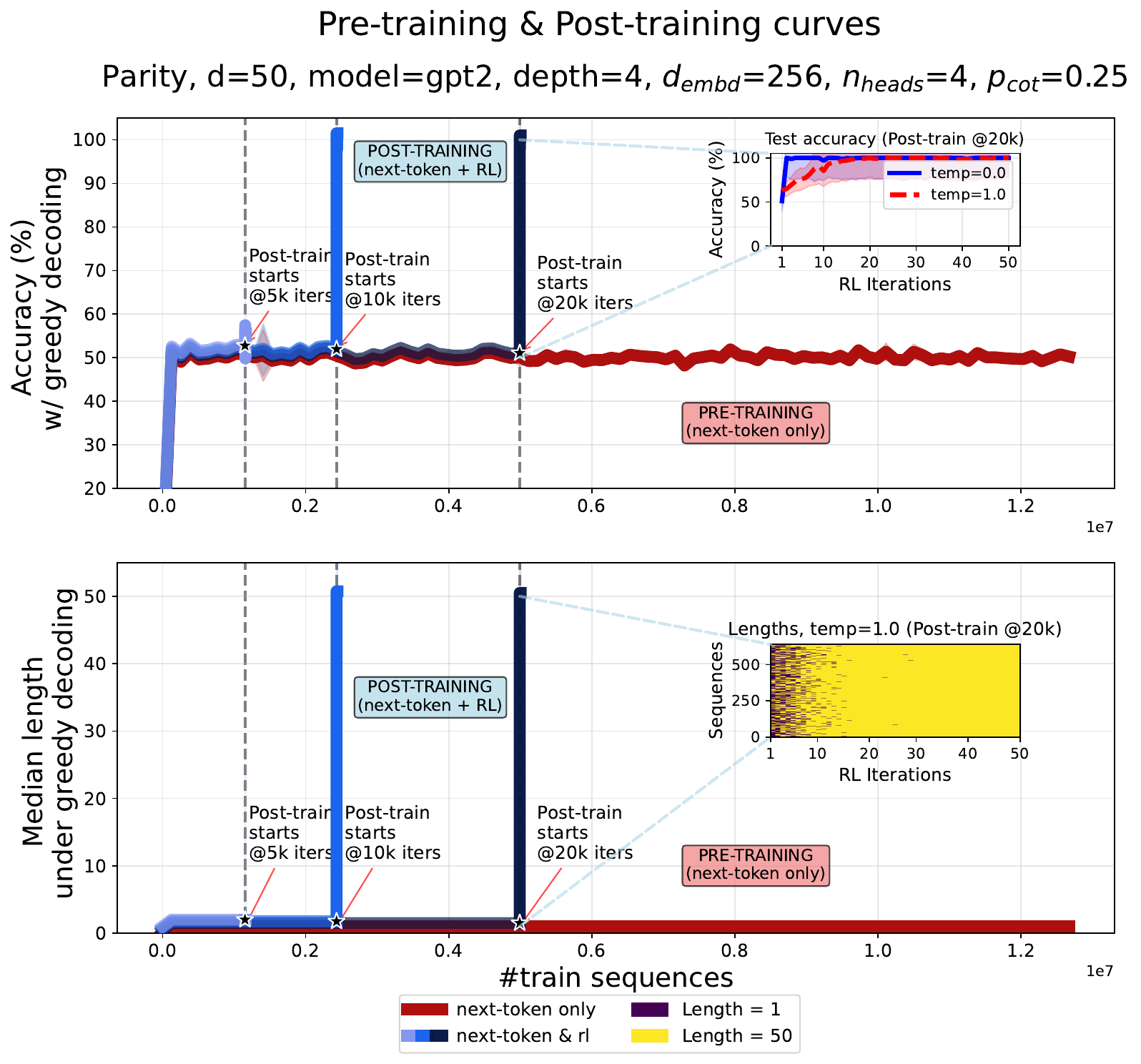}
    \includegraphics[scale=0.2]{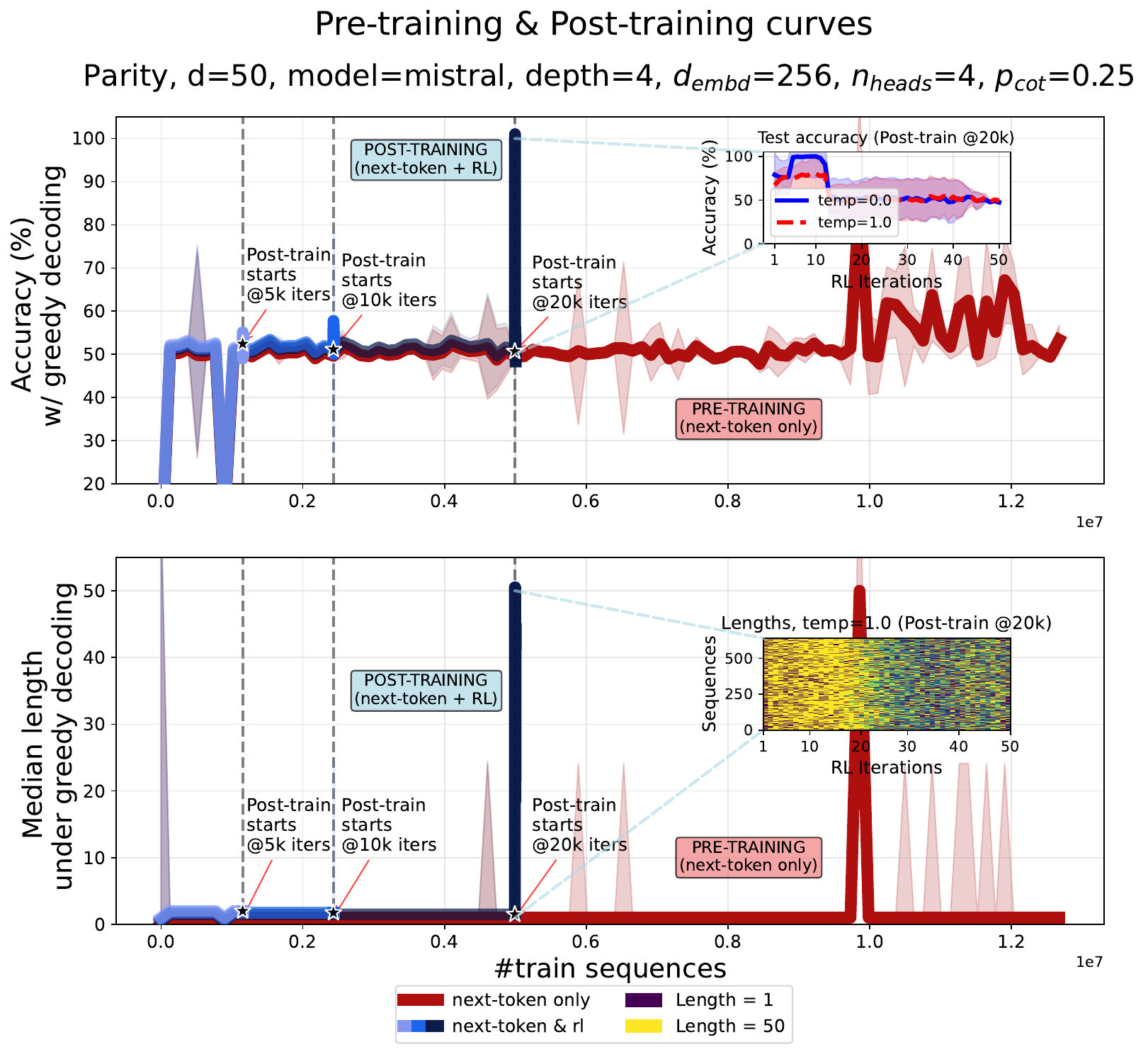}
    \includegraphics[scale=0.2]{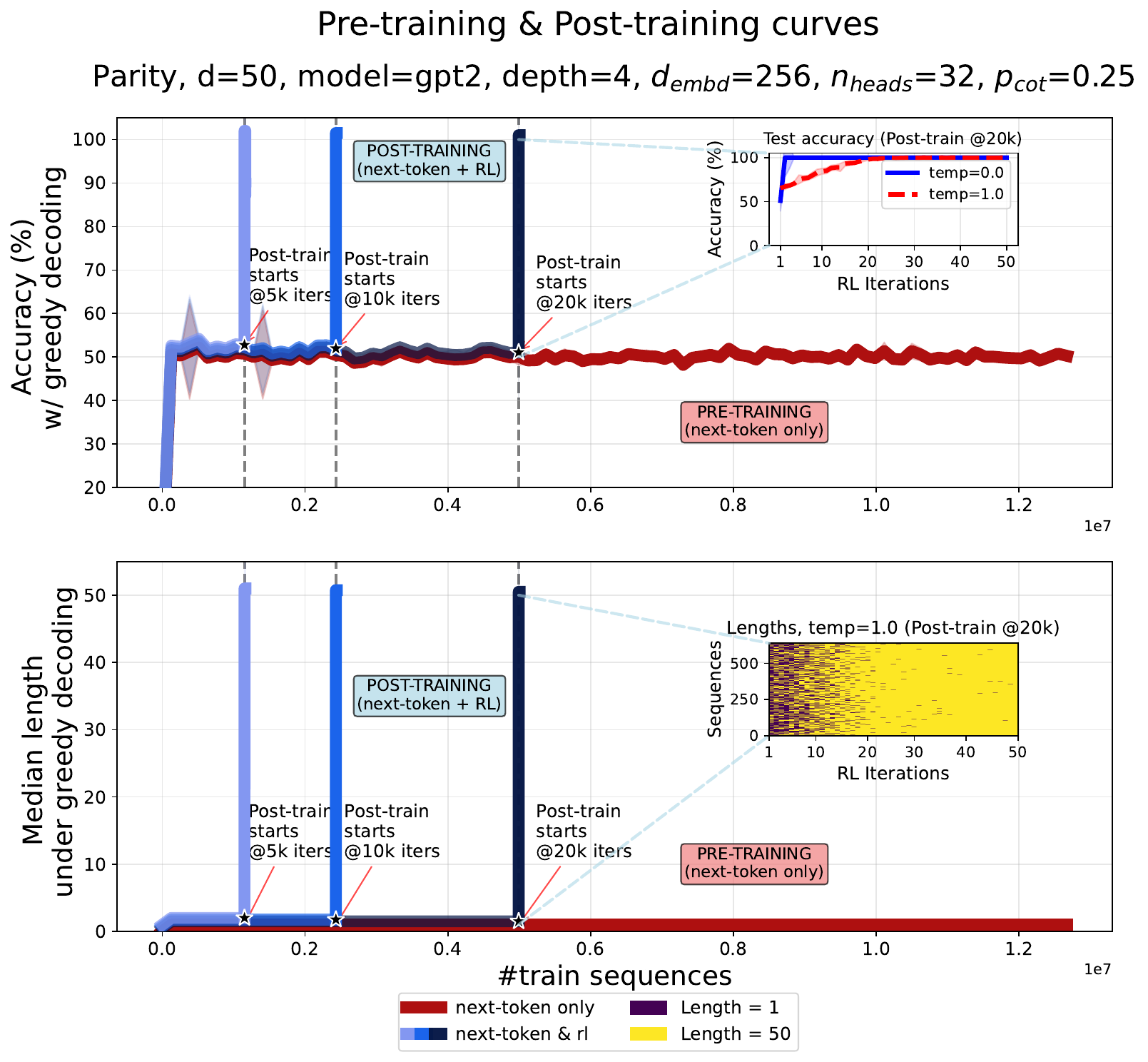}
    \includegraphics[scale=0.2]{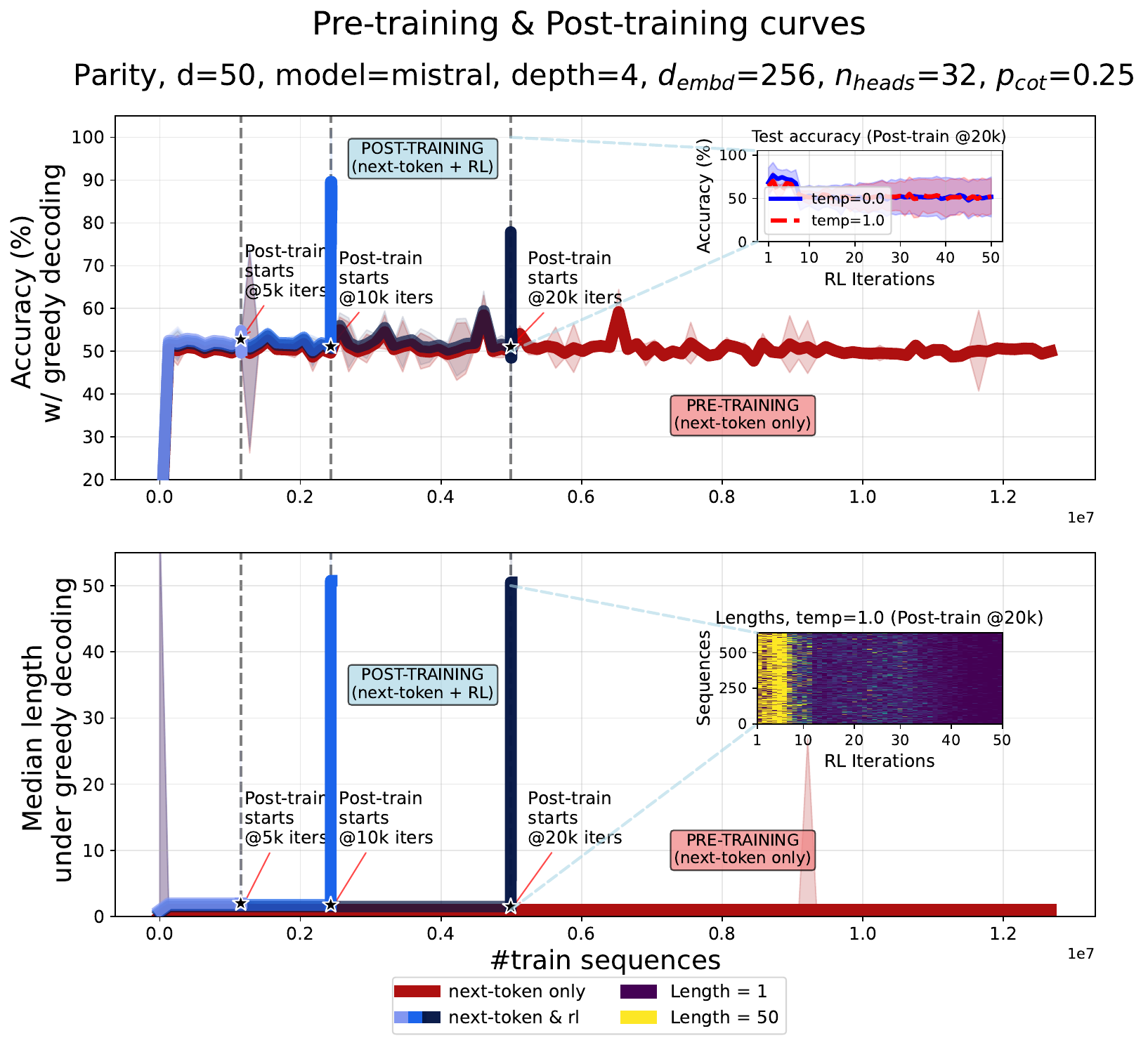}
    \caption{\textbf{Pre-training and post-training curves combined on the parity task for GPT2 and Mistral architectures.} We vary the embedding dimension and the number of heads. Depth $L$=4.}
    \label{fig:parity_combined_additional_L4}
\end{figure}

\begin{figure}
    \centering
    \includegraphics[scale=0.2]{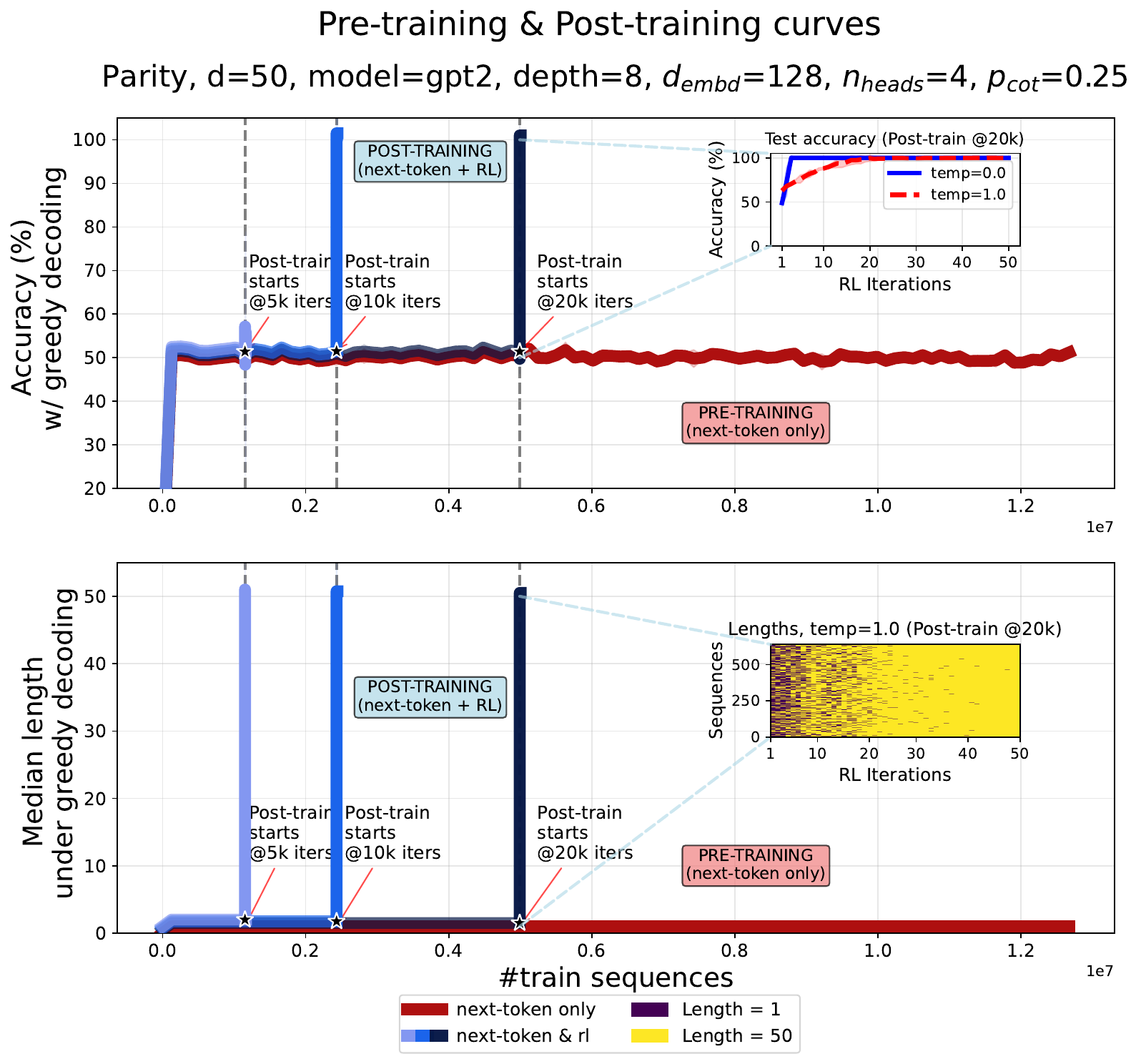}
    \includegraphics[scale=0.2]{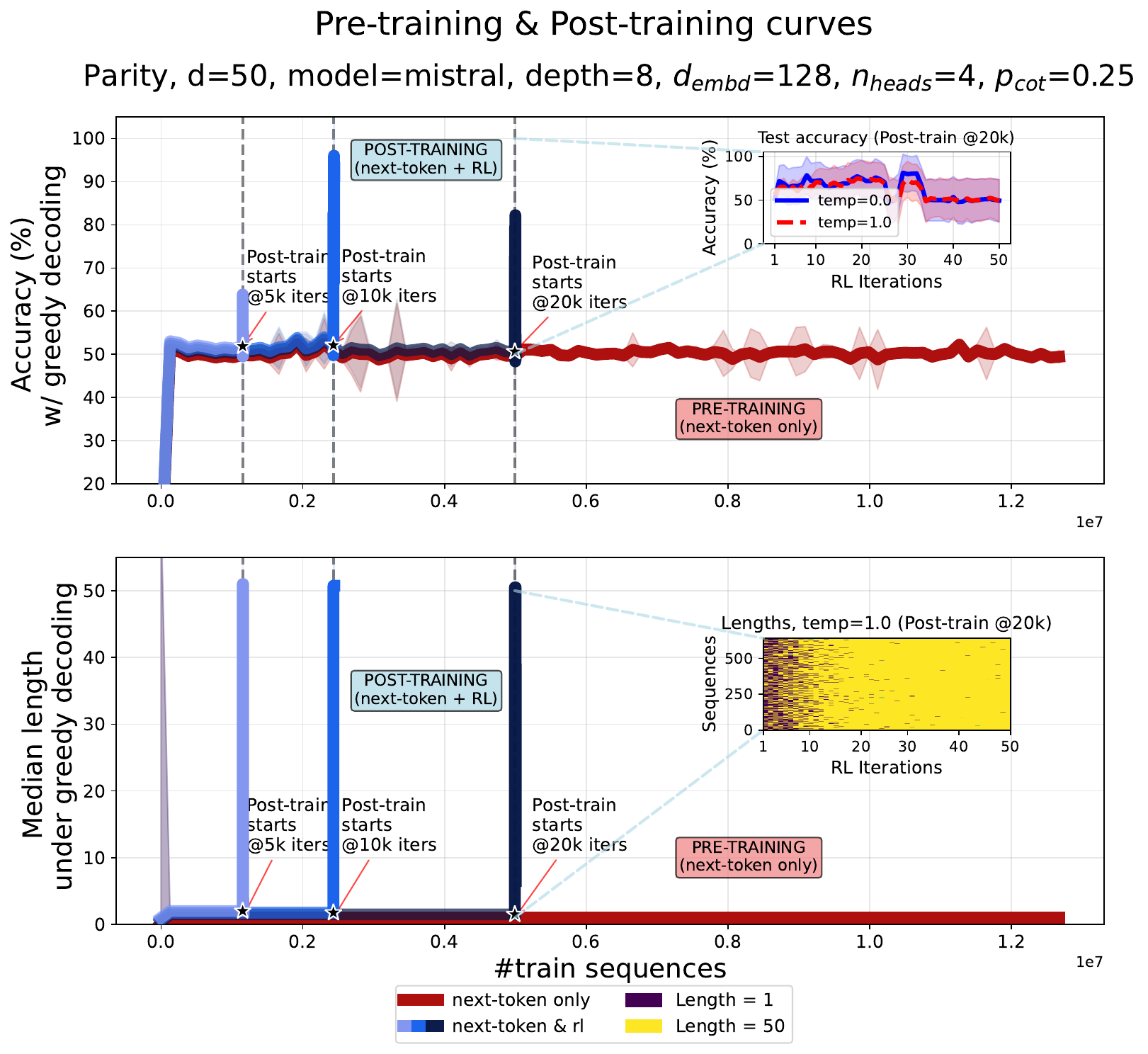}
    \includegraphics[scale=0.2]{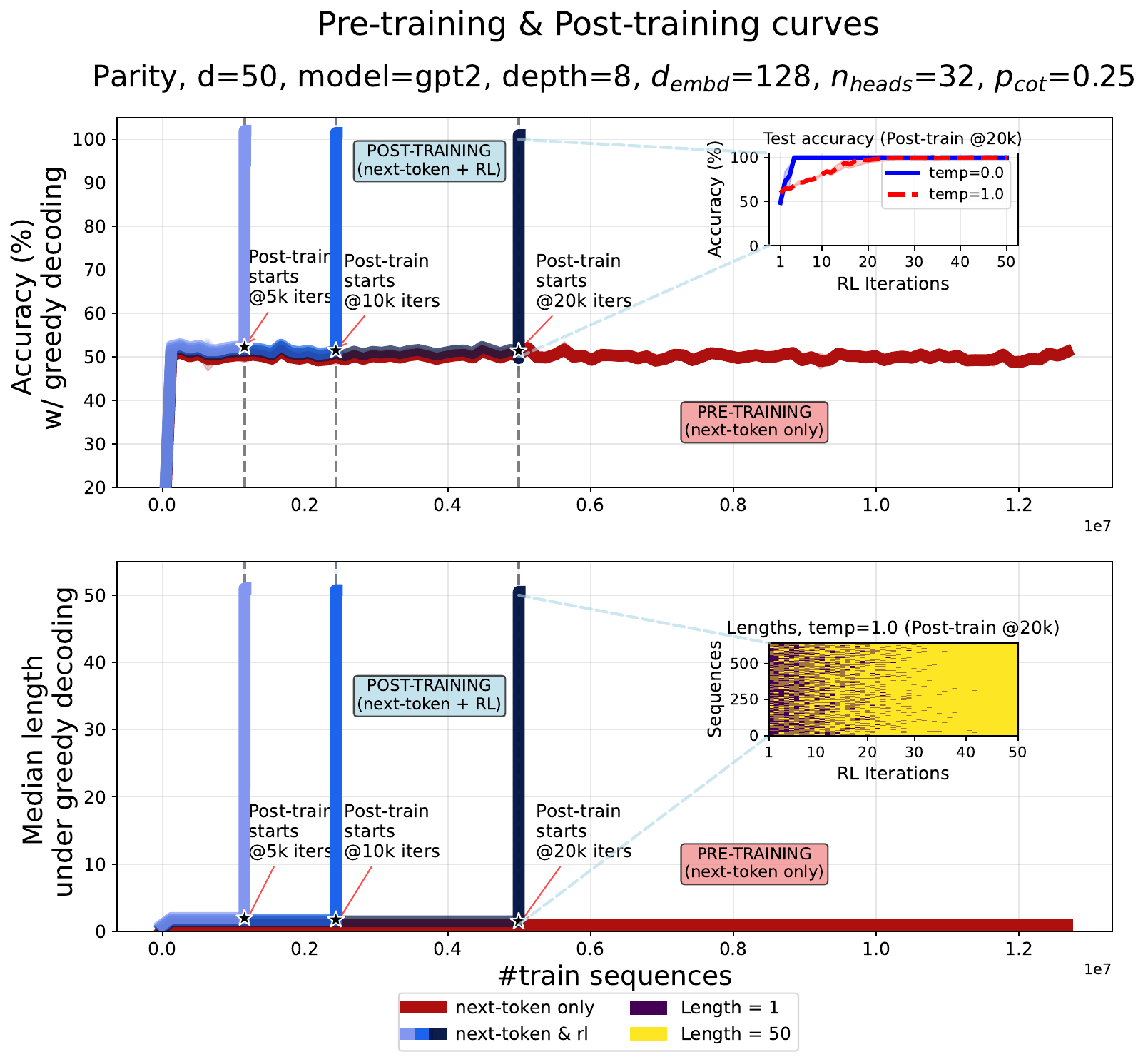}
    \includegraphics[scale=0.2]{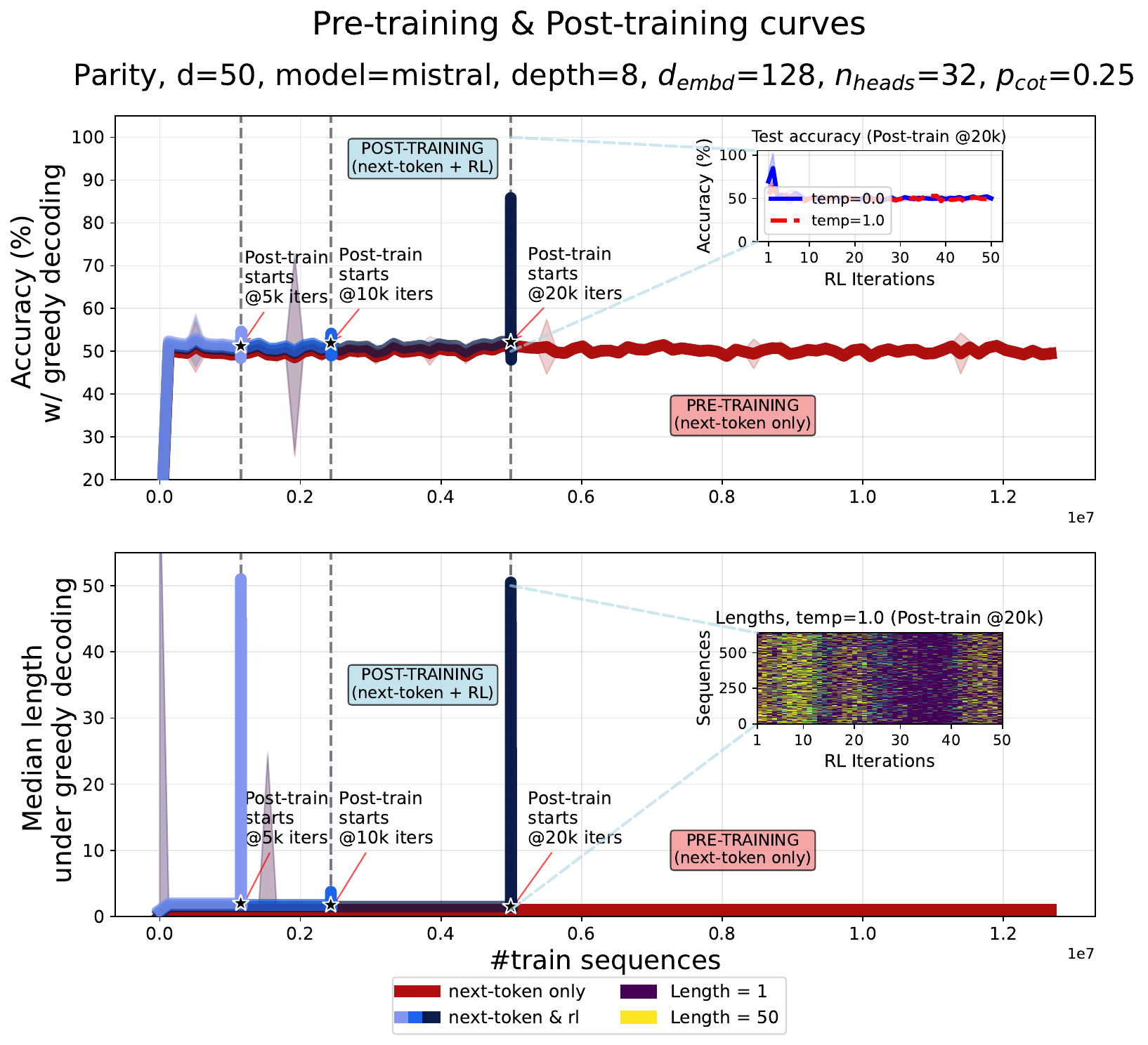}
    \includegraphics[scale=0.2]{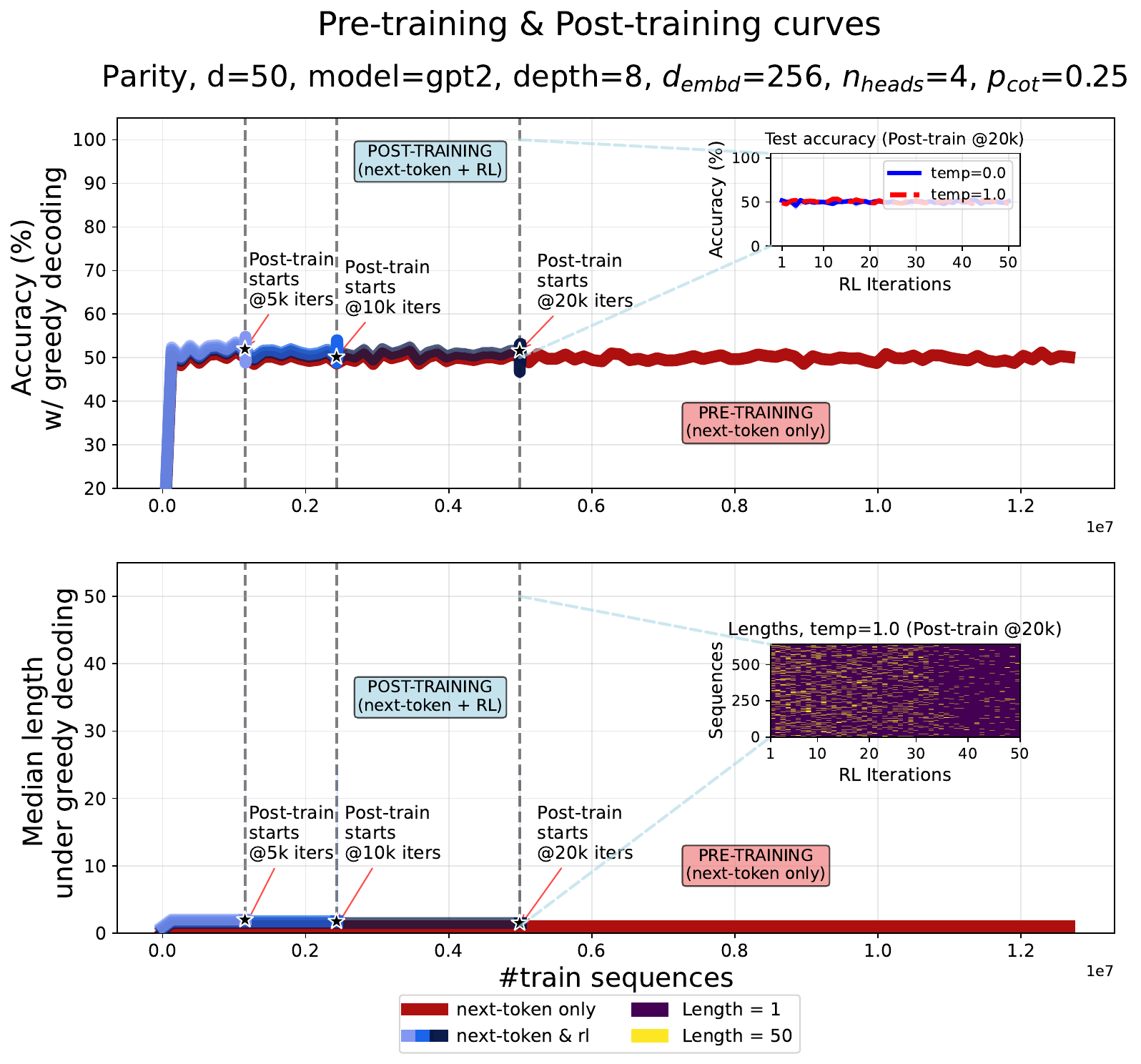}
    \includegraphics[scale=0.2]{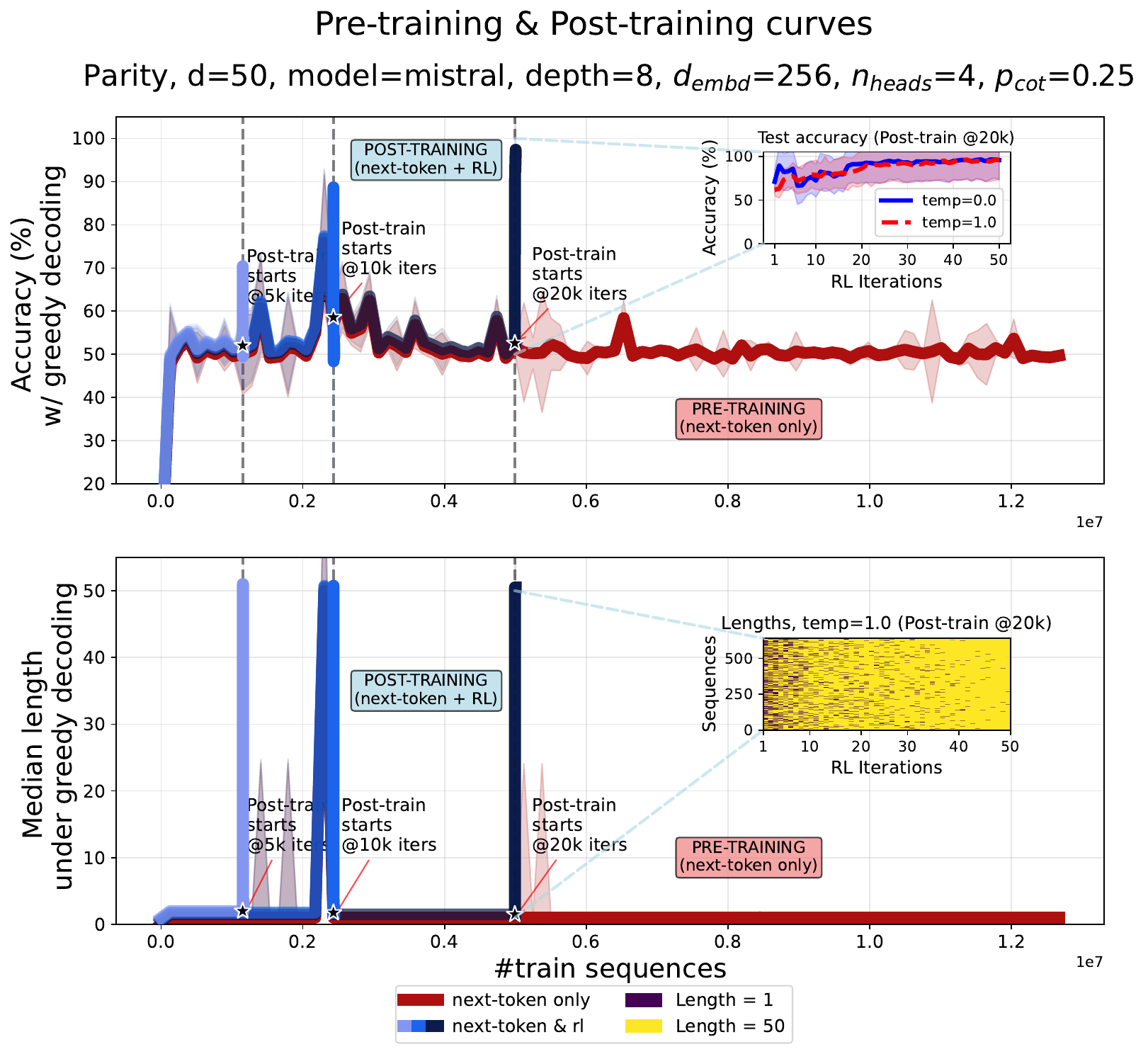}
    \includegraphics[scale=0.2]{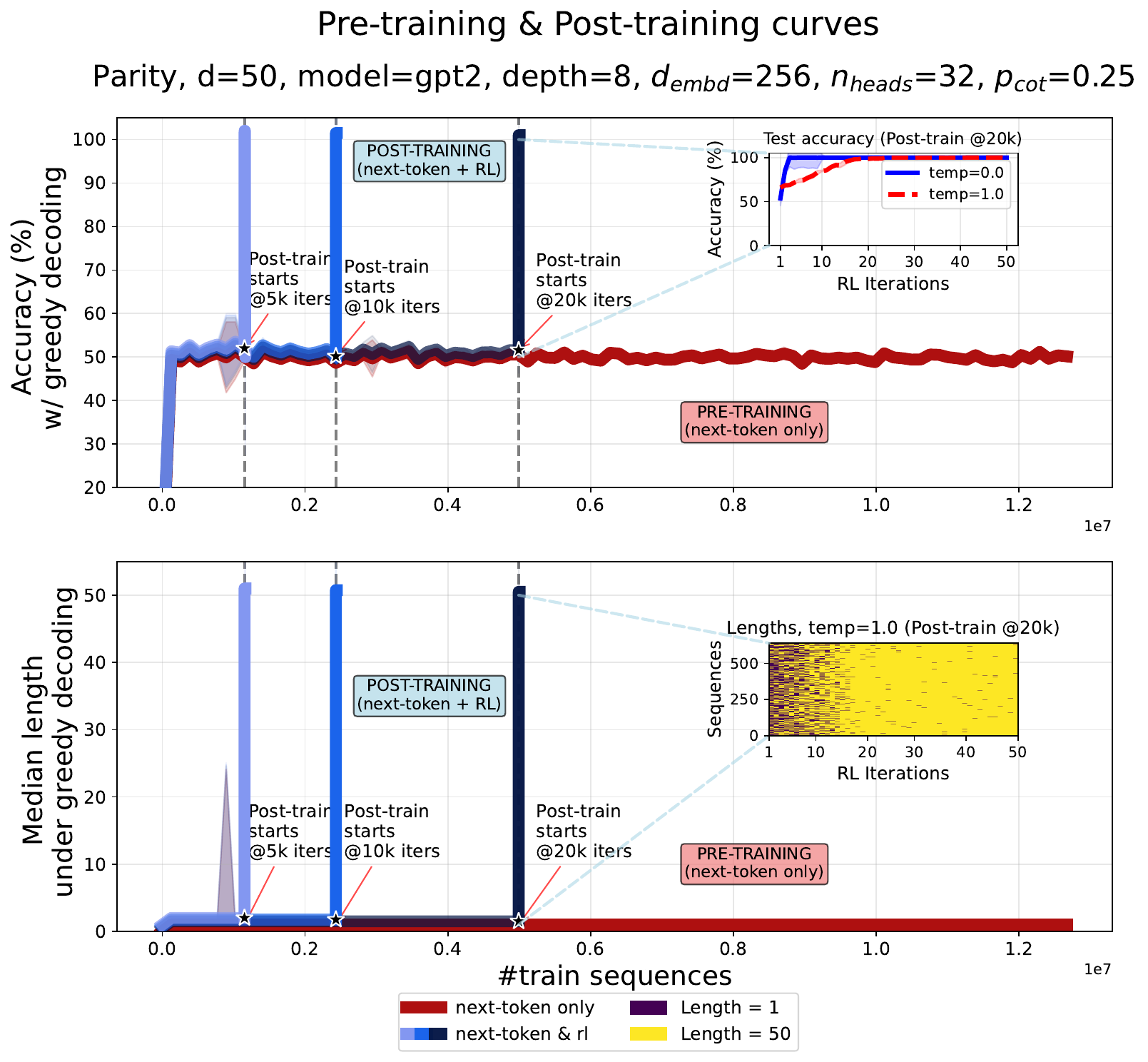}
    \includegraphics[scale=0.2]{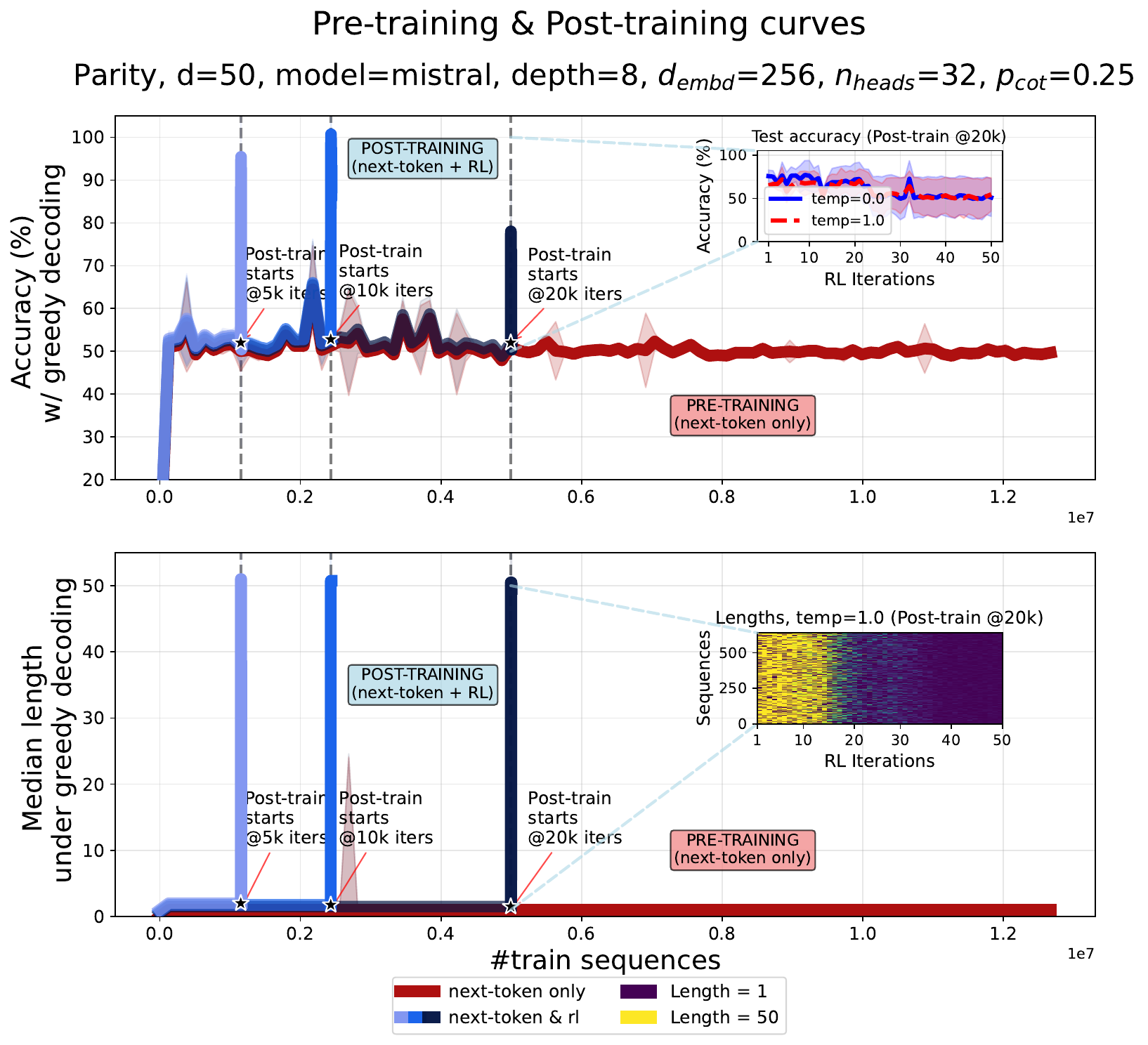}
    \caption{\textbf{Pre-training and post-training curves combined on the parity task for GPT2 and Mistral architectures.} We vary the embedding dimension and the number of heads. Depth $L$=8.}
    \label{fig:parity_combined_additional_L8}
\end{figure}

\paragraph{Comparison of RL algorithms for more values of $p_{\mathrm{cot}}$}

In Figure~\ref{fig:parity_posttrain}, we presented test accuracy and response length during post-training for all RL algorithms (STaR, REINFORCE, GRPO) and for a few different values of sampling temperature used for generating training sequences for $p_{\mathrm{cot}}$=$0.25$. We show now additional results for distributions where data are even more rare ($p_{\mathrm{cot}}$=$0.1$ and $p_{\mathrm{cot}}$=$0.05$). In Figure~\ref{fig:parity_posttrain_pcot0_1}, we observe that all RL algorithms induce generalization for most values of sampling temperature (post-training starts after 20k pre-training iterations) when $p_{\mathrm{cot}}$=$0.1$. On the other hand, when $p_{\mathrm{cot}}$ is even smaller ($p_{\mathrm{cot}}$=$0.05$), post-training after 20k iterations does not reliably lead to well-generalizing models for any of the RL algorithms (Figure~\ref{fig:parity_posttrain_pcot0_05_20k}). If, instead, we consider more pre-training iterations (50k) before the start of post-training, then we observe greater probability of success with all RL algorithms for reasonable ($\approx$1) sampling temperatures (Figure~\ref{fig:parity_posttrain_pcot0_05_50k}). This illustrates, amongst others, that the learning behaviors of STaR, REINFORCE and GRPO are similar in our simple setting.

\begin{figure}
    \centering
    \includegraphics[scale=0.235]{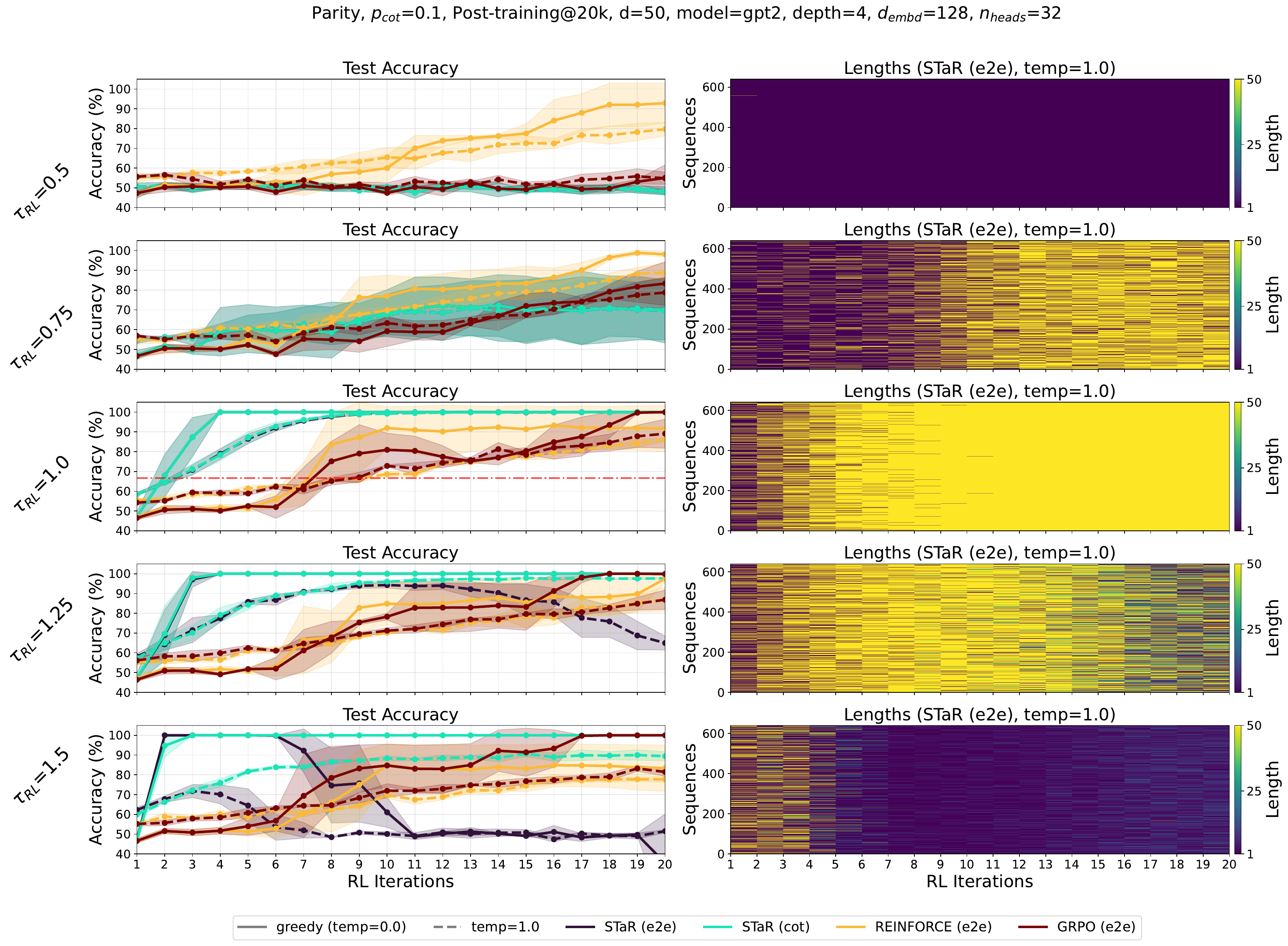}
    \caption{\textbf{Post-training of transformers on mixture of long and short sequences encoding the parity of $d$ bits with various RL methods and generation temperatures ($\tau_{\mathrm{RL}}$).} Mixture coefficient: $p_{\mathrm{cot}}$=$0.1$. Pre-training iterations: 20k.}
    \label{fig:parity_posttrain_pcot0_1}
\end{figure}

\begin{figure}
    \centering
    \includegraphics[scale=0.235]{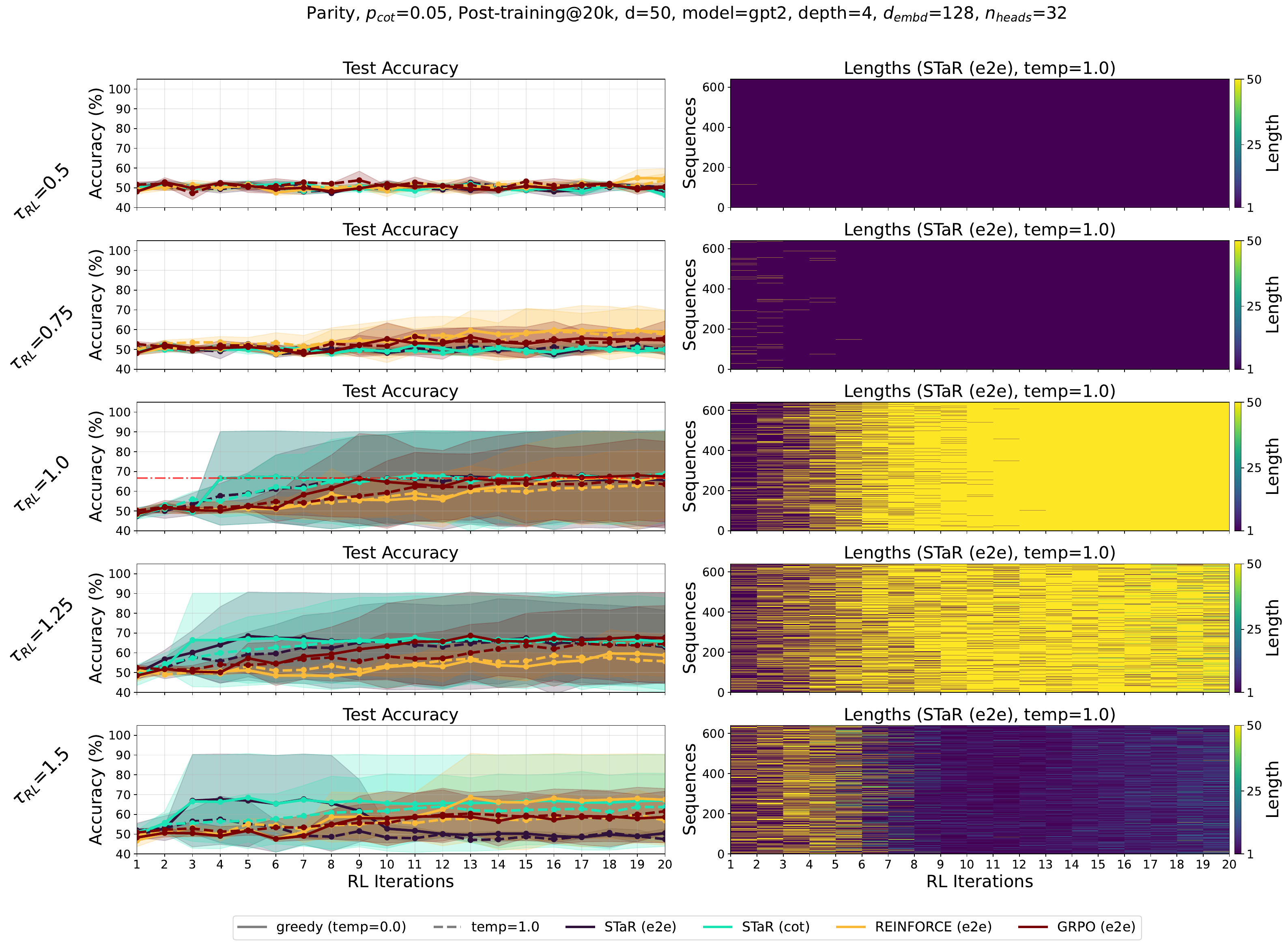}
    \caption{\textbf{Post-training of transformers on mixture of long and short sequences encoding the parity of $d$ bits with various RL methods and generation temperatures ($\tau_{\mathrm{RL}}$).} Mixture coefficient: $p_{\mathrm{cot}}$=$0.05$. Pre-training iterations: 20k.}
    \label{fig:parity_posttrain_pcot0_05_20k}
\end{figure}

\begin{figure}
    \centering
    \includegraphics[scale=0.235]{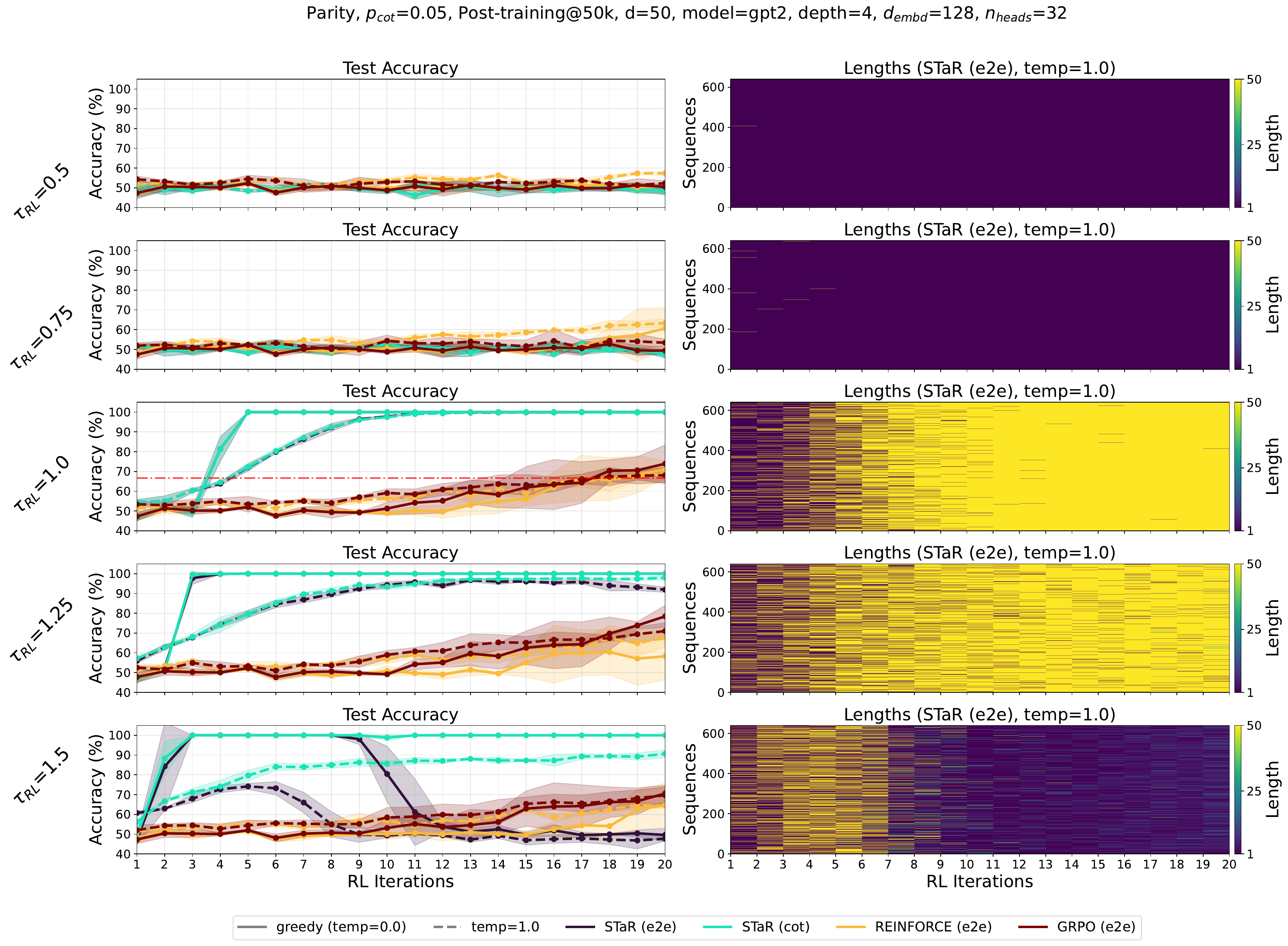}
    \caption{\textbf{Post-training of transformers on mixture of long and short sequences encoding the parity of $d$ bits with various RL methods and generation temperatures ($\tau_{\mathrm{RL}}$).} Mixture coefficient: $p_{\mathrm{cot}}$=$0.05$. Pre-training iterations: 50k.}
    \label{fig:parity_posttrain_pcot0_05_50k}
\end{figure}

\paragraph{Smaller input dimension}

In Figure~\ref{fig:parity_pretrain_works_d1112}, we demonstrate that pre-training sometimes leads to a generalizing model if we wait long enough (example seeds shown for $d$=11 and $d$=12). However, as $d$ increases, this waiting time becomes prohibitive.

\begin{figure}
    \centering
    \includegraphics[scale=0.21]{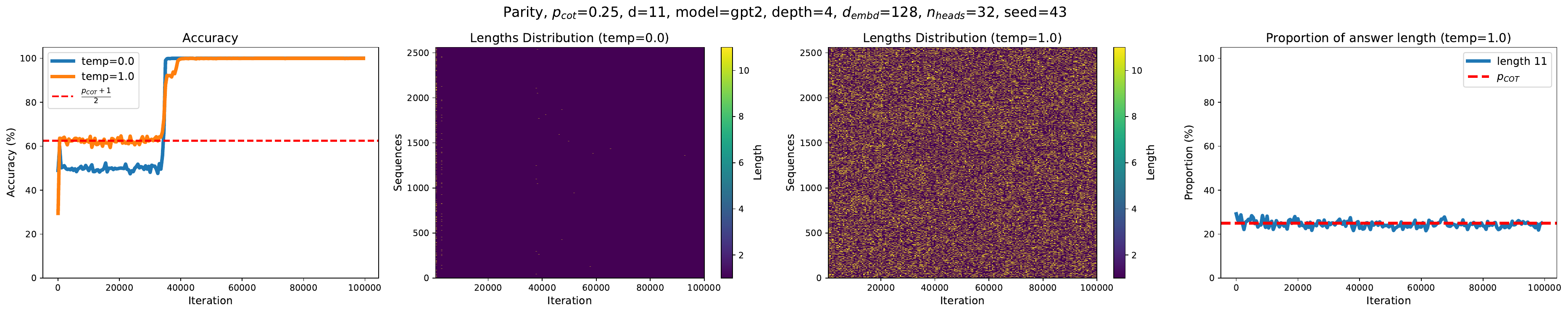}
    \includegraphics[scale=0.21]{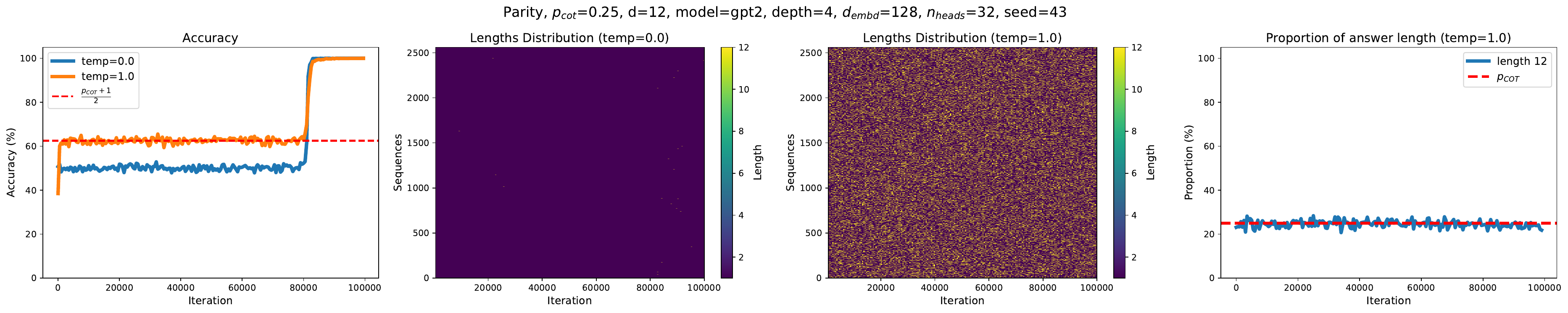}
    \caption{\textbf{Pre-training on the parity task leads to a generalizing model after many iterations.} The average response is short (length equal to 1). \textit{Top row}: $d$=11. \textit{Bottom row}: $d$=12.}
    \label{fig:parity_pretrain_works_d1112}
\end{figure}

\paragraph{Sample complexity gap}

\begin{wrapfigure}{r}{0.4\textwidth}
    \centering
    \vspace{-11.75mm}
    \includegraphics[scale=0.245]{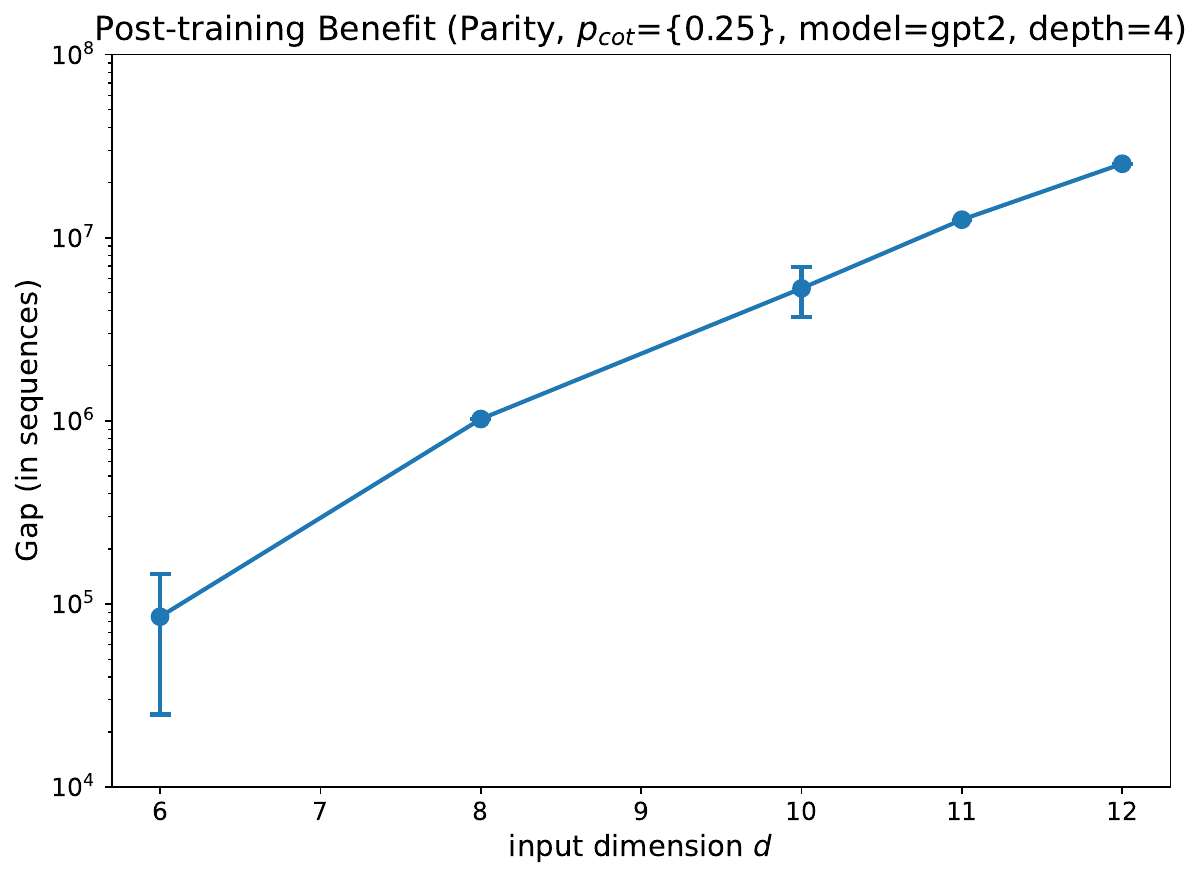}
    \vspace{-2mm}
    \caption{\textbf{Sample complexity advantage of pre-training \& post-training over mere pre-training vs input dimension.} Note the logarithmic scale in the y-axis. The figure shows mean and one standard deviation across 3 random seeds.}
    \vspace{-5mm}
    \label{fig:gap_parity}
\end{wrapfigure}
We design an experiment to try estimating the sample complexity gap between the two paradigms (next-token only vs next-token followed by RL). We consider a GPT2 architecture of depth 4, number of heads 32 and embedding dimension equal to 128. We fix $p_{\mathrm{cot}}$=0.25 and we vary the input dimension $d \in \{6, 8, 10, 11, 12\}$. We train for at most 100,000 iterations (of batch size 256) and we consider switching to post-training (GRPO with e2e reward) at several checkpoints. We consider a checkpoint to be successful if its test accuracy is greater than 95\%. In Figure~\ref{fig:gap_parity}, we show the gap between the number of samples required for any pre-trained checkpoint to generalize vs the number of samples required by any pre- and post-trained model (combined) for several input dimensions $d$. For the number of post-train samples, we take the multiplicity of GRPO samples into consideration. This gap seems to be growing exponentially fast. Some limitations of this simple experiment: the pre-trained checkpoints from which we consider starting post-training are at iterations \{1000, 1500, 2000, 2500, 3000, 5000, 10000, 15000, 20000, 30000, 50000, 100000\}. Also, for $d$=12, only one of three seeds ended up generalizing during pre-training (so by definition we could only calculate the gap for that seed and we discarded the rest).

\subsection{Partial chain of thought}\label{ssec:partial_cot_more}

The purpose of this subsection is to relax our main setting, and consider a more complex variation which might better capture ``real-world'' data. In particular, we consider a setting, where the data include a third type of sequence consisting of a medium-sized chain of thought, in addition to short and long sequences. 

We consider the distribution $\mathcal{D}_3(p_{\mathrm{cot}}, p_{\mathrm{odd}})$ over $\{\pm 1\}^d \times \{ \pm 1, \code{<EOS>} \}^\ast$, parameterized by $p_{\mathrm{cot}}, p_{\mathrm{odd}} \in (0, 1)$, such that: $x_1, \ldots, x_d \sim \mathrm{Rad}(1/2)$ and 
\begin{equation}
    \left(y_1, \ldots, y_{d+1}\right) = \begin{cases}
        \left(x_1, x_1x_2, \ldots, \prod_{i = 1}^d x_i, \code{<EOS>}\right), \, \mathrm{w.p.} \,\,\, p_{\mathrm{cot}}, \\ \left(x_1, x_1 x_2 x_3, x_1 x_2 x_3 x_4 x_5, \ldots, \prod_{i = 1}^d x_i, \code{<EOS>}\right), \, \mathrm{w.p.} \,\,\, p_{\mathrm{odd}}, \\ \left(\prod_{i = 1}^d x_i, \code{<EOS>}\right), \, \mathrm{w.p.} \,\,\, 1 - p_{\mathrm{cot}} - p_{\mathrm{odd}}. 
    \end{cases}
\end{equation}
The medium-sized sequence consists of the same random variables as before in input positions $1$ to $d$, while the chain of thought skips the intermediate computations that involve partial parities ending at an even position (with the exception of the final parity, if $d$ is even). This results in sequences that contain a \textit{partial} chain of thought with omitted reasoning steps. 
The purpose of this sequence is to provide the model with some samples of intermediate complexity between the two extremes of $\mathcal{D}(p_{\mathrm{cot}})$. Indeed, learning to predict each term of the chain of thought after the first position requires a \textit{leap}~\citep{AAM23} of order 2, as opposed to either leap 1 or $d$ for short and long sequences, respectively. For instance, predicting $x_1 x_2 x_3$ from the history of the sequence ($x_1, \ldots, x_d, x_1$) is a leap of order 2. The leap between terms characterizes the additional sample complexity for learning each (sub)function.

\begin{figure}
    \centering
    \includegraphics[scale=0.35]{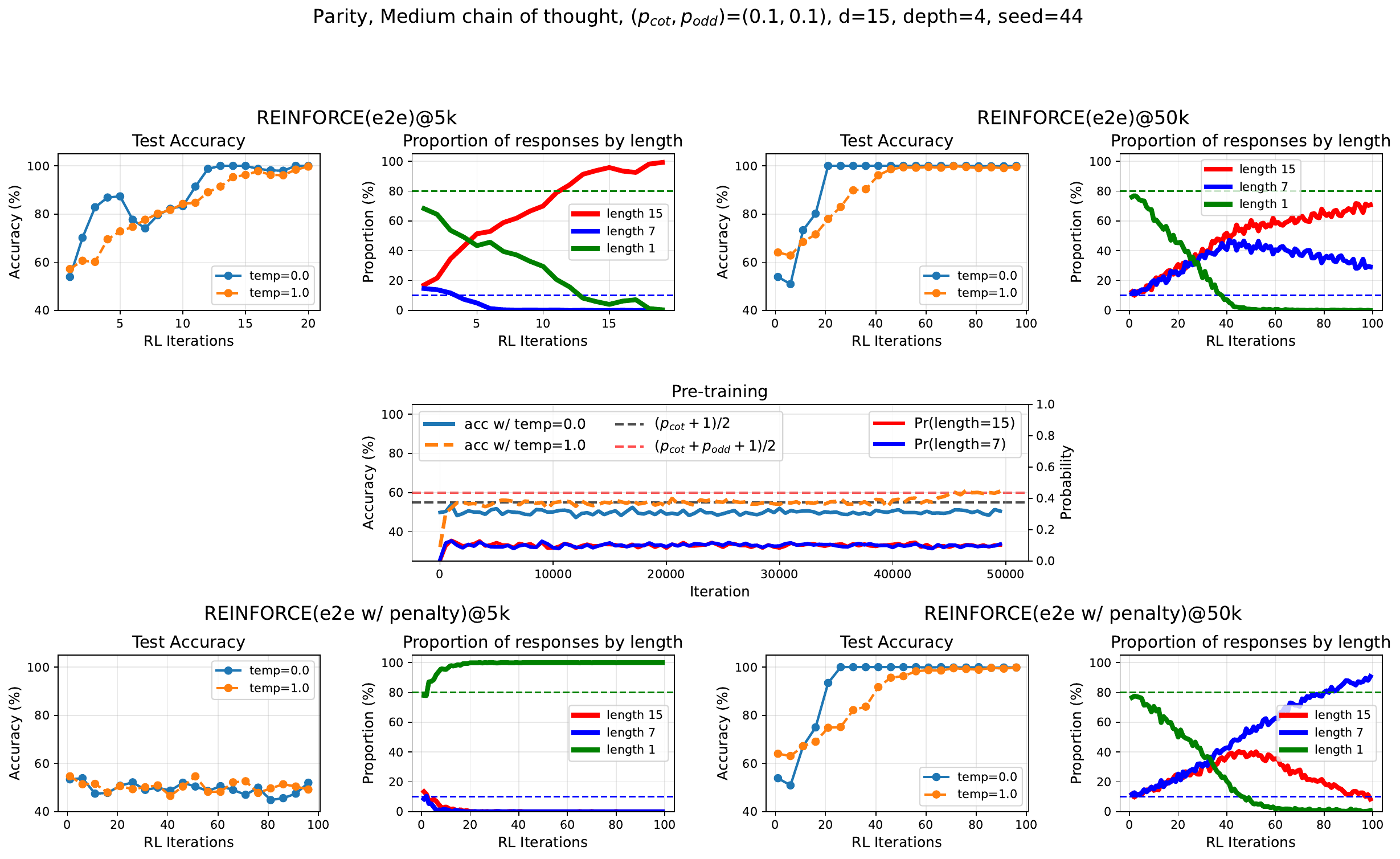}
    \caption{\textbf{Parity with partial chain of thought data and effectiveness of length penalties.} Post-training experiments using REINFORCE with (bottom row) and without (top row) a length penalty on pre-trained models that correspond to the training shown in the middle. Notice that leaps in accuracy during pre-training control length distributions at the end of post-training and success of length-penalized RL.}
    \label{fig:partial_cot_star}
\end{figure}

We repeat pre-training and post-training on this distribution and present results in Figure~\ref{fig:partial_cot_star} for $d$=$15$ and $\left(p_{\mathrm{cot}}, p_{\mathrm{odd}} \right)$=$(0.1, 0.1)$. 

\paragraph{Pre-training staircase} As before, improvement during pre-training happens in stages: early at training, model accuracy under sampling with temperature 1 is around $50\%$, yet it suddenly jumps to $\frac{p_{\mathrm{cot}}+1}{2} \times 100 \%$ as the model learns from the long demonstrations. Later in training, there is another phase transition indicating the model has learned the medium-sized sequences, as the accuracy leaps to  $\frac{p_{\mathrm{cot}}+p_{\mathrm{odd}}+1}{2} \times 100 \%$. As before, if $p_{\mathrm{cot}},p_{\mathrm{odd}}$ are not large enough, then the models under greedy decoding do not generalize in reasonable time. In fact, the critical values of $p_{\mathrm{cot}}, p_{\mathrm{odd}}$ can be calculated as in our main setting. It holds:
\begin{equation}
    \mathbb{P} \left[ y_1 = +1 \middle | \mathbf{x} \right] = \left( p_{\mathrm{cot}} + p_{\mathrm{odd}} \right) \frac{x_1 + 1}{2} + \left( 1 - p_{\mathrm{cot}} - p_{\mathrm{odd}} \right) \frac{\prod_{i = 1}^d x_i + 1}{2}.
\end{equation}
In absence of a parity feature, this conditional probability simplifies to (see Section~\ref{ssec:crit_thr} for details):
\begin{equation}
    \mathbb{P} \left[ y_1 = +1 \middle | \mathbf{x} \right] = \frac{\left( p_{\mathrm{cot}} + p_{\mathrm{odd}} \right) x_1 + 1}{2},
\end{equation}
or in other words:
\begin{equation}
    \mathbb{P} \left[ y_1 = x_1 \middle | \mathbf{x} \right] = \frac{p_{\mathrm{cot}} + p_{\mathrm{odd}} + 1}{2}.
\end{equation}
This implies that first token output of a greedy decoder will always be $x_1$, as long as $p_{\mathrm{cot}} + p_{\mathrm{odd}} > 0$.
For the second position we have:
\begin{equation}
    \begin{split}
        \mathbb{P} \left[ y_2 = x_1 x_2 \middle | y_1 = x_1, \mathbf{x} \right] & = \frac{\mathbb{P} \left[ y_2 = x_1 x_2, y_1 = x_1 \middle | \mathbf{x} \right]}{\mathbb{P} \left[ y_1 = x_1 \middle | \mathbf{x} \right]} \\
        & = \frac{p_{\mathrm{cot}} + \frac{x_3+1}{2} p_{\mathrm{odd}}}{\frac{ p_{\mathrm{cot}} + p_{\mathrm{odd}} + 1}{2}} \\
        & = \frac{2p_{\mathrm{cot}}+\left(x_3+1\right) p_{\mathrm{odd}}}{p_{\mathrm{cot}} + p_{\mathrm{odd}} + 1},
    \end{split}
\end{equation}
and likewise:
\begin{equation}
    \mathbb{P} \left[ y_2 = x_1 x_2 x_3 \middle | y_1 = x_1, \mathbf{x} \right] = \frac{2p_{\mathrm{odd}}+\left(x_3+1\right) p_{\mathrm{cot}}}{p_{\mathrm{cot}} + p_{\mathrm{odd}} + 1},
\end{equation}
\begin{equation}
    \mathbb{P} \left[ y_2 = \texttt{<EOS>} \middle | y_1 = x_1, \mathbf{x} \right] = \frac{1 - (x_3+2) (p_{\mathrm{cot}} + p_{\mathrm{odd}})}{p_{\mathrm{cot}} + p_{\mathrm{odd}} + 1}.
\end{equation}
Thus, a model under greedy decoding will be stopping generation at the second token as long as:
\begin{equation}
    1 - (x_3+2) (p_{\mathrm{cot}} + p_{\mathrm{odd}}) \geq \max \left( 2p_{\mathrm{cot}}+\left(x_3+1\right) p_{\mathrm{odd}}, 2p_{\mathrm{odd}}+\left(x_3+1\right) p_{\mathrm{cot}}\right).
\end{equation}
This certainly shows that a model trained on a distribution endowed with additional ``partial'' chain of thought data can generate long responses and generalize during pre-training already, even if $p_{\mathrm{cot}}<1/3$ which was the critical threshold in our main setting. Such an example is presented in Figure~\ref{fig:parity_partial_cot_less_than_13} for $(p_{\mathrm{cot}}, p_{\mathrm{odd}})$=$(0.1, 0.35)$.

\begin{figure}
    \centering
    \includegraphics[scale=0.21]{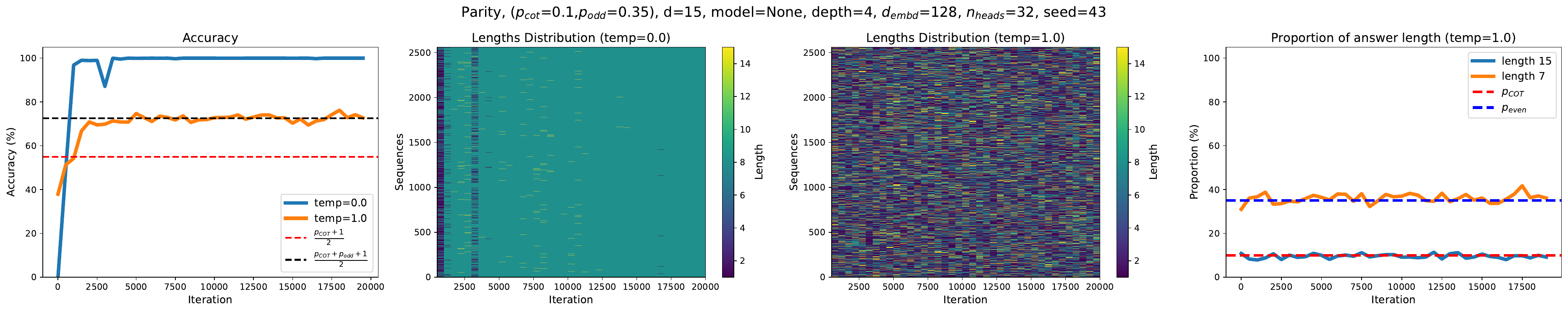}
    \caption{\textbf{Pre-training on the parity task with ``partial'' chain of thought data leads to a generalizing model even for $p_{\mathrm{cot}}< 1/3$.} The average response length is ``medium''.}
    \label{fig:parity_partial_cot_less_than_13}
\end{figure}

\paragraph{Pre-training checkpoints \& length penalties}

In this setting, the point when we switch to post-training, can not only affect the eventual success of RL, but also the length distribution of the model responses at the end of it:
\begin{itemize}
    \item[-] Post-training at 5k iterations yields a model that generalizes, yet it learns to almost exclusively generate long responses (Figure~\ref{fig:partial_cot_star}, top left).
    \item[-] Post-training at 50k iterations, on the other hand, results in a model that not only manages to reach 100$\%$ test accuracy but also has more diversity in its length distribution, generating both long and medium-sized responses (Figure~\ref{fig:partial_cot_star}, top right).
\end{itemize}
The previous observation is both good and bad news: good news, as the 50k checkpoint ends up being cheaper at inference than the post-trained 5k counterpart; but bad news, as the model could have been even faster at inference without sacrificing accuracy. Indeed, after 50k pre-training iterations the model has learned to generate correct medium-sized responses, hence, in principle, it need not learn from the long self-generated ones. To test this, we consider reinforcement learning with a length-penalized reward. In particular, we consider the following length-penalized reward:
\begin{equation}
    r_{\mathrm{e2e},\lambda_{\mathrm{len}}} (\mathbf{x}, y) = \mathds{1} \left\{ y[-1] = \prod_{i = 1}^d x_i \right\} - \lambda_{\mathrm{len}} \frac{|y|}{d}, \, \lambda_{\mathrm{len}} \in [0, 1].
\end{equation}
We use $\lambda_{\mathrm{len}}$=0.4.
As we confirm in Figure~\ref{fig:partial_cot_star} (bottom row), a length penalty applied during post-training of a late checkpoint is effective, as it leads to a model that is both accurate and fast. On the contrary, if we apply the same length penalty at an earlier checkpoint, post-training fails. This is because the pre-trained model has not managed to learn from the medium-sized sequences early in pre-training.

\subsection{Noisy data}\label{ssec:noise}

In this subsection, we introduce and study variations of our main setting in which noise may be present in the data. Specifically, we consider cases where the noise affects the pre-training distribution. We observe that many of the previously observed learning phenomena continue to hold in the presence of moderate amounts of noise, even in cases that noise seems to be ``dominating'' pre-training performance.

We consider two different noise models.

\paragraph{Final token noise}
In this case, the final token is flipped with probability $\eta \in (0, 1)$. That is, $x_1, \ldots, x_d \sim \mathrm{Rad}(1/2)$ and
\begin{equation}
    \left( y_1, \ldots, y_{d+1} \right) = 
    \begin{cases}
    \left(x_1, x_1x_2, \ldots, \prod_{i = 1}^{d-1} x_i, \xi_d, \texttt{<EOS>}\right), \, \mathrm{w.p.} \,\,\, p_{\mathrm{cot}}, \\ \left(\xi_d, \texttt{<EOS>}\right), \, \mathrm{w.p.} \,\,\, 1 - p_{\mathrm{cot}},
\end{cases}
\end{equation}
where $\xi_d = \begin{cases}
    \prod_{i = 1}^d x_i, \, \mathrm{w.p.} \,\,\, 1-\eta, \\
    - \prod_{i = 1}^d x_i, \, \mathrm{w.p.} \,\,\, \eta.
\end{cases}$. For this type of noise, we do not anticipate a change in the critical threshold of pre-training, since there is no noise in the first token of the long reasoning chain (and in absence of a parity feature, the noise in the short path does not affect things). On the other hand, the pre-training accuracy under sampling with temperature 1 is expected to change to $(1-\eta) p_{\mathrm{cot}} + \frac{1-p_{\mathrm{cot}}}{2} = \frac{(1-2\eta)p_{\mathrm{cot}}+ 1}{2}$. We present pre-training simulations that confirm the above predictions for $\eta \in \{0.1, 0.2\}$ in Figures~\ref{fig:parity_pretrain_final_01} and~\ref{fig:parity_pretrain_final_02}.

\begin{figure}
    \centering
    \includegraphics[scale=0.305]{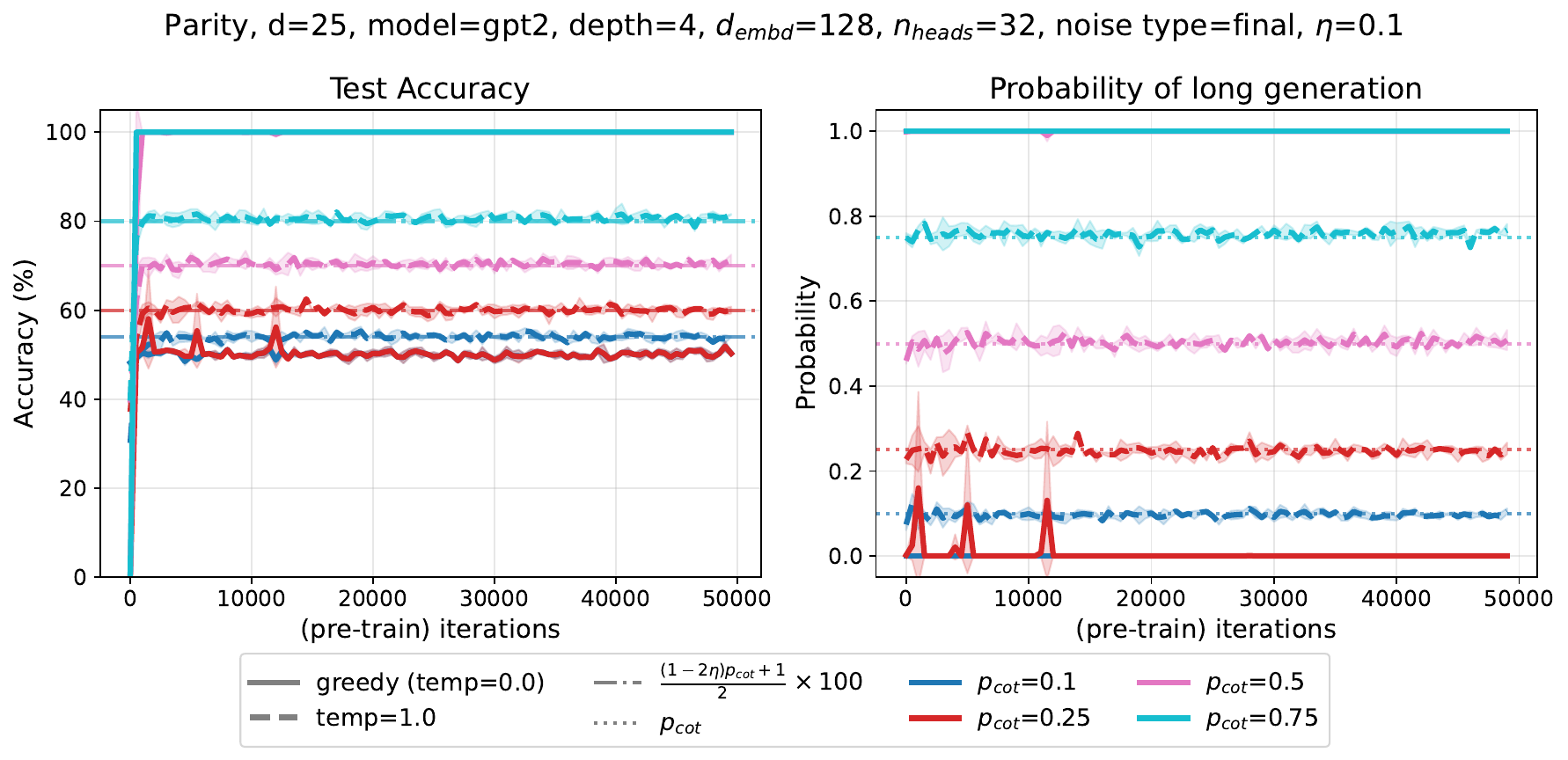}
    \raisebox{17pt}{
    \includegraphics[scale=0.225]{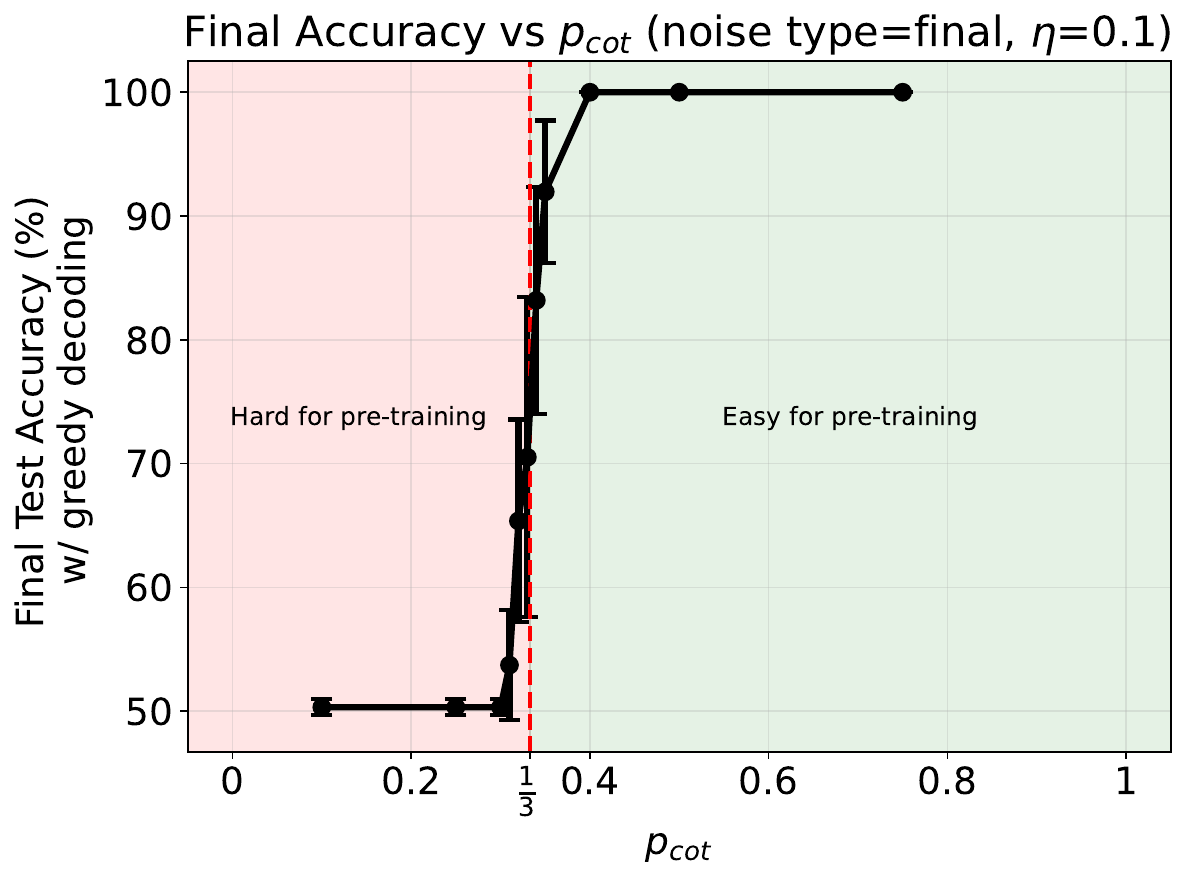}}
    \caption{\textbf{Pre-training of transformers on mixture of long and short sequences encoding the parity of $d$ bits under ``final'' type noise for $\eta$=0.1}. \textit{Left}: Test accuracy during the course of pre-training. \textit{Center}: The probability that a model's generation length is equal to the maximum length present in the training distribution (which equals $d$). Solid lines correspond to greedy decoding, while dashed lines correspond to sampling with temperature 1. Each color denotes a training distribution with a separate mixture coefficient $p_{\mathrm{cot}}$. Figure shows average and 1 standard deviation across 3 seeds. Note the change in the legend from the noise-less figures. \textit{Right}: Test accuracy with greedy decoding at the end of pre-training (50k iterations) versus mixture coefficient $p_{\mathrm{cot}}$. The red dashed line corresponds to the critical threshold. Each bullet is the median of 3 runs.
    }
    \label{fig:parity_pretrain_final_01}
\end{figure}
\begin{figure}
    \centering
    \includegraphics[scale=0.305]{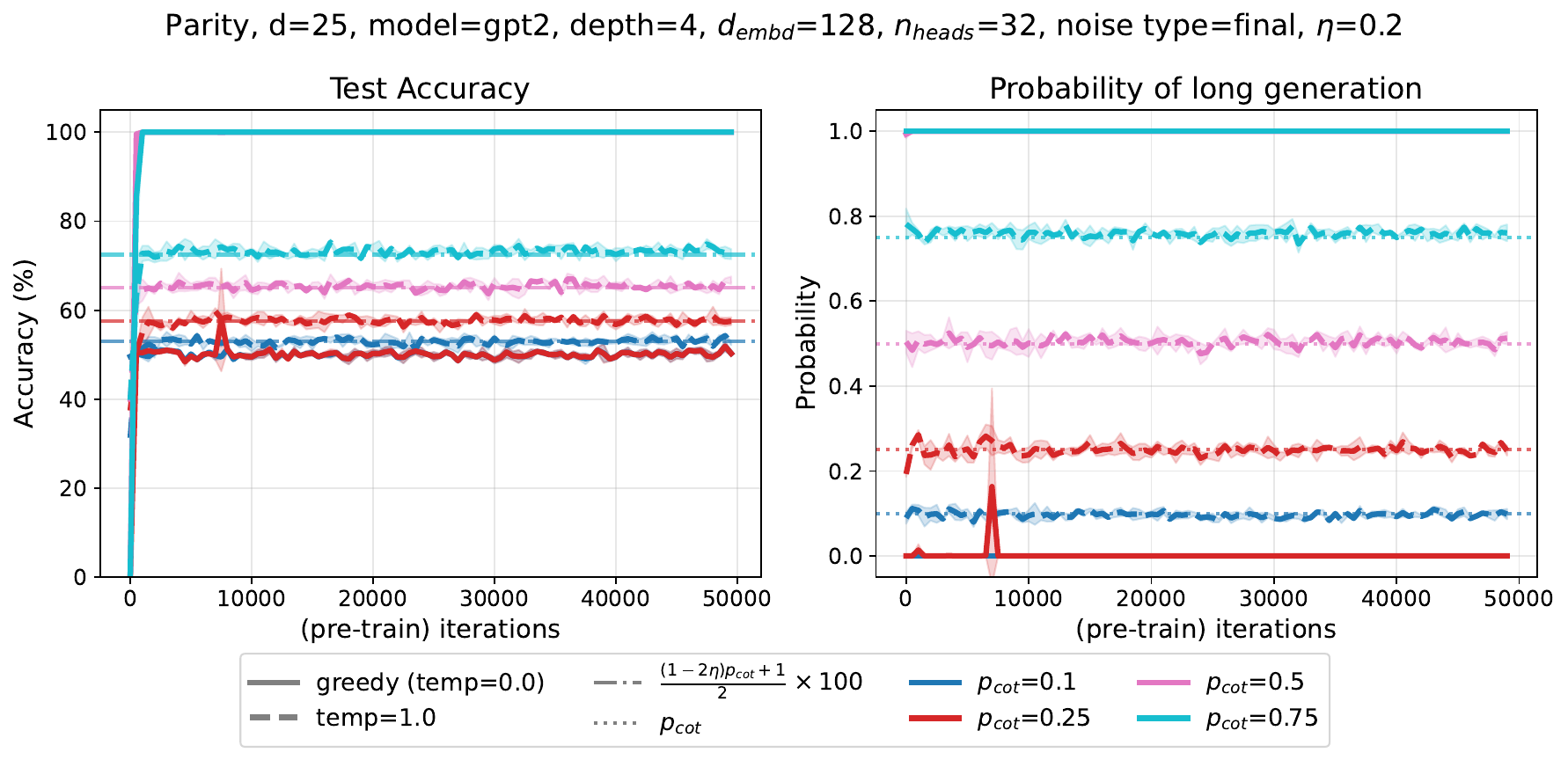}
    \raisebox{17pt}{
    \includegraphics[scale=0.225]{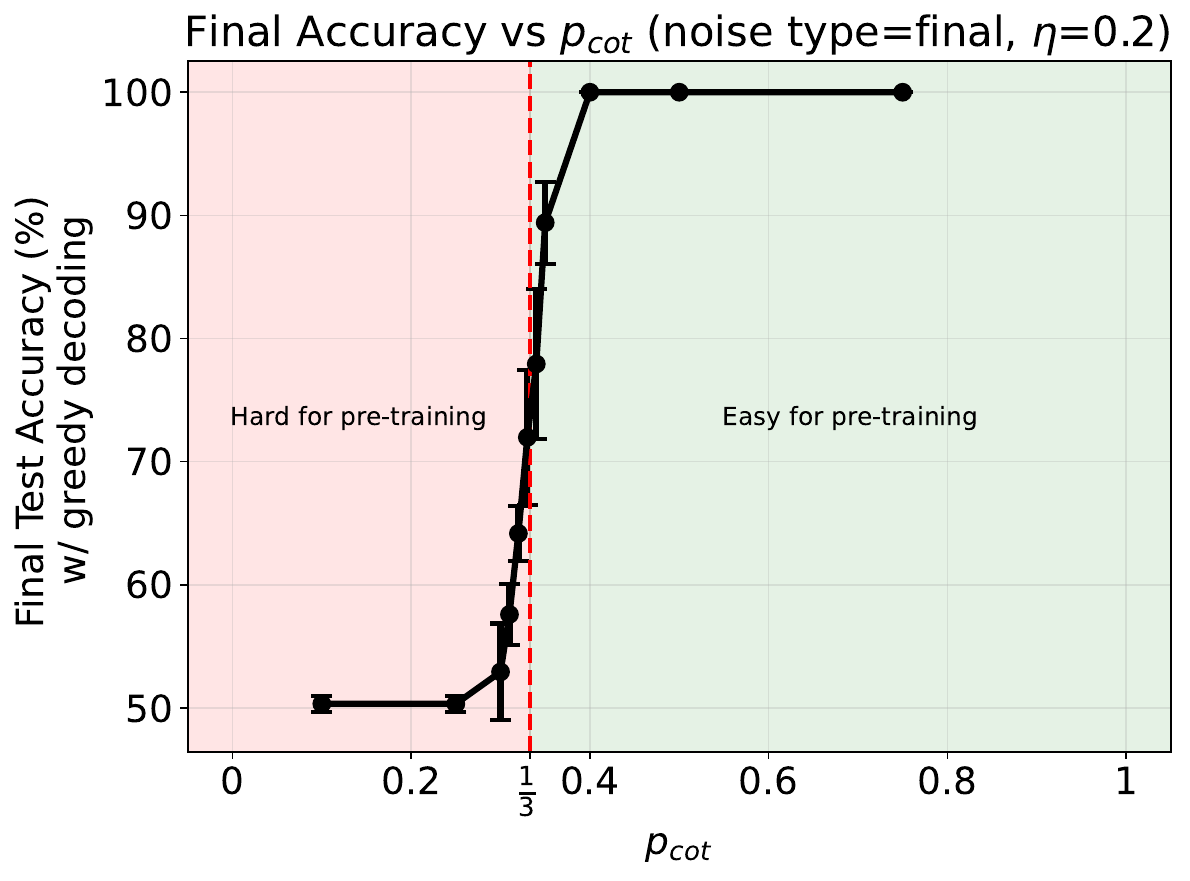}}
    \caption{\textbf{Pre-training of transformers on mixture of long and short sequences encoding the parity of $d$ bits under ``final'' type noise for $\eta$=0.2}. \textit{Left}: Test accuracy during the course of pre-training. \textit{Center}: The probability that a model's generation length is equal to the maximum length present in the training distribution (which equals $d$). Solid lines correspond to greedy decoding, while dashed lines correspond to sampling with temperature 1. Each color denotes a training distribution with a separate mixture coefficient $p_{\mathrm{cot}}$. Figure shows average and 1 standard deviation across 3 seeds. Note the change in the legend from the noise-less figures. \textit{Right}: Test accuracy with greedy decoding at the end of pre-training (50k iterations) versus mixture coefficient $p_{\mathrm{cot}}$. The red dashed line corresponds to the critical threshold. Each bullet is the median of 3 runs.
    }
    \label{fig:parity_pretrain_final_02}
\end{figure}

This small change in the pre-training accuracy does not alter significantly the post-training behaviour. Indeed, pre-training noise of this type only affects the starting point of the model during reinforcement learning, and as a result it does not affect significantly the sample complexity of post-training. Experimental results for $p_{\mathrm{cot}}$=0.25 and $\eta \in \{0.1, 0.2\}$ can be found in Figures~\ref{fig:parity_posttrain_final_01} and~\ref{fig:parity_posttrain_final_02}, respectively.

\begin{figure}
    \centering
    \includegraphics[scale=0.245]{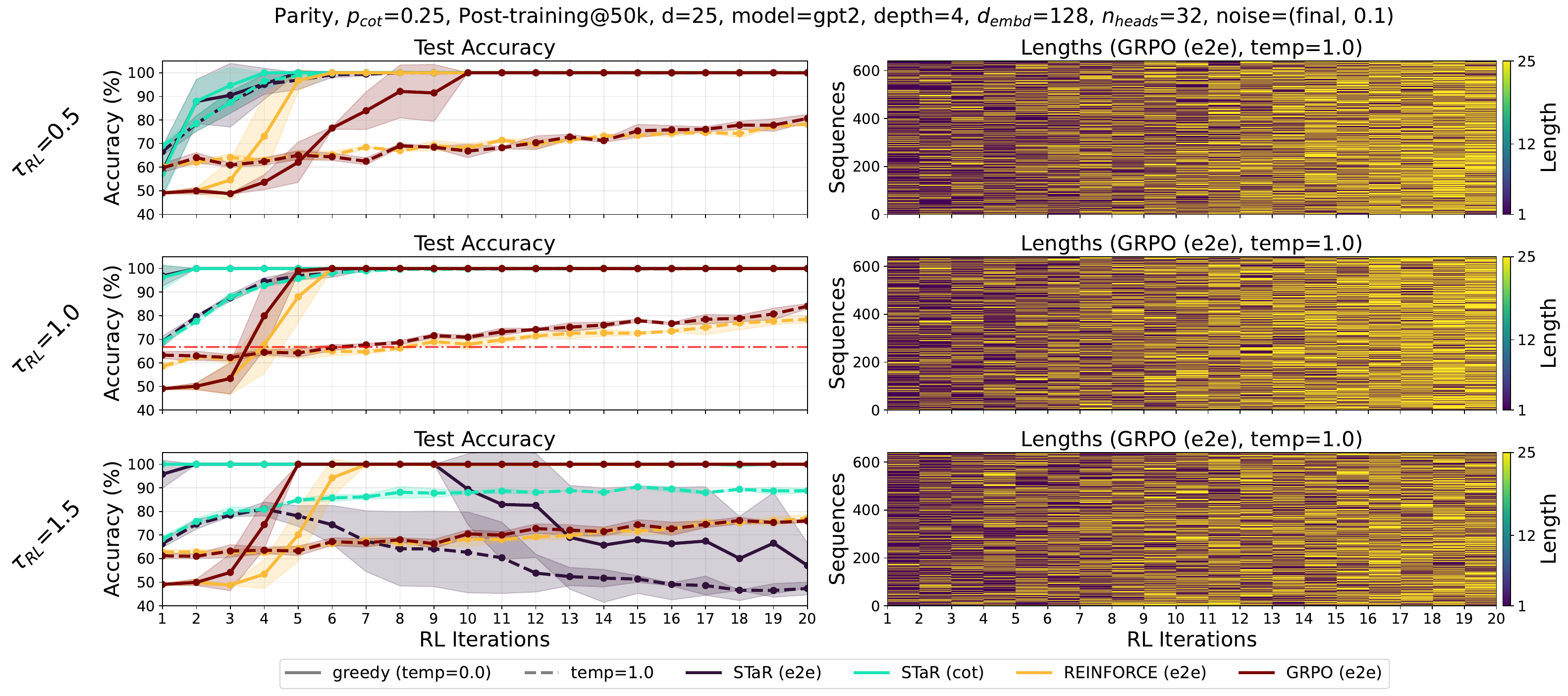}
    \caption{\textbf{Post-training of transformers on mixture of long and short sequences encoding the parity of $d$ bits under ``final'' type noise ($\eta$=0.1) with various RL methods and generation temperatures ($\tau_{\mathrm{RL}}$) }. \textit{Left}: Test accuracy during the course of post-training with greedy decoding (solid lines) and sampling with temperature 1 (dashed lines) for various RL methods. Figure shows average and 1 standard deviation across 3 seeds.
    \textit{Right}: Length of generated response (sampled with temperature 1) during the course of a post-training run (GRPO with end-to-end reward) for 640 test inputs, after
    $20$k pre-training iterations. \textbf{Note:} The sample size $n$ of each RL iteration differs amongst the RL algorithms: $n$=64 for GRPO, REINFORCE and $n$=3,200 sequences for STaR.}
    \label{fig:parity_posttrain_final_01}
\end{figure}

\begin{figure}
    \centering
    \includegraphics[scale=0.245]{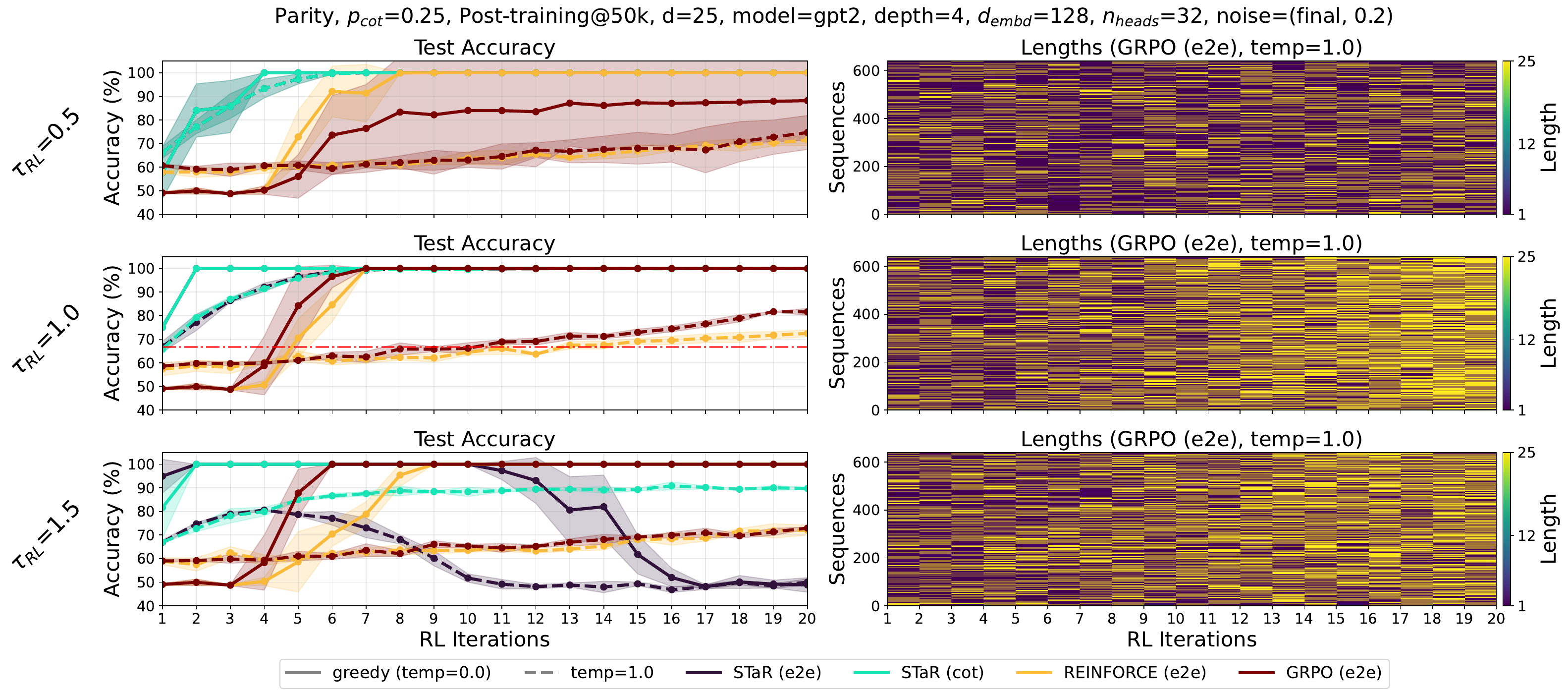}
    \caption{\textbf{Post-training of transformers on mixture of long and short sequences encoding the parity of $d$ bits under ``final'' type noise ($\eta$=0.2) with various RL methods and generation temperatures ($\tau_{\mathrm{RL}}$) }. \textit{Left}: Test accuracy during the course of post-training with greedy decoding (solid lines) and sampling with temperature 1 (dashed lines) for various RL methods. Figure shows average and 1 standard deviation across 3 seeds.
    \textit{Right}: Length of generated response (sampled with temperature 1) during the course of a post-training run (GRPO with end-to-end reward) for 640 test inputs, after
    $20$k pre-training iterations. \textbf{Note:} The sample size $n$ of each RL iteration differs amongst the RL algorithms: $n$=64 for GRPO, REINFORCE and $n$=3,200 sequences for STaR.}
    \label{fig:parity_posttrain_final_02}
\end{figure}

\paragraph{All tokens noise}
In this case, there is noise in each token of the output that gets amplified ("snowballs") in the case of long reasoning chains. That is,
$x_1, \ldots, x_d \sim \mathrm{Rad}(1/2)$ and
\begin{equation}
    \left( y_1, \ldots, y_{d+1} \right) = 
    \begin{cases}
    \left( \xi_1, \xi_2, \ldots, \xi_d, \texttt{<EOS>} \right), \, \mathrm{w.p.} \,\,\, p_{\mathrm{cot}}, \\
    \left(\xi_d, \texttt{<EOS>}\right), \, \mathrm{w.p.} \,\,\, 1 - p_{\mathrm{cot}},
    \end{cases}
\end{equation}
where $\xi_1 = \begin{cases}
    x_1, \, \mathrm{w.p.} \,\,\, 1-\eta, \\
    -x_1, \, \mathrm{w.p.} \,\,\, \eta.
\end{cases}$ and $\xi_i = \begin{cases}
    x_i \xi_{i-1}, \, \mathrm{w.p.} \,\,\, 1-\eta, \\
    -x_i \xi_{i-1}, \, \mathrm{w.p.} \,\,\, \eta.
\end{cases}$ for $i > 1$. It is clear that this noise disproportionately affects the long reasoning chains. The critical threshold of pre-training here will change, as the noise affects the distribution of the first and second output tokens. Additionally, the pre-training accuracy under sampling with temperature 1 will now equal $p_{\mathrm{cot}} \mathbb{P} \left[ \xi_d = \prod_{i = 1}^d x_i \right] + \frac{1-p_{\mathrm{cot}}}{2} = p_{\mathrm{cot}} \mathbb{P} \left[ \mathrm{even \; sucs \; in \; d \; trials} \right] + \frac{1-p_{\mathrm{cot}}}{2} = p_{\mathrm{cot}} \frac{1 + (1-2\eta)^d}{2} + \frac{1-p_{\mathrm{cot}}}{2} = \frac{(1-2\eta)^d p_{\mathrm{cot}}+1}{2}$. We present pre-training simulations in Figure~\ref{fig:parity_pretrain_all} for $\eta \in \{0.01, 0.1\}$. We confirm the analytical calculation for the pre-training accuracy under sampling, while we observe that when the noise level is moderate ($\eta$=$0.1$) pre-training seems to be failing to induce any correlation with the target. However, and perhaps surprisingly, post-training with reinforcement learning can still lead to generalizing models. Post-training experimental results for $p_{\mathrm{cot}}$=0.25 and $\eta \in \{0.01, 0.1\}$ can be found in Figures~\ref{fig:parity_posttrain_all_001} and~\ref{fig:parity_posttrain_all_01}, respectively.

\begin{figure}
    \centering
    \includegraphics[scale=0.225]{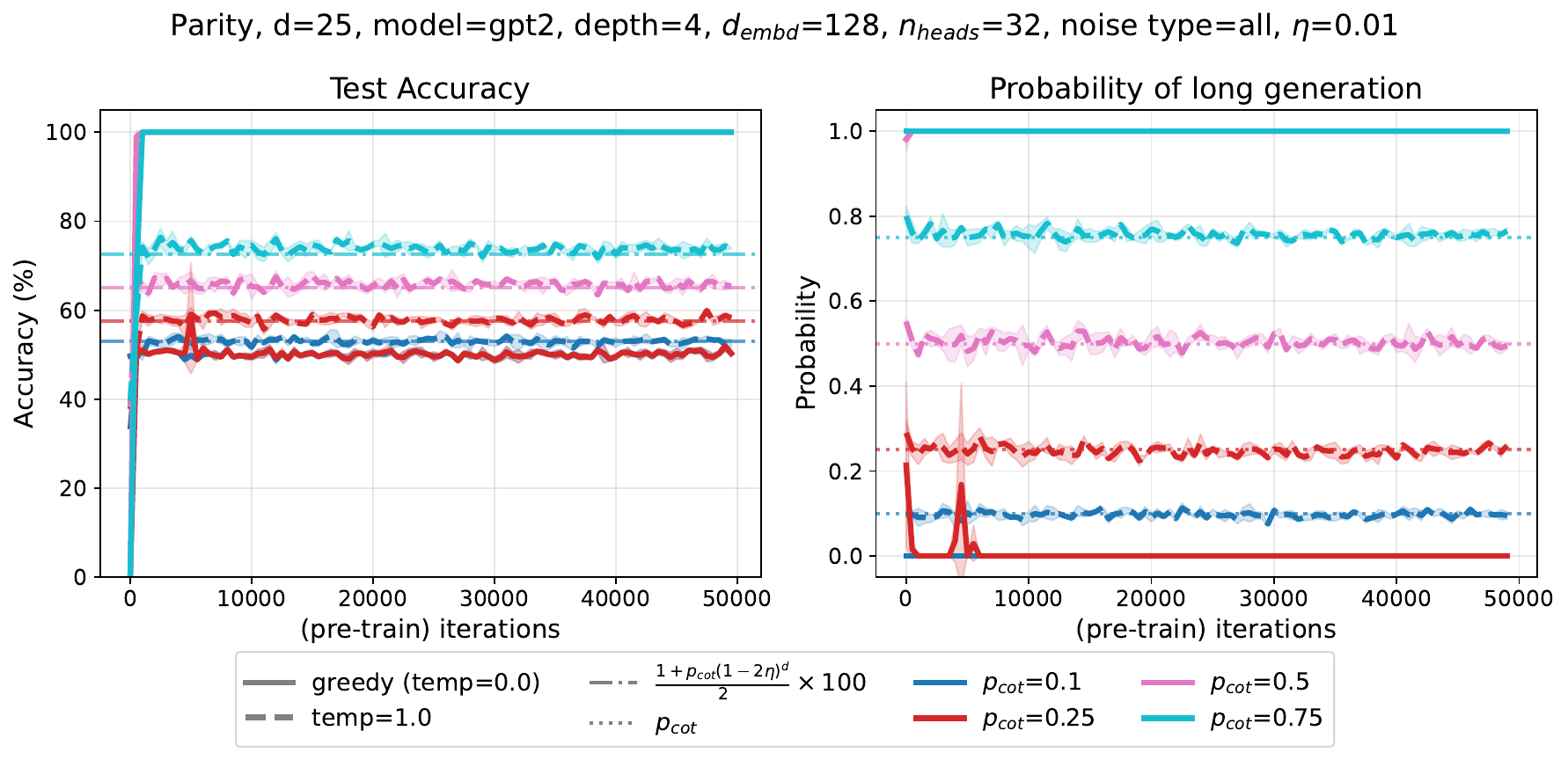}
    \includegraphics[scale=0.225]{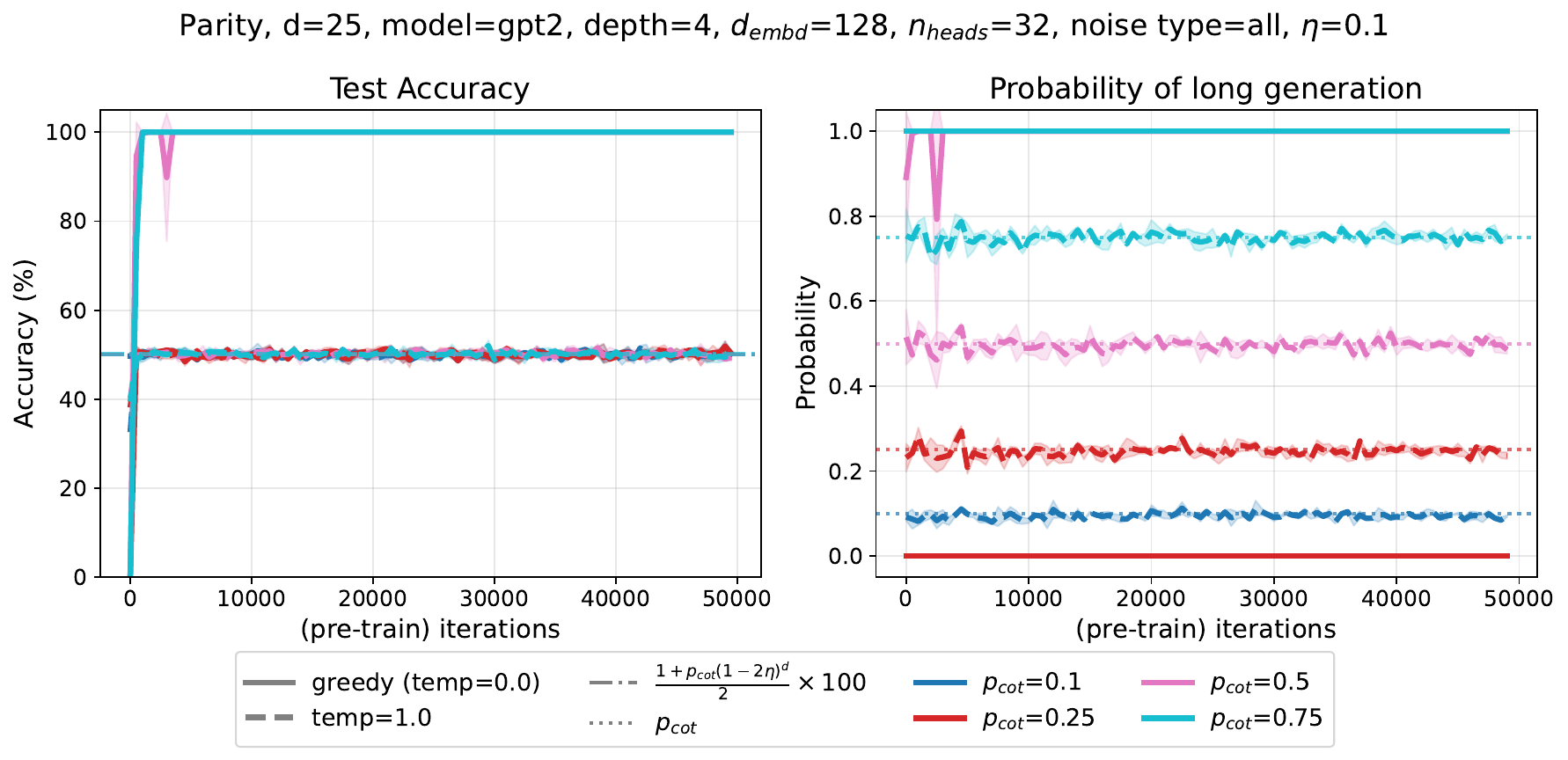}
    \caption{\textbf{Pre-training of transformers on mixture of long and short sequences encoding the parity of $d$ bits under ``all'' type noise for $\eta$=0.01 and $\eta$=0.1}. \textit{Left}: Test accuracy during the course of pre-training. \textit{Center}: The probability that a model's generation length is equal to the maximum length present in the training distribution (which equals $d$). Solid lines correspond to greedy decoding, while dashed lines correspond to sampling with temperature 1. Each color denotes a training distribution with a separate mixture coefficient $p_{\mathrm{cot}}$. Figure shows average and 1 standard deviation across 3 seeds. Note the change in the legend from the noise-less figures. \textit{Right}: Test accuracy with greedy decoding at the end of pre-training (50k iterations) versus mixture coefficient $p_{\mathrm{cot}}$. The red dashed line corresponds to the critical threshold. Each bullet is the median of 3 runs.
    }
    \label{fig:parity_pretrain_all}
\end{figure}

In both cases, we observe that there exists a combination of post-training algorithm and sampling temperature that yields a generalizing model. We observe a moderate increase in the sample complexity in comparison to the noise-less setting, and a lower tolerance to large sampling temperatures, even with ``perfect'' (i.e., $r_{\mathrm{cot}}$) verification. We defer a theoretical understanding of these interesting phenomena to future study.

\begin{figure}
    \centering
    \includegraphics[scale=0.245]{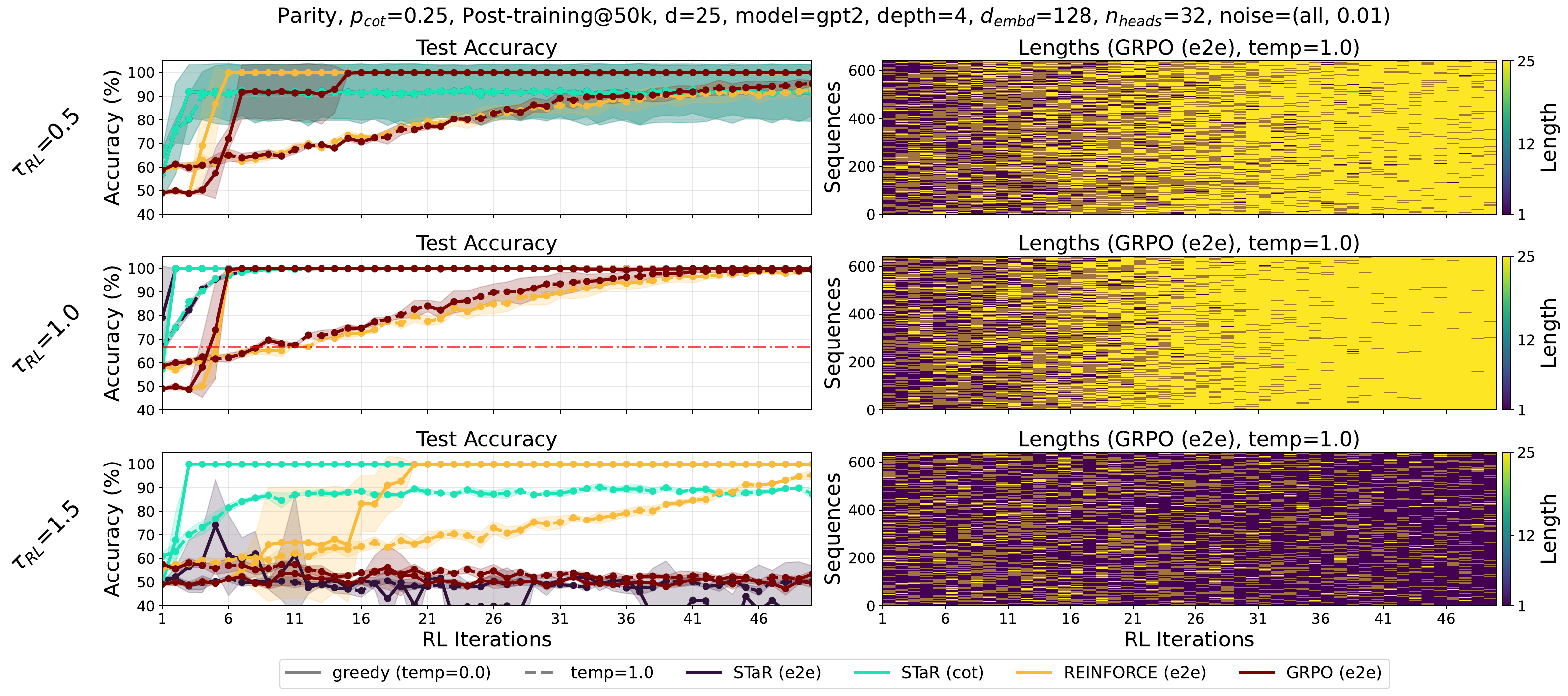}
    \caption{\textbf{Post-training of transformers on mixture of long and short sequences encoding the parity of $d$ bits under ``final'' type noise ($\eta$=0.01) with various RL methods and generation temperatures ($\tau_{\mathrm{RL}}$) }. \textit{Left}: Test accuracy during the course of post-training with greedy decoding (solid lines) and sampling with temperature 1 (dashed lines) for various RL methods. Figure shows average and 1 standard deviation across 3 seeds.
    \textit{Right}: Length of generated response (sampled with temperature 1) during the course of a post-training run (GRPO with end-to-end reward) for 640 test inputs, after
    $20$k pre-training iterations. \textbf{Note:} The sample size $n$ of each RL iteration differs amongst the RL algorithms: $n$=64 for GRPO, REINFORCE and $n$=3,200 sequences for STaR.}
    \label{fig:parity_posttrain_all_001}
\end{figure}

\begin{figure}
    \centering
    \includegraphics[scale=0.245]{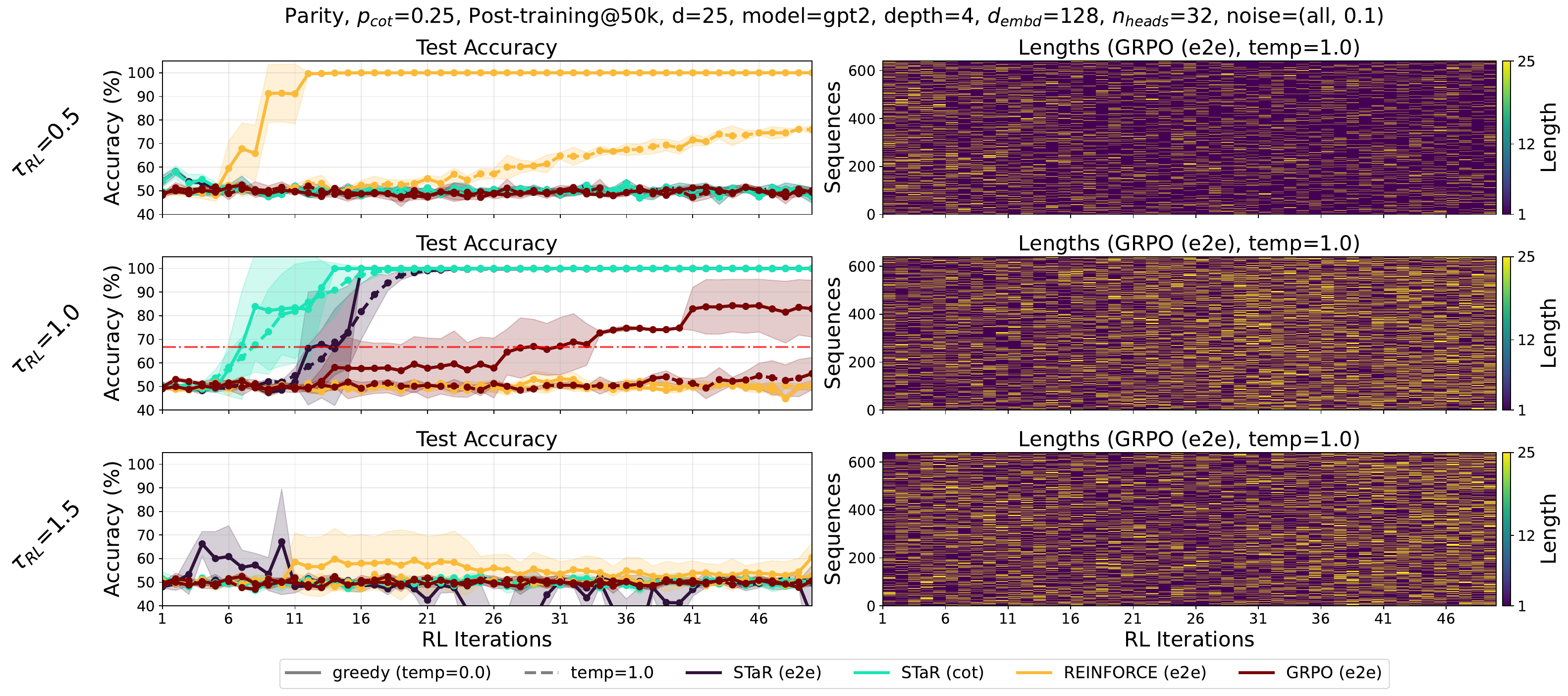}
    \caption{\textbf{Post-training of transformers on mixture of long and short sequences encoding the parity of $d$ bits under ``all'' type noise ($\eta$=0.1) with various RL methods and generation temperatures ($\tau_{\mathrm{RL}}$) }. \textit{Left}: Test accuracy during the course of post-training with greedy decoding (solid lines) and sampling with temperature 1 (dashed lines) for various RL methods. Figure shows average and 1 standard deviation across 3 seeds.
    \textit{Right}: Length of generated response (sampled with temperature 1) during the course of a post-training run (GRPO with end-to-end reward) for 640 test inputs, after
    $20$k pre-training iterations. \textbf{Note:} The sample size $n$ of each RL iteration differs amongst the RL algorithms: $n$=64 for GRPO, REINFORCE and $n$=3,200 sequences for STaR.}
    \label{fig:parity_posttrain_all_01}
\end{figure}

\subsection{Number multiplication}\label{ssec:app_mult}
 
As stated in the main text, we construct datasets that contain the full sequence, including the chain of thought, with probability $p_{\mathrm{cot}}$ and just the problem and answer with probability $1-p_{\mathrm{cot}}$ for various values of $p_{\mathrm{cot}}$.

Pre-training results are presented in \Cref{fig:4x4_pretrain,fig:5x5_pretrain,fig:7x7_pretrain}, while post-training results are presented in Figure~\ref{fig:multiplication-aggregated}.

\paragraph{Miscellaneous observations}
Notice that the length of the response is very large for all runs at initialization, as the model has not learned to generate the end-of-sequence token. For the $\code{4} \times \code{4}$ task, we observe success after RL for at least one checkpoint for all values of $p_{\mathrm{cot}}$. If post-training starts early, the model is not able to benefit from reinforcement learning. As an anecdote, some of the early checkpoints in $p_{\mathrm{cot}}=0.5$ (top left corner) seemingly failed to yield a generalizing model, but we found that this was actually due to a slight mismatch between our evaluation protocol and the reward function; the model did learn to generate correct answers to the multiplication questions but did not learn to end the generation afterward, generating tokens until the maximum limit was exceeded (indeed, see in the plot that the average length is much larger than in the rest of the checkpoints). The evaluation code, however, deemed such responses incorrect, and thus RL was recorded as a failure. This illustrates the various ways that RL might not go as planned if applied to an ``immature'' checkpoint.


\begin{figure}
    \centering
    \includegraphics[scale=0.195]{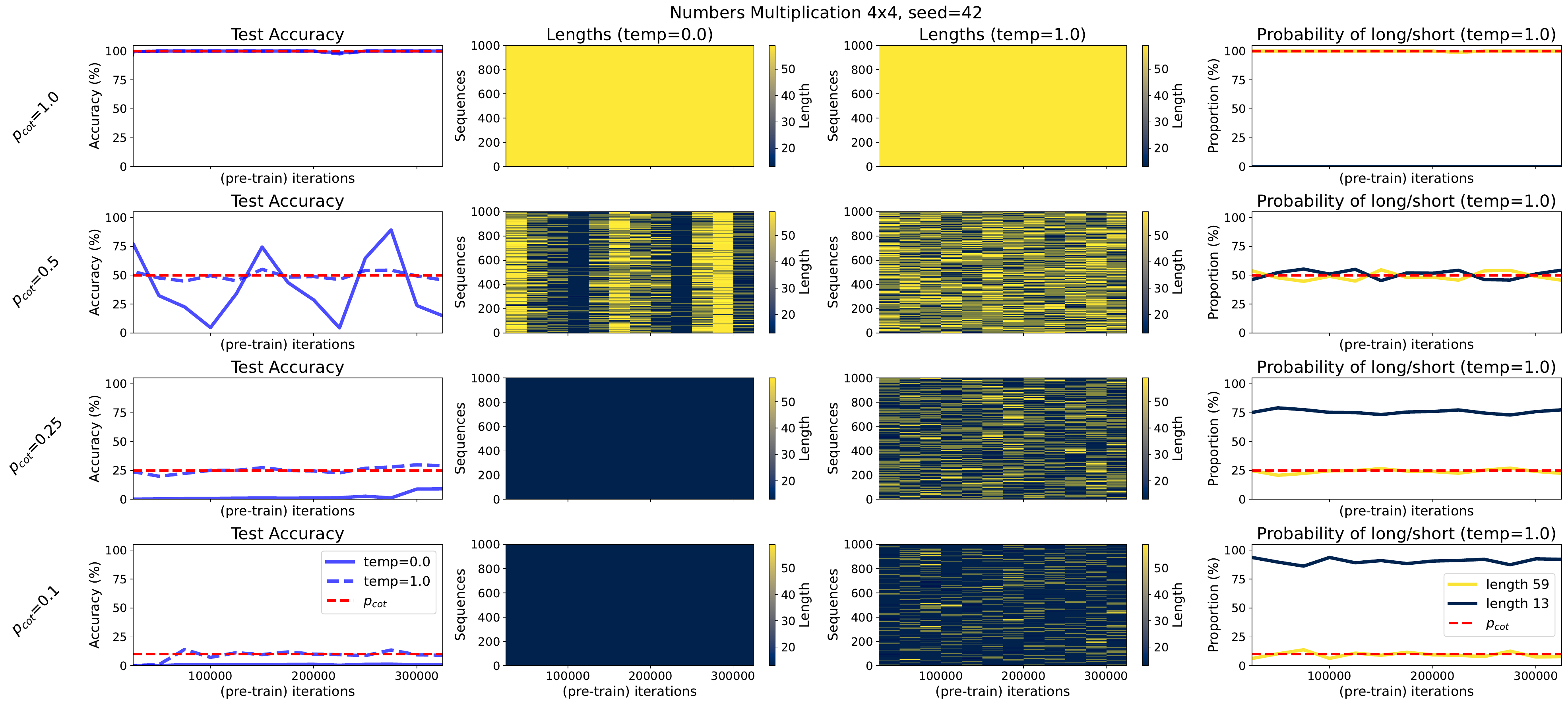}
    \includegraphics[scale=0.195]{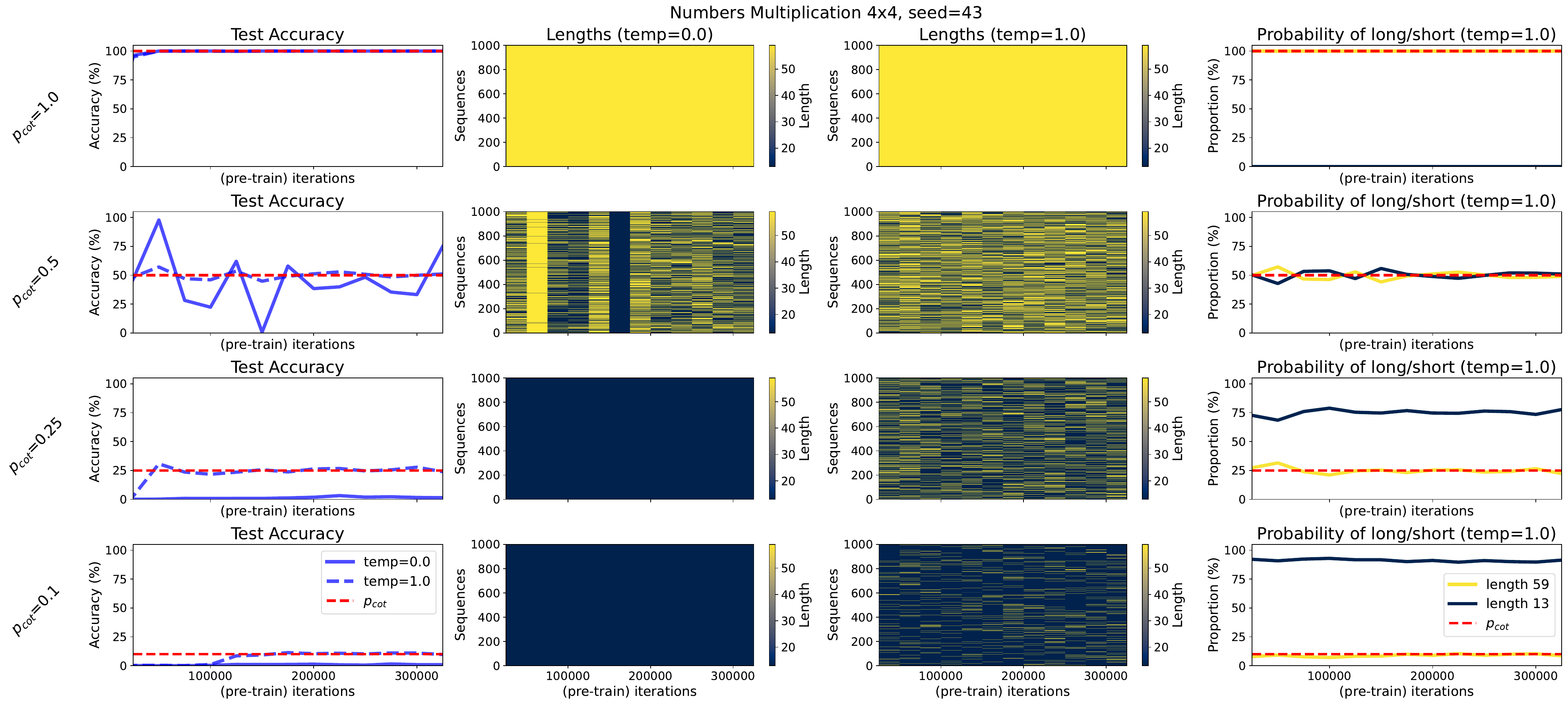}
    \includegraphics[scale=0.195]{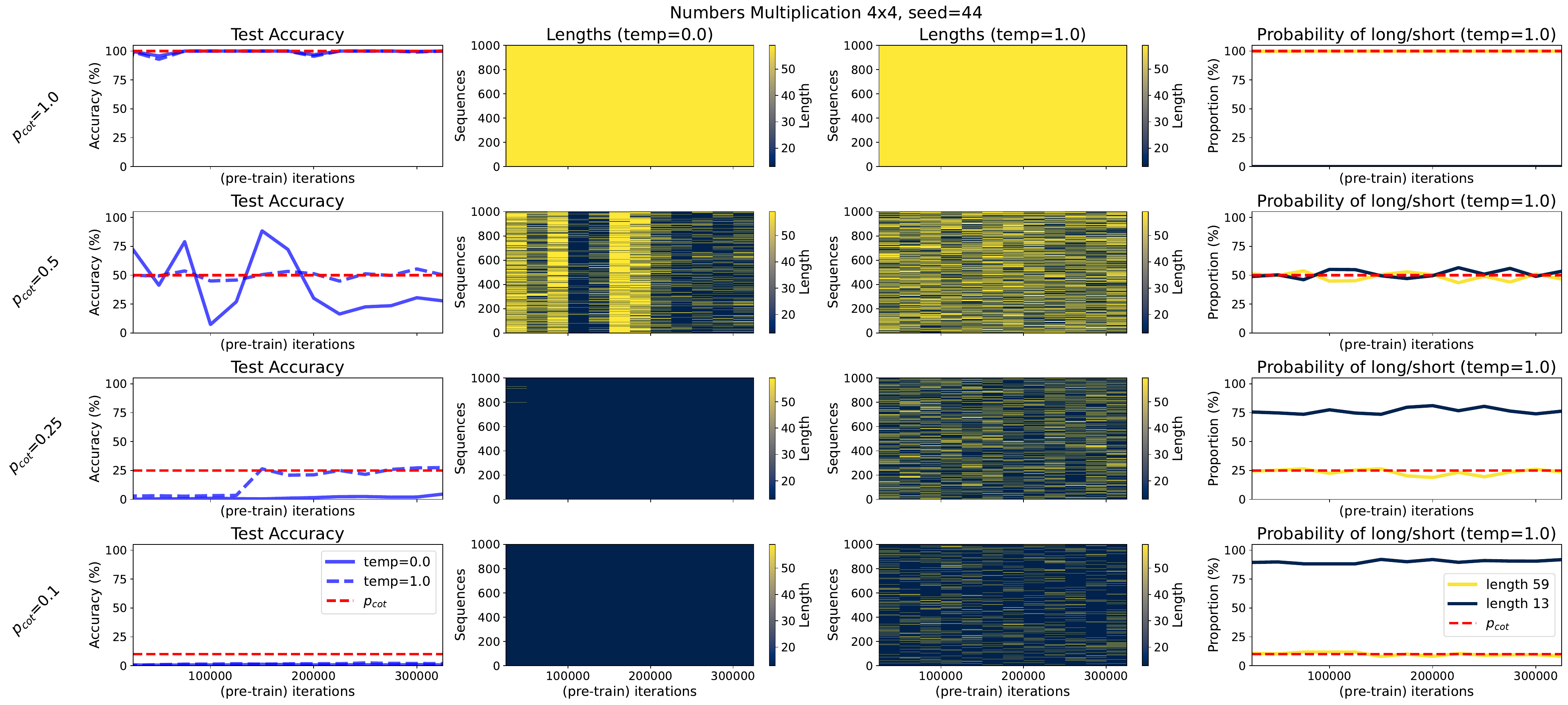}
    \caption{\textbf{Pre-training of GPT2 transformers on mixture of long and short sequences encoding the multiplication of 4-digits numbers for 3 different seeds and for various values of $p_{\mathrm{cot}}$}. \textit{Left}: Test accuracy during the course of pre-training, under greedy decoding and sampling with temperature 1. 
    \textit{Center, Left}: The length of the greedy response for 1000 test samples during the course of pre-training. \textit{Center, Right}: The length of the sampled response (temperature of 1) for 1000 test samples during the course of pre-training. \textbf{Right}: The probability that the length of a model's autoregressive generation equals the maximum or minimum length present in the training distribution.}
    \label{fig:4x4_pretrain}
\end{figure}

\begin{figure}
    \centering
    \includegraphics[scale=0.195]{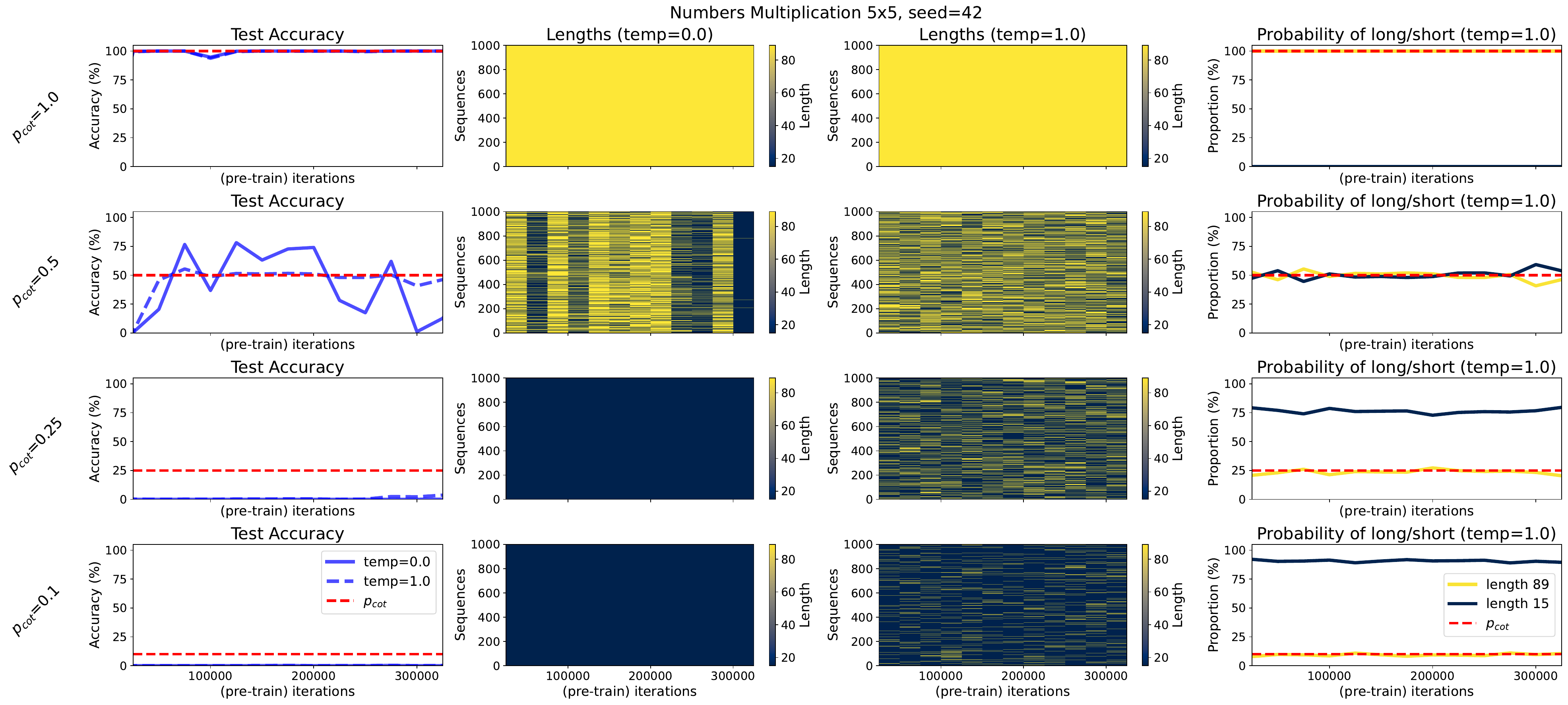}
    \includegraphics[scale=0.195]{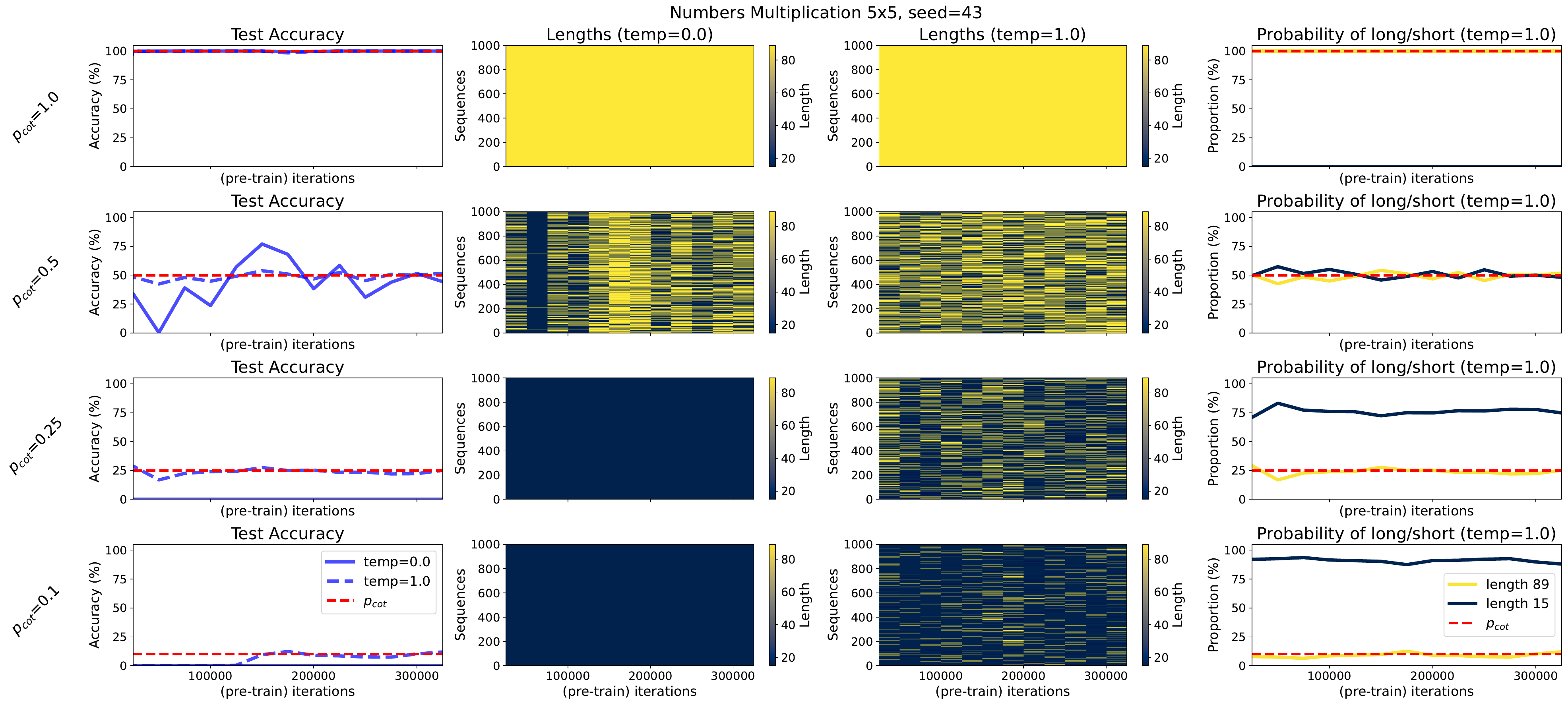}
    \includegraphics[scale=0.195]{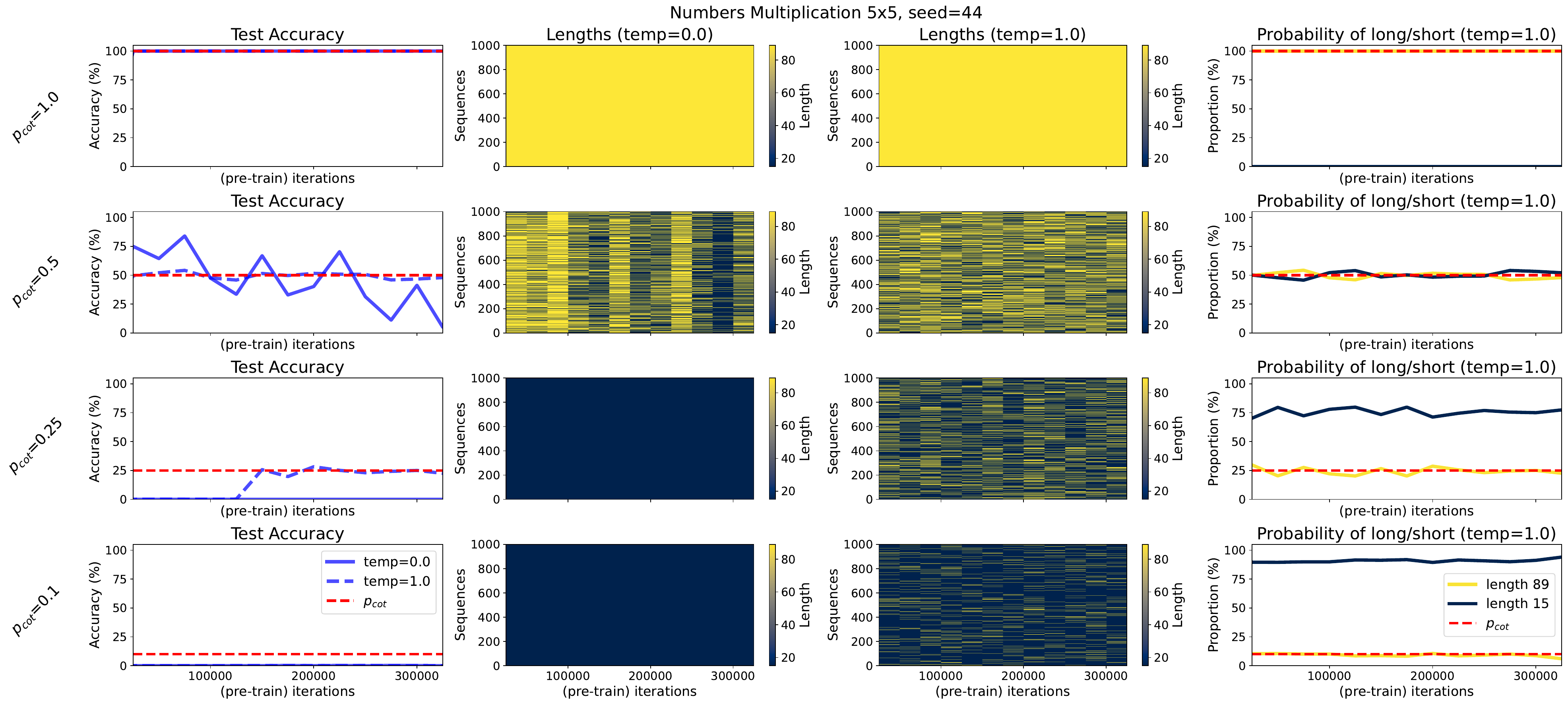}
    \caption{\textbf{Pre-training of GPT2 transformers on mixture of long and short sequences encoding the multiplication of 5-digits numbers for 3 different seeds and for various values of $p_{\mathrm{cot}}$}. \textit{Left}: Test accuracy during the course of pre-training, under greedy decoding and sampling with temperature 1. 
    \textit{Center, Left}: The length of the greedy response for 1000 test samples during the course of pre-training. \textit{Center, Right}: The length of the sampled response (temperature of 1) for 1000 test samples during the course of pre-training. \textbf{Right}: The probability that the length of a model's autoregressive generation equals the maximum or minimum length present in the training distribution.}
    \label{fig:5x5_pretrain}
\end{figure}

\begin{figure}
    \centering
    \includegraphics[scale=0.195]{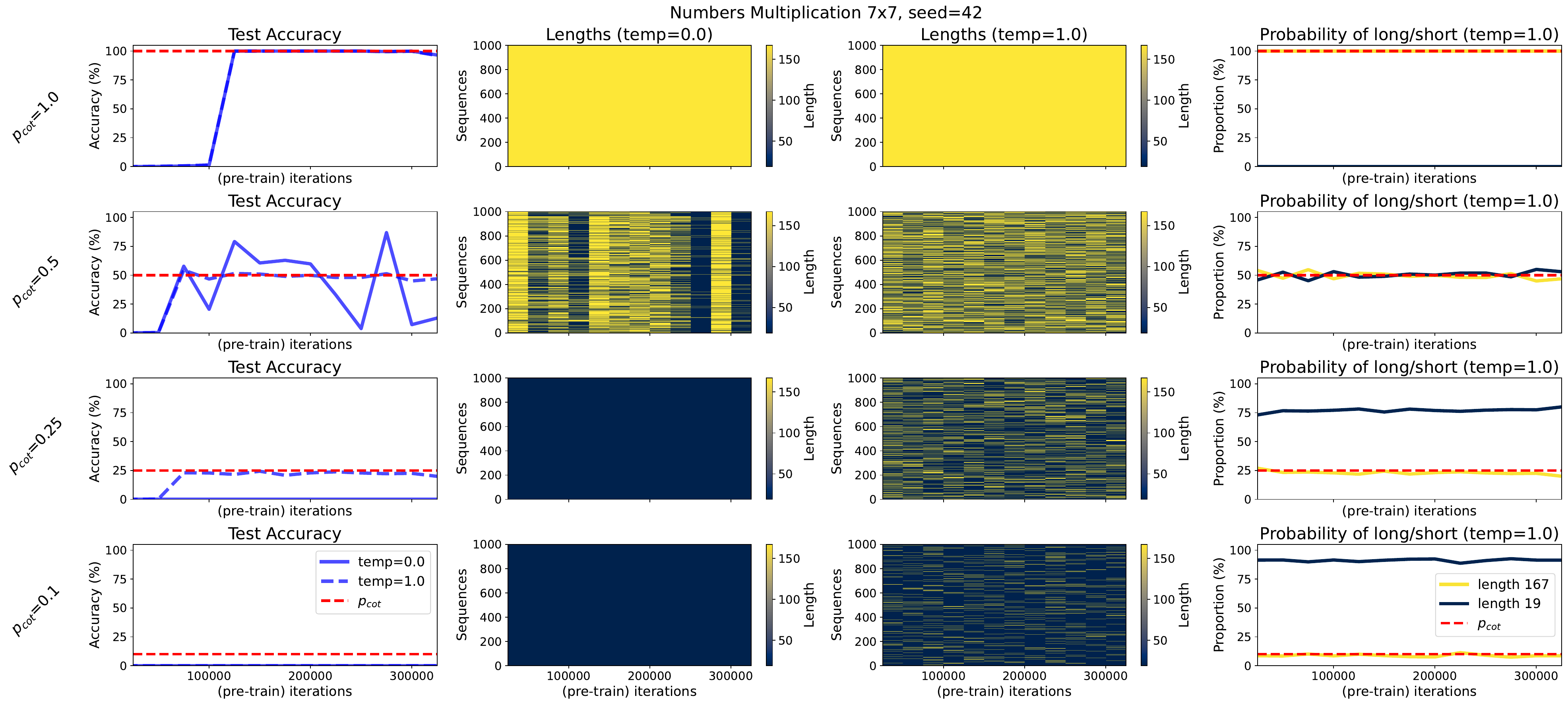}
    \includegraphics[scale=0.195]{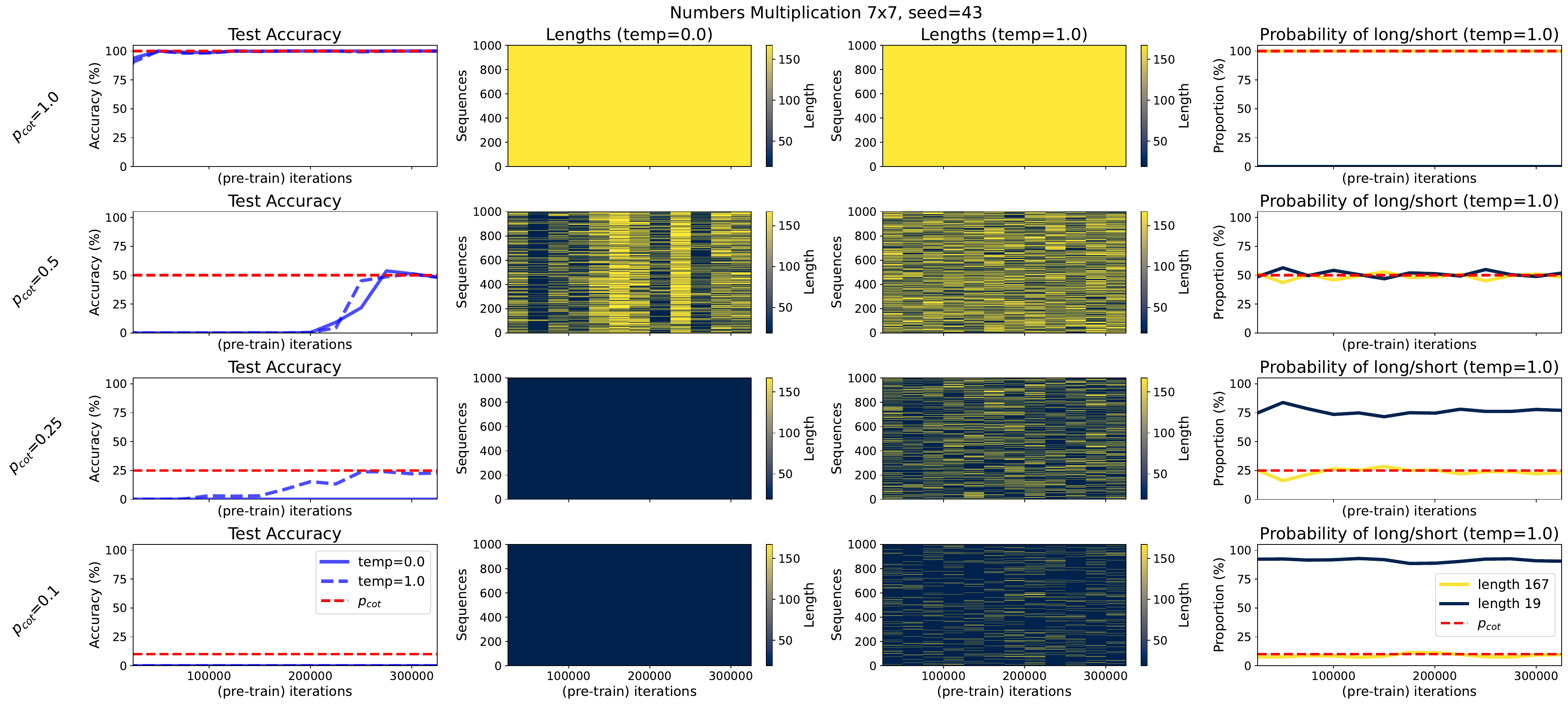}
    \includegraphics[scale=0.195]{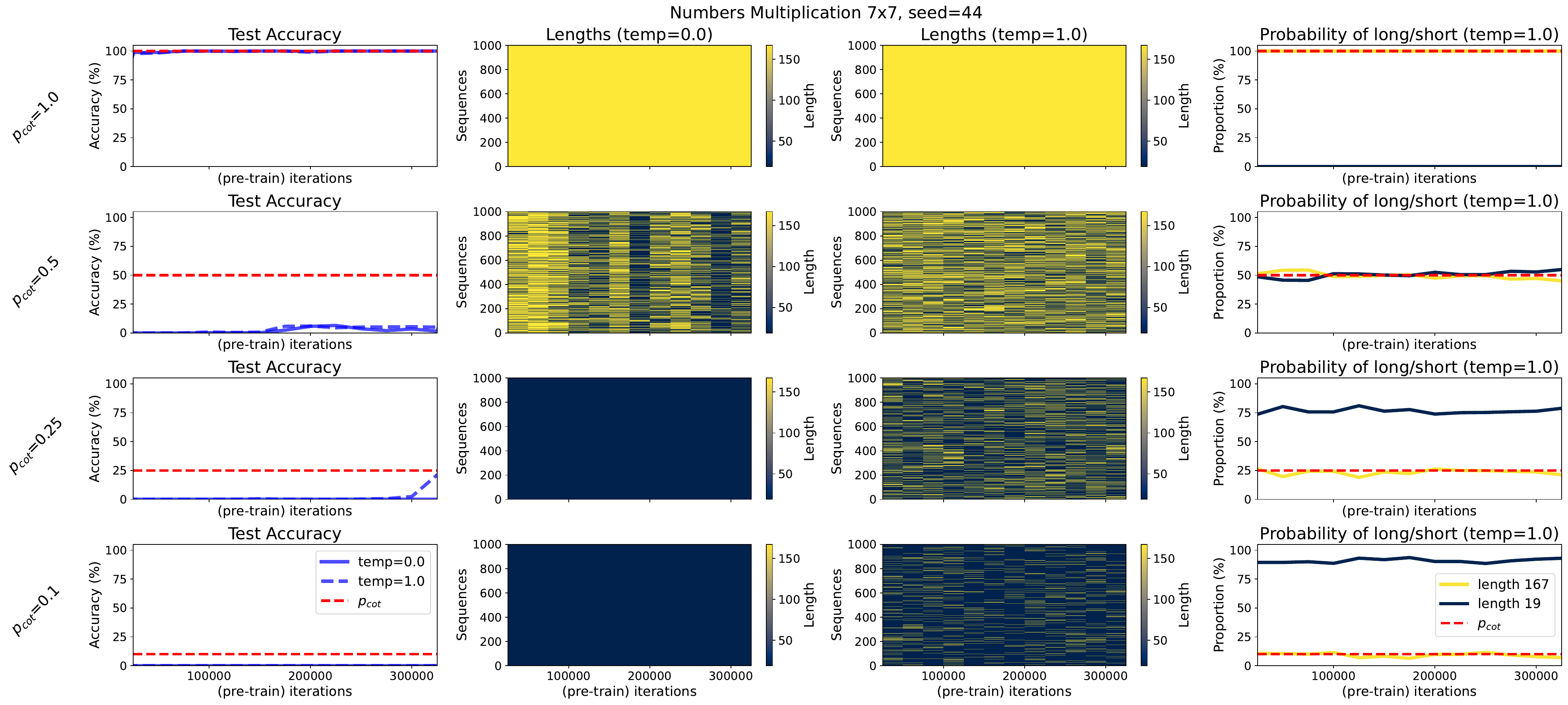}
    \caption{\textbf{Pre-training of GPT2 transformers on mixture of long and short sequences encoding the multiplication of 7-digits numbers for 3 different seeds and for various values of $p_{\mathrm{cot}}$}. \textit{Left}: Test accuracy during the course of pre-training, under greedy decoding and sampling with temperature 1 
    . \textit{Center, Left}: The length of the greedy response for 1000 test samples during the course of pre-training. \textit{Center, Right}: The length of the sampled response (temperature of 1) for 1000 test samples during the course of pre-training. \textbf{Right}: The probability that the length of a model's autoregressive generation equals the maximum or minimum length present in the training distribution.}
    \label{fig:7x7_pretrain}
\end{figure}

\begin{figure}
    \centering
    \includegraphics[scale=0.24]{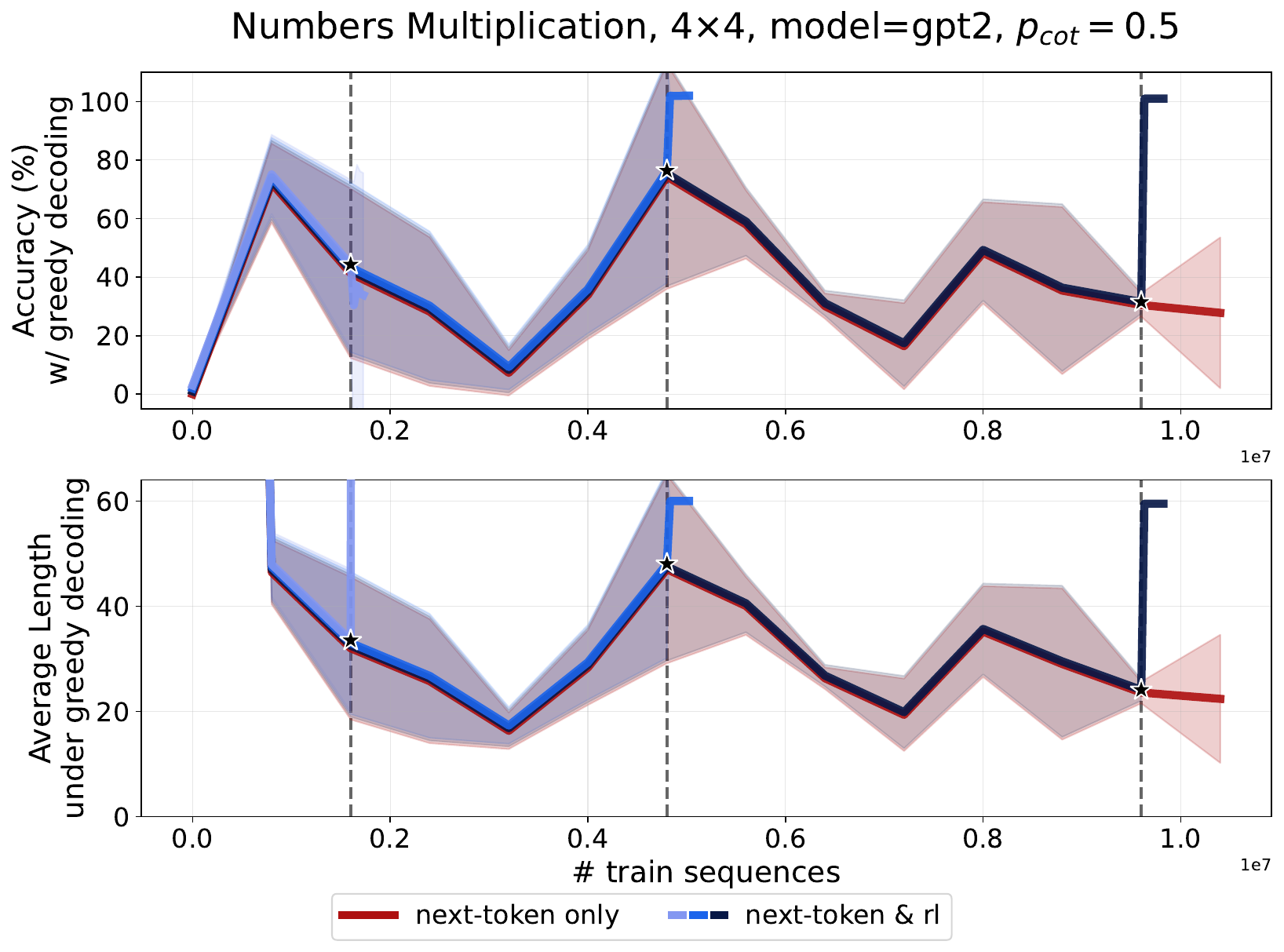}
    \includegraphics[scale=0.24]{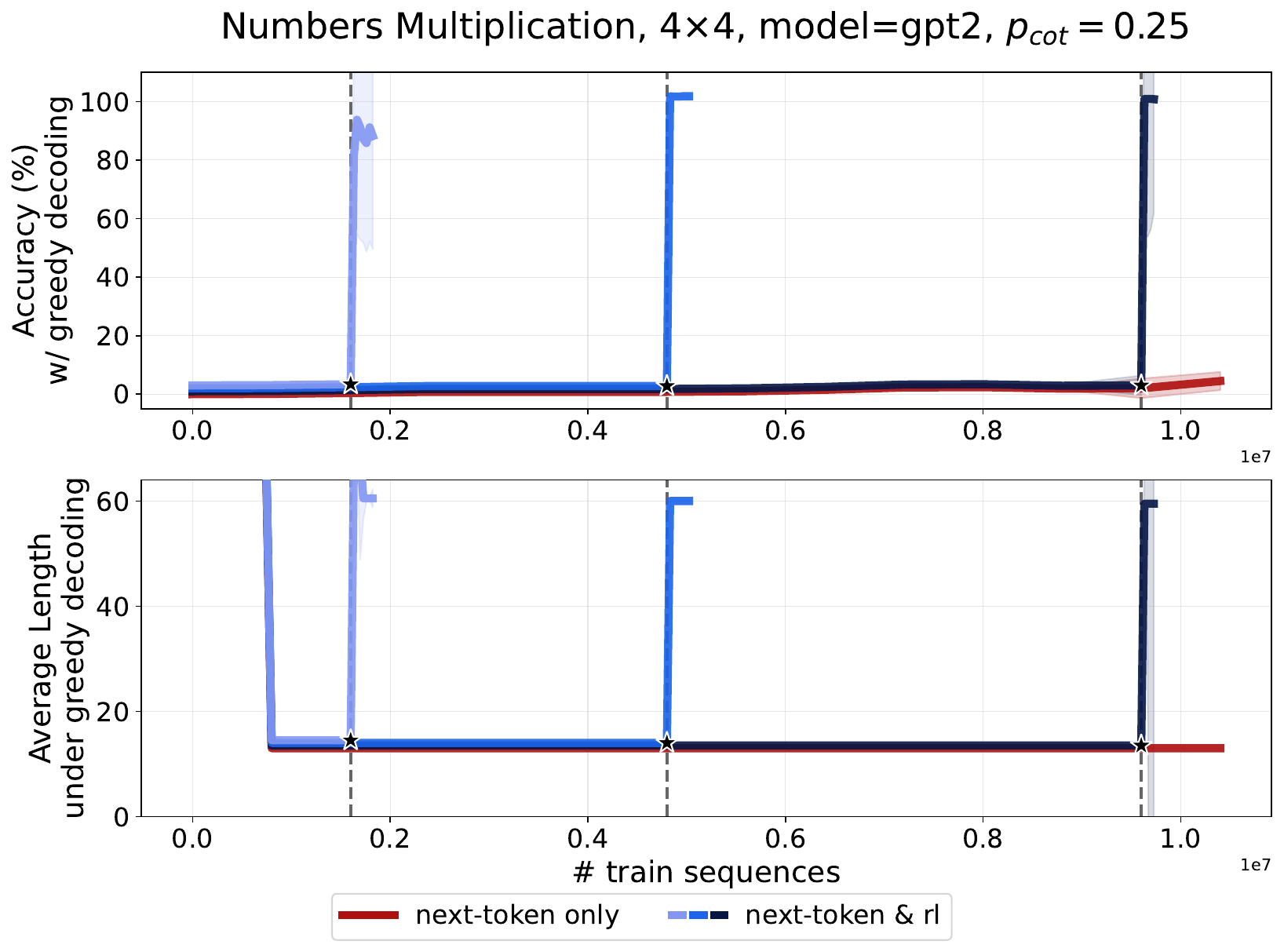}
    \includegraphics[scale=0.24]{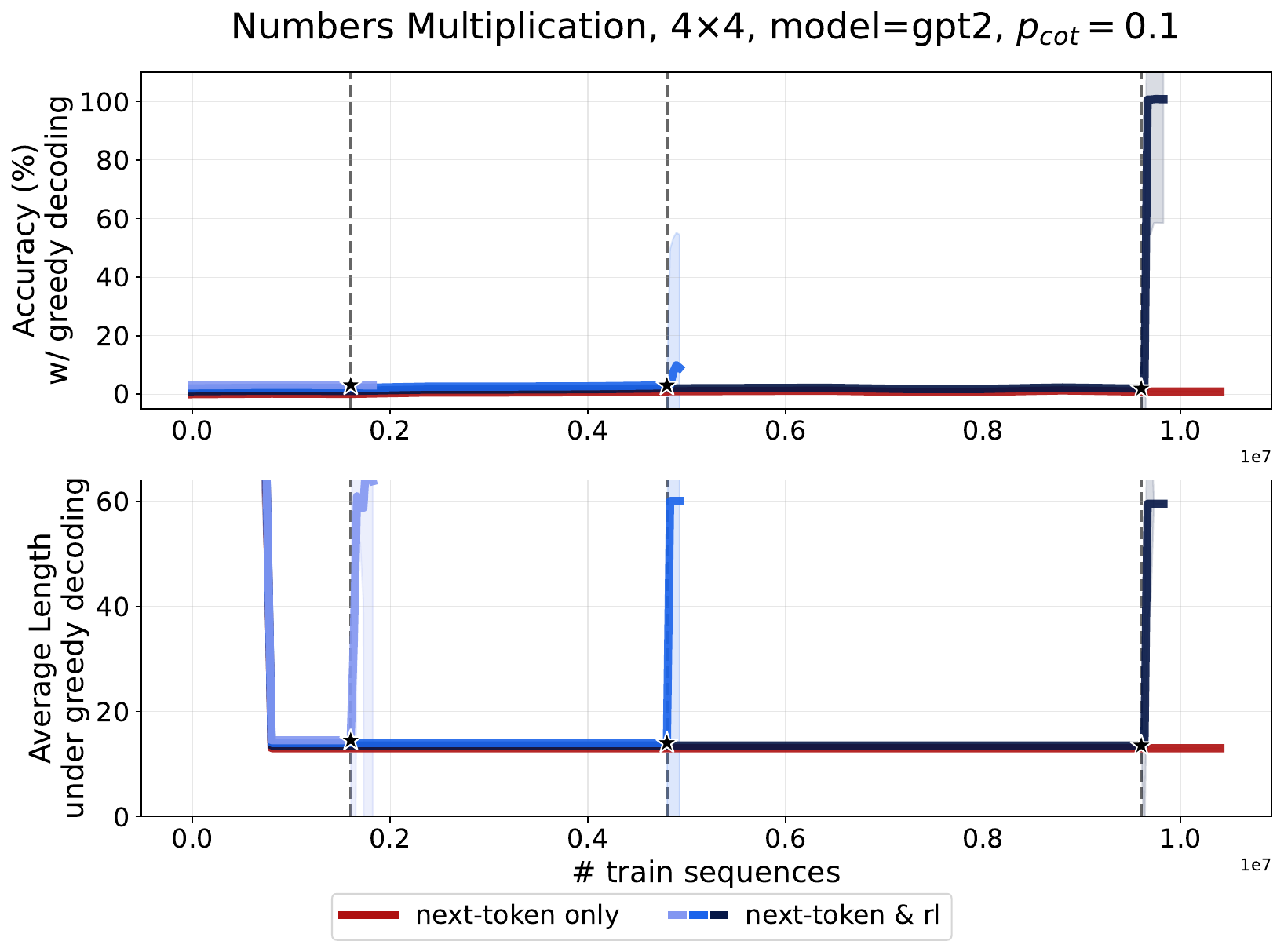}
    \includegraphics[scale=0.24]{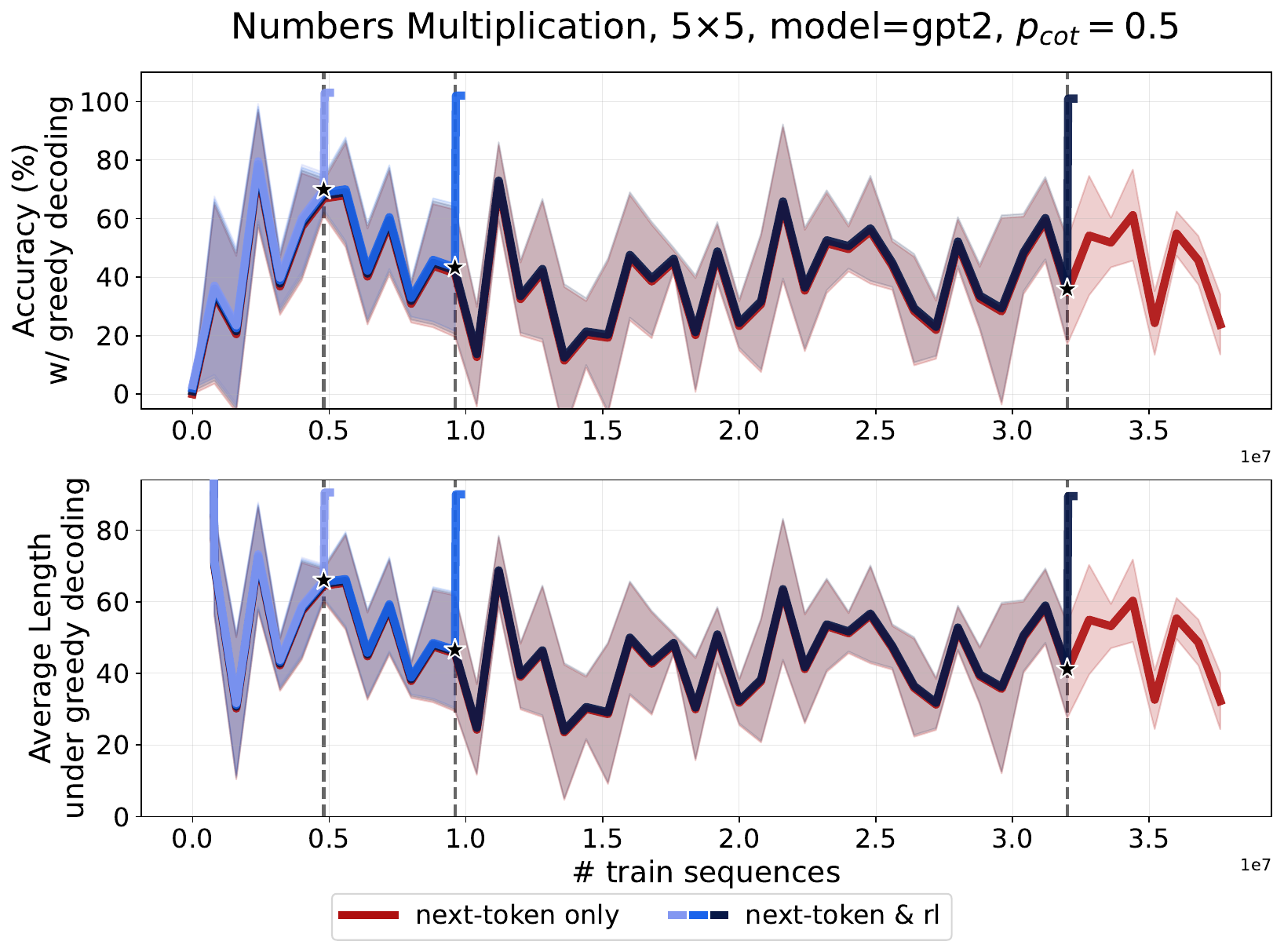}
    \includegraphics[scale=0.24]{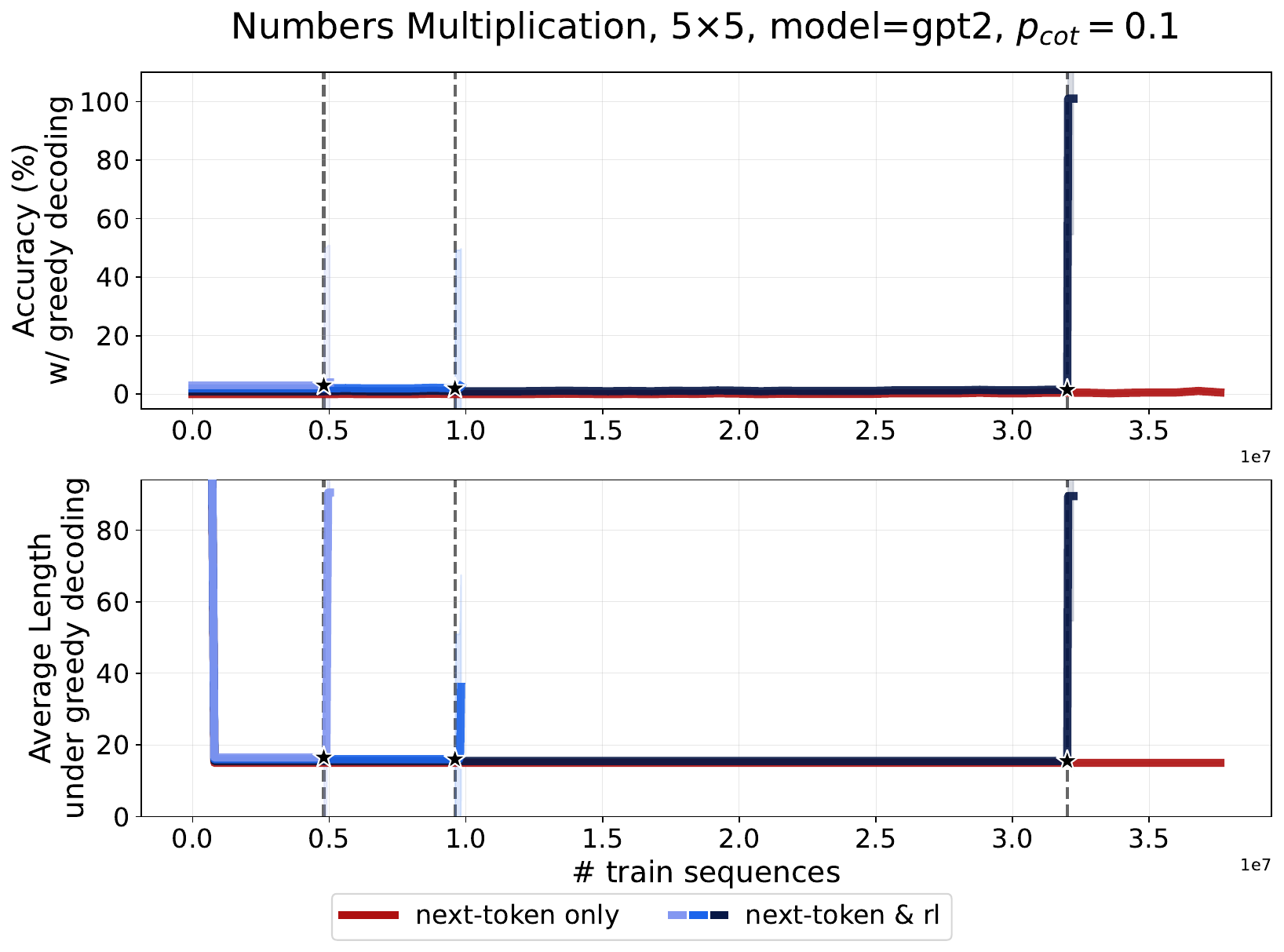}
    \includegraphics[scale=0.24]{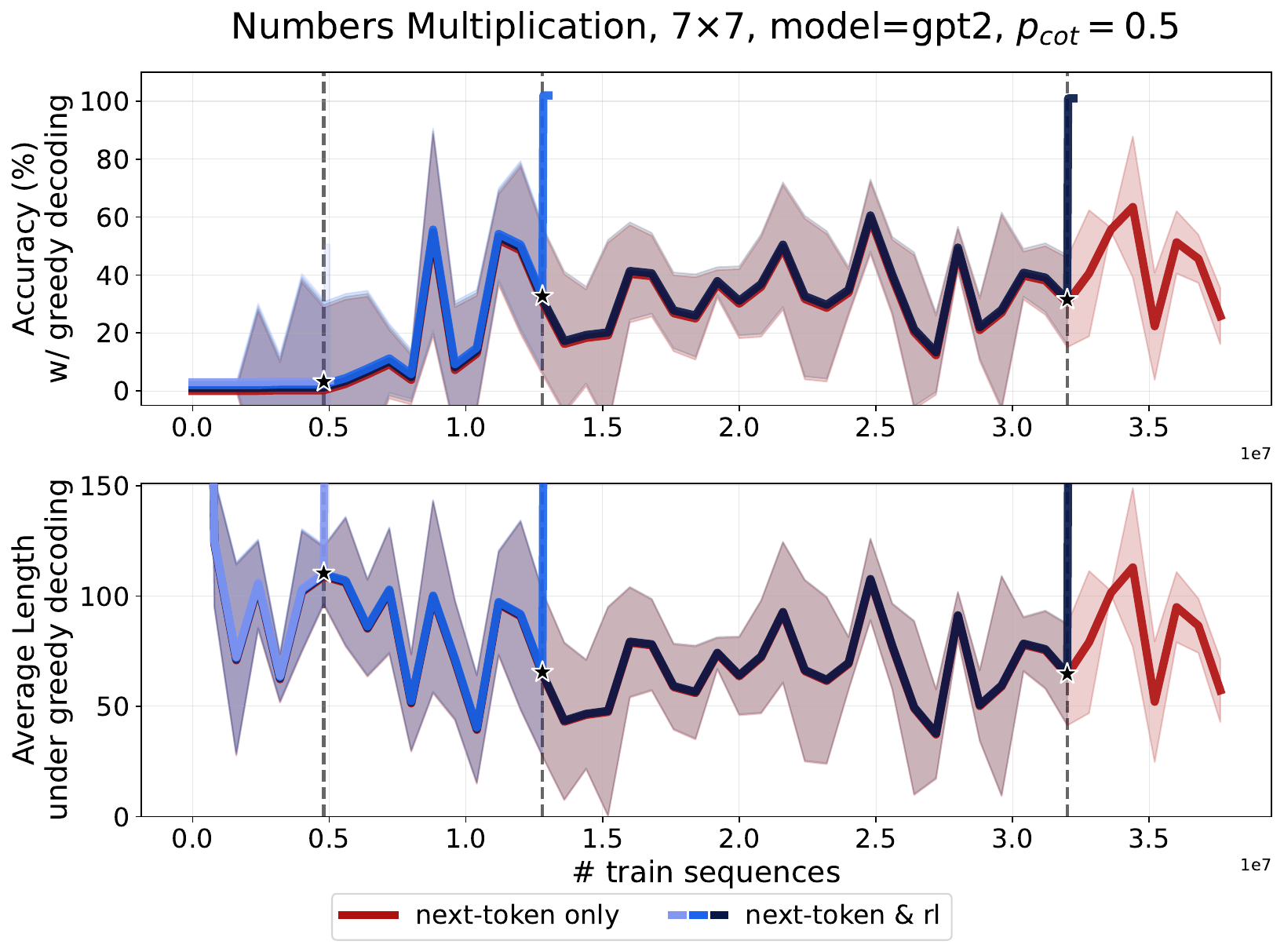}
    \includegraphics[scale=0.24]{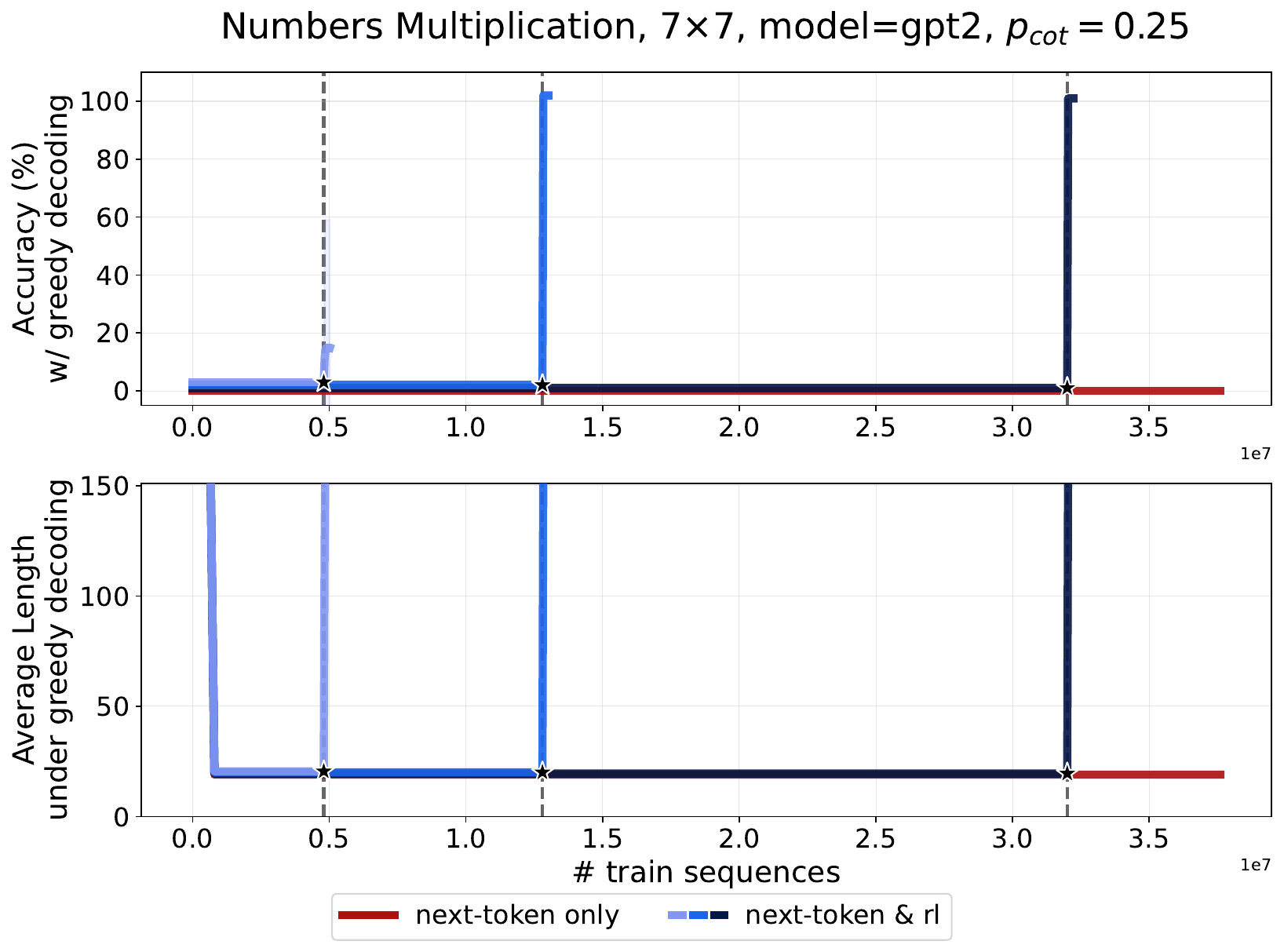}
    \includegraphics[scale=0.24]{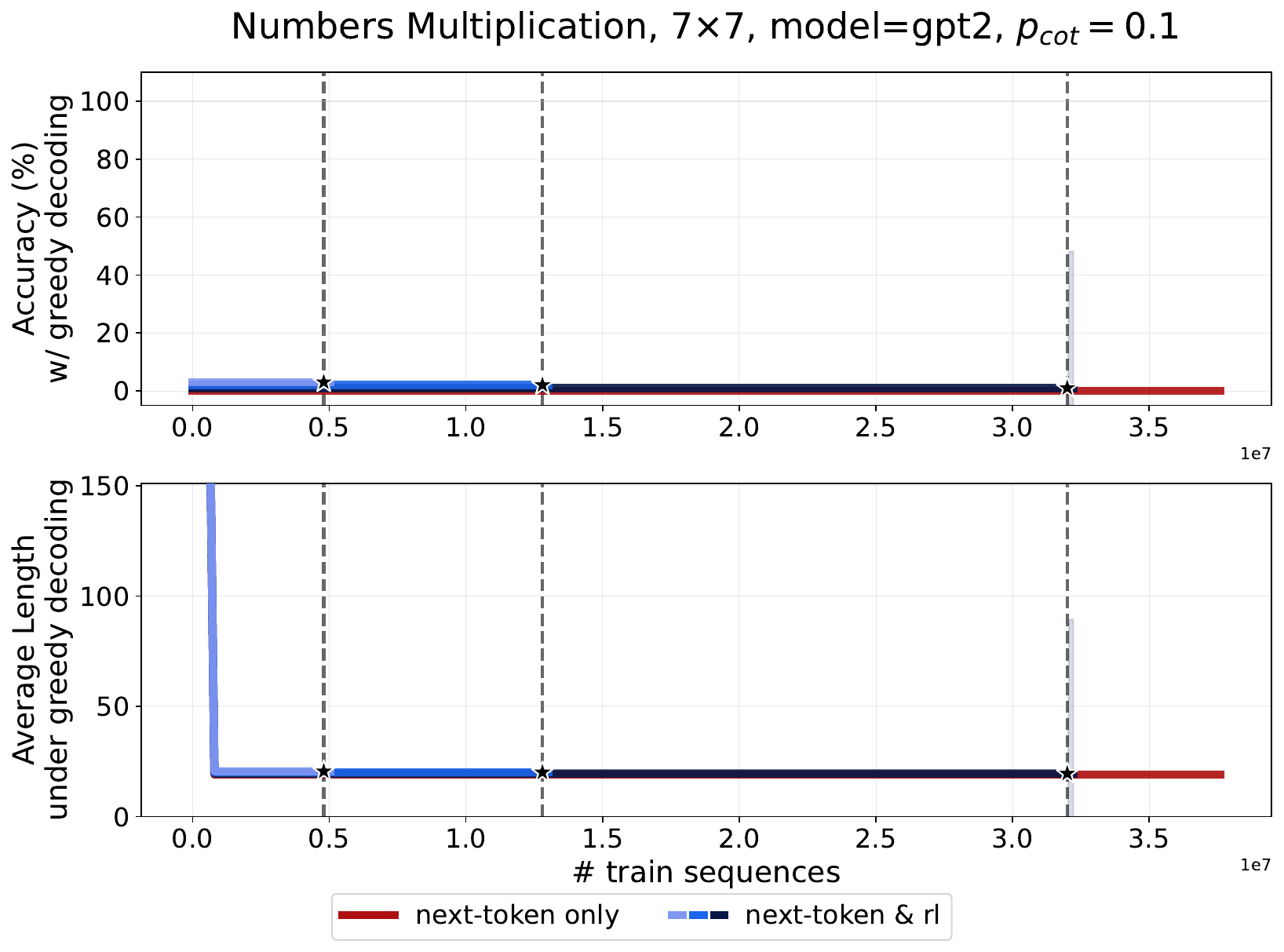}
    \caption{\textbf{Pre-training and post-training combined of a randomly initialized GPT2 on numbers multiplication} for various values of numbers of digits $n$ and fraction of chain of thought data $p_{\mathrm{cot}}$ in the mix.}
    \label{fig:multiplication-aggregated}
\end{figure}

\subsection{GSM8k benchmark}

See \Cref{fig:gsm8k-additional} for SFT and RL results on the GSM8k dataset for additional values of $p_{\mathrm{cot}}$. The qualitative analysis is presented in \Cref{fig:completion_gsm8k_pcot005_epoch1_rl1,fig:completion_gsm8k_pcot005_epoch1_rl58,fig:completion_gsm8k_pcot005_epoch1_rl117,fig:completion_gsm8k_pcot005_epoch1_rl4676}.

\subsection{MATH benchmark}\label{ssec:app_reasoning}

See \Cref{fig:math-3b,fig:math-8b} for SFT and RL results on the MATH dataset. The qualitative analysis is presented in \Cref{fig:completion_math_pcot001_epoch1_rl1,fig:completion_math_pcot001_epoch1_rl87,fig:completion_math_pcot001_epoch1_rl92,fig:completion_math_pcot001_epoch1_rl102,fig:completion_math_pcot001_epoch20_rl1,fig:completion_math_pcot001_epoch20_rl181}, together with an explanation of the limitations in the figure captions.

\begin{figure}
    \centering
    \includegraphics[scale=0.235]{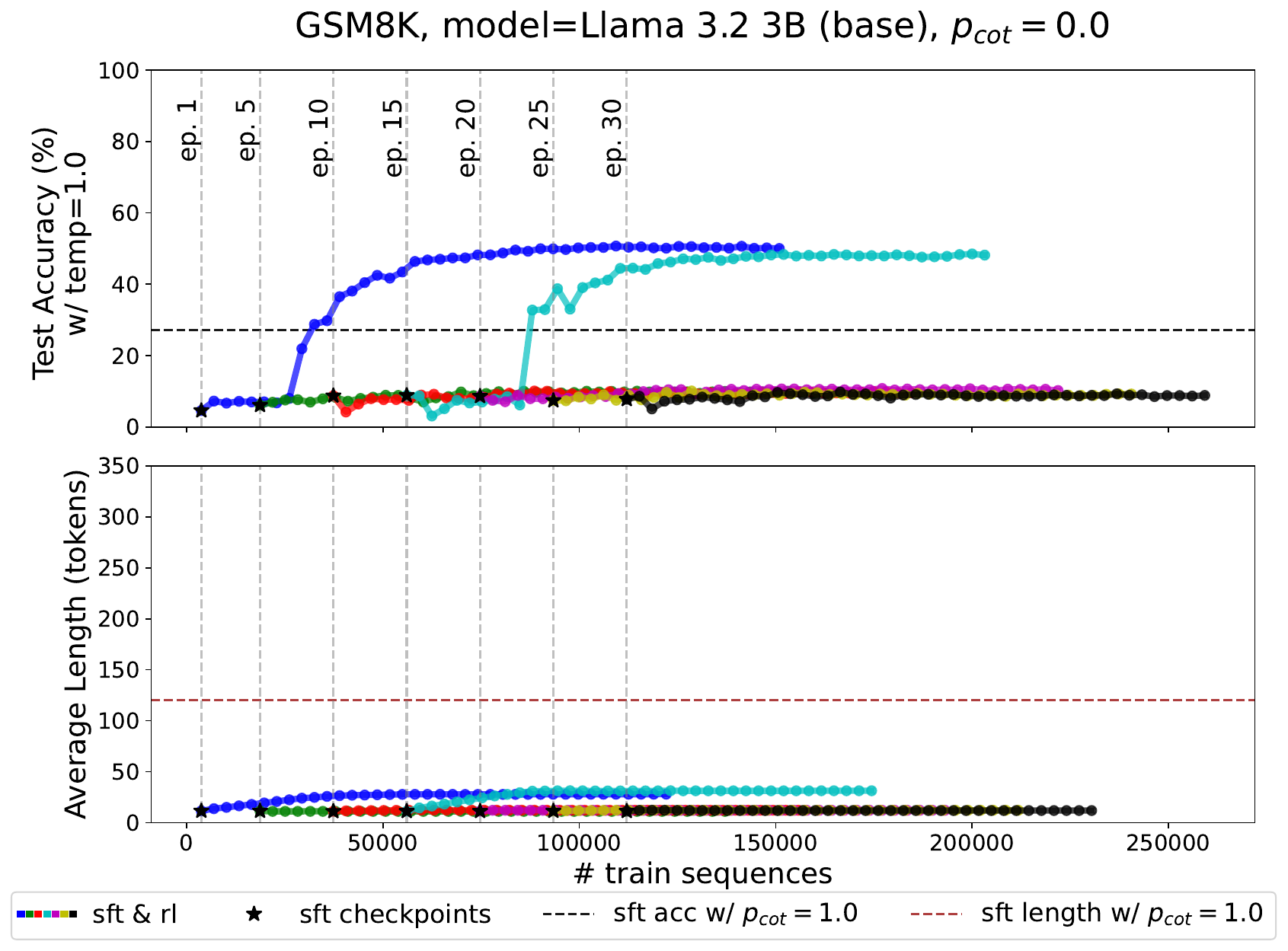}
    \includegraphics[scale=0.235]{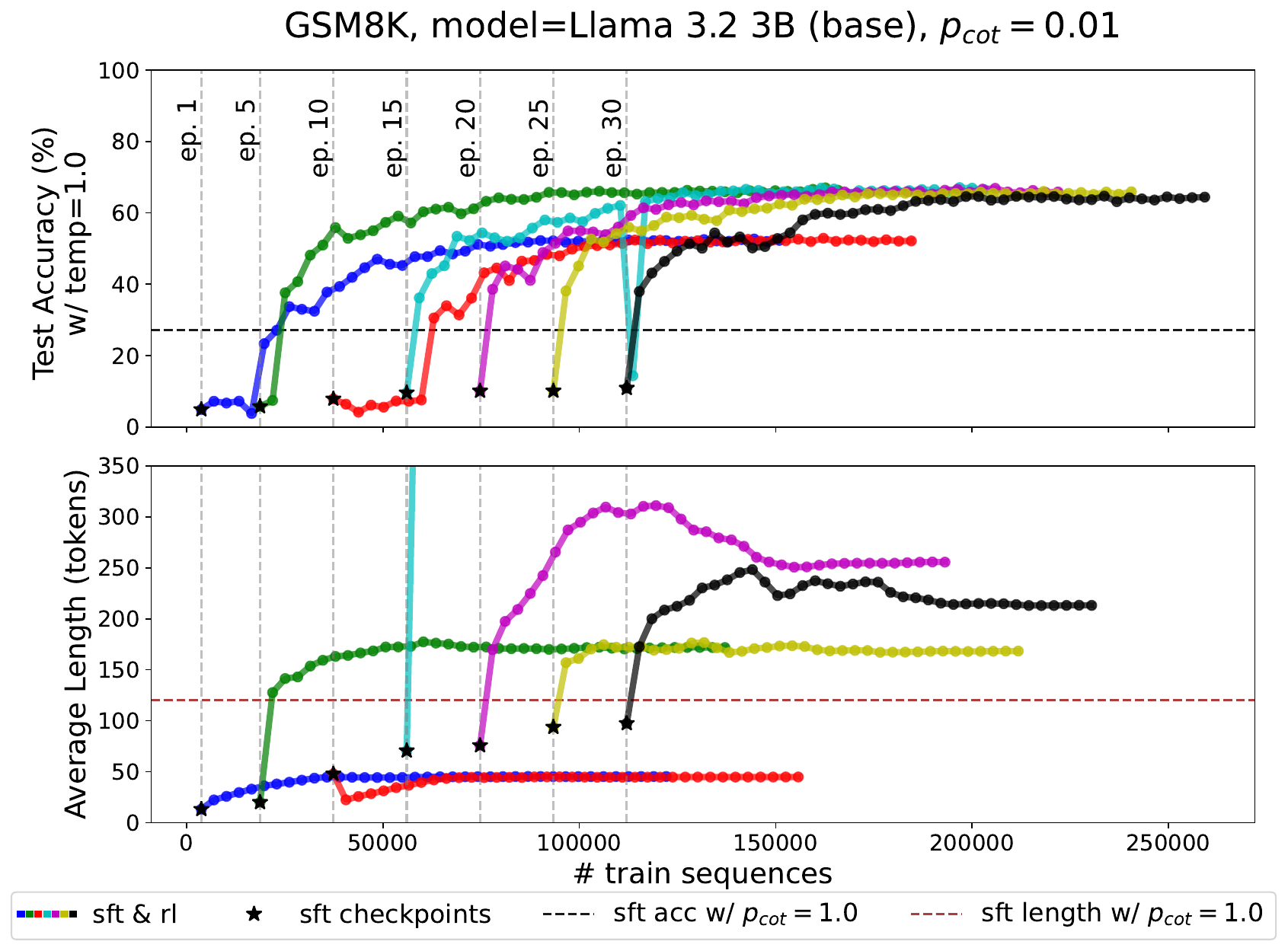}
    \includegraphics[scale=0.235]{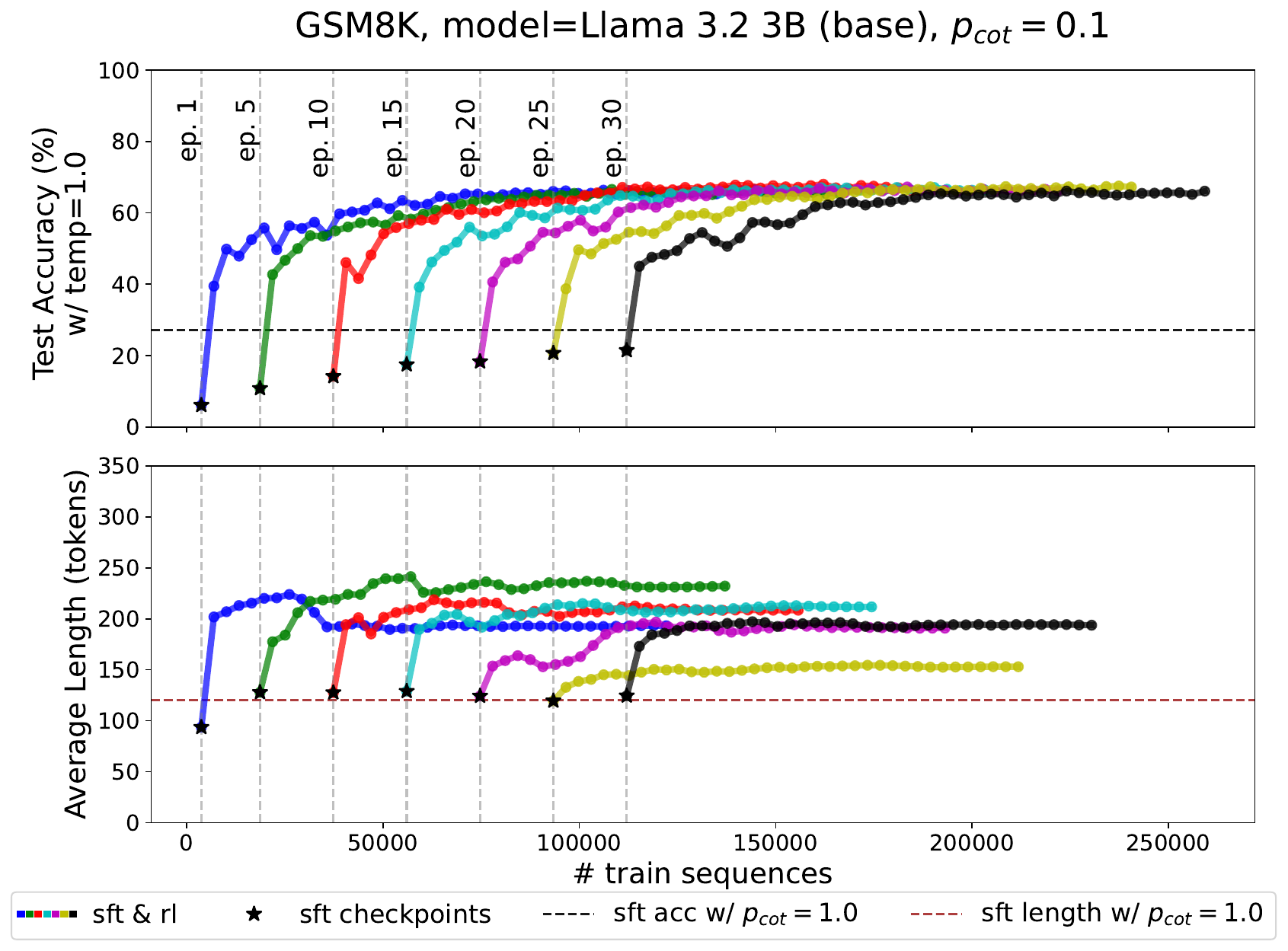}
    \includegraphics[scale=0.235]{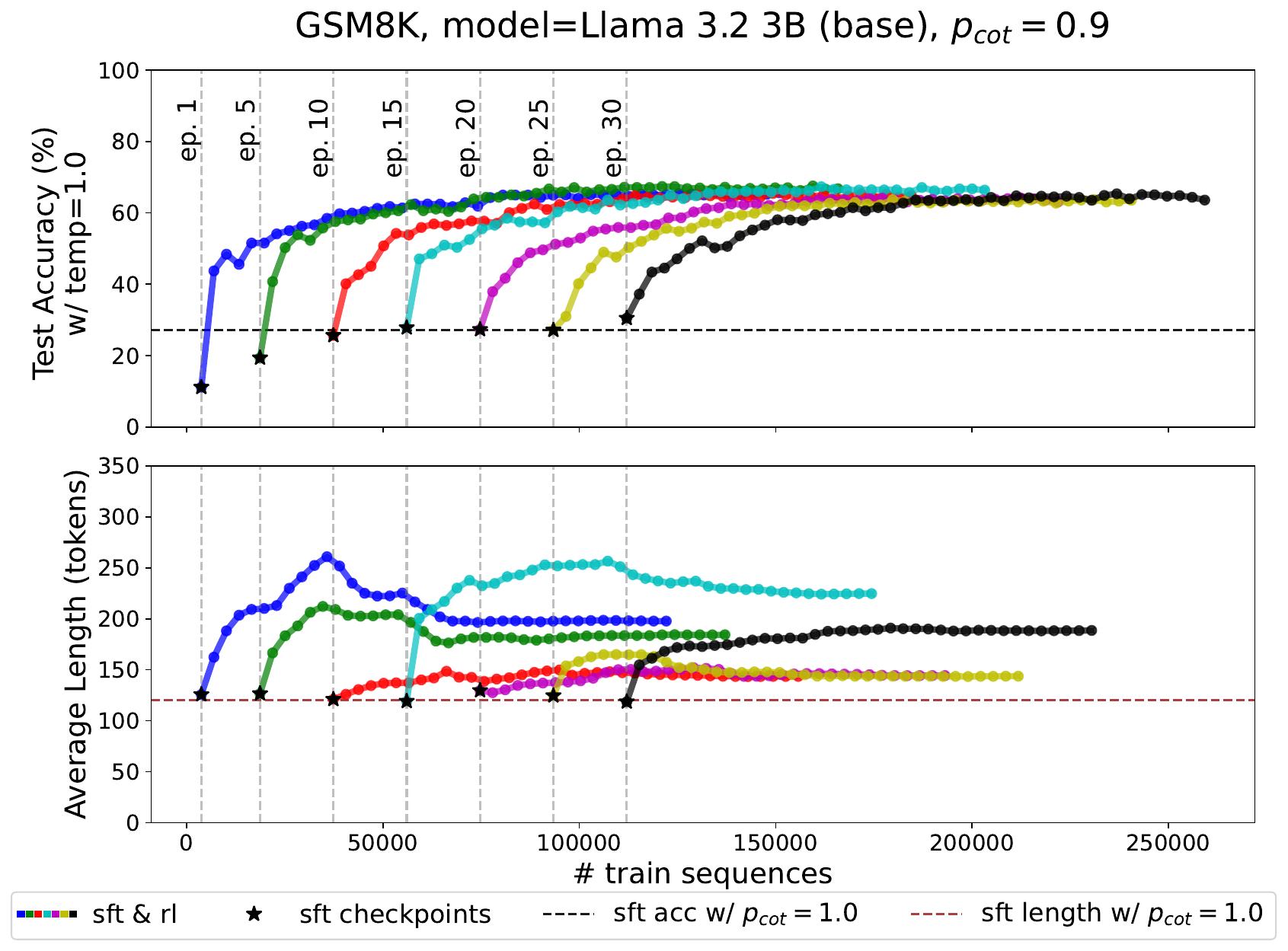}
    \includegraphics[scale=0.3]{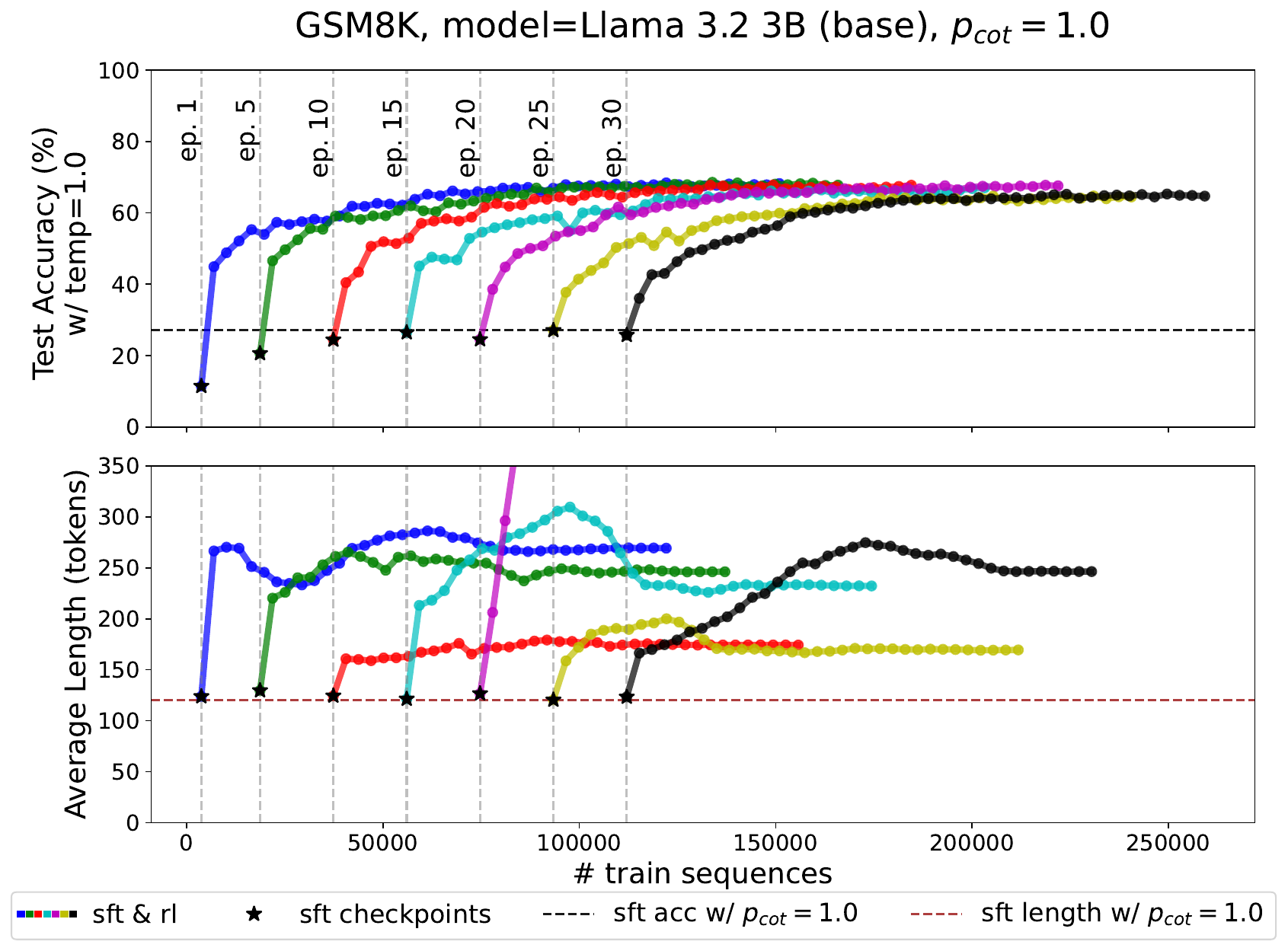}
    \caption{\textbf{SFT and GRPO of Llama 3.2 3B (base) on the GSM8k dataset} for various values of $p_{\mathrm{cot}}$. Some observations: For $p_{\mathrm{cot}}$=0, only two checkpoints succeed to generalize during RL. Interestingly, the length does not grow in these cases. The model obviously leverages some kind of pre-training data. For $p_{\mathrm{cot}}$=0.01, we observe length increase for almost all checkpoints and accompanying performance gains. When the length does not grow (SFT epochs 1 and 10), test accuracy plateaus to a small value (in comparison to the other checkpoints). RL consistently induces generalizing models for larger values of $p_{\mathrm{cot}}$. We observe that test accuracy and average token length are very similar across these checkpoints.}
    \label{fig:gsm8k-additional}
\end{figure}

\begin{figure}
    \centering
    \includegraphics[scale=0.235]{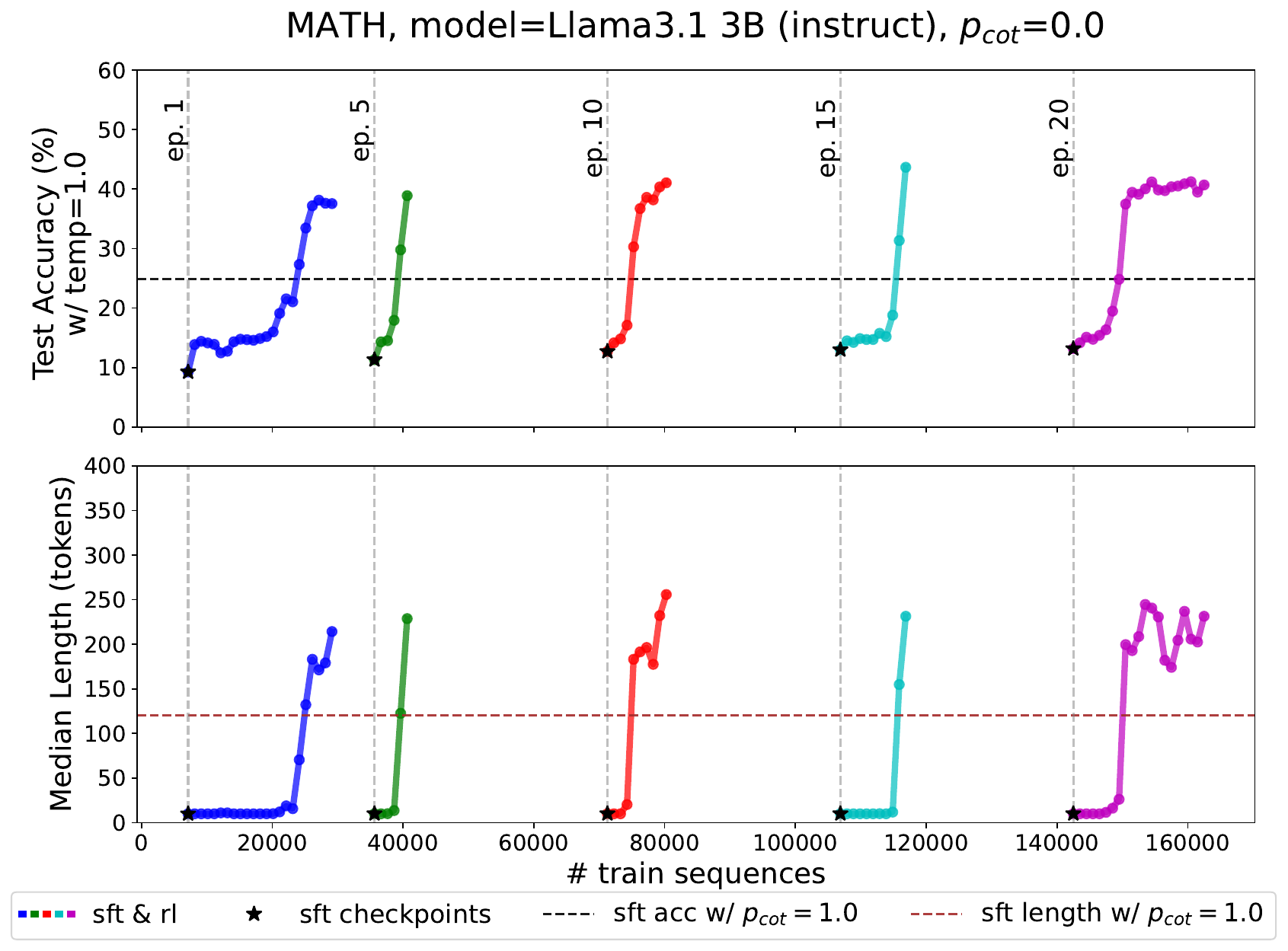}
    \includegraphics[scale=0.235]{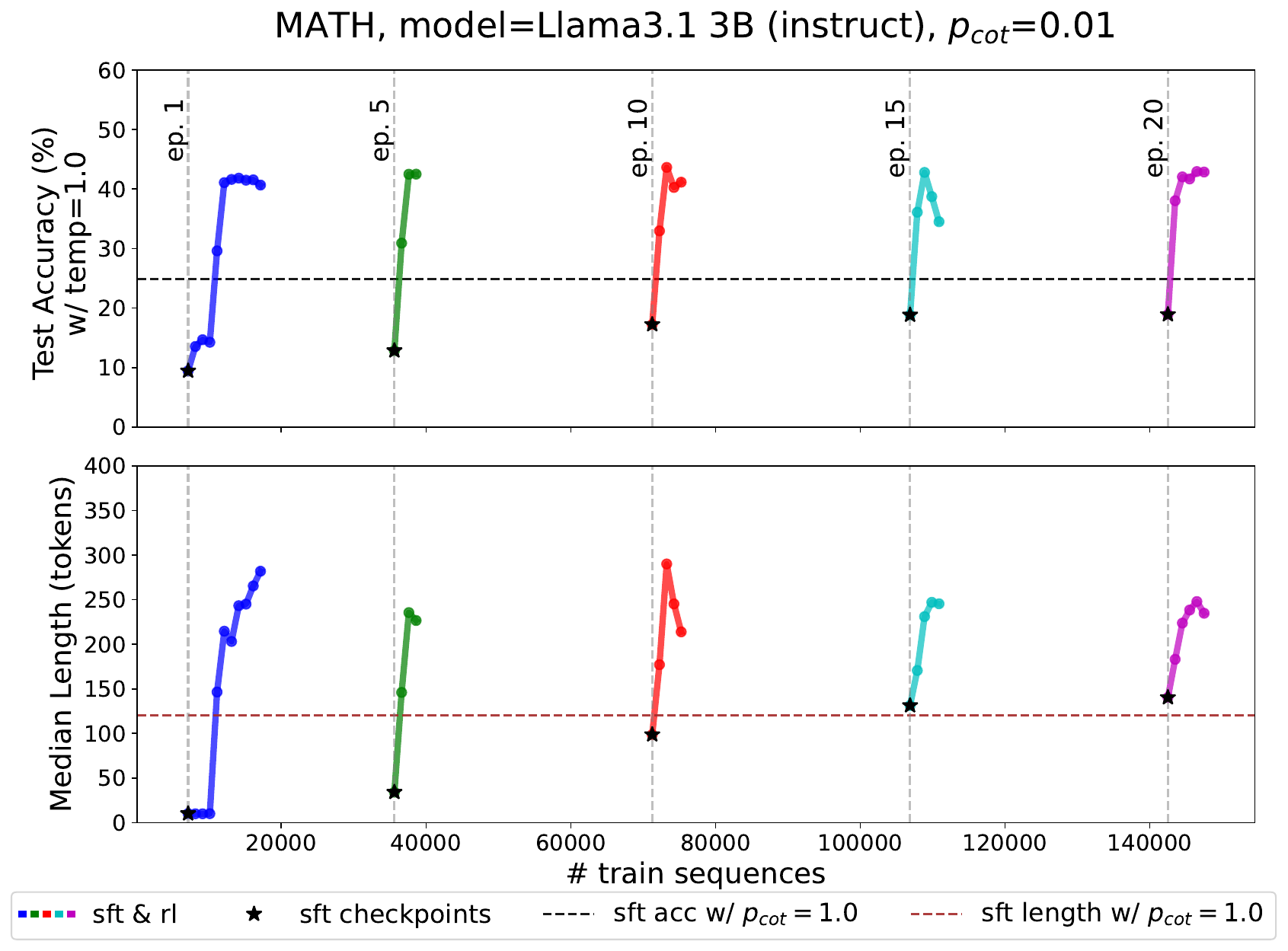}
    \includegraphics[scale=0.235]{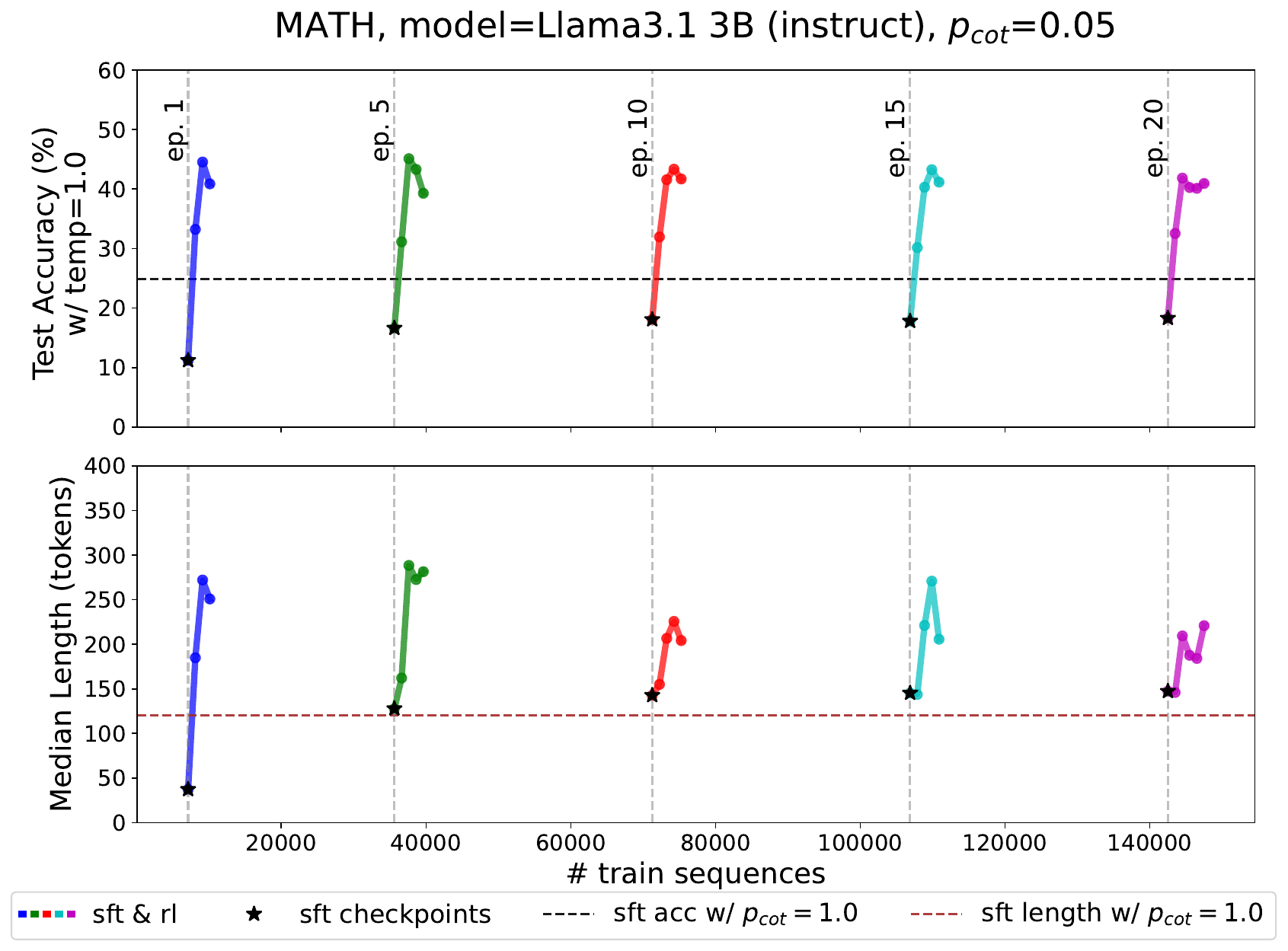}
    \includegraphics[scale=0.235]{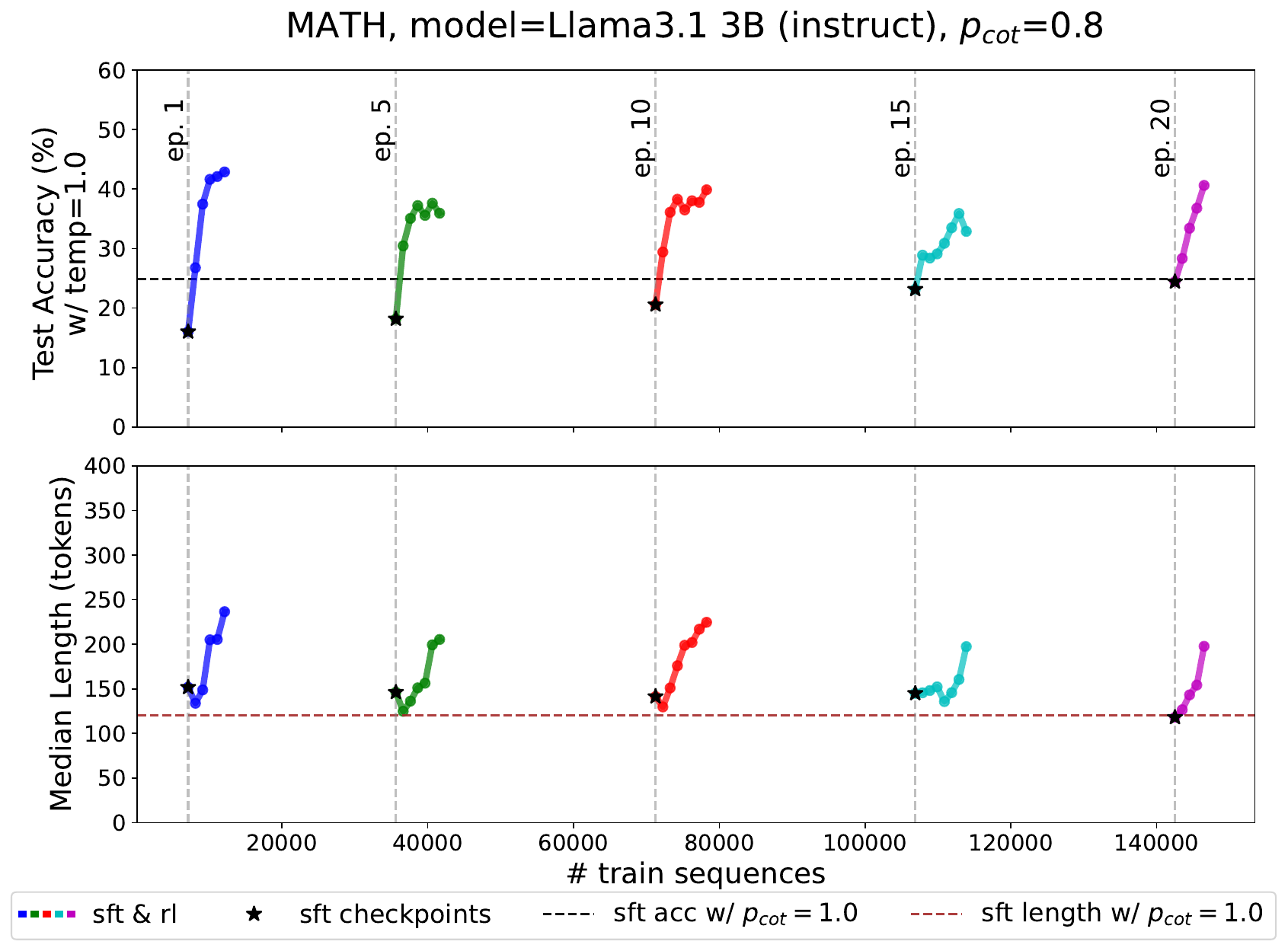}
    \includegraphics[scale=0.3]{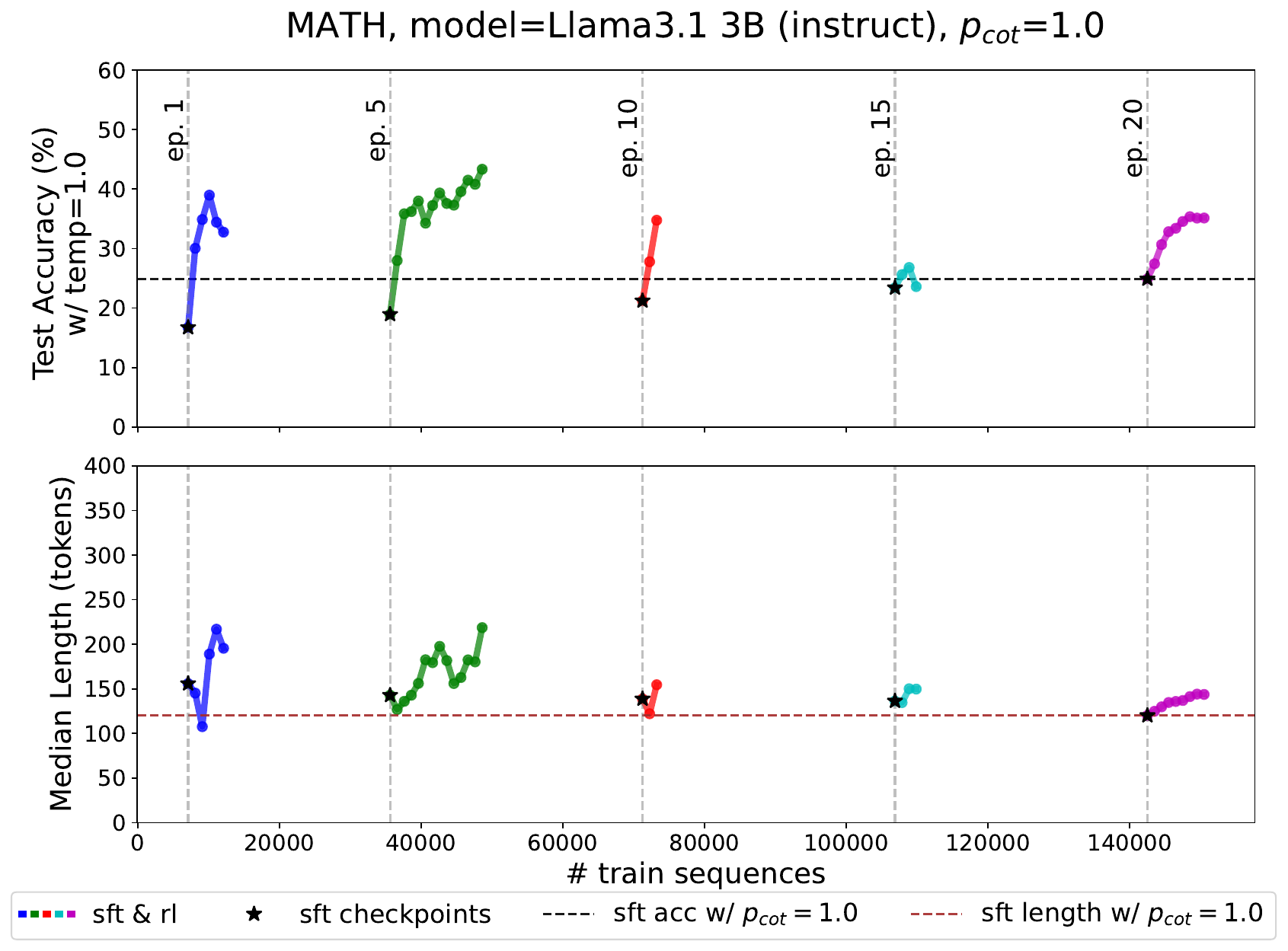}
    \caption{\textbf{SFT and GRPO of Llama 3.2 3B (instruct) on the MATH dataset} for various values of $p_{\mathrm{cot}}$. The situation is similar to the GSM8k experiments, yet RL gains might be due to out of distribution gains -- see qualitative analysis. We early stopped the curves before the RL runs collapse, which we attribute to large learning rate.}
    \label{fig:math-3b}
\end{figure}

\begin{figure}
    \centering
    \includegraphics[scale=0.235]{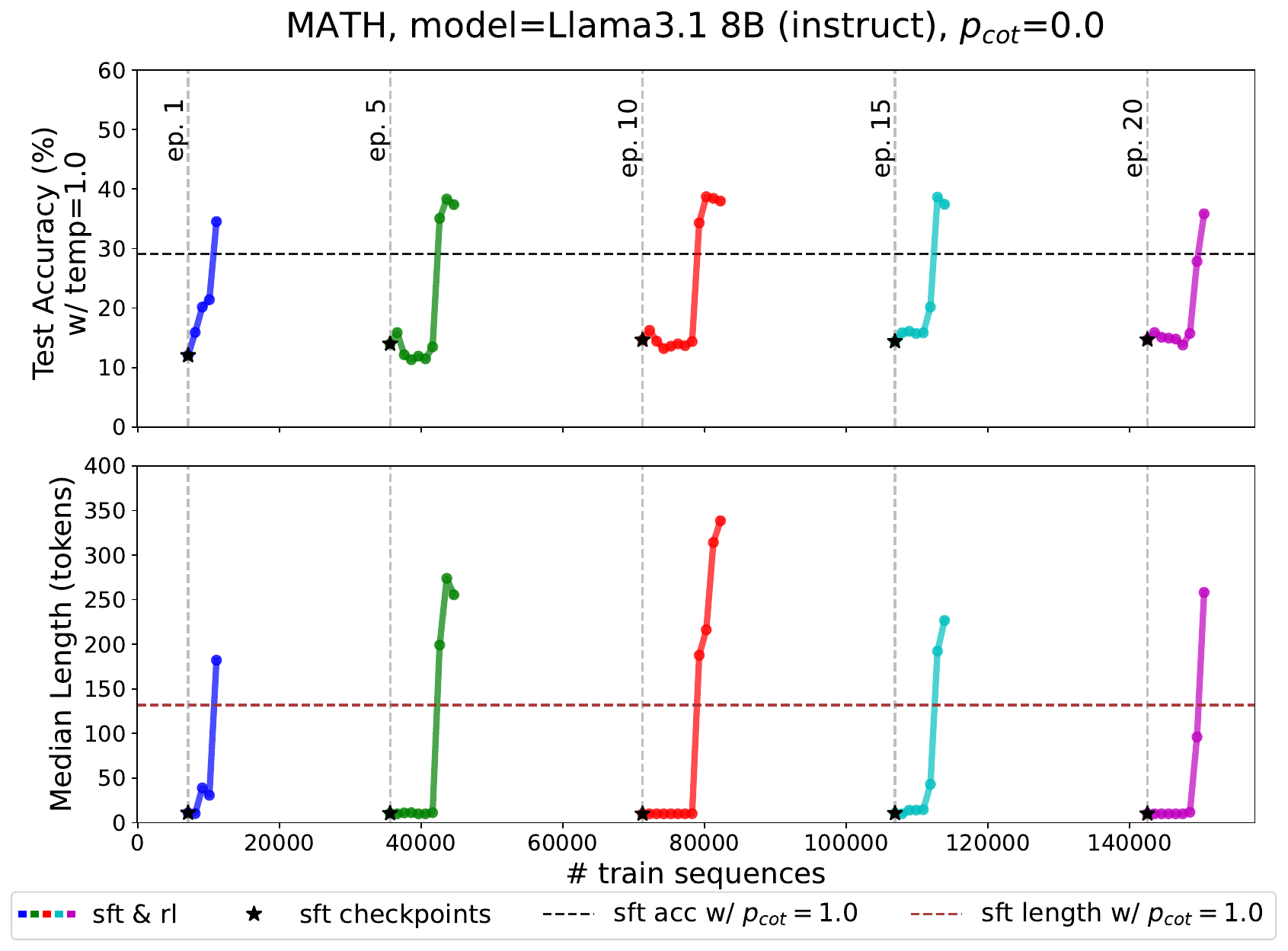}
    \includegraphics[scale=0.235]{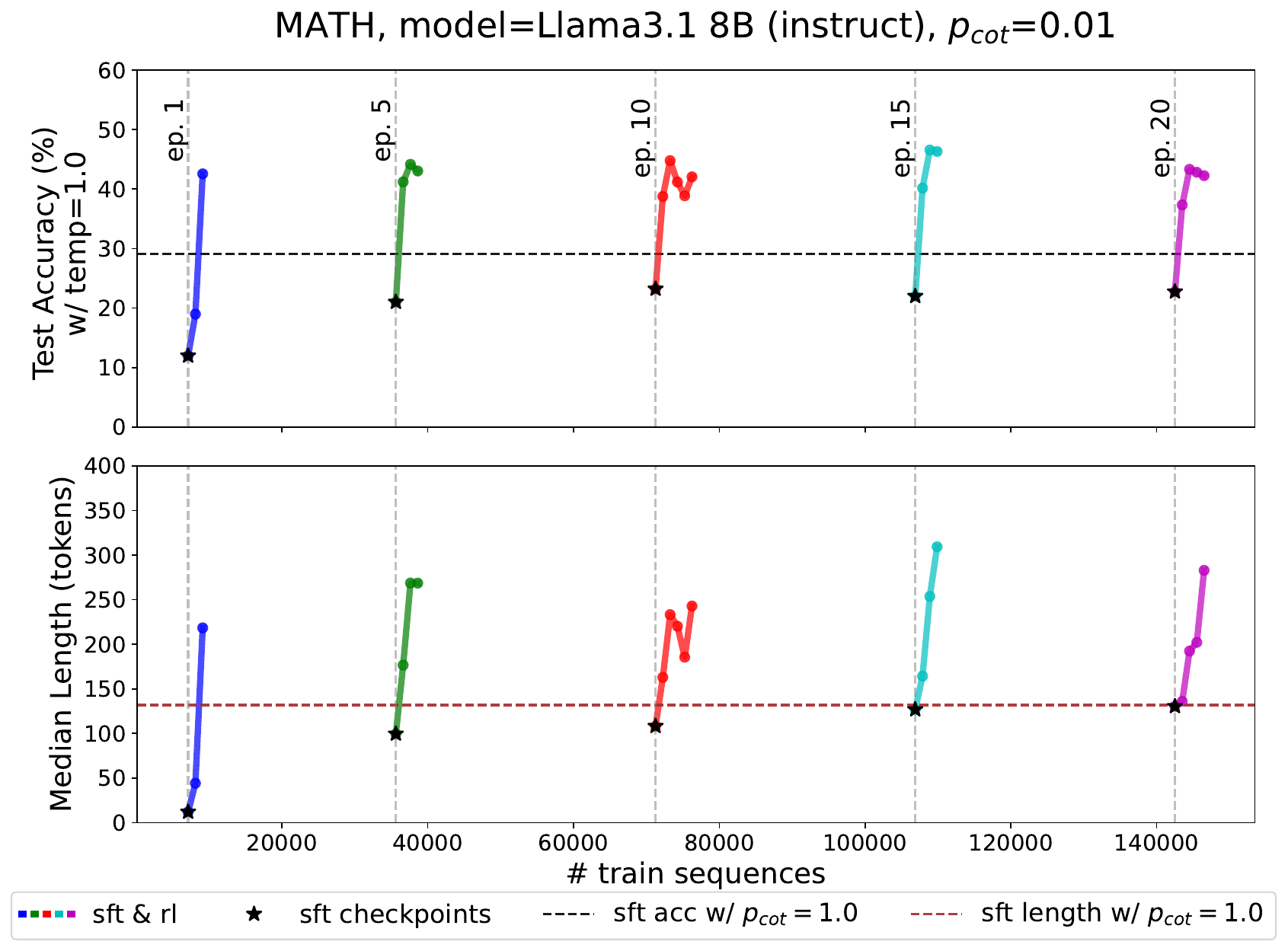}
    \includegraphics[scale=0.235]{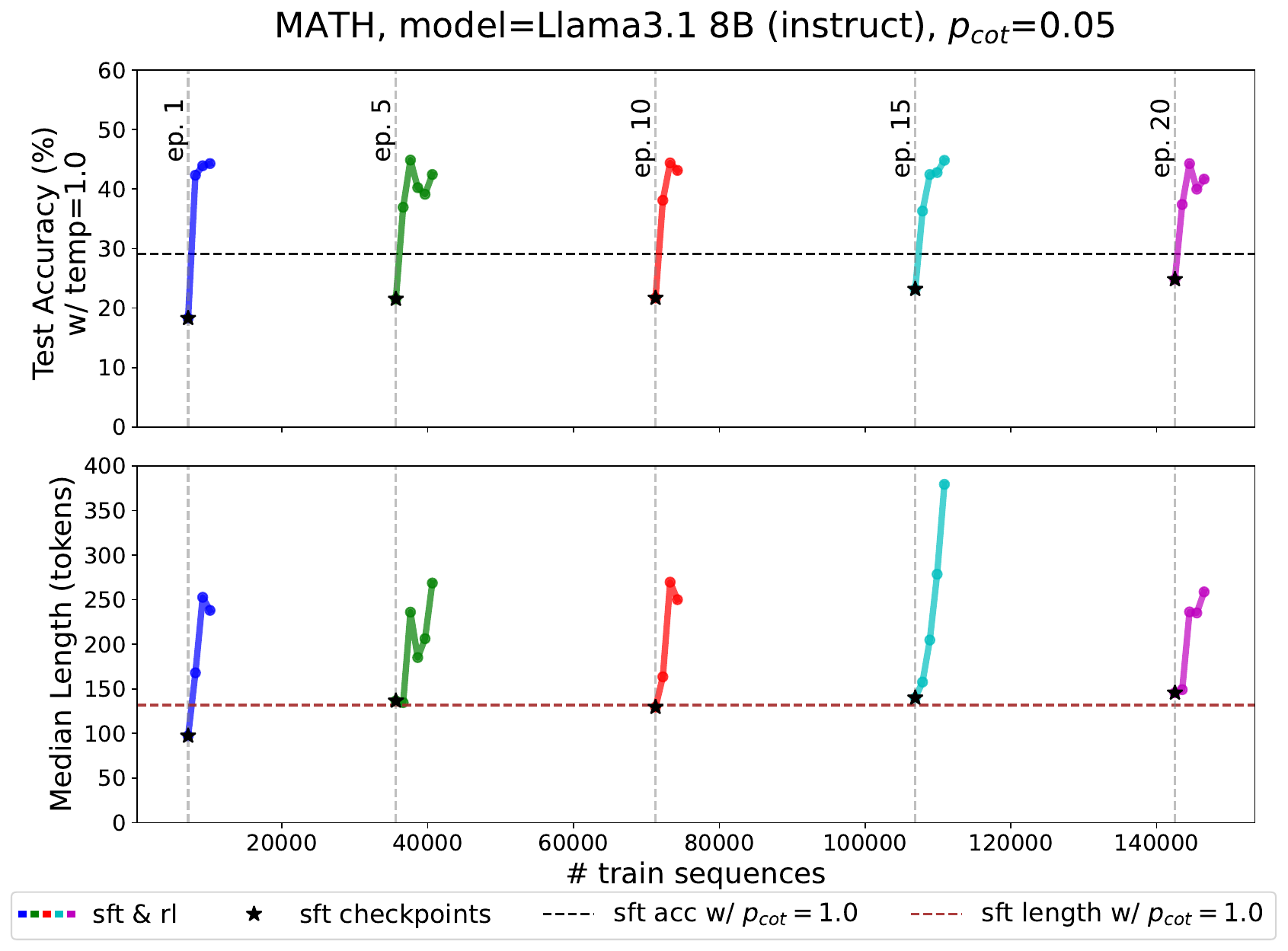}
    \includegraphics[scale=0.235]{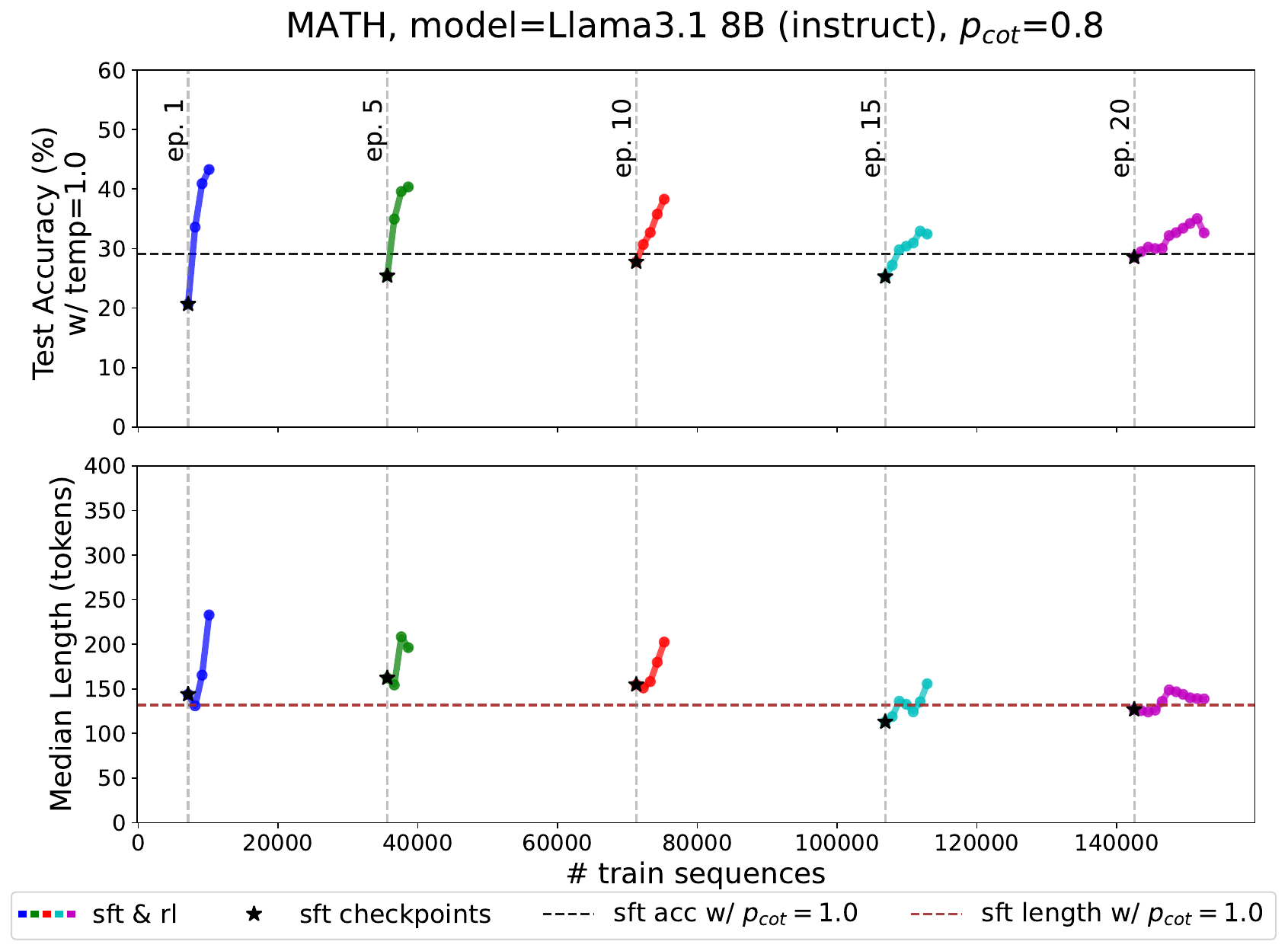}
    \includegraphics[scale=0.3]{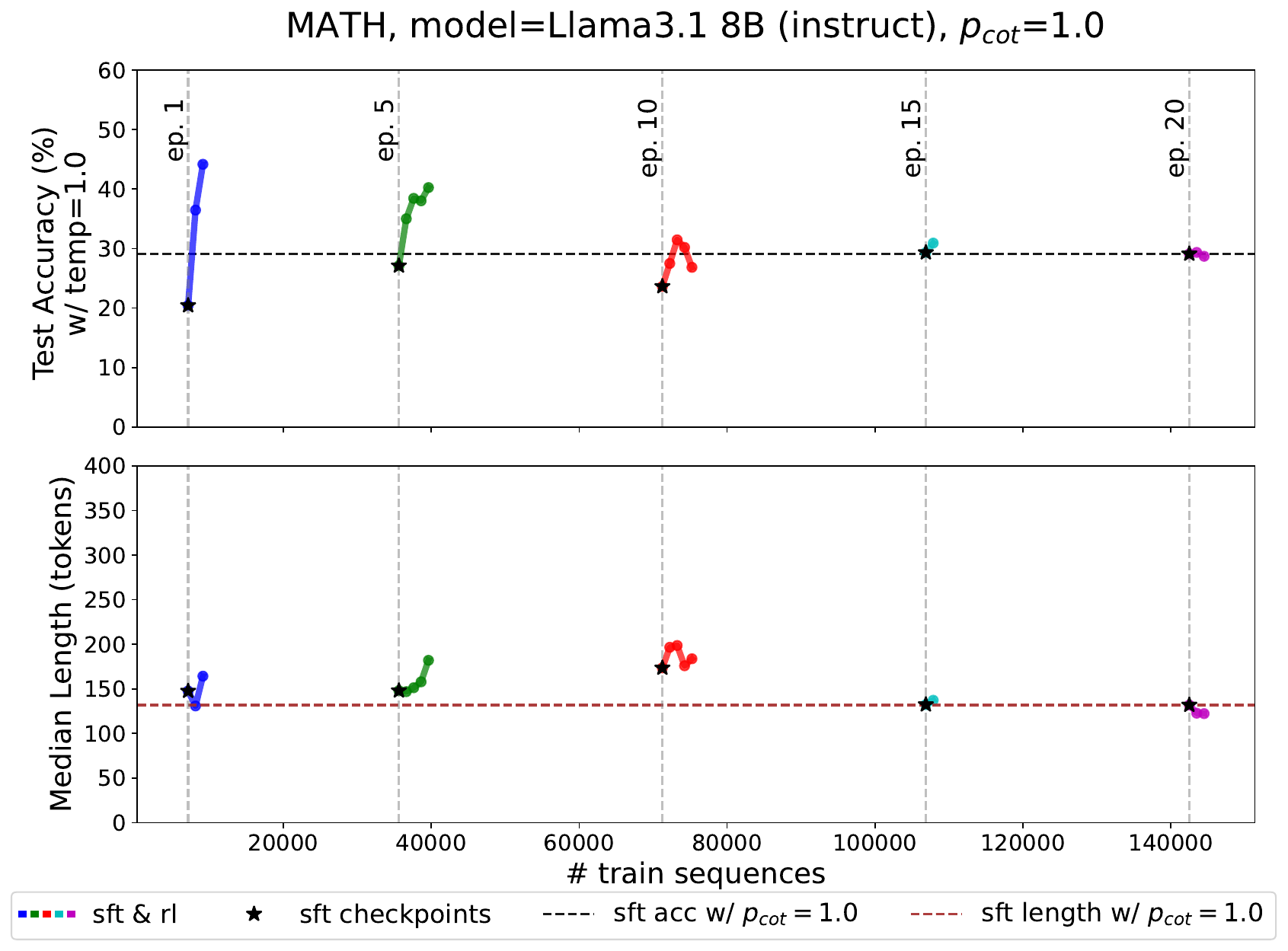}
    \caption{\textbf{SFT and GRPO of Llama 3.1 8B (instruct) on the MATH dataset} for various values of $p_{\mathrm{cot}}$. The situation is similar to the GSM8k experiments, yet RL gains might be due to out of distribution gains -- see qualitative analysis. We early stopped the curves before the RL runs collapse, which we attribute to large learning rate.}
    \label{fig:math-8b}
\end{figure}

\begin{figure}
    \centering
    \begin{tcolorbox}[title={\small \textbf{GSM8k Completions ($p_{\mathrm{cot}}$ = 0.05, epoch=1, RL step=1)}}, width=\linewidth, colbacktitle=red]
    \textbf{\textcolor{blue}{Question: In four years, Peter will be twice as old as Harriet. If Peter's age is currently half of his mother's age, who is 60, how old is Harriet now?}} \\
    \textbf{\textcolor{blue}{Ground truth cot: If Peter's age is currently half of his mother's age, who is 60, Peter is 1/2*60 = $<<$30=30$>>$30 years old. In four years, Peter will be 30+4 = $<<$30+4=34$>>$34 years old. Since Peter's age in four years will be twice Harriet's age, Harriet will be 34/2 = $<<$34/2=17$>>$17 years old in four years. Currently, Harriet is 17-4 = $<<$17-4=13$>>$13 years old.}}
    \textbf{\textcolor{blue}{Answer: 13}} \\
    \hrule
    \vspace{2mm}
    \fontsize{8pt}{8pt}
    \textit{Generation 1}; \code{reasoning process here</think> <answer>32</answer>} \textit{Reward: 10} \\
    \textit{Generation 2}; \code{reasoning process here</think> <answer>35</answer>} \textit{Reward: 10} \\
    \textit{Generation 3}; \code{...]</think> <answer>24</answer>} \textit{Reward: 10} \\
    \textit{Generation 4}; \code{reasoning process here</think> <answer>34</answer>} \textit{Reward: 10} \\
    \textit{Generation 5}; \code{?</think> <answer>40</answer>} \textit{Reward: 10} \\
    \textit{Generation 6}; \code{processing online </think> <answer>35</answer>} \textit{Reward: 10} \\
    \textit{Generation 7}; \code{reasoning process here</think> <answer>18</answer>} \textit{Reward: 10} \\
    \textit{Generation 8}; \code{2020-11-23T11:49:57.397405+00:00</think> <answer>40</answer>} \textit{Reward: 10} \\
    \textit{Generation 9}; \code{thinking... </think> <answer>21</answer>} \textit{Reward: 10} \\
    \textit{Generation 10}; \code{Harriet's age right now =</think> <answer>2</answer>} \textit{Reward: 10} \\
    \textit{Generation 11}; \code{reasoning process here</think> <answer>32</answer>} \textit{Reward: 10} \\
    \textit{Generation 12}; \code{2016-09-19T00:00:00Z</think> <answer>24</answer>} \textit{Reward: 10} \\
    \textit{Generation 13}; \code{3</think> <answer>36</answer>} \textit{Reward: 10} \\
    \textit{Generation 14}; \code{reasoning process here</think> <answer>40</answer>} \textit{Reward: 10} \\
    \textit{Generation 15}; \code{Harriet = 4</think> <answer>84</answer>} \textit{Reward: 10} \\
    \textit{Generation 16}; \code{reasoning process here</think> <answer>48</answer>} \textit{Reward: 10} \\
    \textit{Generation 17}; \code{</think> <answer>25</answer>} \textit{Reward: 10} \\
    \textit{Generation 18}; \code{For Peter to be twice as old as Harriet, Harriet's age has to be half the current age of Peter. Start with the current age of Peter as \(x\) years. Then half of \(x\) is half of mother's age=60, or \(\frac{x}{2}=60\Rightarrow x=60\times2=120\)[0] Harriet's age=\(x-2=120-2=118\)[1] Harriet's age is 118 years, and Peter is 120 years. The answer is 118. [2]November 6, 2021} \textit{Reward: 0} \\
    \textit{Generation 19}; \code{(please wait a few seconds)</think> <answer>18</answer>} \textit{Reward: 10} \\
    \textit{Generation 20}; \code{my reasonning here</think> <answer>18</answer>} \textit{Reward: 10} \\
    \textit{Generation 21}; \code{</think> <answer>20</answer>} \textit{Reward: 10} \\
    \textit{Generation 22}; \code{could not solve the problem</think> <answer>13</answer>} \textcolor{red}{\textit{Reward: 100}} \\
    \textit{Generation 23}; \code{2016-03-11 20:54:37</think> <answer>36</answer>} \textit{Reward: 10} \\
    \textit{Generation 24}; \code{reasoning process here</think> <answer>40</answer>} \textit{Reward: 10} \\
    \textit{Generation 25}; \code{IN FOUR YEARS, PETER WILL BE TWICE AS OLD</think> <answer>8</answer>} \textit{Reward: 10} \\
    \textit{Generation 26}; \code{He is fourty years now</think> <answer>35</answer>} \textit{Reward: 10} \\
    \textit{Generation 27}; \code{10</think> <answer>25</answer>} \textit{Reward: 10} \\
    \textit{Generation 28}; \code{Harriet's current age?</think> <answer>24</answer>} \textit{Reward: 10} \\
    \textit{Generation 29}; \code{Why is Harriet's age h? Who is w? Who is d? <think> <answer>37</answer>} \textit{Reward: 5} \\
    \textit{Generation 30}; \code{</think> <answer>36</answer>} \textit{Reward: 10} \\
    \textit{Generation 31}; \code{include forgive-me.fr</think> <answer>32</answer>} \textit{Reward: 10} \\
    \textit{Generation 32}; \code{Ernesto thought about reasoning process here </think> <answer>42</answer>} \textit{Reward: 10} \\
    \end{tcolorbox}
    \caption{Llama 3.2 3B completion on a GSM8k prompt for $p_{\mathrm{cot}}$ = 0.05, SFT epochs=1, RL step=1. Most completions are short (following the SFT format of \code{</think><answer>ANS</answer>}) and inaccurate.}
    \label{fig:completion_gsm8k_pcot005_epoch1_rl1}
\end{figure}


\begin{figure}
    \centering
    \begin{tcolorbox}[title={\small \textbf{GSM8k Completions ($p_{\mathrm{cot}}$ = 0.05, epoch=1, RL step=58)}}, width=\linewidth, colbacktitle=red]
    \textbf{\textcolor{blue}{Question: Mrs. Watson is grading 120 final exams from her American History class.  On Monday, she grades 60\% of the exams.  On Tuesday, she grades 75\% of the remaining exams.  On Wednesday, how many more exams does she need to grade?}} \\
    \textbf{\textcolor{blue}{Ground truth cot: On Monday, Mrs. Watson grades 120 x 60\% = $<<$120*60*.01=72$>>$72 final exams. At the end of Monday, she has 120 - 72 = $<<$120-72=48$>>$48 exams left to grade. On Tuesday, she grades 48 x 75\% = $<<$48*75*.01=36$>>$36 final exams. On Wednesday, she has 48 - 36 = $<<$48-36=12$>>$12 exams left to grade.}} \\
    \textbf{\textcolor{blue}{Answer: 12}} \\
    \hrule
    \vspace{2mm}
    \fontsize{8pt}{8pt}
    \textit{Generation 1}; \code{...</think> <answer>30</answer>} \textit{Reward: 10} \\
    \textit{Generation 2}; \code{She graded 60\% of 120 = 72 exams on Monday.</think> <answer>24</answer>} \textit{Reward: 10} \\
    \textit{Generation 3}; \code{...</think> <answer>15</answer>} \textit{Reward: 10} \\
    \textit{Generation 4}; \code{Let's get a variable for Mrs. Watson's total number of exams.</think> <answer>30</answer>} \textit{Reward: 10} \\
    \textit{Generation 5}; \code{120 final exams are left to grade, or 30. She grades 60\% of them on Monday, which is 18.</think> <answer>12</answer>} \textcolor{red}{\textit{Reward: 100}} \\
    \textit{Generation 6}; \code{Mrs. Watson needs to grade 30\% of the exams</think> <answer>40</answer>} \textit{Reward: 10} \\
    \textit{Generation 7}; \code{...</think> <answer>30</answer>} \textit{Reward: 10} \\
    \textit{Generation 8}; \code{She's finished 60\% of the total. How many are left? We have 100*60=60 exams left. Next, on Tuesday, she grades another 75\% of those. She now has finished 60+75=135 out of 120 exams. So how many exams are left? There are 120-135=15 exams left.</think> <answer>45</answer>} \textit{Reward: 10} \\
    \code{[...]} \\
    \textit{Generation 22}; \code{There is a 60\% completion rate of all exams. On Monday, 60\% of all grades were given. This means that 60\% of 120, or 72, were graded so far. Thus, 48 have yet to be graded. 75\% of the remaining exams that haven't been graded will be graded on Tuesday. There are then 48\%\&times;75\% = 36 more exams to be graded.</think> <answer>36</answer>} \textit{Reward: 10} \\
    \textit{Generation 23}; \code{...</think> <answer>10</answer>} \textit{Reward: 10} \\
    \textit{Generation 24}; \code{120</think> <answer>30</answer>} \textit{Reward: 10} \\
    \textit{Generation 25}; \code{That's a really easy word problem. It's a good thing I worked through so many examples last night.</think> <answer>30</answer>} \textit{Reward: 10} \\
    \textit{Generation 26}; \code{She has graded them at a rate of 60\% and then 75\% of them remaining, so she will need to grade...</think> <answer>30</answer>} \textit{Reward: 10} \\
    \textit{Generation 27}; \code{...once she grades 60\% of the exams, she grades the remaining...</think> <answer>30</answer>} \textit{Reward: 10} \\
    \textit{Generation 28}; \code{Well, we know that we have 120 exams that need to be graded. Mrs. Watson graded 60\% on Monday so that was 60 exams. That means that she graded 60 120 = 60 exams on Monday.</think> <answer>30</answer>} \textit{Reward: 10} \\
    \textit{Generation 29}; \code{Let's see....</think> <answer>15</answer>} \textit{Reward: 10} \\
    \textit{Generation 30}; \code{...</think> <answer>15</answer>} \textit{Reward: 10} \\
    \textit{Generation 31}; \code{...</think> <answer>20</answer>} \textit{Reward: 10} \\
    \textit{Generation 32}; \code{He asked how many more exams she needs to grade...wish she had thought of this one...</think> <answer>6</answer>} \textit{Reward: 10} \\
    \end{tcolorbox}
    \caption{Llama 3.2 3B completion on a GSM8k prompt for $p_{\mathrm{cot}}$ = 0.05, SFT epochs=1, RL step=58. The model starts generating longer responses that mimic the long SFT demonstrations, while continuing to respect the SFT answering format.}
    \label{fig:completion_gsm8k_pcot005_epoch1_rl58}
\end{figure}

\begin{figure}
    \centering
    \begin{tcolorbox}[title={\small \textbf{GSM8k Completions ($p_{\mathrm{cot}}$ = 0.05, epoch=1, RL step=118)}}, width=\linewidth, colbacktitle=red]
    \textbf{\textcolor{blue}{Question: Ethan is reading a sci-fi book that has 360 pages. He read 40 pages on Saturday morning and another 10 pages at night. The next day he read twice the total pages as on Saturday. How many pages does he have left to read?}} \\
    \textbf{\textcolor{blue}{Ground truth cot: Ethan read a total of 40 pages + 10 pages = $<<$40+10=50$>>$50 pages on Saturday. On the next day, he read a total of 50 pages x 2 = $<<$50*2=100$>>$100 pages. The total number of pages that Ethan had read is 50 pages + 100 pages = $<<$50+100=150$>>$150 pages. Therefore, Ethan has 360 pages - 150 pages = $<<$360-150=210$>>$210 pages left to read.}} \\
    \textbf{\textcolor{blue}{Answer: 210}} \\
    \hrule
    \vspace{2mm}
    \fontsize{8pt}{8pt}
    \textit{Generation 1}; \code{Ethan was reading a sci-fi book that had 360 pages.Ethan read 40 pages on Saturday morning and then another 10 pages on Saturday night.So far he had read 50 pages on Saturday.So the next day he read 2*50 = 100 more pages So far he had read 50 + 100 = 150 pages So the next day he has read a total of 150 pages.Ethan has 360 - 150 = 210 pages left to read.</think> <answer>210</answer>} \textcolor{red}{\textit{Reward: 100}} \\
    \textit{Generation 2}; \code{Our book had 360 pages in total.Ethan read 40 pages on Saturday so there are 320 pages left. Ethan read 10 more pages at night so there are now 330 pages left. So far he has read 50 pages in 2 days. So if he read one day on Saturday then he must have read 25 on Sunday. So now there are 320-25 = 295 pages left to read.</think> <answer>295</answer>} \textit{Reward: 10} \\
    \textit{Generation 3}; \code{Ethan is reading a sci fi book that has 360 pages. He read 40 pages on Saturday morning and 10 pages at night. So the total number of pages he read on Saturday was 40 + 10 = 50 pages. The next day he read twice the number of pages he read on Saturday. So the number of pages he read on Sunday was 2 * 50 = 100. So the total number of pages he read on Saturday and Sunday was 50 + 100 = 150 pages. So the total number of pages he read was 50 + 100 = 150. So the remaining number of pages were 360 - 150 = 210 pages. </think> <answer>210</answer>} \textcolor{red}{\textit{Reward: 100}} \\
    \code{[...]} \\
    \textit{Generation 30}; \code{Ethans book has 360 pages.Ethan read 40 pages on Saturday morning.Ethan read an additional 10 pages at night.Ethan read on Saturday 50 pages = 40 + 10 = 50.Ethans book has on the next day twice as many pages as on Saturday.So on the next day Ethan would have read 50 * 2 = 100 pages.Ethans book had a Total of 360 - 50 - 100 = 210 pages he had left to read.</think> <answer>210</answer>} \textcolor{red}{\textit{Reward: 100}} \\
    \textit{Generation 31}; \code{Ethan has a 360 page science fiction book. He read 40 pages on Saturday morning and then 10 pages on Saturday night. So far he has read 50 pages. The next day he read 2 times 50 which equals 100 pages. So he has read a total of 150 pages. So he has 360 - 150 = 210 pages left to read.</think> <answer>210</answer>} \textcolor{red}{\textit{Reward: 100}} \\
    \textit{Generation 32}; \code{Ethan is reading a sci-fi book that has 360 pages. He read 40 pages on Saturday morning and another 10 pages on Saturday at night. So far today he has read 50 pages. The next day he read 2 times 50 which equals 100 pages He has read so far today and the next day so he has read a total of 150 pages. So far he has read 360 minus 150 which equals 210 pages. So he has 210 pages left to read.</think> <answer>210</answer>} \textcolor{red}{\textit{Reward: 100}} \\
    \end{tcolorbox}
    \caption{Llama 3.2 3B completion on a GSM8k prompt for $p_{\mathrm{cot}}$ = 0.05, SFT epochs=1, RL step=117. After only 100 RL steps, the overwhelming majority of model generations consist of long, elaborate responses that give the correct answer.}
    \label{fig:completion_gsm8k_pcot005_epoch1_rl117}
\end{figure}

\begin{figure}
    \centering
    \begin{tcolorbox}[title={\small \textbf{GSM8k Completions ($p_{\mathrm{cot}}$ = 0.05, epoch=1, RL step=4676)}}, width=\linewidth, colbacktitle=red]
    \textbf{\textcolor{blue}{Question: Gabe has three songs on his playlist. “The Best Day” is 3 minutes, “Raise the Roof” is 2 minutes, and “Rap Battle” is 3 minutes. How many times can Gabe listen to his entire playlist on the 40-minute ride to his wrestling match?}} \\
    \textbf{\textcolor{blue}{Ground truth cot: If you add all the songs together, Gabe’s playlist is 3 + 2 + 3 = $<<$3+2+3=8$>>$8 minutes long. During the 40-minute car ride, Gabe can listen to his 8-minute playlist 40 / 8 = $<<$40/8=5$>>$5 times.}} \\
    \textbf{\textcolor{blue}{Answer: 5}} \\
    \hrule
    \vspace{2mm}
    \fontsize{8pt}{8pt}
    \textit{Generation 1}; \code{We have the following information: \\ Gabe has three songs on his playlist. \\ "The Best Day" is 3 minutes \\ Raise the Roof" is 2 minutes \\ "Rap Battle" is 3 minutes \\ So the total length of the three songs in minutes = 3 + 2 + 3 = 8 minutes \\ So the number of minutes in the ride to his wrestling match is 40 minutes \\ So the number of times that the entire playlist can be played during the 40-minute ride = 40 / 8 = 5 plays of the playlist \\ </think> <answer>5</answer>} \textcolor{red}{\textit{Reward: 100}} \\
    \textit{Generation 2}; \code{We have the following information: \\ Gabe has three songs on his playlist. \\ "The Best Day" is 3 minutes long \\ "Raise the Roof" is 2 minutes long \\ "Rap Battle" is 3 minutes long \\ So the total length of the three songs on Gabe's playlist = 3 minutes for "The Best Day" \\ + 2 minutes for "Raise the Roof" \\ + 3 minutes for "Rap Battle" \\ So the total length of the playlist = 3 + 2 + 3 = 8 minutes \\ So the length of the ride to his wrestling match is 40 minutes \\ So the number of times that the entire playlist can fit into the 40-minute ride = 40 / 8 = 5 total plays of the playlist \\ </think> <answer>5</answer>} \textcolor{red}{\textit{Reward: 100}} \\
    \code{[...]} \\
    \textit{Generation 32}; \code{We have the following information: \\ Gabe has three songs on his playlist. \\ “The Best Day” is 3 minutes long \\ “Raise the Roof” is 2 minutes long \\ “Rap Battle” is 3 minutes long \\ So the total length of the three songs on Gabe’s playlist = 3 minutes for “The Best Day” \\ + 2 minutes for “Raise the Roof” \\ + 3 minutes for “Rap Battle” \\ So the total length of the playlist = 3 + 2 + 3 = 8 minutes \\ Gabe’s ride to his wrestling match lasts 40 minutes. \\ So the number of times that the entire playlist can fit into the 40-minute ride = 40 / 8 = 5 complete plays of the playlist \\ So the answer to how many times can Gabe listen to his entire playlist on the 40-minute ride to his wrestling match = 5 plays \\ </think> <answer>5</answer>} \textcolor{red}{\textit{Reward: 100}} \\
    \end{tcolorbox}
    \caption{Llama 3.2 3B completion on a GSM8k prompt for $p_{\mathrm{cot}}$ = 0.05, SFT epochs=1, RL step=4676. After 1000s of RL steps, model responses cease to resemble SFT data and are much more verbose.}
    \label{fig:completion_gsm8k_pcot005_epoch1_rl4676}
\end{figure}

\begin{figure}
    \centering
    \begin{tcolorbox}[title={\small \textbf{MATH Completions ($p_{\mathrm{cot}}$ = 0.01, epoch=20, RL step=1)}}, width=\linewidth, colbacktitle=red]
        \textbf{\textcolor{blue}{Question: In triangle $ABC$, $AB=13$, $BC=15$ and $CA=17$. Point $D$ is on $\overline{AB}$, $E$ is on $\overline{BC}$, and $F$ is on $\overline{CA}$. Let $AD=p\cdot AB$, $BE=q\cdot BC$, and $CF=r\cdot CA$, where $p$, $q$, and $r$ are positive and satisfy $p+q+r=2/3$ and $p^2+q^2+r^2=2/5$. The ratio of the area of triangle $DEF$ to the area of triangle $ABC$ can be written in the form $m/n$, where $m$ and $n$ are relatively prime positive integers. Find $m+n$.}} \\
        \textbf{\textcolor{blue}{Ground truth cot: [asy] /* -- arbitrary values, I couldn't find nice values for pqr please replace if possible -- */ real p = 0.5, q = 0.1, r = 0.05; /* -- arbitrary values, I couldn't find nice values for pqr please replace if possible -- */ pointpen = black; pathpen = linewidth(0.7) + black; pair A=(0,0),B=(13,0),C=IP(CR(A,17),CR(B,15)), D=A+p*(B-A), E=B+q*(C-B), F=C+r*(A-C); D(D(MP("A",A))--D(MP("B",B))--D(MP("C",C,N))--cycle); D(D(MP("D",D))--D(MP("E",E,NE))--D(MP("F",F,NW))--cycle); [/asy] We let $[\ldots]$ denote area; then the desired value is $\frac mn = \frac{[DEF]}{[ABC]} = \frac{[ABC] - [ADF] - [BDE] - [CEF]}{[ABC]}$ Using the formula for the area of a triangle $\frac{1}{2}ab\sin C$, we find that $\frac{[ADF]}{[ABC]} = \frac{\frac 12 \cdot p \cdot AB \cdot (1-r) \cdot AC \cdot \sin \angle CAB}{\frac 12 \cdot AB \cdot AC \cdot \sin \angle CAB} = p(1-r)$ and similarly that $\frac{[BDE]}{[ABC]} = q(1-p)$ and $\frac{[CEF]}{[ABC]} = r(1-q)$. Thus, we wish to find\begin{align*}\frac{[DEF]}{[ABC]} &= 1 - \frac{[ADF]}{[ABC]} - \frac{[BDE]}{[ABC]} - \frac{[CEF]}{[ABC]} \\ &= 1 - p(1-r) - q(1-p) - r(1-q)\\ &= (pq + qr + rp) - (p + q + r) + 1 \end{align*}We know that $p + q + r = \frac 23$, and also that $(p+q+r)^2 = p^2 + q^2 + r^2 + 2(pq + qr + rp) \Longleftrightarrow pq + qr + rp = \frac{\left(\frac 23\right)^2 - \frac 25}{2} = \frac{1}{45}$. Substituting, the answer is $\frac 1{45} - \frac 23 + 1 = \frac{16}{45}$, and $m+n = \boxed{61}$.}}
        \textbf{\textcolor{blue}{Answer: 61}} \\    
        \hrule
        \vspace{2mm}
        \fontsize{8pt}{8pt}
        \textit{Generation 1}; \code{</think> <answer>103</answer>} \textit{Reward: 10} \\
        \textit{Generation 2}; \code{</think> <answer>103</answer>} \textit{Reward: 10} \\
        \textit{Generation 3}; \code{</think> <answer>103</answer>} \textit{Reward: 10} \\
        \textit{Generation 4}; \code{</think> <answer>103</answer>} \textit{Reward: 10} \\
        \textit{Generation 5}; \code{DEEZNIT</think> <answer>123</answer>} \textit{Reward: 10} \\
        \textit{Generation 6}; \code{</think> <answer>103</answer>} \textit{Reward: 10} \\
        \textit{Generation 7}; \code{></think> <answer>103</answer>} \textit{Reward: 10} \\
        \textit{Generation 8}; \code{forcedeterminant</think> <answer>121</answer>} \textit{Reward: 10} \\
        \textit{Generation 9}; \code{think</think> <answer>103</answer>} \textit{Reward: 10} \\
        \textit{Generation 10}; \code{</think> <answer>59</answer>} \textit{Reward: 10} \\
        \textit{Generation 11}; \code{<think> <think></think></think> <answer>137</answer>} \textit{Reward: 10} \\
        \textit{Generation 12}; \code{<answer>103</answer>} \textit{Reward: 5} \\
        \textit{Generation 13}; \code{</think> <answer>103</answer>} \textit{Reward: 10} \\
        \textit{Generation 14}; \code{</think> <answer>937</answer>} \textit{Reward: 10} \\
        \textit{Generation 15}; \code{></think> <answer>101</answer>} \textit{Reward: 10} \\
        \textit{Generation 16}; \code{</think> <answer>103</answer>} \textit{Reward: 10} \\
        \textit{Generation 17}; \code{></think> <answer>13</answer>} \textit{Reward: 10} \\
        \textit{Generation 18}; \code{commands- i</think> <answer>9</answer>} \textit{Reward: 10} \\
        \textit{Generation 19}; \code{</think> <answer>243</answer>} \textit{Reward: 10} \\
        \textit{Generation 20}; \code{</think> <answer>103</answer>} \textit{Reward: 10} \\
    \end{tcolorbox}
    \caption{Llama 3.1 8B (instruct) completion on a MATH prompt for $p_{\mathrm{cot}}$ = 0.01, SFT epochs=20, RL step=1. The model generates short inaccurate responses.}
    \label{fig:completion_math_pcot001_epoch20_rl1}
\end{figure}

\begin{figure}
    \centering
    \begin{tcolorbox}[title={\small \textbf{MATH Completions ($p_{\mathrm{cot}}$ = 0.01, epoch=20, RL step=181)}}, width=\linewidth, colbacktitle=red]
        \textbf{\textcolor{blue}{Question: Let $a,$ $b,$ $c$ be distinct complex numbers such that \\
        \[\frac{a}{1 - b} = \frac{b}{1 - c} = \frac{c}{1 - a} = k.\]Find the sum of all possible values of $k.$}} \\
        \textbf{\textcolor{blue}{Ground truth cot: From the given equation, \begin{align*} a &= k(1 - b), \\ b &= k(1 - c), \\ c &= k(1 - a). \end{align*}Then \begin{align*} a &= k(1 - b) \\ &= k(1 - k(1 - c)) \\ &= k(1 - k(1 - k(1 - a))). \end{align*}Expanding, we get $ak^3 + a - k^3 + k^2 - k = 0,$ which factors as \[(k^2 - k + 1)(ak + a - k) = 0.\]If $ak + a - k = 0,$ then $a = \frac{k}{k + 1},$ in which case $b = c = \frac{k}{k + 1}.$ This is not allowed, as $a,$ $b,$ and $c$ are distinct, so $k^2 - k + 1 = 0.$ The sum of the roots is $\boxed{1}.$  Note: The roots of $k^2 - k + 1 = 0$ are \[\frac{1 \pm i \sqrt{3}}{2}.\]For either value of $k,$ we can take $a = 0,$ $b = 1,$ and $c = k.$}}
        \textbf{\textcolor{blue}{Answer: 1}} \\
        \hrule
        \vspace{2mm}
        \fontsize{8pt}{8pt}
        \textit{Generation 1}; \code{></think> <answer>\text{From the given equation, we have: } $a = k(1 - b), b = k(1 - c), c = k(1 - a)$. \text{ Substituting } $b = k(1 - c)$ \text{ into } $a = k(1 - b)$ \text{ gives: } $a = k(1 - k(1 - c)) = k - k^2 + k^2c$. \text{ Substituting } $a = k(1 - b)$ \text{ into } $c = k(1 - a)$ \text{ gives: } $c = k(1 - k(1 - b)) = k - k^2 + k^2b$. \text{ Equating } $a = k - k^2 + k^2c$ \text{ and } $a = k - k^2 + k^2b$ \text{, we get: } $k^2c = k^2b$ \text{, so } $c = b$. \text{ Substituting } $c = b$ \text{ into } $a = k - k^2 + k^2c$ \text{ gives: } $a = k - k^2 + k^2b$. \text{ Also, we know that } $b = k - k^2 + k^2a$. \text{ Equating } $a = k - k^2 + k^2b$ \text{ and } $a = k - k^2 + k^2a$ \text{, we get: } $b = a$. \text{ Since } $a, b, c$ are distinct, we have a contradiction. Therefore, the initial assumption that there are distinct complex numbers $a, b, c$ \text{ is incorrect. There can be only one value of } $k$. \text{ Adding the equations } $a = k - k^2 + k^2b, b = k - k^2 + k^2a, c = k - k^2 + k^2a$, \text{ we get: } $a + b + c = 3k - 3k^2 + k^2(a + b + c)$. \text{ Equating } $3k - 3k^2 = 0$, \text{ we get: } $3k(1 - k) = 0$. \text{ Solving for } $k$, \text{ we get: } $k = 0$ \text{ or } $k = 1$. \text{ We are only interested in non-trivial solutions, so the sum of all possible} \text{ values of } $k$ is: $0 + 1 = 1$. </answer>} \textcolor{red}{\textit{Reward: 100}} \\
        \code{[...]} \\
        \textit{Generation 14}; \code{></think> <answer>$\frac{a}{1 - b} + \frac{b}{1 - c} + \frac{c}{1 - a} = k + k + k = 3k = 3 \cdot \frac{a}{1 - b} = 3 \cdot \frac{b}{1 - c} = 3 \cdot \frac{c}{1 - a}$ \text{. Therefore, } $k$ \text{ can be any value such that the equation } $\frac{a}{1 - b} = \frac{b}{1 - c} = \frac{c}{1 - a} = k$ \text{ holds. It can be shown that the sum of all possible values of } $k$ \text{ is } 3 \text{. </answer>}} \textit{Reward: 20} \\
        \code{[...]}
    \end{tcolorbox}
    \caption{Llama 3.1 8B (instruct) completion on a MATH prompt for $p_{\mathrm{cot}}$ = 0.01, SFT epochs=20, RL step=181. The model starts generating longer responses, which however do not respect the SFT format. This behavior is likely due to out of distribution generalization.}
    \label{fig:completion_math_pcot001_epoch20_rl181}
\end{figure}

\begin{figure}
    \centering
    \begin{tcolorbox}[title={\small \textbf{MATH Completions ($p_{\mathrm{cot}}$ = 0.01, epoch=1, RL step=1)}}, width=\linewidth, colbacktitle=red]
    \textbf{\textcolor{blue}{Question: In triangle $ABC,$ $\angle B = 60^\circ$ and $\angle C = 45^\circ.$  The point $D$ divides $\overline{BC}$ in the ratio $1:3$.  Find \\
    \[\frac{\sin \angle BAD}{\sin \angle CAD}.\]}} \\
    \textbf{\textcolor{blue}{Answer: $\frac{\sqrt{6}}{6}$}} \\
    \hrule
    \vspace{2mm}
    \fontsize{8pt}{8pt}
    \textit{Generation 1}; \code{Categories problem. Inclines BC</think> <answer>$\sqrt{2}$</answer>} \textit{Reward: 10} \\
    \textit{Generation 2}; \code{</think> <answer>2</answer>} \textit{Reward: 10} \\
    \textit{Generation 3}; \code{</think> <answer>2</answer>} \textit{Reward: 10} \\
    \textit{Generation 4}; \code{</think> <answer>2</answer>} \textit{Reward: 10} \\
    \textit{Generation 5}; \code{</think> <answer>$\boxed{2}$</answer>} \textit{Reward: 10} \\
    \textit{Generation 6}; \code{></think> <answer>3</answer>} \textit{Reward: 10} \\
    \textit{Generation 7}; \code{></think> <answer>3</answer>} \textit{Reward: 10} \\
    \textit{Generation 8}; \code{</think> <answer>1</answer>} \textit{Reward: 10} \\
    \textit{Generation 9}; \code{</think> <answer>2</answer>} \textit{Reward: 10} \\
    \textit{Generation 10}; \code{</think> <answer>$\frac{\sqrt{3}}{2+\sqrt{3}}$</answer>} \textit{Reward: 10} \\
    \textit{Generation 11}; \code{</think> <answer>$\sqrt{3}$</answer>} \textit{Reward: 10} \\
    \textit{Generation 12}; \code{</think> <answer>$\sqrt{3}$,</answer>} \textit{Reward: 10} \\
    \textit{Generation 13}; \code{No </think> <answer>$\sqrt{3}$</answer>} \textit{Reward: 10} \\
    \textit{Generation 14}; \code{</think> <answer> $\sqrt{3}$ </answer>} \textit{Reward: 10} \\
    \textit{Generation 15}; \code{</think> <answer>$2 - \sqrt{3}$</answer>} \textit{Reward: 10} \\
    \textit{Generation 16}; \code{</think> <answer>$\frac{2 - \sqrt{3}}{1 + \sqrt{3}}$ end answer>} \textit{Reward: 5} \\
    \textit{Generation 17}; \code{</think> <answer>$\sqrt{7}$</answer>} \textit{Reward: 10} \\
    \textit{Generation 18}; \code{</think> <answer>2</answer>} \textit{Reward: 10} \\
    \textit{Generation 19}; \code{</think> <answer>$3\sqrt{3}$</answer>} \textit{Reward: 10} \\
    \textit{Generation 20}; \code{</think> <answer> 1 </answer>} \textit{Reward: 10} \\
        
    \end{tcolorbox}
    \caption{Llama 3.2 3B (instruct) completion on a MATH prompt for $p_{\mathrm{cot}}$ = 0.01, SFT epochs=1, RL step=1. The model generates short inaccurate responses, which nevertheless adhere to the SFT format.}
    \label{fig:completion_math_pcot001_epoch1_rl1}
\end{figure}

\begin{figure}
    \centering
    \begin{tcolorbox}[title={\small \textbf{MATH Completions ($p_{\mathrm{cot}}$ = 0.01, epoch=1, RL step=87)}}, width=\linewidth, colbacktitle=red]
        \textbf{\textcolor{blue}{Question: Triangle $ABC$ has sides of $6$ units, $8$ units, and $10$ units. The width of a rectangle, whose area is equal to the area of the triangle, is $4$ units. What is the perimeter of this rectangle, in units?}} \\
        \textbf{\textcolor{blue}{Answer: 20}} \\
        \hrule
        \vspace{2mm}
        \fontsize{8pt}{8pt}
        \textit{Generation 1}; \code{</think> <answer>2(6 + 8) = 28</answer>} \textit{Reward: 10} \\
        \textit{Generation 2}; \code{</think> <answer>2(6+8) = 28</answer>} \textit{Reward: 10} \\
        \textit{Generation 3}; \code{</think> <answer>2(6+8+6+4) = 44</answer>} \textit{Reward: 10} \\
        \textit{Generation 4}; \code{</think> <answer>44</answer>} \textit{Reward: 10} \\
        \textit{Generation 5}; \code{</think> <answer>$2 \cdot (6 + 8) = 28$</answer>} \textit{Reward: 10} \\
        \textit{Generation 6}; \code{</think> <answer>2(6+8)=28</answer>} \textit{Reward: 10} \\
        \textit{Generation 7}; \code{</think> <answer>2(6+4)+2(8+4)=36</answer>} \textit{Reward: 10} \\
        \textit{Generation 8}; \code{</think> <answer>$2\cdot(5+6) = 2 \cdot 11 = 22$</answer>} \textit{Reward: 10} \\
        \textit{Generation 9}; \code{</think> <answer>20 + 8</answer>} \textit{Reward: 20} \\
        \textit{Generation 10}; \code{</think> <answer>$2 \times (5 \times 8) = 40</answer>$} \textit{Reward: 10} \\
        \textit{Generation 11}; \code{</think> <answer>2(6+8) = 28</answer>} \textit{Reward: 10} \\
        \textit{Generation 12}; \code{</think> <answer>2(10 + 6) = 32</answer>} \textit{Reward: 10} \\
        \textit{Generation 13}; \code{</think> <answer>2(6 + 8 + 10 + 4) = 76</answer>} \textit{Reward: 10} \\
        \textit{Generation 14}; \code{</think> <answer>$2\cdot(10+6)=32$</answer>} \textit{Reward: 10} \\
        \textit{Generation 15}; \code{</think> <answer>2(10 + 6) = 32</answer>} \textit{Reward: 10} \\
        \textit{Generation 16}; \code{</think> <answer>2(4+12+8) = 52</answer>} \textit{Reward: 10} \\
        \textit{Generation 17}; \code{</think> <answer>2(10+4)=36</answer>} \textit{Reward: 10} \\
        \textit{Generation 18}; \code{</think> <answer>2 * (4 + 5) = 18</answer>} \textit{Reward: 10} \\
        \textit{Generation 19}; \code{</think> <answer>2(4+5)=18</answer>} \textit{Reward: 10} \\
        \textit{Generation 20}; \code{</think> <answer>2(6 + 8) = 28</answer>} \textit{Reward: 10} \\
    \end{tcolorbox}
    \caption{Llama 3.2 3B (instruct) completion on a MATH prompt for $p_{\mathrm{cot}}$ = 0.01, SFT epochs=1, RL step=87. The model learns to generate some reasoning in the answer part of the response, which does not seem to be a very successful strategy.}
    \label{fig:completion_math_pcot001_epoch1_rl87}
\end{figure}

\begin{figure}
    \centering
    \begin{tcolorbox}[title={\small \textbf{MATH Completions ($p_{\mathrm{cot}}$ = 0.01, epoch=1, RL step=92)}}, width=\linewidth, colbacktitle=red]
        \textbf{\textcolor{blue}{Question: Simplify \\
        \[\frac{1}{\log_{15} 2 + 1} + \frac{1}{\log_{10} 3 + 1} + \frac{1}{\log_6 5 + 1}.\]}} \\
        \textbf{\textcolor{blue}{Answer: 2}} \\
        \hrule
        \vspace{2mm}
        \fontsize{8pt}{8pt}
        \textit{Generation 1}; \code{></think> <answer>1</answer>} \textit{Reward: 10} \\
        \textit{Generation 2}; \code{</think> <answer>1</answer>} \textit{Reward: 10} \\
        \textit{Generation 3}; \code{</think> <answer>1</answer>} \textit{Reward: 10} \\
        \textit{Generation 4}; \code{</think> <answer>1</answer>} \textit{Reward: 10} \\
        \textit{Generation 5}; \code{</think> <answer>1</answer>} \textit{Reward: 10} \\
        \textit{Generation 6}; \code{</think> <answer>1</answer>} \textit{Reward: 10} \\
        \textit{Generation 7}; \code{</think> <answer>1</answer>} \textit{Reward: 10} \\
        \textit{Generation 8}; \code{Recall the change-of-base formula for logarithms: \(\log_b a = \frac{\log_c a}{\log_c b}\) for any positive numbers \(a, b,\) and \(c.\) Also note that \(\frac{1}{x + 1} = \frac{\log (x+1)}{\log (x+1)^{x+1}}.\) We can rewrite each term of the given expression using these properties. After the rewriting, we obtain \[\frac{\log 15}{\log 15 + \log 2} + \frac{\log 10}{\log 10 + \log 3} + \frac{\log 6}{\log 6 + \log 5},\]where we have removed the denominators to make the arguments in the denominator of each term equal \(\log_{15} 2, \log_{10} 3, \)and \(\log_6 5.\) Since the \(2\)nd and \(3\)rd terms are already in lowest terms, we focus on simplifying the \(1\)st term: \[\frac{\log 15}{\log 15 + \log 2} = \frac{\log 15}{\log (15\cdot2)} = \frac{1}{\log_{30} 15}.\]Thus, the given expression can be written as \[\frac{1}{\log_{30} 15} + \frac{\log_{10} 3}{\log_{10} 10 + \log_{10} 3} + \frac{\log_{6} 5}{\log_{6} 6 + \log_{6} 5}.\]By using a change-of-base formula, we find that \(\frac{1}{\log_{30} 15} = \log_{15} 30.\) The final result is the following:} \textit{Reward: 0} \\
        \textit{Generation 9}; \code{</think> <answer>1</answer>} \textit{Reward: 10} \\
        \textit{Generation 10}; \code{></think> <answer>1</answer>} \textit{Reward: 10} \\
        \textit{Generation 11}; \code{</think> <answer>1</answer>} \textit{Reward: 10} \\
        \textit{Generation 12}; \code{</think> <answer>1</answer>} \textit{Reward: 10} \\
        \textit{Generation 13}; \code{</think> <answer>1</answer>} \textit{Reward: 10} \\
        \textit{Generation 14}; \code{Let's use the change-of-base formula to simplify each of the logarithmic expressions in the denominators of the fractions.</think> <answer>1</answer>} \textit{Reward: 10} \\
        \textit{Generation 15}; \code{</think> <answer>1</answer>} \textit{Reward: 10} \\
        \textit{Generation 16}; \code{</think> <answer>1</answer>} \textit{Reward: 10} \\
        \textit{Generation 17}; \code{</think> <answer>1</answer>} \textit{Reward: 10} \\
        \textit{Generation 18}; \code{Using the identity $\log_b{a} + 1 = \log_b{ab}$, the expression can be simplified as $\frac{1}{\log_{15} 30} + \frac{1}{\log_{10} 30} + \frac{1}{\log_6 30}$, then applying the change of base formula, $\log_b{a} = \frac{\log_c{a}}{\log_c{b}}$, we get $\frac{1}{\frac{\log 30}{\log 15}} + \frac{1}{\frac{\log 30}{\log 10}} + \frac{1}{\frac{\log 30}{\log 6}}$, then $\frac{\log 15}{\log 30} + \frac{\log 10}{\log 30} + \frac{\log 6}{\log 30}$, and after cancelling out common terms, we find $\frac{\log 15 + \log 10 + \log 6}{\log 30}$, and finally $\frac{\log (15 \cdot 10 \cdot 6)}{\log 30}$, which is $\frac{\log 900}{\log 30}$, equal to $\frac{\log (30^2)}{\log 30},$ equal to $2$, and the final value is $\boxed{2}$.</think> <answer>2</answer>} \textcolor{red}{\textit{Reward: 100}} \\
        \textit{Generation 19}; \code{</think> <answer>1</answer>} \textit{Reward: 10} \\
        \textit{Generation 20}; \code{</think> <answer>1</answer>} \textit{Reward: 10} \\
    \end{tcolorbox}
    \caption{Llama 3.2 3B (instruct) completion on a MATH prompt for $p_{\mathrm{cot}}$ = 0.01, SFT epochs=1, RL step=92. The model starts exploring longer responses in the thinking part of the response.}
    \label{fig:completion_math_pcot001_epoch1_rl92}
\end{figure}

\begin{figure}
    \centering
    \begin{tcolorbox}[title={\small \textbf{MATH Completions ($p_{\mathrm{cot}}$ = 0.01, epoch=1, RL step=102)}}, width=\linewidth, colbacktitle=red]
        \textbf{\textcolor{blue}{Question: Let $G$ be the centroid of triangle $ABC.$  If $GA^2 + GB^2 + GC^2 = 58,$ then find $AB^2 + AC^2 + BC^2.$}} \\
        \textbf{\textcolor{blue}{Answer: 174}} \\
        \hrule
        \vspace{2mm}
        \fontsize{8pt}{8pt}
        \textit{Generation 1}; \code{Since $G$ is the centroid of triangle $ABC,$ by the properties of a centroid, $G$ divides the medians of $\triangle ABC$ into six equal segments, with $GA = \frac{1}{3}AD$, $GB = \frac{1}{3}BE$, and $GC = \frac{1}{3}CF,$ where $D, E,$ and $F$ are the vertices of trisections of the medians $AD, BE,$ and $CF.$ Then by the Midpoint Theorem applied to triangles $AGD, BGE, CFC,$ we have $GD = 2(AG)$, $GE = 2(BG)$, and $FC = 2(CF),$ and since $AD = AE, BE = BG$, $CF = CE,$ we get $GD^2 = 4AG^2, GE^2 = 4BG^2,$ and $FC^2 = 4CF^2,$ where $D, E$ and $F$ are the midpoints of sides $AB, BC,$ and $CA$. Now, $GA^2 + GB^2 + GC^2 = \left( \frac{1}{3}AD\right)^2 + \left( \frac{1}{3}BE\right)^2 + \left( \frac{1}{3}CF\right)^2= \frac{1}{9} \cdot \left( AD^2 + BE^2 + CF^2 \right).$ Thus, $\frac{9}{1} \cdot \left( GA^2 + GB^2 + GC^2 \right)= AD^2 + BE^2 + CF^2 = 3 \cdot \left( GA^2 + GB^2 + GC^2 \right)= 3 \cdot 58,$ then $AB^2 + AC^2 + BC^2= AD^2 + BE^2 + CF^2.$ </think> <answer>174</answer>} \textcolor{red}{\textit{Reward: 100}} \\
        \textit{Generation 2}; \code{Using the fact that the centroid divides each median into two segments with a 2:1 ratio, we have $AG = \frac{2}{3}AD,$ $BG = \frac{2}{3}BE,$ and $CG = \frac{2}{3}CF,$ where $D,$ $E,$ and $F$ are the midpoints of $\overline{BC}$ $AC,$ and $\overline{AB}$ respectively.</think> <think>Therefore, $GA^2 + GB^2 + GC^2 = \frac{4}{9}(AD^2 + BE^2 + CF^2).$ Since $AD = \frac{1}{2}AC,$ $BE = \frac{1}{2}AB,$ and $CF = \frac{1}{2}BC,$ we have $GA^2 + GB^2 + GC^2 = \frac{4}{9}(AC^2 + AB^2 + BC^2) = 58.$</think> <think>It follows that $AC^2 + AB^2 + BC^2 = \frac{9}{4} \cdot 58 = \boxed{132}.$</think> <answer>132</answer>} \textit{Reward: 5} \\
        \textit{Generation 3}; \code{By the relationship between the centroid and the circumcenter of a triangle, we know that $G$ is the circumcenter of triangle $ABC.$ Thus, $AG = BG = CG,$ since the circumcenter is equidistant from the vertices of the triangle. Letting $x = AG = BG = CG,$ we see that \begin{align*} x^2 + x^2 + x^2 &= 58 \\ 3x^2 &= 58 \\ x^2 &= 58/3. \end{align*} </think> <answer>58 + 58/3</answer>} \textit{Reward: 10} \\
        \textit{Generation 4}; \code{By the configparser proposition, we know that $GA^2 + GB^2 + GC^2 = \frac{1}{3}(AB^2 + AC^2 + BC^2)$, so $AB^2 + AC^2 + BC^2 = 3 \cdot 58 = 174.$</think> <answer>174</answer>} \textcolor{red}{\textit{Reward: 100}} \\
        \code{[...]} \\
        \textit{Generation 17}; \code{By the formula for the centroid, $GA^2 + GB^2 + GC^2 = \frac{AB^2 + BC^2 + AC^2}{3} \Rightarrow \frac{1}{3}(AB^2 + BC^2 + AC^2) = 58 \Rightarrow AB^2 + BC^2 + AC^2 = 174</think>$ <answer>174</answer>} \textcolor{red}{\textit{Reward: 100}} \\
        \textit{Generation 18}; \code{By Apollonius's Theorem, $AB^2 + AC^2 = 2(GA^2 + GB^2)$ and $AB^2 + BC^2 = 2(GB^2 + GC^2)$ and $AC^2 + BC^2 = 2(GA^2 + GC^2)$. Adding these together, we have $AB^2 + AC^2 + BC^2 = 2(GA^2 + GB^2 + GC^2) = 2(58) = 116.$</think> <answer>116</answer>} \textit{Reward: 10} \\
        \textit{Generation 19}; \code{By the definition of the centroid, we know that $G$ is the point $(1 : 1 : 1)$ in the 3-2-1 coordinate system. Also, by Stewart's Theorem, $GA^2 + GB^2 + GC^2 = \frac{1}{3}(AB^2 + BC^2 + CA^2).$ We want to solve for $AB^2 + BC^2 + CA^2,$ so we can multiply both sides by 3 and get $3(GA^2 + GB^2 + GC^2) = AB^2 + BC^2 + CA^2.$ Since we are given that $GA^2 + GB^2 + GC^2 = 58,$ it follows that $AB^2 + BC^2 + CA^2 = 3(58) = 174.$</think> <answer>174</answer>} \textcolor{red}{\textit{Reward: 100}} \\
        \textit{Generation 20}; \code{Recall that the centroid divides the median of each triangle into segments whose lengths hold the weights of 2:1, with the longer length being towards the vertex. Then, by the properties of medians, we know \[GA^2+GB^2+GC^2 = \frac18(AB^2+AC^2+BC^2).\] If we know this, then the answer is \[AB^2+AC^2+BC^2 = 8(GA^2+GB^2+GC^2) = 8(58) = \boxed{464}.\] </think> <answer>464</answer>} \textit{Reward: 10} \\
    \end{tcolorbox}
    \caption{Llama 3.2 3B (instruct) completion on MATH prompt for $p_{\mathrm{cot}}$ = 0.01, SFT epochs=1, RL step=102. After 100 RL steps, the model generates long correct responses similar to SFT samples.}
    \label{fig:completion_math_pcot001_epoch1_rl102}
\end{figure}

\clearpage

\section{Proofs}\label{sec:app_proofs}

Here, we present our theoretical analysis. For pre-training, we consider learning the mixture of long and short demonstrations of the parity task with a linear (time-dependent) autoregressive hypothesis class. That is, we consider a series of $d$+1 linear models. For post-training, we consider reinforcement learning with the STaR objective and a chain-of-thought correctness reward. The goal is to show two main learning results: a negative one during pre-training and a positive one after post-training.

\begin{remark}
    Linear autoregressive architectures were introduced in~\citep{Mal24} to demonstrate the power of autoregressive learning, independently of the specifics of self-attention networks. Leveraging the equivalence between binary computable functions and binary circuits, \citet{Mal24} proved that such simple autoregressive models can approximate and learn any function efficiently computed by a Turing machine given a dataset that contains appropriate ``chain-of-thought'' data.
\end{remark}

We first cover some preliminary results. We then define our architecture as a series of linear models, and then define pre- \& post-training algorithms.

\subsection{Preliminaries \& Setup}\label{ssec:theory_set}

We make use of a standard definition and theorem for convex learning as presented in~\citep{SB14}.
First, we recall the definition of a convex (and Lipschitz) learning problem, which is based on Definition 12.12 in~\citep{SB14}.

\begin{definition}{(Adaptation of Definition 12.12 in~\citep{SB14})}
    Let $\mathcal{H}$ be a hypothesis set, $\mathcal{Z}$ be a measurable instance set and $l: \mathcal{H} \times \mathcal{Z} \mapsto \mathbb{R}$ be a measurable loss function. A learning problem $(\mathcal{H}, \mathcal{Z}, l)$ is called \textit{Convex and $\rho$-Lipschitz} for $\rho > 0$ if:
    \begin{itemize}
        \item[-] The hypothesis set $\mathcal{H}$ is convex.
        \item[-] For all $z \in \mathcal{Z}$, the (partially applied) loss function $l(\cdot ; z)$ is convex and $\rho$-Lipschitz.
    \end{itemize}
\end{definition}

\noindent
The original definition in~\citep{SB14} is about bounded hypothesis sets, but in our analysis we will use unbounded ones (as a regularization term in the loss function will effectively bound the solution space).

The learning algorithm will be Stochastic Gradient Descent (SGD) for $\ell_2$ regularized learning problems (Algorithm~\ref{alg:reg_SGD} below). 
Note that the algorithm returns the average of the weights across $T$ iterations, as it is typically the case in online learning.

\begin{algorithm}[H]
  \caption{Stochastic Gradient Descent (SGD) for minimizing $\mathbb{E}_{z \sim \mathcal{D}} \left[ l(\mathbf{w}, z) \right] + \frac{\lambda}{2}\|\mathbf{w}\|^2$}
  \label{alg:reg_SGD}
  \begin{algorithmic}[1]
    \Require Integer $T > 0$
    \State Initialize $\mathbf{w}^{(1)} = 0$
    \For{$t = 1, 2, \ldots, T$}
      \State Sample $z \sim \mathcal{D}$
      \State Set $\mathbf{v}^{(t)} = \nabla_{\mathbf{w}^{(t)}} l(\mathbf{w}^{(t)}, z)$
      \State Set $\eta_t = \frac{1}{\lambda t}$
      \State Set $\mathbf{w}^{(t+1)} = \mathbf{w}^{(t)}- \eta_t \left( \mathbf{v}^{(t)} + \lambda \mathbf{w}^{(t)}\right)$ 
    \EndFor
    \State Output $\bar{\mathbf{w}} = \frac{1}{T} \sum_{t=1}^T \mathbf{w}^{(t)}$.
  \end{algorithmic}
\end{algorithm}

We study strongly convex learning objectives, as they permit tight bounds on the calibration of the output of SGD later on. Analyzing their convex (only) counterparts would have yielded weaker calibration guarantees -- see, for instance, Chapter 4.7 in~\cite{MRT12} and the analysis of early stopped SGD in~\cite{Wu+25}.

We now state a guarantee on the output of SGD after $T$ iterations for convex and Lipschitz objectives plus an additional $\ell_2$ regularization term. 
The proof follows from Theorem 14.11 in~\cite{SB14} by applying it to a learning objective and combining it with Markov's inequality. We present it here for completeness.

\begin{theorem}\label{thm:sgd-str-convex}{(Adaptation of Theorem 14.11 in~\cite{SB14})}
    Consider a Convex, $\rho$-Lipschitz learning problem $(\mathcal{H}=\mathbb{R}^d, \mathcal{Z}, l)$ and a distribution $\mathcal{D}$ over $\mathcal{Z}$. For every $\delta \in (0, 1)$, if we run the SGD method (Algorithm~\ref{alg:reg_SGD}) for minimizing $L_{\mathcal{D}, \lambda}(\mathbf{w}) = \mathbb{E}_{z \sim \mathcal{D}} \left[ l(\mathbf{w}, z) \right] + \frac{\lambda}{2} \|\mathbf{w}\|^2, \, \lambda > 0,$ for $T$ iterations and with learning rate $\eta_t = \frac{1}{\lambda t}$, then the output $\bar{\mathbf{w}}$ of SGD satisfies with probability at least $1-\delta$ over the sampling $z_1, \ldots, z_T \sim \mathcal{D}$:
    \begin{equation}
        L_{\mathcal{D}, \lambda} \left( \bar{\mathbf{w}} \right) \leq \min_{\mathbf{w} \in \mathcal{H}} L_{\mathcal{D}, \lambda} (\mathbf{w}) + \frac{2\rho^2}{\delta \lambda T} \left( 1+\ln T \right).
    \end{equation}
\end{theorem}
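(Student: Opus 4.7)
The theorem is a direct high-probability version of the standard in-expectation convergence guarantee for SGD on strongly convex, Lipschitz objectives (Theorem 14.11 of~\cite{SB14}), so the plan is to follow that template and then convert to high probability via Markov's inequality. I would split the argument into three steps: verifying the structural preconditions, invoking the classical regret bound, and performing the probability conversion.

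First, I would verify the two ingredients needed for the strongly-convex SGD analysis. Adding the $\frac{\lambda}{2}\|\mathbf{w}\|^2$ term makes both $L_{\mathcal{D}, \lambda}$ and each per-sample loss $f_t(\mathbf{w}) := l(\mathbf{w}, z_t) + \frac{\lambda}{2}\|\mathbf{w}\|^2$ $\lambda$-strongly convex, while the stochastic gradient $\mathbf{v}^{(t)} + \lambda \mathbf{w}^{(t)}$ used in Algorithm~\ref{alg:reg_SGD} is conditionally unbiased for $\nabla L_{\mathcal{D}, \lambda}(\mathbf{w}^{(t)})$. Next I would establish a uniform bound on the composite stochastic gradient: the $\rho$-Lipschitzness of $l(\cdot; z)$ gives $\|\mathbf{v}^{(t)}\| \leq \rho$, and rewriting the SGD update with $\eta_t = 1/(\lambda t)$ as $\mathbf{w}^{(t+1)} = \frac{t-1}{t}\,\mathbf{w}^{(t)} - \frac{1}{\lambda t}\,\mathbf{v}^{(t)}$, a one-line induction yields $\|\mathbf{w}^{(t)}\| \leq \rho/\lambda$ throughout, so the composite gradient has norm at most $G := 2\rho$.

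Second, I would import the classical regret bound for OGD on $\lambda$-strongly convex losses with step size $1/(\lambda t)$ and gradient norm at most $G$ (Lemma 14.4 in~\cite{SB14}): $\sum_{t=1}^T \bigl[f_t(\mathbf{w}^{(t)}) - f_t(\mathbf{w}^\star)\bigr] \leq \frac{G^2(1+\ln T)}{2\lambda}$, where $\mathbf{w}^\star = \arg\min_{\mathbf{w} \in \mathbb{R}^d} L_{\mathcal{D}, \lambda}(\mathbf{w})$. Taking expectation over $z_1, \ldots, z_T$, using that $\mathbf{w}^{(t)}$ is measurable with respect to $z_1, \ldots, z_{t-1}$ (so $\mathbb{E}[f_t(\mathbf{w}^{(t)})] = \mathbb{E}[L_{\mathcal{D}, \lambda}(\mathbf{w}^{(t)})]$), and applying Jensen's inequality to the averaged iterate via convexity of $L_{\mathcal{D}, \lambda}$ gives $\mathbb{E}[L_{\mathcal{D}, \lambda}(\bar{\mathbf{w}})] - L_{\mathcal{D}, \lambda}(\mathbf{w}^\star) \leq \frac{2\rho^2 (1 + \ln T)}{\lambda T}$.

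Third, since $L_{\mathcal{D}, \lambda}(\bar{\mathbf{w}}) - L_{\mathcal{D}, \lambda}(\mathbf{w}^\star) \geq 0$ almost surely, Markov's inequality applied to this excess-risk random variable immediately yields the announced $L_{\mathcal{D}, \lambda}(\bar{\mathbf{w}}) - L_{\mathcal{D}, \lambda}(\mathbf{w}^\star) \leq \frac{2\rho^2 (1 + \ln T)}{\delta \lambda T}$ with probability at least $1-\delta$. I do not expect any conceptual obstacle: every ingredient is textbook, and the only mildly nonstandard element is that $\mathcal{H} = \mathbb{R}^d$ is unbounded, which is handled cleanly by the a priori bound $\|\mathbf{w}^{(t)}\| \leq \rho/\lambda$ on the SGD iterates (and by the analogous bound $\|\mathbf{w}^\star\| \leq \rho/\lambda$ that follows from first-order optimality $\lambda \mathbf{w}^\star = -\mathbb{E}[\nabla l(\mathbf{w}^\star, z)]$). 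The main bookkeeping care is to keep the measurability chain clean so that the expectation and Jensen steps are rigorously applicable.
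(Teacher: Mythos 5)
Your proposal is correct and follows essentially the same route as the paper's proof: bound the composite stochastic gradient by $2\rho$ via the a priori iterate bound $\|\mathbf{w}^{(t)}\| \leq \rho/\lambda$, obtain the in-expectation excess-risk bound from the strongly convex SGD analysis of~\cite{SB14} (you unpack Theorem 14.11 into its regret-plus-Jensen ingredients, while the paper invokes it as a black box, yielding the identical constant), and convert to high probability by Markov's inequality on the nonnegative excess risk.
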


\begin{proof}
    The objective $L_{\mathcal{D}, \lambda}(\mathbf{w}) = \mathbb{E}_{z \sim \mathcal{D}} \left[ l(\mathbf{w}, z) \right] + \frac{\lambda}{2} \|\mathbf{w}\|^2, \, \lambda > 0$ is $\lambda$-strongly convex. Let $z \sim \mathcal{D}$, then $\mathbf{v}^{(t)} = \nabla_{\mathbf{w}^{(t)}} l(\mathbf{w}^{(t)}, z)$ and, from the Lipschitz condition, it holds: $\|\mathbf{v}^{(t)}\| \leq \rho$. Let the update direction be: $\mathbf{g}_t = \mathbf{v}^{(t)} + \lambda \mathbf{w}^{(t)}$. Then, we have for the weight vector $\mathbf{w}^{(t+1)}$:
    \begin{equation}
        \begin{split}
            \mathbf{w}^{(t+1)} & = \mathbf{w}^{(t)} - \eta_t \left( \mathbf{v}^{(t)} + \lambda \mathbf{w}^{(t)} \right) \\
            & = \mathbf{w}^{(t)} \left( 1 - \frac{1}{t} \right) - \frac{1}{\lambda t} \mathbf{v}^{(t)} \\
            & = -\frac{1}{\lambda t} \sum_{i = 1}^t \mathbf{v}^{(i)}.
        \end{split}
    \end{equation}
    Hence, we have $\|\mathbf{w}^{(t)}\| \leq \frac{1}{\lambda(t-1)} (t-1) \rho = \frac{\rho}{\lambda}$, which implies that for the update vector it holds: $\|\mathbf{g}_t\| \leq \rho + \lambda\frac{\rho}{\lambda} = 2 \rho$. From Theorem 14.11 (SGD guarantee for strongly convex functions) in~\cite{SB14}, we get:
    \begin{equation}
        \mathbb{E}_{z_1, \ldots, z_T \sim \mathcal{D}} \left[ L_{\mathcal{D}, \lambda} \left( \bar{\mathbf{w}} \right) \right] \leq \min_{\mathbf{w} \in \mathcal{H}} L_{\mathcal{D}, \lambda} (\mathbf{w}) + \frac{\left(2\rho\right)^2}{2\lambda T} \left( 1+\ln T \right) = \min_{\mathbf{w} \in \mathcal{H}} L_{\mathcal{D}, \lambda} (\mathbf{w}) + \frac{2\rho^2}{\lambda T} \left( 1+\ln T \right).
    \end{equation}
    Finally, we apply Markov's inequality on the non-negative random variable $L_{\mathcal{D}, \lambda} \left( \bar{\mathbf{w}} \right) - \min_{\mathbf{w} \in \mathcal{H}} L_{\mathcal{D}, \lambda} (\mathbf{w})$. Let $0< \delta< 1$, then by leveraging the above guarantee, we get:
    \begin{equation}
        \begin{split}    
            \mathbb{P}_{z_1, \ldots, z_T \sim \mathcal{D}} \left[ L_{\mathcal{D}, \lambda} \left( \bar{\mathbf{w}} \right) - \min_{\mathbf{w} \in \mathcal{H}} L_{\mathcal{D}, \lambda} (\mathbf{w}) \geq \tau \right] & \leq \frac{\mathbb{E}_{z_1, \ldots, z_T \sim \mathcal{D}} \left[ L_{\mathcal{D}, \lambda} \left( \bar{\mathbf{w}} \right) \right] - \min_{\mathbf{w} \in \mathcal{H}} L_{\mathcal{D}, \lambda} (\mathbf{w})}{\tau}  \\
            & \leq \frac{\frac{2\rho^2}{\lambda T} \left( 1+\ln T \right)}{\tau} < \delta,
        \end{split}
    \end{equation}
    then with probability less than $\delta$: $L_{\mathcal{D}, \lambda} \left( \bar{\mathbf{w}} \right) - \min_{\mathbf{w} \in \mathcal{H}} L_{\mathcal{D}, \lambda} (\mathbf{w}) \geq \tau > \frac{2\rho^2 \left( 1+\ln T \right)}{\lambda T\delta}$. In other words, with probability at least $1-\delta$ over $z_1, \ldots, z_T \sim \mathcal{D}$, it holds:
    \begin{equation}
        L_{\mathcal{D}, \lambda} \left( \bar{\mathbf{w}} \right) \leq \min_{\mathbf{w} \in \mathcal{H}} L_{\mathcal{D}, \lambda} (\mathbf{w}) + \frac{2\rho^2}{\delta\lambda T} \left( 1+\ln T \right).
    \end{equation}
\end{proof}

\begin{remark}
    We assumed that the loss function is differentiable, but the proof goes through for any continuous function using subgradients.
\end{remark}

\begin{remark}
    It is possible to tighten the $\delta$ dependency on the previous bound from $\Theta(1/\delta)$ to $\Theta(\ln\left(1/\delta \right))$ by using Azuma's instead of Markov's inequality.
\end{remark}

\subsubsection{Setup: Data distribution, Architecture and Learning Algorithms}\label{sssec:setup_appendix}

\paragraph{Data} As a reminder, we consider a parity mixture distribution. Let $\mathcal{X} = \{\pm 1\}^d$ be the input space of $d \geq 2$ bits 
and $\mathcal{Y} = \{\pm 1, \code{<EOS>}\}^\ast$ be the output space of sequences, where $\code{<EOS>}$ is a special symbol denoting the end of a string. Let $\mathcal{D}(p_{\mathrm{cot}})$ be a distribution over $\mathcal{X} \times \mathcal{Y}$, parameterized by $p_{\mathrm{cot}} \in [0, 1]$, such that: 
\begin{equation}\label{eq:Dpcot_app}
    \begin{split}
        x_1, \ldots, x_d & \sim \mathrm{Rad}(1/2)  \\
        \left(y_1, \ldots, y_{d+1}\right) & = Z (x_1, x_1x_2, \ldots, \prod_{i = 1}^d x_i, \code{<EOS>}) + (1-Z) \left(\prod_{i = 1}^d x_i, \code{<EOS>} \right),
    \end{split}
\end{equation}
where $Z \sim \mathrm{Ber}(p_{\mathrm{cot}})$.

\paragraph{Model} We consider an architecture that consists of $d$+1 linear models. For the first output position, we consider the hypothesis class $\mathcal{H}_1 = \left\{ \mathbf{x} \mapsto \innerprod{\mathbf{w}_1}{\mathbf{x}}: \mathbf{x} \in \{ \pm 1\}^d, \mathbf{w}_1 \in \mathbb{R}^d \right\}$. We break the second decision into two parts and consider two separate linear classes: $\mathcal{H}_{2a} = \left\{ \mathbf{x} \mapsto \innerprod{\mathbf{w}_{2a}}{\phi_2(\mathbf{x})} + b_{2a}: \mathbf{x} \in \{ \pm 1 \}^{d+1}, \mathbf{w}_{2a} \in \mathbb{R}^{2d+1}, b_{2a} \in \mathbb{R} \right\}$ and $\mathcal{H}_2 = \left\{ \mathbf{x} \mapsto \innerprod{\mathbf{w}_2}{\phi_2(\mathbf{x})}: \mathbf{x} \in \{ \pm 1 \}^{d+1}, \mathbf{w}_2 \in \mathbb{R}^{2d + 1} \right\}$, where the first model decides between $\code{<EOS>}$ and $\{ \pm 1 \}$ while the second model decides between $-1$ and $+1$, in case the first model did not predict $\code{<EOS>}$. For the rest of the positions, we consider hypothesis classes $\mathcal{H}_l = \left\{ \mathbf{x} \mapsto \innerprod{\mathbf{w}_l}{\phi_l(\mathbf{x})}: \mathbf{x} \in \{ \pm 1 \}^{2d+l-1}, \mathbf{w}_l \in \mathbb{R}^{2d + l -1} \right\}$. The feature embedding is defined as follows $\phi_l: \mathbf{x} \mapsto \begin{bmatrix} x_1 & \ldots & x_{d+l-1} & x_{d+l-1} x_{1} & \ldots x_{d+l-1} x_{d}
\end{bmatrix}^T \in \{ \pm 1 \}^{2d+l-1}$ for $2 \leq l \leq d$. For position $d+1$, the output is a constant function of the input, as the output symbol is always $\code{<EOS>}$. As learning is trivial in this case and, for not complicating the analysis any further, we assume access to this deterministic function $o: \{\pm 1\}^{2d} \mapsto \code{<EOS>}$ for which it holds: $o(\mathbf{x}) = \code{<EOS>}$ for any $\mathbf{x} \in \{\pm 1\}^{2d}$.

Our final model is a function $h$ that belongs to the linear autoregressive hypothesis class $\mathcal{H}^{\mathrm{Lin}}_{\mathrm{AR}}$, defined as:
\begin{equation}\label{eq:lin_ar_app}
    \mathcal{H}^{\mathrm{Lin}}_{\mathrm{AR}} = \mathcal{H}_1 \times \mathcal{H}_{2a} \times \mathcal{H}_2 \ldots \times \mathcal{H}_d.
\end{equation}
Note that we can learn any sparse parity inside $\mathcal{H}^{\mathrm{Lin}}_{\mathrm{AR}}$, hence we argue that our definitions are not particularly tailored for the problem at hand.
\noindent
Given an $h \in \mathcal{H}^{\mathrm{Lin}}_{\mathrm{AR}}$ and the corresponding $d+1$ models, $\mathbf{w}_1, \thetab_{2a}, \mathbf{w}_{2}, \mathbf{w}_3, \ldots, \mathbf{w}_d$, we define a deterministic autoregressive process $\hat{h}: \mathbb{R}^d \times \mathbb{R}^d \times \mathbb{R}^{2d+2} \times \mathbb{R}^{2d+1} \times \ldots \times \mathbb{R}^{3d-1} \mapsto \{ -1, +1, \epsilon, \code{<EOS>} \}^\ast$ as follows:
\begin{equation}
    \begin{split}    
        \hat{h}_1(\mathbf{x};\mathbf{w}_1) & = \text{sgn}(\innerprod{\mathbf{w}_1}{\mathbf{x}}), \\
        \hat{h}_{2a}(\mathbf{x};\thetab_{2a}) & = \code{dict} \left( \text{sgn}\left(\innerprod{\mathbf{w}_{2a}}{\phi_2\left(\mathbf{x}, \hat{h}^{(1)}(\mathbf{x}; \mathbf{w}_1) \right)} + b_{2a}\right) \right), \\
        & \;\;\;\;\; \thetab_{2a} : =   \left(\mathbf{w}_{2a}^T, b_{2a} \right), \\
        \hat{h}_{2}(\mathbf{x}; \mathbf{w}_2) & =  \begin{cases}
            \epsilon,\, \text{if } \hat{h}_{2a}(\mathbf{x}; \thetab_{2a}) = \code{<EOS>}, \\
            \text{sgn}\left(\innerprod{\mathbf{w}_2}{\phi_2\left(\mathbf{x}, \hat{h}^{(1)}(\mathbf{x}; \mathbf{w}_1) \right)}\right),\, \text{o.w.,}
        \end{cases} \\
        \hat{h}_{3}(\mathbf{x}; \mathbf{w}_3) & =  \begin{cases}
            \epsilon,\, \text{if } \hat{h}_{2a}(\mathbf{x}; \thetab_{2a}) = \code{<EOS>}, \\
            \text{sgn}\left(\innerprod{\mathbf{w}_3}{\phi_3\left(\mathbf{x}, \hat{h}^{(1)}(\mathbf{x}; \mathbf{w}_1), \hat{h}^{(2)}(\mathbf{x}; \mathbf{w}_2) \right)}\right), \, \text{o.w.},
        \end{cases} \\
        &\;\; \vdots \\
        \hat{h}_{d}(\mathbf{x}; \mathbf{w}_d) & =  \begin{cases}
            \epsilon,\, \text{if } \hat{h}_{2a}(\mathbf{x}; \thetab_{2a}) = \code{<EOS>}, \\
            \text{sgn}\left(\innerprod{\mathbf{w}_d}{\phi_d\left(\mathbf{x}, \hat{h}^{(1)}(\mathbf{x}; \mathbf{w}_1), \ldots, \hat{h}^{(d-1)}(\mathbf{x}; \mathbf{w}_{d-1}) \right)}\right),\, \text{o.w.,}
        \end{cases} \\
        \hat{h}_{d+1}(\mathbf{x}) & =  \begin{cases}
            \epsilon,\, \text{if } \hat{h}_{2a}(\mathbf{x}; \thetab_{2a}) = \code{<EOS>}, \\
            o(\mathbf{x}),\, \text{o.w.,}
        \end{cases} \\
        \hat{h}\left(\mathbf{x}; \{\mathbf{w}_1, \thetab_{2a}, \mathbf{w}_2, \ldots, \mathbf{w}_d \}\right) & = \left( \hat{h}^{(1)}(\mathbf{x}; \mathbf{w}_1),  \ldots, \hat{h}^{(d+1)}(\mathbf{x}) \right),
    \end{split}
\end{equation}
where $\code{dict}: \{\pm 1\} \mapsto \{\varepsilon, \code{<EOS>}\}$ is a deterministic function (dictionary) that maps label $-1$ to the empty string and label $+1$ to the $\code{<EOS>}$ token. Likewise, we define a randomized sequence to sequence model $\tilde{h}$ as follows:
\begin{equation}
    \begin{split}    
        \tilde{h}_1(\mathbf{x};\mathbf{w}_1) & \sim 
        \text{Rad}\left(\frac{1}{1+ e^{-\innerprod{\mathbf{w}_1}{\mathbf{x}}}}\right), \\
        \tilde{h}_{2a}(\mathbf{x}; \thetab_{2a}) & \sim \code{dict} \left(\text{Rad}\left(\frac{1}{1+ e^{-\innerprod{\mathbf{w}_{2a}}{\phi_2(\mathbf{x}, \tilde{h}_1(\mathbf{x};\mathbf{w}_1))} - b_{2a}}}\right) \right), \\ & \;\;\;\;\;\;\; \thetab_{2a} : =   \left(\mathbf{w}_{2a}^T, b_{2a} \right), \\
        \tilde{h}_2(\mathbf{x};\mathbf{w}_2) & \sim \begin{cases}
            \epsilon, \text{if } \tilde{h}_{2a}(\mathbf{x}) = \code{<EOS>}, \\
            \text{Rad}\left(\frac{1}{1+ e^{-\innerprod{\mathbf{w}_2}{\phi_2(\mathbf{x}, \tilde{h}_1(\mathbf{x};\mathbf{w}_1))}}}\right), \, \text{o.w.,}
        \end{cases}  \\
        \tilde{h}_3(\mathbf{x}; \mathbf{w}_3) & \sim \begin{cases}
            \epsilon, \text{if } \tilde{h}_{2a}(\mathbf{x}) = \code{<EOS>}, \\
            \text{Rad}\left(\frac{1}{1+ e^{-\innerprod{\mathbf{w}_3}{\phi_3(\mathbf{x}, \tilde{h}_1(\mathbf{x};\mathbf{w}_1), \tilde{h}_2(\mathbf{x};\mathbf{w}_2))}}}\right), \, \text{o.w.,}
        \end{cases}  \\
        & \;\;\vdots \\
        \tilde{h}_d(\mathbf{x}; \mathbf{w}_d) & \sim \begin{cases}
            \epsilon, \text{if } \tilde{h}_{2a}(\mathbf{x}) = \code{<EOS>}, \\
            \text{Rad}\left(\frac{1}{1+ e^{-\innerprod{\mathbf{w}_d}{\phi_d(\mathbf{x}, \tilde{h}_1(\mathbf{x};\mathbf{w}_1), \ldots, \tilde{h}_{d-1}(\mathbf{x};\mathbf{w}_{d-1}))}}}\right), \, \text{o.w.,}
        \end{cases}  \\
        \tilde{h}_{d+1}(\mathbf{x}) & = \begin{cases}
            \epsilon, \, \text{if } \tilde{h}_{2a}(\mathbf{x}) = \code{<EOS>}, \\
            o(\mathbf{x}), \, \text{o.w.},
        \end{cases}  \\
        \tilde{h}(\mathbf{x}; \{\mathbf{w}_1, \thetab_{2a}, \mathbf{w}_2, \ldots, \mathbf{w}_d \}) & = \left( \tilde{h}_{1}(\mathbf{x}; \mathbf{w}_1), \ldots, \tilde{h}_{d+1}(\mathbf{x}) \right).
    \end{split}
\end{equation}
We adopt the convention that $\alpha \epsilon\beta=\alpha\beta$, i.e., concatenation with the empty string character $\epsilon$ does not change a string.
We define the probability measure induced by $h$ as:
\begin{equation}
\pi_h \left(y \mid \mathbf{x} \right) =
\begin{cases}
\begin{aligned}
&\mathbb{P}\!\left[ \tilde{h}_1 (\mathbf{x}; \mathbf{w}_1) = y_1 \right] \cdot \mathbb{P}\!\left[ \tilde{h}_{2a}(\mathbf{x};\thetab_{2a}) = \code{<EOS>}
    \,\middle|\, \tilde{h}_1 (\mathbf{x}; \mathbf{w}_1) = y_1 \right],
\end{aligned} & \text{if } |y| = 2, \\[1em]
\begin{aligned}
&\mathbb{P}\!\left[ \tilde{h}_1 (\mathbf{x}; \mathbf{w}_1) = y_1 \right]
   \cdot \ldots \cdot
   \mathbb{P}\!\left[ \tilde{h}_{d}(\mathbf{x};\mathbf{w}_d) = y_d \,\middle|\,
   \tilde{h}_1 (\mathbf{x}; \mathbf{w}_1) = y_1, \ldots, \right.\\
&\qquad\qquad\qquad\qquad\qquad\qquad
   \left.\tilde{h}_{d-1} (\mathbf{x}; \mathbf{w}_{d-1}) = y_{d-1}\right],
\end{aligned} & \text{if } |y| > 2.
\end{cases}
\end{equation}
For any other $y \in \left\{ \pm 1, \epsilon, \code{<EOS>} \right\}^\ast$, it is $\pi_h \left(y \mid \mathbf{x} \right) = 0$.

The two autoregressive models, $\hat{h}$ and $\tilde{h}$, correspond to \textit{greedy decoding} and \textit{sampling with temperature 1}, respectively. 

\paragraph{Pre-training loss function}
\begin{figure}
    \centering
    \includegraphics[scale=0.15]{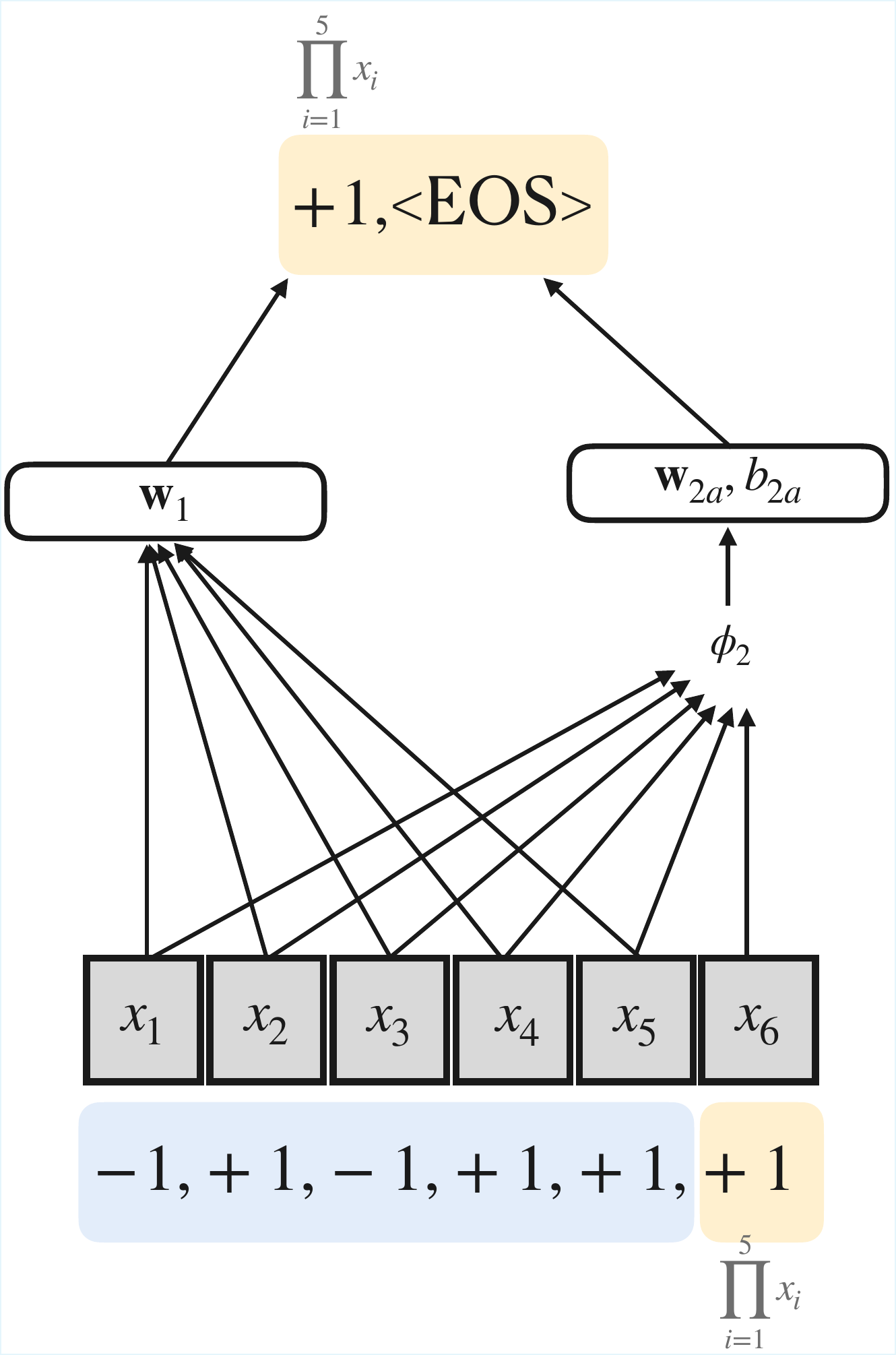} \hfill
    \includegraphics[scale=0.15]{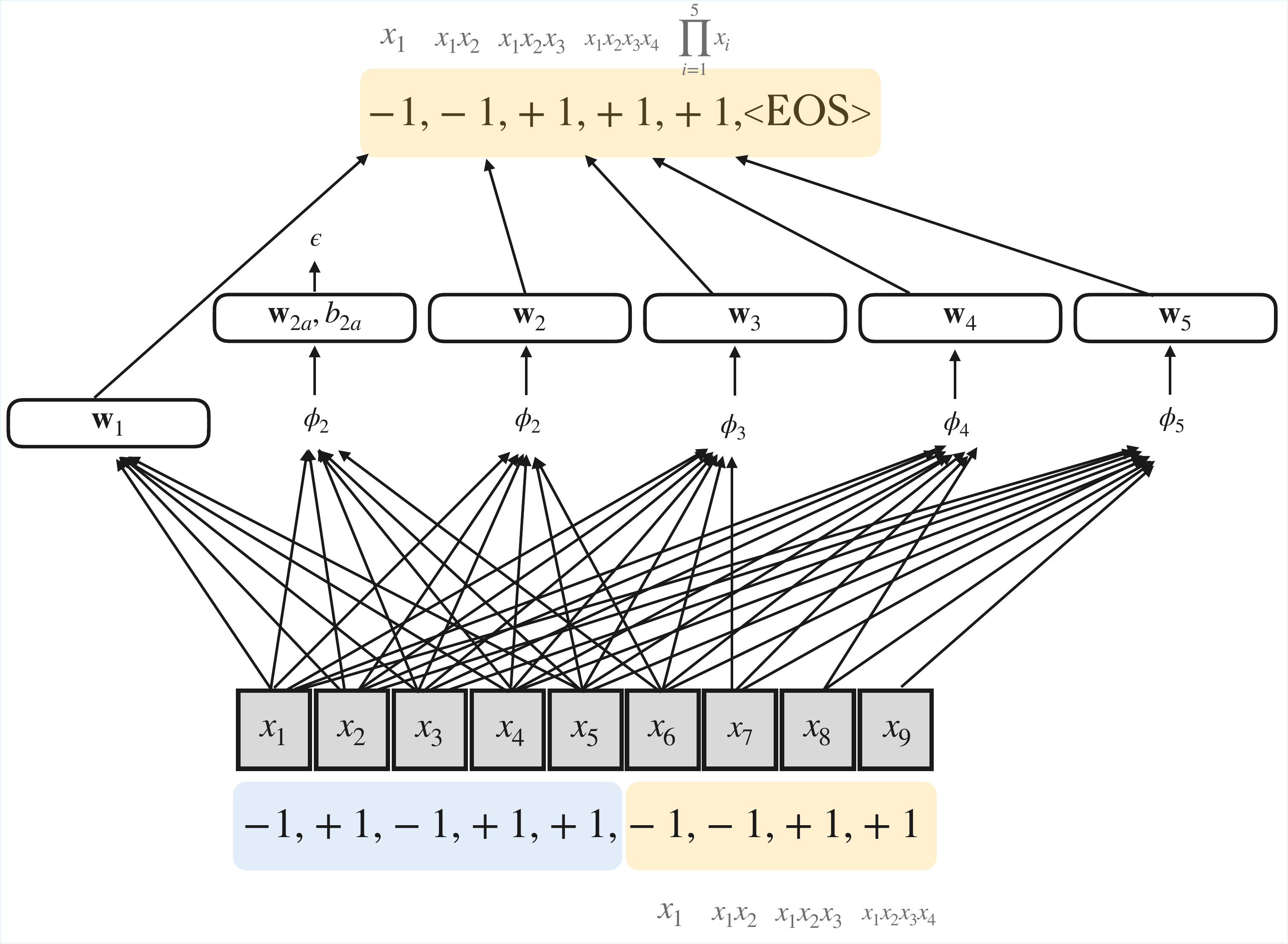}
    \caption{An illustration of next-token prediction training (for $d$=5) with the linear (time-inhomogeneous) autoregressive architecture in the case of a short (left) and long (right) training sequence.}
    \label{fig:lin_autoregressive}
\end{figure}
During pre-training, the loss function is the next token prediction objective, together with an $\ell_2$ regularization term. Let $(\mathbf{x}, y) \sim \mathcal{D}(p_{\mathrm{cot}})$. We define the loss functions corresponding to each linear model: 
\begin{enumerate}
    \item Position 1: $l^{(1)}(\mathbf{w}_1, (\mathbf{x}, y_1)) = \ln \left( 1 + e^{- y  \innerprod{\mathbf{w}_1}{\mathbf{x}}} \right) + \frac{\lambda_1}{2} \|\mathbf{w}_1\|^2,\, \lambda_1 > 0$.
    \item Position 2a: $l^{(2a)}(\left(\mathbf{w}_{2a}, b_{2a}\right), (\left(
        \mathbf{x}, y_1
    \right), \tilde{y}_2)) = \ln \left( 1 + e^{- y \left(  \innerprod{\mathbf{w}_{2a}}{\phi_2\left( \left(
        \mathbf{x}, y_1\right)\right)} + b_{2a} \right)} \right) + \frac{\lambda_{2a}}{2} \left(  \|\mathbf{w}_{2a}\|^2 + b_{2a}^2 \right)$ with $\lambda_{2a} > 0$ and $\tilde{y}_2 = \begin{cases}
        +1, & y_2 = \code{<EOS>}, \\
        -1, & y_2 \in \{ \pm 1\}.
    \end{cases}$.
    \item Positions 2 to $d$: $$l^{(l)}(\mathbf{w}_l, (\left(
        \mathbf{x}, y_1, \ldots, y_{l-1}
    \right), y_l)) = \ln \left( 1 + e^{- y  \innerprod{\mathbf{w}_l}{\phi_l\left(\left(
        \mathbf{x}, y_1, \ldots, y_{l-1}
    \right)\right)}} \right) + \frac{\lambda_l}{2} \|\mathbf{w}_l\|^2,$$ with $\lambda_l > 0$ for all $2 \leq l \leq d$.
\end{enumerate}
We included the regularization term in the definition of the loss functions. Then, we seek to solve the following optimization problem:
\begin{equation}\label{eq:ntp_linear}
    \begin{split}
        \min_{\mathbf{w}_1, \mathbf{w}_{2a}, b, \mathbf{w}_2, \ldots, \mathbf{w}_d} &\mathbb{E}
        _{(\mathbf{x}, y) \sim \mathcal{D}(p_{\mathrm{cot}})} \Bigg[ \mathds{1} \left\{ |y| = 2 \right\} \left( l^{(1)}(\mathbf{w}_1, (\mathbf{x}, y_1)) + l^{(2a)}\left(\{\mathbf{w}_{2a}, b\}, (\left(
            \mathbf{x}, y_1
        \right), \tilde{y}_2)\right) \right) \\
        & + \mathds{1}\left\{ |y| = d+1 \right\} \left( l^{(1)}(\mathbf{w}_1, (\mathbf{x}, y_1)) + \ldots + l^{(d)}(\mathbf{w}_d, (\left(
        \mathbf{x}, y_1, \ldots, y_{d-1}
    \right), y_d))  \right) \Bigg].
    \end{split}
    \tag{LIN-NTP}
\end{equation}
\noindent
Note that the regularization term can have a different coefficient for each parameter vector. Also, observe that Problem~\ref{eq:ntp_linear} corresponds to $d$+1 binary classification problems with respect to the logistic loss.

\paragraph{Pre-training Algorithm} We minimize the previous objective with Stochastic Gradient Descent -- see Algorithm~\ref{alg:autoregressive_SGD}. 

\begin{algorithm}[H]
  \caption{Stochastic Gradient Descent (SGD) for solving Problem~\eqref{eq:ntp_linear} }
  \label{alg:autoregressive_SGD}
  \begin{algorithmic}[1]
    \Require Integers $T, T_1, T_{2a}, T_2, \ldots, T_d > 0$, Real numbers $\lambda_1, \lambda_{2a}, \lambda_2, \ldots, \lambda_d > 0$.
    \State Initialize $\mathbf{w}^{(1)}_1, \mathbf{w}^{(1)}_{2a}, \mathbf{w}^{(1)}_{2}, \ldots, \mathbf{w}^{(1)}_{d} = \mathbf{0}, b_{2a} = 0$
    \State Set $t_{\mathrm{long}} = 0$
    \For{$t = 1, 2, \ldots, T$}
      \State Sample $(\mathbf{x}, y) \sim \mathcal{D}(p_{\mathrm{cot}})$
      \If{$t \leq T_1$}
        \State Set $\eta_t = \frac{1}{\lambda_1 t}$
        \State Set $\mathbf{w}^{(t+1)}_1 = \mathbf{w}^{(t)}_1 - \eta_t \nabla_{\mathbf{w}_1^{(t)}} l^{(1)}\left(\mathbf{w}_1^{(t)}, (\mathbf{x}, y_1) \right)$ 
      \EndIf
      \If{$t \leq T_{2a}$}
        \State Set $\tilde{y}_{2} = \begin{cases}
            +1, & \text{if } \; y_2 = \code{<EOS>}, \\
            -1, & \text{if } \; y_2 \in \{ \pm 1\}
        \end{cases}$
        \State Set $\eta_t = \frac{1}{\lambda_{2a} t}$
        \State Set $\mathbf{w}^{(t+1)}_{2a} = \mathbf{w}^{(t)}_{2a} - \eta_t \nabla_{\mathbf{w}_{2a}^{(t)}} l^{(2a)}\left(\left\{\mathbf{w}_{2a}^{(t)}, b_{2a}^{(t)}\right\}, \left(\begin{bmatrix}
            \mathbf{x}, y_1
        \end{bmatrix}, \tilde{y}_2\right)\right)$ 
        \State Set $b_{2a}^{(t+1)} = b_{2a}^{(t)} - \eta_t \frac{\partial  }{\partial b_{2a}} l^{(2a)}\left(\left\{\mathbf{w}_{2a}^{(t)}, b_{2a}^{(t)}\right\}, \left(\begin{bmatrix}
            \mathbf{x}, y_1
        \end{bmatrix}, \tilde{y}_2\right)\right)$ 
      \EndIf
      \If{$|y| > 2$}
        \State Set $t_{\mathrm{long}} = t_{\mathrm{long}}+1$
        \For{$l = 2, \ldots, d$}
            \If{$t_{\mathrm{long}} \leq T_l$}
                \State Set $\eta_{t_{\mathrm{long}}} = \frac{1}{\lambda_l t_{\mathrm{long}}}$
                            \State Set $\mathbf{w}^{(t_{\mathrm{long}}+1)}_l = \mathbf{w}^{(t_{\mathrm{long}})}_l - \eta_{t_{\mathrm{long}}} \nabla_{\mathbf{w}_l^{(t_{\mathrm{long}})}} l^{(l)}\left(\mathbf{w}_l^{(t_{\mathrm{long}})}, \left(\begin{bmatrix}
                    \mathbf{x}, y_1, \ldots, y_{l-1}
                \end{bmatrix}, y_l\right)\right)$ 
              \EndIf
        \EndFor
      \EndIf
    \EndFor
    \State Output $\bar{\mathbf{w}}_1 = \frac{1}{T_1} \sum_{t=1}^{T_1} \mathbf{w}^{(t)}_1, \bar{\mathbf{w}}_{2a} = \frac{1}{T_{2a}} \sum_{t=1}^{T_{2a}} \mathbf{w}^{(t)}_{2a}, \bar{b}_{2a} = \frac{1}{T_{2a}} \sum_{t=1}^{T_{2a}} b^{(t)}_{2a}, \bar{\mathbf{w}}_2 = \frac{1}{T_2} \sum_{t=1}^{T_2} \mathbf{w}^{(t)}_2, \ldots, \bar{\mathbf{w}}_d = \frac{1}{T_d} \sum_{t=1}^{T_d} \mathbf{w}^{(t)}_d$.
  \end{algorithmic}
\end{algorithm}

\paragraph{Post-training loss function} For post-training, we consider the STaR algorithm~\citep{Zel+22}. Recall that the STaR algorithm involves $n$ reinforcement learning rounds, where each round involves optimization of a next-token prediction loss on model sampled responses (from the model of the previous round) that are correct according to some reward function. We use a reward $r_{\text{cot}}: \mathcal{X} \times \mathcal{Y}$ that assesses whether the whole sequence is valid:
\begin{equation}
    r_{\text{cot}}(\mathbf{x}, y) = 
            \mathds{1} \left\{y  = \left(x_1, x_1 x_2, \ldots, \prod_{i=1}^d x_i, \code{<EOS>} \right) \; \lor \; y = \left( \prod_{i=1}^d x_i, \code{<EOS>} \right) \right\}.
\end{equation}
Namely, at the $k+1$ round, we minimize the following objective:
\begin{equation}\label{eq:star_obj_linear}
    \begin{split}
        \mathbb{E}_{\substack{\mathbf{x} \sim \mathrm{Rad}(1/2)^{\otimes d}, \\ y \sim \pi_{h^{(k)}}{}(\cdot \mid \mathbf{x}) }} \Bigg[ & \mathds{1} \left\{ |y| = 2 \right\} \left( l^{(1)}(\mathbf{w}_1, (\mathbf{x}, y_1)) + l^{(2a)}\left(\{\mathbf{w}_{2a}, b\}, (\begin{bmatrix}
            \mathbf{x}, y_1
        \end{bmatrix}, \tilde{y}_2)\right) \right) \\
        & + \mathds{1}\left\{ |y| = d+1 \right\} \Big( l^{(1)}(\mathbf{w}_1, (\mathbf{x}, y_1)) + l^{(2a)}\left(\{\mathbf{w}_{2a}, b\}, (\begin{bmatrix}
            \mathbf{x}, y_1
        \end{bmatrix}, \tilde{y}_2)\right) + \ldots \\ & \;\;\;\;\;\;\;\;\;\;\;\;\;\;\;\;\;\;\;\;\;\;\;\;\;\;\;\;\;\;  + \ldots + l^{(d)}(\mathbf{w}_d, (\begin{bmatrix}
        \mathbf{x}, y_1, \ldots, y_{d-1}
    \end{bmatrix}, y_d))  \Big) \Bigg | r_{\mathrm{cot}}(\mathbf{x}, y) = 1 \Bigg],
    \end{split}
    \tag{LIN-RL}
\end{equation}
where $h^{(k)}$ is the model returned at the end of the $k$'th round.

\paragraph{Post-training Algorithm} The algorithm consists of $n$ rounds, where at each one we minimize the previous objective with Stochastic Gradient Descent. At each round, we start optimization from a freshly initialized model at the origin.

\subsection{Why is the critical threshold 1/3?}\label{ssec:crit_thr}

Before proceeding with the theoretical analysis of the pre-training and post-training stages, we explain why we should expect the critical threshold of pre-training to equal 1/3. Our argument is in some sense generic -- see Lemma~\ref{lem:critical_threshold} below.

Recall that from the definition of $\mathcal{D}(p_{\mathrm{cot}})$, we have for the distribution of the first token: $y_1 = Z x_1 + (1-Z) \prod_{i = 1}^d x_i,$ where $Z \sim \mathrm{Ber}(p_{\mathrm{cot}})$. We have:
\begin{equation}
    \begin{split}
        \mathbb{P} \left[ y_1 = +1 \mid \mathbf{x} \right] & = p_{\mathrm{cot}} \mathds{1} \left\{ x_1 = +1 \right\} + (1 - p_{\mathrm{cot}}) \mathds{1} \left\{ \prod_{i = 1}^d x_i = +1 \right\} \\
        & = p_{\mathrm{cot}} \frac{x_1 + 1}{2} + (1-p_{\mathrm{cot}}) \frac{\prod_{i = 1}^d x_i +1}{2}.
    \end{split}
\end{equation}

Any model that matches this conditional probability should be able to implement the parity function $\mathbf{x} \mapsto \prod_{i = 1}^d x_i$. In absence of such a parity feature, the conditional probability simplifies to:
\begin{equation}\label{eq:simplified_cond_y1}
    \mathbb{P} \left[ y_1 = +1 \mid \mathbf{x} \right] = p_{\mathrm{cot}} \frac{x_1 + 1}{2} + \frac{1-p_{\mathrm{cot}}}{2} = \frac{p_{\mathrm{cot}} x_1 + 1}{2}.
\end{equation}
In other words, $\mathbb{P} \left[ y_1 = x_1 \mid \mathbf{x} \right] = \frac{p_{\mathrm{cot}} + 1}{2} > 1/2$ for all $p_{cot} > 0$. Hence, for any model that matches this conditional probability, the ``greedy'' decision on the first token will be $x_1$, since $x_1 = \arg \max_{c \in \{ x_1, - x_1 \}} \mathbb{P} \left[ y_1 = c \mid \mathbf{x} \right]$.

For the second position, we have:
\begin{equation}
    \begin{split}
        \mathbb{P} \left[ y_2 = x_1 x_2 \mid \mathbf{x}, y_1 = x_1 \right] & = \frac{\mathbb{P} \left[ y_2 = x_1 x_2 , y_1 = x_1 \mid \mathbf{x} \right]}{\mathbb{P} \left[ y_1 = x_1 \mid \mathbf{x} \right]} = \frac{p_{\mathrm{cot}}}{\frac{p_{\mathrm{cot}} + 1}{2}} = \frac{2p_{\mathrm{cot}}}{p_{\mathrm{cot}} + 1},
    \end{split}
\end{equation}
and also $\mathbb{P} \left[ y_2 = \texttt{<EOS>} \mid \mathbf{x}, y_1 = x_1 \right] = 1 - \mathbb{P} \left[ y_2 = x_1 x_2 \mid \mathbf{x}, y_1 = x_1 \right] = \frac{1-p_{\mathrm{cot}}}{p_{\mathrm{cot}+1}}$. Hence, as long as $1 - p_{\mathrm{cot}} > 2p_{\mathrm{cot}} \iff p_{\mathrm{cot}} < 1/3$, the ``greedy'' generation of a model will output the sequence $\left( x_1, \texttt{<EOS>} \right)$.

We formally explain now why all logistic regression models without a parity feature will match the conditional probability expression of eq.~\eqref{eq:simplified_cond_y1}. The proof uses simple ideas from the Fourier analysis of Boolean functions.

\begin{lemma}\label{lem:critical_threshold}
    Denote the parity functions with support $S \subseteq [d]$ by $\chi_S(\mathbf{x}) = \prod_{i \in S} x_i$. Let $\mathcal{F} = \mathrm{span} \left\{ \chi_S: S \subseteq [d], S \neq [d] \right\}$ be the class of functions spanned by all but the complete parity $\chi_{[d]} = \prod_{i = 1}^d x_i$. Then, if $f^\star = \arg \min_{f \in \mathcal{F}} \mathbb{E}_{\mathbf{x} \sim \mathrm{Rad}(1/2)^{\otimes d}, y \sim Z x_1 + (1-Z) \prod_{i = 1}^d x_i} \left[\log \left( 1 + e^{- y f(\mathbf{x})} \right)\right]$ is the output of logistic regression in class $\mathcal{F}$, then it holds:
    \begin{equation}
        \mathbb{P}_{f^\star} \left[ y = +1 \middle | \mathbf{x} \right] : = \frac{1}{1+e^{-f^\star(\mathbf{x})}} = \frac{p_{\mathrm{cot}}x_1+1}{2}.
    \end{equation}
\end{lemma}

\begin{proof}
    We write any function $f \in \mathcal{F}$ in its Fourier expansion: $f(\mathbf{x}) = \sum_{S \subset [d]} c_{S} \chi_S(\mathbf{x)}$, where $c_{S} = \mathbb{E}_{\mathbf{x} \sim \mathrm{Rad}(1/2)^{\otimes d}} \left[ f(\mathbf{x}) \chi_S(\mathbf{x}) \right]$. Then, the objective becomes:
    \begin{equation}
        \mathcal{L}\left(c_{\emptyset}, \ldots, c_{[d-1]}\right) = \mathbb{E}_{\mathbf{x} \sim \mathrm{Rad}(1/2)^{\otimes d}, y \sim Z x_1 + (1-Z) \prod_{i = 1}^d x_i} \log \left( 1 + e^{- y \sum_{S \subset [d]} c_{S} \chi_S(\mathbf{x)}} \right).
    \end{equation}
    This is a strictly convex objective with respect to the Fourier coefficients. Denote by $\eta(\mathbf{x}) = \mathbb{P} \left[ y_1 = +1 \mid \mathbf{x} \right] = p_{\mathrm{cot}} \frac{x_1 + 1}{2} + (1-p_{\mathrm{cot}}) \frac{\prod_{i = 1}^d x_i +1}{2}$. We calculate the partial derivatives for all $S \subset [d]$:
    \begin{equation}
        \begin{split}
            \frac{\partial L}{\partial c_{S}} & = \mathbb{E}_{\mathbf{x} \sim \mathrm{Rad}(1/2)^{\otimes d}, y \sim Z x_1 + (1-Z) \prod_{i = 1}^d x_i} \left[ \frac{ \chi_S(\mathbf{x}) e^{- y \sum_{S^\prime \subset [d]} c_{S^\prime} \chi_{S^\prime}(\mathbf{x)}}}{1 + e^{- y \sum_{S^\prime \subset [d]} c_{S^\prime} \chi_{S^\prime}(\mathbf{x)}}} \right] \\
            & = \mathbb{E}_{\mathbf{x} \sim \mathrm{Rad}(1/2)^{\otimes d}} \left[ \left( \frac{1}{1+e^{- \sum_{S^\prime \subset [d]} c_{S^\prime} \chi_{S^\prime}(\mathbf{x)}}} - \eta(\mathbf{x}) \right) \chi_S(\mathbf{x}) \right].
        \end{split}
    \end{equation}
    Therefore, at the optimum, we have $2^{d}-1$ orthogonality conditions,
    \begin{equation}
        \widehat{\sigma \circ f^\star} (S) : =   \mathbb{E}_{\mathbf{x} \sim \mathrm{Rad}(1/2)^{\otimes d}} \left[ \frac{1}{1+e^{- \sum_{S^\prime \subset [d]} c_{S^\prime}^\star \chi_{S^\prime}(\mathbf{x)}}} \chi_S(\mathbf{x}) \right] = \mathbb{E}_{\mathbf{x} \sim \mathrm{Rad}(1/2)^{\otimes d}} \left[ \eta(\mathbf{x})  \chi_S(\mathbf{x}) \right],
    \end{equation}
    for all $S \subset [d]$. In particular, if we denote $\sigma(u) = \frac{1}{1+ e^{-u}}$, then: $\widehat{\sigma \circ f^\star} (\emptyset) = \frac{1}{2}, \widehat{\sigma \circ f^\star} (\{1\}) = \frac{p_{\mathrm{cot}}}{2}$ and $\widehat{\sigma \circ f^\star} (S) = 0$ for $S \neq \{\emptyset, \{1\}, [d] \}$. Observe that the full parity coefficient is unconstrained.  Due to symmetry, we can observe the form of the optimal solution $f^\star(\mathbf{x}) = \alpha x_1 + \beta$, then the optimality conditions yield:
    \begin{equation}
        \mathbb{E}_{\mathbf{x} \sim \mathrm{Rad}(1/2)^{\otimes d}} \left[ \frac{1}{1+e^{-\alpha x_1 + \beta}} \right] = \frac{1}{2},
    \end{equation}
    or:
    \begin{equation}
        \sigma(\alpha - \beta) + \sigma (- \alpha -\beta) = 1
    \end{equation}
    This implies that $\beta = 0$. From the second equation:
    \begin{equation}
        \mathbb{E}_{\mathbf{x} \sim \mathrm{Rad}(1/2)^{\otimes d}} \left[ \frac{x_1}{1+e^{-\alpha x_1}} \right] = \frac{p_{\mathrm{cot}}}{2},
    \end{equation}
    which implies $\alpha = \ln \left( \frac{1+p_{\mathrm{cot}}}{1-p_{\mathrm{cot}}} \right)$. This is the only possible solution as the objective is strictly convex. Therefore, we showed that:
    \begin{equation}
        \mathbb{P}_{f^\star} \left[ y = +1 \middle | \mathbf{x} \right] = \frac{1}{1+e^{- \ln \left( \frac{1+p_{\mathrm{cot}}}{1-p_{\mathrm{cot}}} \right)^{x_1}}} = \frac{\left(1+p_{\mathrm{cot}}\right)^{x_1}}{\left(1+p_{\mathrm{cot}}\right)^{x_1} + \left(1-p_{\mathrm{cot}}\right)^{x_1}} = \frac{p_{\mathrm{cot}}x_1+1}{2}.
    \end{equation}
\end{proof}

The intuition is that a ``reasonable'' architecture, such as an auto-regressive transformer, requires exponentially many iterations to form a parity feature (and therefore to ``escape'' $\mathcal{F}$). As a result, a pre-trained transformer will exhibit the same threshold.

\subsection{Pre-training}

We proceed by providing guarantees for each one of the models $\bar{\mathbf{w}}_1, \bar{\thetab}_{2a}, \bar{\mathbf{w}}_2, \bar{\mathbf{w}}_3, \ldots, \bar{\mathbf{w}}_d$ independently, and then state and prove our main pre-training theorem for the behavior of induced models $\hat{h}, \tilde{h}$, leveraging the per-position results. All results are stated for a mixture coefficient $p \in (0, 3/4)$. The upper bound needs to be greater than $1/2$ to obtain the post-training result later on, but the specific 3/4 value is arbitrary.

\subsubsection{Position 1}
For the first position of the output, we consider a binary classification problem where $\mathcal{X}_1 = \{\pm 1\}^d, \mathcal{Y}_1 = \{\pm 1\}$ with a distribution $\mathcal{D}_1(p)$ over $\mathcal{X}_1 \times \mathcal{Y}_1$ such that: $x_1, \ldots, x_{d} \sim \mathrm{Rad}(1/2)$ and $y = Z x_1 + (1-Z) \prod_{i = 1}^d x_i$ where $Z \sim \mathrm{Ber}(p), \, 0 < p < 3/4$.
We consider the hypothesis class $\mathcal{H}_1 = \left\{ \mathbf{x} \mapsto \innerprod{\mathbf{w}}{\mathbf{x}}: \mathbf{w} \in \mathbb{R}^d \right\}$ and the logistic loss plus an additional $\ell_2$ regularization term: $l^{(1)}(\mathbf{w}, (\mathbf{x}, y)) = \ln \left( 1 + e^{- y  \innerprod{\mathbf{w}}{\mathbf{x}}} \right) + \frac{\lambda_1}{2} \|\mathbf{w}\|^2,\, \lambda_1 > 0$. 

We prove the following guarantees on the hypothesis returned by SGD.

\sloppy
\begin{proposition}
    Consider running SGD (Algorithm~\ref{alg:reg_SGD}) for minimizing $L_{\mathcal{D}_1(p), \lambda_1}(\mathbf{w})=\mathbb{E}_{(\mathbf{x}, y) \sim \mathcal{D}_1(p)} \left[ \ln \left( 1 + e^{- y \innerprod{\mathbf{w}}{\mathbf{x}}} \right) \right] + \frac{\lambda_1}{2}\|\mathbf{w}\|^2$ with $\lambda_1 > 0$. Then, after $T_1$ iterations and for any $\delta \in (0, 1)$, with probability at least $1 - \delta$ over the sampled $\left\{(\mathbf{x}_i, y_i)\right\}_{i = 1}^{T_1} \sim \mathcal{D}_1(p)$, it holds for all $\mathbf{x} \in \{ \pm 1 \}^d$:
    \begin{equation}\label{eq:position_1_guarantees}
        \begin{split}    
            \left| \innerprod{\bar{\mathbf{w}}_1 - \ln \left( \frac{1+p}{1-p} \right)\mathbf{e}_1}{\mathbf{x}}  \right| & \leq  \frac{2d}{\lambda_1} \sqrt{\frac{1+\ln T_1}{\delta T_1}}  + \frac{4\ln \left( \frac{1+p}{1-p} \right)}{1-p^2} \lambda_1, \\
            \left| \frac{1}{1+e^{-\innerprod{\bar{\mathbf{w}}_1}{\mathbf{x}}}} - \frac{1}{1+\left( \frac{1-p}{1+p} \right)^{x_1}} \right| & < \frac{d}{2\lambda_1} \sqrt{\frac{ 1 + \ln T_1 }{\delta T_1}} + \frac{\ln \left( \frac{1+p}{1-p} \right)}{1-p^2} \lambda_1 , \\
        \end{split}
    \end{equation}
    where $\bar{\mathbf{w}}_1$ is the output of SGD and $\mathbf{e}_1 = [1, 0, \ldots, 0]^T \in \mathbb{R}^d$. 
\end{proposition}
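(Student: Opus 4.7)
The plan is to compare SGD's output $\bar{\mathbf{w}}_1$ to the unregularized population minimizer
\[
\mathbf{w}^\star = \ln\tfrac{1+p}{1-p}\,\mathbf{e}_1,
\]
via an intermediate quantity, the regularized population minimizer $\mathbf{w}^\star_\lambda$ of $L_{\mathcal{D}_1(p),\lambda_1}$. First I would identify $\mathbf{w}^\star$ by verifying the first-order condition of the convex logistic-loss objective. Under $\mathcal{D}_1(p)$, marginalizing over $Z$ and $x_2,\ldots,x_d$ shows that $yx_1$ is $+1$ with probability $(1+p)/2$ and $-1$ with probability $(1-p)/2$, while $\mathbb{E}[yx_i]=0$ for $i\geq 2$ (since $d\geq 2$). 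Substituting the ansatz $c^\star\mathbf{e}_1$ and reducing the coordinate-$1$ first-order equation to a one-variable logistic problem yields $(1-p)e^{c^\star}=1+p$, hence $c^\star=\ln\frac{1+p}{1-p}$; the coordinate-$i$ equations for $i\geq 2$ can be checked by a direct case analysis using the explicit joint distribution of $(y,x_1,x_i)$ (which, via an even-cardinality sign-flip symmetry in the coordinates $\{2,\ldots,d\}$, makes the relevant expectation vanish).

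Next I would control $\|\mathbf{w}^\star-\mathbf{w}^\star_\lambda\|$ by reducing to one dimension. The same symmetry of $\mathcal{D}_1(p)$ in coordinates $2,\ldots,d$ forces $\mathbf{w}^\star_\lambda=c_\lambda\mathbf{e}_1$ for some $c_\lambda\in(0,c^\star)$. Letting $\ell(c):=L_{\mathcal{D}_1(p)}(c\mathbf{e}_1)$, an explicit computation gives $\ell''(c)=e^c/(1+e^c)^2$, which is decreasing on $[0,\infty)$ and equals $(1-p^2)/4$ at $c^\star$. Subtracting the first-order conditions $\ell'(c^\star)=0$ and $\ell'(c_\lambda)+\lambda_1 c_\lambda=0$ and applying the mean value theorem on the interval $[c_\lambda,c^\star]$ yields
\[
|c^\star-c_\lambda|\leq \frac{\lambda_1 |c_\lambda|}{\min_{\xi\in[c_\lambda,c^\star]}\ell''(\xi)}\leq \frac{4\lambda_1\ln\frac{1+p}{1-p}}{1-p^2}.
\]

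I would then invoke the SGD guarantee. The per-sample logistic loss is $\sqrt{d}$-Lipschitz in $\mathbf{w}$ because $\|\nabla_{\mathbf{w}}l^{(1)}(\mathbf{w};(\mathbf{x},y))\|\leq\|\mathbf{x}\|=\sqrt{d}$. Applying Theorem~\ref{thm:sgd-str-convex} with $\rho=\sqrt{d}$ and converting the excess-loss bound to a weight-distance bound via the $\lambda_1$-strong convexity of $L_{\mathcal{D}_1(p),\lambda_1}$ gives
\[
\|\bar{\mathbf{w}}_1-\mathbf{w}^\star_\lambda\|\leq\frac{2\sqrt{d}}{\lambda_1}\sqrt{\frac{1+\ln T_1}{\delta T_1}}
\]
with probability at least $1-\delta$. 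Cauchy--Schwarz against $\mathbf{x}$ (with $\|\mathbf{x}\|=\sqrt{d}$) combined with the triangle inequality $|\innerprod{\bar{\mathbf{w}}_1-\mathbf{w}^\star}{\mathbf{x}}|\leq|\innerprod{\bar{\mathbf{w}}_1-\mathbf{w}^\star_\lambda}{\mathbf{x}}|+|c_\lambda-c^\star|$ yields the first inequality of \eqref{eq:position_1_guarantees}. The second inequality follows from the first by the $\tfrac14$-Lipschitzness of the sigmoid and the identity $\sigma(c^\star x_1)=1/(1+((1-p)/(1+p))^{x_1})$ for $x_1\in\{\pm 1\}$.

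The main obstacle is establishing the one-dimensional structure, namely that both $\mathbf{w}^\star$ and $\mathbf{w}^\star_\lambda$ lie along $\mathbf{e}_1$; once this symmetry reduction is in hand, the rest reduces to a standard strongly-convex SGD analysis combined with a direct curvature computation for the logistic loss.
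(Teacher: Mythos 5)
Your proposal is correct and follows essentially the same route as the paper: the strongly convex SGD guarantee converted via strong convexity into a weight-space bound to the regularized population minimizer, identification of that minimizer as $c_\lambda \mathbf{e}_1$ with $|c_\lambda - \ln\tfrac{1+p}{1-p}| \leq \tfrac{4\lambda_1 \ln\frac{1+p}{1-p}}{1-p^2}$ via the mean value theorem and the curvature lower bound $\tfrac{1-p^2}{4}$ on $[0, c^\star]$, then Cauchy--Schwarz and sigmoid Lipschitzness, reproducing exactly the stated constants. The only cosmetic differences are that the paper pins down the one-dimensional structure of the regularized minimizer by directly checking the stationarity conditions rather than by the pairwise sign-flip symmetry (which is vacuous at $d=2$, though the direct case analysis you also mention covers that case), and it obtains the calibration inequality directly from the weight distance rather than from the first inequality.
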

\begin{proof}
    We first show that learning $\mathcal{H}_{1}$ with respect to (unregularized loss) $\ln \left( 1 + e^{- y \left( \innerprod{\mathbf{w}}{\mathbf{x}} \right)} \right)$ corresponds to a Convex and Lipschitz learning problem. 
    The loss is convex with respect to its first argument. For the Lipschitz constant, we have for all $\mathbf{x} \in \{\pm1\}^d, y \in \{\pm 1\}$ and $\mathbf{w} \in \mathbb{R}^d$:
    \begin{equation}
        \begin{split}    
        \left\| \nabla_{\mathbf{w}} \ln \left( 1 + e^{-y \innerprod{\mathbf{w}}{\mathbf{x}}} \right) \right\| & = \left\| -\frac{y \mathbf{x}}{1 + e^{y \innerprod{\mathbf{w}}{\mathbf{x}}}} \right\| \leq \sqrt{d}.
        \end{split}
    \end{equation}
    Therefore, applying Theorem~\ref{thm:sgd-str-convex}, we have that SGD after $T_1$ iterations returns a hypothesis $\bar{\mathbf{w}}_1$ such that for any $\delta \in (0, 1)$ with probability at least $1 - \delta$ it holds:
    \begin{equation}
        L_{\mathcal{D}_1(p), \lambda_1}(\bar{\mathbf{w}}_1) \leq L_{\mathcal{D}_1(p), \lambda_1} (\hat{\mathbf{w}}) + \frac{2 d}{\delta \lambda_1 T_1} \left(1+\ln T_1 \right),
    \end{equation}
    where $\hat{\mathbf{w}} = \arg \min_{\mathbf{w} \in \mathcal{H}_1} L_{\mathcal{D}_1(p), \lambda_1} (\mathbf{w})$.
    From the strong convexity of $L_{\mathcal{D}_1(p), \lambda_1}$, this implies:
    \begin{equation}\label{eq:weight_space_position_1}
        \left\| \bar{\mathbf{w}}_1 - \hat{\mathbf{w}} \right\|^2 \leq \frac{2}{\lambda_1} \left( L_{\mathcal{D}_1(p), \lambda_1}(\bar{\mathbf{w}}_1) - L_{\mathcal{D}_1(p), \lambda_1} (\hat{\mathbf{w}}) \right) \leq \frac{4 d}{\lambda_1^2 \delta T_1} \left( 1 + \ln T_1 \right).
    \end{equation}
    The previous bound on the parameter space can be translated to a guarantee on the estimated probability of the output being equal to 1. Recall that for a hypothesis $\mathbf{w}$, this probability is defined as the output of the hypothesis passed through the sigmoid function $\sigma(u) = \frac{1}{1+e^{-u}}$:
    \begin{equation}
        p_{1}(\mathbf{w} \mid \mathbf{x}) = \sigma(\innerprod{\mathbf{w}}{\mathbf{x}}) =  \frac{1}{1+ e^{- \innerprod{\mathbf{w}}{\mathbf{x}}}}.
    \end{equation}
    We calculate the Lipschitz constant of $p_{1}(\mathbf{w} \mid \mathbf{x})$. For any $\mathbf{x} \in \{\pm1\}^d,\mathbf{w} \in \mathbb{R}^d$, we have:
    \begin{equation}\label{eq:prob_lip}
        \begin{split}
            \left\| \nabla_{\mathbf{w}} p_{1}(\mathbf{w} \mid \mathbf{x}) \right\| & = \left\| \frac{\mathbf{x} e^{-\innerprod{\mathbf{w}}{\mathbf{x}}}}{\left(1+ e^{-\innerprod{\mathbf{w}}{\mathbf{x}}}\right)^2} \right\| \leq \frac{\sqrt{d}}{4}.
        \end{split}
    \end{equation}
    Therefore, combining eqs.~\ref{eq:weight_space_position_1},~\ref{eq:prob_lip}, we have:
    \begin{equation}\label{eq:calibration_sgd_opt}
        \left| p_{1}(\bar{\mathbf{w}}_1 \mid \mathbf{x}) - p_{1}(\hat{\mathbf{w}} \mid \mathbf{x}) \right| \leq \frac{\sqrt{d}}{4}  \left\| \bar{\mathbf{w}}_1 - \hat{\mathbf{w}} \right\| \leq \frac{d}{2\lambda_1} \sqrt{\frac{ 1 + \ln T_1 }{\delta T_1}}.
    \end{equation}
    It remains to estimate the value of $p_{1}(\hat{\mathbf{w}} \mid \mathbf{x})$. In order to find the minimizer $\hat{\mathbf{w}} = \arg \min_{\mathbf{w} \in \mathcal{H}_1} L_{\mathcal{D}_1(p), \lambda_1} (\mathbf{w})$, we set the gradient of $L_{\mathcal{D}_1(p), \lambda_1} (\mathbf{w})$ to zero:
    \begin{equation}
        \mathbb{E}_{(\mathbf{x}, y) \sim \mathcal{D}_1(p)} \left[ \frac{- y \mathbf{x}}{1+e^{y \innerprod{\hat{\mathbf{w}}}{\mathbf{x}}}} \right] + \lambda_1 \hat{\mathbf{w}} = 0.
    \end{equation}
    The objective $L_{\mathcal{D}_1(p), \lambda_1} (\mathbf{w})$ is strongly convex, so it admits a unique solution. We observe that this solution is of the form $\hat{\mathbf{w}} = \alpha e_1, \, \alpha \in \mathbb{R}$, where $e_1 = \begin{bmatrix}
        1, 0,
        \ldots, 0
    \end{bmatrix}^T$. Indeed, the optimality conditions become:
    \begin{equation}
        \begin{cases}
            \mathbb{E}_{(\mathbf{x}, y) \sim \mathcal{D}_1(p)} \left[ \frac{- y x_1}{1+e^{\alpha y x_1 }} \right] + \lambda_1 \alpha = 0, \\
            \mathbb{E}_{(\mathbf{x}, y) \sim \mathcal{D}_1(p)} \left[ \frac{- y x_i}{1+e^{\alpha y x_1 }} \right] = 0, \, i = 2, \ldots, d.
        \end{cases}
    \end{equation}
    The last $d-1$ equations are satisfied as $\mathbb{E}_{(\mathbf{x}, y) \sim \mathcal{D}_1(p)} \left[  y x_i \right]=0$ for all $i=2, \ldots, d$. 
    The first equation simplifies to:
    \begin{equation}
        g(\alpha) : = \frac{1}{1+e^{- \alpha}} + \lambda_1 \alpha - \frac{p+1}{2} = 0.
    \end{equation}
    This equation has, indeed, a unique root as $g$ is continuous, $g^\prime (u) = \frac{e^{-u}}{\left(1+e^{-u}\right)^2} + \lambda_1 > 0$ for all $\lambda_1 > 0, \, u \in \mathbb{R}$ and $\lim_{u \to -\infty } g(u) = -\infty, \lim_{u \to +\infty } g(u) = +\infty$. Furthermore, $g(0) = -\frac{p}{2} < 0$, hence $\alpha > 0$. This proves that $\hat{\mathbf{w}} = \alpha e_1$, where $\alpha > 0$ is such that $g(\alpha)=0$. Furthermore, let $\alpha_0 = \ln \left( \frac{1+p}{1-p} \right)$ be the weight of the unregularized solution $\hat{\mathbf{w}}_0$; that is, $\alpha_0$ is the solution of $g(\alpha_0) = 0$ for $\lambda_1 = 0$. We have $g(\alpha) - g(\alpha_0) = -\lambda_1 \alpha < 0$ which implies that $\alpha < \alpha_0$. From the mean value theorem there exists $\xi \in (\alpha, \alpha_0)$ such that:
    \begin{equation}
        \sigma^\prime(\xi) = \frac{\sigma(\alpha_0) - \sigma(\alpha)}{\alpha_0 - \alpha} = \frac{\lambda_1 \alpha}{\alpha_0 - \alpha}.
    \end{equation}
    But, observe that for any $u \in [0, \alpha_0]$, the derivative of the sigmoid is bounded as follows: $\frac{1-p^2}{4} \leq \sigma^\prime(u) \leq \frac{1}{4}$. Therefore, for any $\lambda_1 > 0$, it holds:
    \begin{equation}\label{eq:alphas_distance_pos_1}
        \begin{split}    
            \left| \alpha - \alpha_0 \right| & \leq \frac{4 \alpha}{1-p^2} \lambda_1 
            < \frac{4 \alpha_0}{1-p^2} \lambda_1 = \frac{4 \ln \left( \frac{1+p}{1-p} \right)}{1-p^2} \lambda_1.
        \end{split}
    \end{equation}
    Given the above, we can bound the calibration of $\hat{\mathbf{w}}$. For any $\lambda_1 > 0$ and $\mathbf{x} \in \{ \pm 1 \}^d$ we have:
    \begin{equation}
        \begin{split}
            \left| \frac{1}{1+e^{- \alpha x_1}} - \frac{1}{1+e^{-\alpha_0 x_1}} \right| & \leq \frac{1}{4} \left| \alpha - \alpha_0 \right| \\
            & < \frac{\ln \left( \frac{1+p}{1-p} \right)}{1-p^2} \lambda_1.
        \end{split}
    \end{equation}
    Combining the above with \eqref{eq:calibration_sgd_opt}, we finally obtain that for any $\lambda_1 > 0$ and $\mathbf{x} \in \{\pm 1\}^d$, it holds:
    \begin{equation}
        \begin{split}    
        \left| \frac{1}{1+e^{-\innerprod{\bar{\mathbf{w}}_1}{\mathbf{x}}}} - \frac{1}{1+\left( \frac{1-p}{1+p} \right)^{x_1}} \right| & = \left| p_{1}(\bar{\mathbf{w}}_1 \mid \mathbf{x}) - p_{1}(\hat{\mathbf{w}}_0 \mid \mathbf{x}) \right| \\
        & \leq \left| p_{1}(\bar{\mathbf{w}}_1 \mid \mathbf{x}) - p_{1}(\hat{\mathbf{w}} \mid \mathbf{x}) \right| + \left| p_{1}(\hat{\mathbf{w}} \mid \mathbf{x}) - p_{1}(\hat{\mathbf{w}}_0 \mid \mathbf{x}) \right| \\
        & < \frac{d}{2\lambda_1} \sqrt{\frac{ 1 + \ln T_1 }{\delta T_1}} + \frac{\ln \left( \frac{1+p}{1-p} \right)}{1-p^2} \lambda_1.
        \end{split}
    \end{equation}
    Furthermore, for the optimality of $\bar{\mathbf{w}}_1$ in parameter space, we have:
    \begin{equation}
        \begin{split}
            \left| \innerprod{\bar{\mathbf{w}}_1 - \alpha_0 \mathbf{e}_1}{\mathbf{x}} \right| & = \left| \innerprod{\bar{\mathbf{w}}_1 - \alpha \mathbf{e}_1}{\mathbf{x}} -  \innerprod{\alpha_0 \mathbf{e}_1 - \alpha \mathbf{e}_1}{\mathbf{x}} \right| \\
            &  \leq \sqrt{d} \|\bar{\mathbf{w}}_1 - \hat{\mathbf{w}}\| + \left| \alpha - \alpha_0 \right|.
        \end{split}
    \end{equation}
    From eqs.~\ref{eq:weight_space_position_1} and \ref{eq:alphas_distance_pos_1}, we obtain:
    \begin{equation}
        \left| \innerprod{\bar{\mathbf{w}}_1 - \alpha_0 \mathbf{e}_1}{\mathbf{x}} \right| \leq \frac{2d}{\lambda_1} \sqrt{\frac{1+\ln T_1}{\delta T_1}}  + \frac{4\ln \left( \frac{1+p}{1-p} \right)}{1-p^2} \lambda_1.
    \end{equation}
\end{proof}

The guarantees of \eqref{eq:position_1_guarantees} quantify the calibration of the model returned by SGD and its proximity to the best-in-class model in parameter space. We obtain the following explicit corollary on the calibration and ``hard'' prediction of the model.

\begin{corollary}\label{cor:pos_one}
    Consider running SGD (Algorithm~\ref{alg:reg_SGD}) for minimizing $L_{\mathcal{D}_1(p), \lambda_1}(\mathbf{w})=\mathbb{E}_{(\mathbf{x}, y) \sim \mathcal{D}_1(p)} \left[ \ln \left( 1 + e^{- y \innerprod{\mathbf{w}}{\mathbf{x}}} \right) \right] + \frac{\lambda_1}{2}\|\mathbf{w}\|^2$. Then, for any $\varepsilon > 0$, if $\lambda_1 = \tilde{\Theta}\left(\frac{d^{1/2}}{\delta^{1/4} T_1^{1/4}}\right)$, and after $T_1 = \tilde{O} \left( \frac{d^2}{\delta \varepsilon^4} \right)$ iterations, with probability at least $1 - \delta$ over the sampled $\left\{(\mathbf{x}_i, y_i)\right\}_{i = 1}^{T_1} \sim \mathcal{D}_1(p)$, it holds:
    \begin{equation}
        \begin{split}    
        \left| \frac{1}{1+e^{-\innerprod{\bar{\mathbf{w}}_1}{\mathbf{x}}}} - \frac{1}{1+\left( \frac{1-p}{1+p} \right)^{x_1}} \right| & < \varepsilon, \\
        \end{split}
    \end{equation}
    for all $\mathbf{x} \in \{ \pm 1 \}^d$. If, additionally, $\varepsilon < \ln \left( \frac{1+p}{1-p} \right)$, then with probability $1 - \delta$:
    \begin{equation}
        \mathrm{sgn} \left( \innerprod{\bar{\mathbf{w}}_1}{\mathbf{x}} \right)  = x_1,
    \end{equation}
    for all $\mathbf{x} \in \{ \pm 1 \}^d$.
\end{corollary}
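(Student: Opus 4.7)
My plan is to apply the two inequalities of \eqref{eq:position_1_guarantees} and tune $(\lambda_1,T_1)$ so that their right-hand sides fall below $\varepsilon$, then read off both conclusions from the same setting. A key observation is that the calibration bound in \eqref{eq:position_1_guarantees} is exactly one quarter of the weight-space bound, so it suffices to control the weight-space bound and the calibration bound will be automatically small.

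The main step is to optimize $\lambda_1$ by balancing the two summands of the weight-space bound: writing them as $A/\lambda_1$ with $A=2d\sqrt{(1+\ln T_1)/(\delta T_1)}$ and $B\lambda_1$ with $B=4\ln((1+p)/(1-p))/(1-p^2)$, the minimizer is $\lambda_1=\sqrt{A/B}$, yielding total error $2\sqrt{AB}$. Since $A=\tilde{\Theta}(d(\delta T_1)^{-1/2})$ and $B$ is a $p$-dependent constant, this reproduces the claimed scaling $\lambda_1=\tilde{\Theta}(d^{1/2}\delta^{-1/4}T_1^{-1/4})$ and produces a weight-space error of order $\tilde{O}(d^{1/2}(\delta T_1)^{-1/4})$, absorbing the $p$-dependence into the $\tilde{O}$. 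Forcing this quantity to be $\leq \varepsilon$ and solving for $T_1$ gives $T_1=\tilde{O}(d^2/(\delta\varepsilon^4))$, matching the stated iteration count. Under these settings the weight-space bound is at most $\varepsilon$ with probability $\geq 1-\delta$, and the calibration bound is at most $\varepsilon/4<\varepsilon$, proving the first claim.

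For the greedy claim, I would rewrite the weight-space bound using $\innerprod{\mathbf{e}_1}{\mathbf{x}}=x_1$ to obtain
\[
\bigl|\innerprod{\bar{\mathbf{w}}_1}{\mathbf{x}}-\ln\!\tfrac{1+p}{1-p}\,x_1\bigr|\leq\varepsilon.
\]
The target $\ln((1+p)/(1-p))\,x_1$ has sign $x_1$ and magnitude $\ln((1+p)/(1-p))$, so provided $\varepsilon<\ln((1+p)/(1-p))$ the triangle inequality immediately forces $\innerprod{\bar{\mathbf{w}}_1}{\mathbf{x}}$ to share its sign with $x_1$ for every $\mathbf{x}\in\{\pm 1\}^d$.

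The only mildly delicate point is the self-consistency of the $\sqrt{1+\ln T_1}$ factor when inverting for $T_1$: after substituting the chosen $T_1$ back into the log term, one must verify that the bound still holds up to the $\tilde{O}$ constants, but this only rescales hidden logarithmic factors and is standard. No other obstacle is anticipated.
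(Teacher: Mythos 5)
Your proposal is correct and follows essentially the same route as the paper: invoke the two guarantees of \eqref{eq:position_1_guarantees}, choose $\lambda_1$ to balance the two summands of the (larger) weight-space bound, solve for $T_1$, and deduce the sign claim via the triangle inequality once $\varepsilon < \ln\frac{1+p}{1-p}$. The only cosmetic difference is that the paper justifies the $p$-free choice of $\lambda_1$ by explicitly bounding $\frac{\ln((1+p)/(1-p))}{1-p^2}$ by the universal constant $\frac{16\ln 7}{7}$ on $p\in(0,3/4)$, whereas you absorb this constant informally; since $p<3/4$ in this setting, that is harmless.
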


\begin{proof}
    Recall that from eq.~\eqref{eq:position_1_guarantees}, with probability $1-\delta$ we have:
    \begin{equation}
        \begin{split}    
            \left| \innerprod{\bar{\mathbf{w}}_1 - \ln \left( \frac{1+p}{1-p} \right)\mathbf{e}_1}{\mathbf{x}}  \right| & \leq  \frac{2d}{\lambda_1} \sqrt{\frac{1+\ln T_1}{\delta T_1}}  + \frac{4\ln \left( \frac{1+p}{1-p} \right)}{1-p^2} \lambda_1, \\
            \left| \frac{1}{1+e^{-\innerprod{\bar{\mathbf{w}}_1}{\mathbf{x}}}} - \frac{1}{1+\left( \frac{1-p}{1+p} \right)^{x_1}} \right| & < \frac{d}{2\lambda_1} \sqrt{\frac{ 1 + \ln T_1 }{\delta T_1}} + \frac{\ln \left( \frac{1+p}{1-p} \right)}{1-p^2} \lambda_1. \\
        \end{split}
    \end{equation}
    To instantiate the bounds, first observe that for any $0< p < 3/4$, it holds: $\frac{\ln \left( \frac{1+p}{1-p} \right)}{1-p^2} < \frac{16 \ln 7}{7}$. Furthermore, the first bound is always larger than the second one. We set $\lambda_1 = \sqrt{\frac{d \left( \frac{1+\ln T_1}{T_1} \right)^{1/2}}{\frac{32 \ln 7}{7}}}$ to bound the two terms and solve the following inequality for $T_1$:
    \begin{equation}
        \begin{split}
            d^{1/2} \left( \frac{1+\ln T_1}{\delta T_1} \right)^{1/4} \left( \frac{512 \ln 7}{7} \right)^{1/2} & < \varepsilon \\
            \frac{T_1}{1+\ln T_1} > \frac{512^2}{49} \left( \ln 7 \right)^2 \frac{d^2}{\delta \varepsilon^4}.
        \end{split}
    \end{equation}
    This proves the first part of the claim. For the decision term $\mathrm{sgn}\left( \innerprod{\bar{\mathbf{w}}_1}{\mathbf{x}} \right)$, we have:
    \begin{equation}
        \begin{split}    
            \mathrm{sgn} \left( \innerprod{\bar{\mathbf{w}}_1}{\mathbf{x}} \right) & = \mathrm{sgn} \left( \innerprod{\bar{\mathbf{w}}_1 - \ln \left(\frac{1+p}{1-p} \right) \mathbf{e}_1 }{\mathbf{x}} + \innerprod{\ln \left( \frac{1+p}{1-p} \right) \mathbf{e}_1}{\mathbf{x}} \right) \\
            & = \mathrm{sgn} \left( \innerprod{\bar{\mathbf{w}}_1 - \ln \left(\frac{1+p}{1-p} \right) \mathbf{e}_1 }{\mathbf{x}} + x_1 \ln \left( \frac{1+p}{1-p} \right) \right).
        \end{split}
    \end{equation}
    If $\left|x_1 \ln \left( \frac{1+p}{1-p}\right)\right| > \left| \innerprod{\bar{\mathbf{w}}_1 - \ln \left( \frac{1+p}{1-p} \right)\mathbf{e}_1}{\mathbf{x}}  \right|$, then the following holds: $\mathrm{sgn} \left( \innerprod{\bar{\mathbf{w}}_1}{\mathbf{x}} \right) = \mathrm{sgn} \left( x_1 \ln \left( \frac{1+p}{1-p} \right) \right) = x_1$. For this to happen, we additionally require having $\varepsilon < \ln  \left( \frac{1+p}{1-p} \right)$.
\end{proof}

\subsubsection{Position 2}
\sloppy

\paragraph{Position 2a} We consider a binary classification problem where $\mathcal{X}_{2a} = \{\pm 1\}^{d+1}, \mathcal{Y}_{2a} = \{\pm 1\}$ with a distribution $\mathcal{D}_{2a}(p)$ over $\mathcal{X}_{2a} \times \mathcal{Y}_{2a}$ such that: $x_1, \ldots, x_{d} \sim \mathrm{Rad}(1/2)$ and $(x_{d+1}, y) = Z (x_1, -1) + (1-Z) \left(\prod_{i = 1}^d x_i, +1 \right)$ where $Z \sim \mathrm{Ber}(p), \, 0 < p < 3/4$. The $-1$ label should be interpreted as the empty string $\epsilon$, while the $+1$ output corresponds to the $\code{<EOS>}$ symbol. We consider a non-homogeneous linear hypothesis class $\mathcal{H}_{2a} = \left\{ \mathbf{x} \mapsto \innerprod{\mathbf{w}}{\phi_2(\mathbf{x})} + b: \mathbf{w} \in \mathbb{R}^{2d+1}, b \in \mathbb{R} \right\}$, where
the feature map $\phi_2: \mathbf{x} \mapsto \begin{bmatrix} x_1 & \ldots & x_{d+1} & x_1 x_{d+1} & x_2 x_{d+1} & \ldots & x_{d} x_{d+1}
\end{bmatrix}^T \in \{ \pm 1 \}^{2d+1}$ augments the input with the second degree monomials that involve the inputs bits and the bit from the previous position.
In this way, the linear model is capable of approximating the target via the $x_1 x_{d+1}$ feature which can help predict whether the generation is on the ``long'' or ``short'' path. Note that the dimension of the feature map -- $O(d)$ -- is still polynomial in the input dimension. The bias term is crucial for representing the best-in-class conditional probabilities. We consider learning with the $\ell_2$-regularized logistic loss:
\begin{equation}
    l^{(2a)}(\thetab = \{\mathbf{w}, b\}, (\mathbf{x}, y)) = \ln \left( 1 + e^{- y \left(  \innerprod{\mathbf{w}}{\phi_2(\mathbf{x})} + b \right)} \right) + \frac{\lambda_{2a}}{2} \left(  \|\mathbf{w}\|^2 + b^2 \right),\, \lambda_{2a} > 0.
\end{equation}

\begin{proposition}
    Consider running SGD (Algorithm~\ref{alg:reg_SGD}) for minimizing $L_{\mathcal{D}_{2a}(p), \lambda_{2a}}(\thetab = \{\mathbf{w}, b\})=\mathbb{E}_{(\mathbf{x}, y) \sim \mathcal{D}_{2a}(p)} \left[\ln \left( 1 + e^{- y \left(  \innerprod{\mathbf{w}}{\phi_2(\mathbf{x})} + b \right)} \right) \right] + \frac{\lambda_{2a}}{2} \left(  \|\mathbf{w}\|^2 + b^2 \right),\, \lambda_{2a} > 0$. Then, after $T_{2a}$ iterations and for any $\delta \in (0, 1)$, with probability at least $1 - \delta$ over the sampled $\left\{(\mathbf{x}_i, y_i)\right\}_{i = 1}^{T_{2a}}$, it holds for all $\mathbf{x} \in \{ \pm 1 \}^{d+1}$ with $x_1 x_{d+1} = -1$:
    \begin{equation}\label{eq:pos_2a_guarantees_one}
        \begin{split}
            \frac{1}{1+e^{\innerprod{\bar{\mathbf{w}}_{2a}}{\mathbf{x}} + \bar{b}_{2a}}} & < \frac{2(d+1)}{\lambda_{2a}} \sqrt{\frac{ 1 + \ln T_{2a} }{\delta T_{2a}}} + \sqrt{\frac{\lambda_{2a}}{1-p}},
        \end{split}
    \end{equation}
    while for $\mathbf{x} \in \{ \pm 1 \}^{d+1}$ such that $x_1 x_{d+1} = +1$, it holds:
    \begin{equation}\label{eq:pos_2a_guarantees_two}
        \begin{split}
            \left| \frac{1}{1+e^{-\innerprod{\bar{\mathbf{w}}_{2a}}{\mathbf{x}} - \bar{b}_{2a}}} - \frac{1-p}{1+p} \right| & < \frac{2(d+1)}{\lambda_{2a}} \sqrt{\frac{ 1 + \ln T_{2a}}{\delta T_{2a}}} + \frac{\lambda_{2a} (1+p) \left| \ln \left( \frac{2p}{1-p} \right) \right| }{8p(1-p)}, \\
            \left| \innerprod{\bar{\mathbf{w}}_{2a} - \hat{a}_0 \mathbf{e}_{d+2}}{\phi_2(\mathbf{x})} + \bar{b}_{2a} - \hat{b}_0 \right| & < \frac{8(d+1)}{\lambda_{2a}} \sqrt{\frac{ 1 + \ln T_{2a}}{\delta T_{2a}}} + \frac{\lambda_{2a} (1+p) \left| \ln \left( \frac{2p}{1-p} \right) \right| }{2p(1-p)},
        \end{split}
    \end{equation}
    where $\bar{\mathbf{w}}_{2a}, \bar{b}_{2a}$ is the output of SGD and $\hat{a}_0, \hat{b}_0 \in \mathbb{R}$ such that: $\hat{a}_0 + \hat{b}_0 = \ln \left( \frac{1-p}{2p} \right)$.
\end{proposition}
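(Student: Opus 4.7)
The plan is to follow the same template as the Position 1 proposition but handle a harder structural feature: the conditional distribution of $y$ given $\mathbf{x}$ depends only on $B := x_1 x_{d+1}$, which is itself a coordinate of $\phi_2(\mathbf{x})$ (namely coordinate $d+2$), and the unregularized Bayes logit is $+\infty$ on the event $\{B=-1\}$. So the regularized optimum must be analyzed as a two-parameter object (weight on the $x_1 x_{d+1}$ feature, plus bias) and distance to the unregularized optimum only makes sense through the sum $\hat a_0 + \hat b_0 = \ln\frac{1-p}{2p}$.

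First I would verify that on $\mathcal{H}_{2a}$ the unregularized logistic loss is convex and $\rho$-Lipschitz in $\thetab=(\mathbf{w},b)$ with $\rho=\sqrt{2d+2}$, since $\|(\phi_2(\mathbf{x}),1)\|=\sqrt{2d+2}$. Applying Theorem~\ref{thm:sgd-str-convex} and using $\lambda_{2a}$-strong convexity of $L_{\mathcal{D}_{2a}(p),\lambda_{2a}}$ (afforded by the regularizer) gives the parameter-space bound $\|\bar\thetab-\hat\thetab\|^{2}\leq\tilde O\!\bigl(\frac{d+1}{\delta\lambda_{2a}^{2}T_{2a}}\bigr)$, from which Cauchy–Schwarz and the $1/4$-Lipschitzness of the sigmoid yield order $\frac{d+1}{\lambda_{2a}}\sqrt{(1+\ln T_{2a})/(\delta T_{2a})}$ for both the logit difference and the probability difference between $\bar\thetab$ and $\hat\thetab$. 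This matches (up to constants) the first term in both \eqref{eq:pos_2a_guarantees_one} and \eqref{eq:pos_2a_guarantees_two}.

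Next I would characterize $\hat\thetab$. The symmetry of $\mathcal{D}_{2a}(p)$ implies $\mathbb{E}[\phi_{2,i}(\mathbf{x})\mid y,B]=0$ for every coordinate $i\neq d+2$: under $Z=0$, $x_1$ is uniform and independent of everything else, so every monomial containing $x_1$ (including the cross-features $x_j x_{d+1}=x_j x_1 B$ for $j\geq 2$) averages out conditional on $(y,B)$; a similar argument handles the remaining coordinates under $Z=1$. Substituting the ansatz $\hat{\mathbf{w}}=\hat a\,\mathbf{e}_{d+2}$ into the first-order optimality conditions therefore makes the off-axis gradients vanish, and strong convexity guarantees uniqueness. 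Introducing the scalar variables $u=\hat a+\hat b$ (the logit on $B=+1$) and $v=\hat b-\hat a$ (the logit on $B=-1$), summing and differencing the two optimality equations decouples them into
\begin{equation*}
(1+p)\sigma(u)+\lambda_{2a}u=1-p,\qquad(1-p)\sigma(-v)=\lambda_{2a}v.
\end{equation*}

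For \eqref{eq:pos_2a_guarantees_two} ($B=+1$ case), let $u_0=\ln\frac{1-p}{2p}$, and note $f(u):=(1+p)\sigma(u)+\lambda_{2a}u-(1-p)$ satisfies $f(u_0)=\lambda_{2a}u_0$. The mean value theorem gives $|\hat u-u_0|\leq |f(u_0)|/\min_\xi f'(\xi)$, and on the interval between $u_0$ and $\hat u$ we have $f'(\xi)\geq(1+p)\sigma'(\xi)\geq(1+p)\cdot\tfrac{2p(1-p)}{(1+p)^{2}}=\tfrac{2p(1-p)}{1+p}$ (this is $\sigma'(u_0)$, the minimum on the relevant range). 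This yields $|\hat u-u_0|\lesssim\frac{\lambda_{2a}(1+p)|u_0|}{p(1-p)}$, which after using Cauchy–Schwarz on the weight gap and $\phi_2$ plus the $1/4$-Lipschitzness of $\sigma$ gives both lines of \eqref{eq:pos_2a_guarantees_two}. For \eqref{eq:pos_2a_guarantees_one} ($B=-1$ case), I would test the scalar equation $g(v)=\sigma(-v)-\lambda_{2a}v/(1-p)$ at $v_\star=\sqrt{(1-p)/\lambda_{2a}}$: there $\sigma(-v_\star)\leq e^{-v_\star}$ is super-polynomially small in $1/\lambda_{2a}$ while $\lambda_{2a}v_\star/(1-p)=\sqrt{\lambda_{2a}/(1-p)}$, so $g(v_\star)<0$; since $g$ is strictly decreasing, the unique root $\hat v$ satisfies $\hat v\leq v_\star$, hence $\sigma(-\hat v)=\lambda_{2a}\hat v/(1-p)\leq\sqrt{\lambda_{2a}/(1-p)}$. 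Combining with the SGD-to-optimum probability bound delivers \eqref{eq:pos_2a_guarantees_one}.

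The main obstacle I expect is the proof that $\hat{\mathbf{w}}$ concentrates on the $x_1 x_{d+1}$ coordinate; this requires careful conditional-expectation calculations because $x_{d+1}$ is correlated with the other inputs in a $Z$-dependent way (deterministically equal to $x_1$ when $Z=1$ and equal to $\prod_{i=1}^{d}x_i$ when $Z=0$). The cleanest route is to condition on $(y,B)$ rather than on $(Z,\mathbf{x}_{1:d})$ directly, use that each individual $x_i$ remains uniform conditional on $(y,B)$ (by the symmetries just described), and then read off that every non-$e_{d+2}$ feature has zero conditional expectation regardless of the value of the logit inside the sigmoid factor.
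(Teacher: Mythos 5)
Your proposal follows essentially the same route as the paper's proof: the SGD guarantee for strongly convex objectives (Theorem~\ref{thm:sgd-str-convex}) plus $\lambda_{2a}$-strong convexity for the estimation term, the characterization of the regularized minimizer as $\hat a\,\mathbf{e}_{d+2}$ plus a bias, the decoupling into two scalar stationarity equations (your $(1+p)\sigma(u)+\lambda_{2a}u=1-p$ and $(1-p)\sigma(-v)=\lambda_{2a}v$ are exactly the paper's equations after substituting $u=\hat a+\hat b$, $v=\hat b-\hat a$), a mean-value-theorem bound with $\sigma'(\xi)\geq \sigma'(u_0)=\frac{2p(1-p)}{(1+p)^2}$ on the $x_1x_{d+1}=+1$ branch, and a test-point argument at $v_\star=\sqrt{(1-p)/\lambda_{2a}}$ on the $x_1x_{d+1}=-1$ branch, yielding the same three bounds. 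The only cosmetic difference is that you treat the sign of $u_0$ uniformly where the paper splits into the cases $p<1/3$, $p=1/3$, $p>1/3$, which is a harmless streamlining.
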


\begin{proof}
    We first show that learning $\mathcal{H}_{2a}$ with respect to the unregularized loss $\ln \left( 1 + e^{- y \left(  \innerprod{\mathbf{w}}{\phi_2(\mathbf{x})} + b \right)} \right)$ corresponds to a Convex and Lipschitz learning problem. We will abuse notation and write $\thetab \in \mathcal{H}_{2a}$ for $\thetab=\left(\mathbf{w}, b \right) $, where $\left(\mathbf{w}, b \right) \in \mathcal{H}_{2a}$.
    The loss is convex with respect to $\thetab$. For the Lipschitz constant, we bound the gradient with respect to $\thetab$. For all $\mathbf{x} \in \{\pm1\}^{d+1}, \, y \in \{\pm 1\}$, we have:
    \begin{equation}
        \begin{split}
            \nabla_{\mathbf{w}} l^{(2a)}\left(\thetab, (\mathbf{x}, y)\right) & = -\frac{y \phi_{2}\left(\mathbf{x}\right)}{1 + e^{y \left( \innerprod{\mathbf{w}}{\phi_2(\mathbf{x})} + b \right)}},
        \end{split}
    \end{equation}
    and for the bias term:
    \begin{equation}
        \begin{split}
            \frac{\partial }{\partial b} l^{(2a)}\left(\thetab, (\mathbf{x}, y)\right) & = -\frac{y }{1 + e^{y \left( \innerprod{\mathbf{w}}{\phi_2(\mathbf{x})} + b \right)}}.
        \end{split}
    \end{equation}
    As a result, we have for the Lipschitz constant:
    \begin{equation}
        \begin{split}    
            \left\| \nabla_{\thetab} l^{(2a)}\left(\thetab, (\mathbf{x}, y)\right) \right\|^2 & = \left\| -\frac{y \phi_{2}\left(\mathbf{x}\right)}{1 + e^{y \left( \innerprod{\mathbf{w}}{\phi_2(\mathbf{x})} + b \right)}} \right\|^2 + \left( -\frac{y }{1 + e^{y \left( \innerprod{\mathbf{w}}{\phi_2(\mathbf{x})} + b \right)}} \right)^2 \leq \left(2d+1\right) + 1
        \end{split}
    \end{equation}
    which implies that the function is $\sqrt{2d+2}$-Lipschitz with respect to $\thetab$.
    Therefore, applying Theorem~\ref{thm:sgd-str-convex}, we have that SGD after $T_{2a}$ iterations returns a hypothesis $\bar{\thetab}_{2a}$ such that for any $\delta \in (0, 1)$ with probability at least $1 - \delta$ it holds:
    \begin{equation}
        L_{\mathcal{D}_{2a}(p), \lambda_{2a}}(\bar{\thetab}_{2a}) \leq L_{\mathcal{D}_{2a}(p), \lambda_{2a}} (\hat{\thetab}) + \frac{4(d+1)}{\lambda_{2a} \delta T_{2a}} \left(1+\ln T_{2a} \right),
    \end{equation}
    where $\hat{\thetab} = \arg \min_{\thetab \in \mathcal{H}_{2a}} L_{\mathcal{D}_{2a}(p), \lambda_{2a}} (\thetab)$.
    From the strong convexity of $L_{\mathcal{D}_{2a}(p), \lambda_{2a}}$, this implies:
    \begin{equation}\label{eq:weight_space_position_2}
        \left\| \bar{\thetab}_{2a} - \hat{\thetab} \right\|^2 \leq \frac{2}{\lambda_{2a}} \left( L_{\mathcal{D}_{2a}(p), \lambda_{2a}}(\bar{\thetab}_{2a}) - L_{\mathcal{D}_{2a}(p), \lambda_{2a}} (\hat{\thetab}) \right) \leq \frac{8 (d+1)}{\lambda_{2a}^2 \delta T_{2a}} \left(1+\ln T_{2a} \right).
    \end{equation}
    We characterize now the loss minimizer $\hat{\thetab} = \left\{ \hat{\mathbf{w}}, \hat{b} \right\}$ by setting the gradient of $L_{\mathcal{D}_{2a}(p), \lambda_{2a}}$ to zero:
    \begin{equation}
        \begin{cases}
            \mathbb{E}_{(\mathbf{x}, y) \sim \mathcal{D}_{2a}(p)} \left[ \frac{- y \phi_2\left(\mathbf{x}\right)}{1 + e^{y \left( \innerprod{\mathbf{w}}{\phi_2(\mathbf{x})} + b \right)}} \right] + \lambda_{2a} \hat{\mathbf{w}} = 0, \\
            \mathbb{E}_{(\mathbf{x}, y) \sim \mathcal{D}_{2a}(p)} \left[ - \frac{y}{1 + e^{y \left( \innerprod{\mathbf{w}}{\phi_2(\mathbf{x})} + b \right)}} \right] + \lambda_{2a} \hat{b} = 0.
        \end{cases}
    \end{equation}
    The objective $L_{\mathcal{D}_{2a}(p), \lambda_{2a}} (\thetab)$ is strongly convex, so it admits a unique solution. We observe that this solution is of the form $\hat{\mathbf{w}} = \hat{a} e_{d+2}, \, \hat{a} \in \mathbb{R}$, where $e_{d+2} = \begin{bmatrix}
        0,
        \ldots, 0, 1, 0, \ldots, 0
    \end{bmatrix}^T$ and $\hat{b} \in \mathbb{R}$. Indeed, the optimality conditions yield:
    \begin{equation}\label{eq:system_of_equations_2a}
        \begin{cases}    
            \frac{1}{1+e^{-(\hat{a}-\hat{b})}} + \frac{\lambda_{2a}}{1-p} \left( \hat{a} - \hat{b} \right) = 0, \\
            \frac{2p}{1+e^{-(\hat{a}+\hat{b})}} + \frac{p-1}{1+e^{\hat{a}+\hat{b}}} + \lambda_{2a} \left( \hat{a} + \hat{b} \right) = 0.
        \end{cases}
    \end{equation}
    The functions $g_1(u) = \frac{1}{1+e^{-u}} + \lambda u, \, \lambda > 0$ and $g_2(u) = \frac{2p}{1+e^{-u}} + \frac{p-1}{1+e^{u}} + \lambda u, \, \lambda > 0$ are continuous, monotonically increasing and $\lim_{u \to -\infty} g_1(u) = \lim_{u \to -\infty} g_2(u) = - \infty, \lim_{u \to +\infty} g_1(u) = \lim_{u \to +\infty} g_2(u) = + \infty$, hence they have unique roots. This proves that, indeed, $\hat{\mathbf{w}} = \hat{a} e_{d+2}$, where $\hat{a}, \hat{b}$ are such that $g_1(\hat{a}-\hat{b})=0$ and $g_2(\hat{a}+\hat{b})=0$. We will now bound the error in the predicted probabilities, depending on what region of the distribution the input lies.
    We consider the following cases:
    \begin{itemize}
        \item If $x_1 x_{d+1} = -1$, then we have:
        \begin{equation}
            \begin{split}
                p_{-1} \left( \bar{\thetab}_{2a} \Big| x_1 x_{d+1} = -1 \right) & = \frac{1}{1+e^{\innerprod{\bar{\mathbf{w}}_{2a}}{\phi_2(\mathbf{x})}+\bar{b}_{2a}}} \\
                & = \frac{1}{1+e^{\innerprod{\bar{\mathbf{w}}_{2a}}{\phi_2(\mathbf{x})}+\bar{b}_{2a}}} - \frac{1}{1+e^{\innerprod{\hat{\mathbf{w}}}{\phi_2(\mathbf{x})} + \hat{b}}} + \frac{1}{1+e^{\innerprod{\hat{\mathbf{w}}}{\phi_2(\mathbf{x})} + \hat{b}}} \\
                & < \frac{\sqrt{2(d+1)}}{4} \left\| \bar{\thetab}_{2a} - \hat{\thetab} \right\| + \frac{1}{1+e^{\innerprod{\hat{\mathbf{w}}}{\phi_2(\mathbf{x})} + \hat{b}}} \\
                & \leq \frac{2(d+1)}{\lambda_{2a}} \sqrt{\frac{ 1 + \ln T_{2a}}{\delta T_{2a}}} + \frac{1}{1+e^{\innerprod{\hat{\mathbf{w}}}{\phi_2(\mathbf{x})} + \hat{b}}},
                \end{split}
        \end{equation}
        where we used \eqref{eq:weight_space_position_2} and the Lipschitzness of the logistic function. For the bias error term, we have:
        \begin{equation}
            \begin{split}    
                \frac{1}{1+e^{\innerprod{\hat{\mathbf{w}}}{\phi_2(\mathbf{x})} + \hat{b}}} & = \frac{1}{1+e^{\hat{a} x_1 x_{d+1} + \hat{b}}} \\
                & = \frac{1}{1+e^{- \left( \hat{a} - \hat{b} \right)}} \\
                & = - \frac{\lambda_{2a}}{1-p} \left( \hat{a} - \hat{b} \right) \;\;\;\;\;\;\;\; \text{(from \eqref{eq:system_of_equations_2a})} \\
                & < \sqrt{\frac{\lambda_{2a}}{1-p}},
            \end{split}
        \end{equation}
        as the solution $\hat{a} - \hat{b}$ of the $g_1(u) = 0$ equation lies in $\left(- \sqrt{\frac{1-p}{\lambda_{2a}}}, 0\right)$.
        \item If $x_1 x_{d+1} = 1$, then denote by $\hat{a}_0 + \hat{b}_0$ the solution of the unregularized problem $g_2(\hat{a}_0 + \hat{b}_0) = 0$ for $\lambda = 0$ or, equivalently from \eqref{eq:system_of_equations_2a}, $\hat{a}_0 + \hat{b}_0 = \ln\left( \frac{1-p}{2p} \right)$. We bound the distance between the model-induced probabilities and the best in-class probabilities:
        \begin{equation}\label{eq:intermediate_pos_2a}
            \begin{split}    
                & \left| p_{1} \left( \bar{\thetab}_{2a} \Big| x_1 x_{d+1} = +1 \right) - p_{1} \left( \hat{\thetab}_0 \Big| x_1 x_{d+1} = +1 \right) \right| \\ & \;\;\; \leq \left| p_{1} \left( \bar{\thetab}_{2a} \Big| x_1 x_{d+1} = +1 \right) - p_{1} \left( \hat{\thetab} \Big| x_1 x_{d+1} = +1 \right) \right| \\
                & \;\;\;\;\;\;\;\;\;\;\;\;\;\;\;\;\;\;\;\;\;\;\;\;\;\;\;\;\;\;   + \left| p_{1} \left( \hat{\thetab} \Big| x_1 x_{d+1} = +1 \right) - p_{1} \left( \hat{\thetab}_0 \Big| x_1 x_{d+1} = +1 \right) \right| \\
                & \;\;\; = \frac{\sqrt{2 (d+1)}}{4} \left\| \bar{\thetab}_{2a} - \hat{\thetab} \right\| + \frac{1}{4} \left| \left(\hat{a} + \hat{b}\right) - \left(\hat{a}_0 + \hat{b}_0\right) \right|.
            \end{split}
        \end{equation}
        It remains to show that the distance $\left| \left(\hat{a} + \hat{b}\right) - \left(\hat{a}_0 + \hat{b}_0\right) \right|$ is bounded. Let us denote the unregularized part of $g_2$ as $\bar{g}$, that is $\bar{g}(u) = \frac{2p}{1+e^{-u}} + \frac{p-1}{1+e^u}$. First, observe that $g_2(0) = \frac{3p-1}{2}$, which means that the sign of $\hat{a} + \hat{b}$ depends on the value of $p$. It will be easier to treat these three subcases separately:
        \begin{itemize}
            \item If $p < 1/3$, then $g_2(0) < 0$ and $g_2(\hat{a}_0 + \hat{b}_0) = \lambda \left( \hat{a}_0 + \hat{b}_0 \right) > 0$, so it holds $0 < \hat{a} + \hat{b} < \hat{a}_0 + \hat{b}_0$. From the mean value theorem, there exists $\xi \in \left( \hat{a} + \hat{b}, \hat{a}_0 + \hat{b}_0 \right)$ such that:
            \begin{equation}
                \bar{g}^\prime (\xi) = \frac{\lambda_{2a} \left( \hat{a} + \hat{b} \right)}{ \left(\hat{a}_0 + \hat{b}_0\right) - \left(\hat{a} + \hat{b}\right)}.
            \end{equation}
            But, $\bar{g}^\prime(u) = \frac{\left( p+1 \right) e^u}{\left( 1 + e^u \right)^2}$ and, in particular, $\bar{g}^\prime\left(\hat{a}_0 + \hat{b}_0\right) = \bar{g}^\prime \left( \ln\left( \frac{1-p}{2p} \right) \right) = \frac{2p(1-p)}{1+p}$. This implies that for any $u \in \left(0, \ln\left( \frac{1-p}{2p} \right) \right)$, it holds: $\frac{2p(1-p)}{1+p} \leq \bar{g}^\prime(u) \leq \frac{1+p}{4}$. Thus,
            \begin{equation}
                \begin{split}    
                    \left(\hat{a}_0 + \hat{b}_0\right) - \left(\hat{a} + \hat{b}\right) & = \frac{\lambda_{2a} \left( \hat{a} + \hat{b} \right)}{\bar{g}^\prime (\xi)} \\
                    & < \frac{\lambda_{2a} (1+p) \left( \hat{a}_0 + \hat{b}_0 \right) }{2p(1-p)} \\
                    & = \frac{\lambda_{2a} (1+p)  \ln \left( \frac{1-p}{2p} \right) }{2p(1-p)}.
                \end{split}
            \end{equation}
            \item If $p = 1/3$, then $\hat{a} + \hat{b} = 0$ as $g_2(0) = 0$ and, also, $\hat{a}_0 + \hat{b}_0 = \ln \left( \frac{1-p}{2p} \right) = 0$. Hence, the bias error term is 0.
            \item If $p > 1/3$, then $g_2(0) > 0, g_2(\hat{a}_0 + \hat{b}_0) = \lambda_{2a} \left( \hat{a}_0 + \hat{b}_0 \right) < 0$, so $\hat{a}_0 + \hat{b}_0 < \hat{a} + \hat{b} < 0$. From the mean value theorem, there exists $\xi \in \left( \hat{a}_0 + \hat{b}_0, \hat{a} + \hat{b} \right)$ such that:
            \begin{equation}
                \bar{g}^\prime(\xi) = -\frac{\lambda_{2a} \left( \hat{a} + \hat{b} \right)}{ \left(\hat{a} + \hat{b}\right) - \left(\hat{a}_0 + \hat{b}_0\right)} < -\frac{\lambda_{2a} \left( \hat{a}_0 + \hat{b}_0 \right)}{ \left(\hat{a} + \hat{b}\right) - \left(\hat{a}_0 + \hat{b}_0\right)},
            \end{equation}
            which, further implies:
            \begin{equation}
                \left| \left(\hat{a} + \hat{b}\right) - \left(\hat{a}_0 + \hat{b}_0\right) \right| < \frac{\lambda_{2a} (1+p) \left( \ln \left( \frac{2p}{1-p} \right) \right) }{2p(1-p)},
            \end{equation}
            from the same bound on $\bar{g}^\prime$ as before.
        \end{itemize}
        Therefore, we showed that for any $0< p < 3/4$, we have:
        \begin{equation}\label{eq:sum_upper_bound_pos_2}
            \left| \left(\hat{a} + \hat{b}\right) - \left(\hat{a}_0 + \hat{b}_0\right) \right| < \frac{\lambda_{2a} (1+p) \left| \ln \left( \frac{2p}{1-p} \right) \right| }{2p(1-p)}.
        \end{equation}
        Combining this guarantee with the bound of \eqref{eq:intermediate_pos_2a} and the parameter space guarantee of ~\eqref{eq:weight_space_position_2}, we finally have:
        \begin{equation}
            \left| p_{1} \left( \bar{\thetab}_{2a} \Big| x_1 x_{d+1} = +1 \right) - \frac{1-p}{1+p} \right| < \frac{2(d+1)}{\lambda_{2a}} \sqrt{\frac{ 1 + \ln T_{2a}}{\delta T_{2a}}} + \frac{\lambda_{2a} (1+p) \left| \ln \left( \frac{2p}{1-p} \right) \right| }{8p(1-p)}.
        \end{equation}
    \end{itemize}
    Furthermore, for the optimality of $\bar{\thetab}_{2a}$ in parameter space, we have:
    \begin{equation}
        \begin{split}
            & \left| \innerprod{\bar{\mathbf{w}}_{2a} - \hat{a}_0 \mathbf{e}_{d+2}}{\phi_2(\mathbf{x})} + \bar{b}_{2a} - \hat{b}_0 \right| \\ & \;\;\;\;\;\;\;\; = \left| \innerprod{\bar{\mathbf{w}}_{2a} - \hat{a} \mathbf{e}_{d+2}}{\phi_2(\mathbf{x})} + \bar{b} - \hat{b} -  \left(\innerprod{\hat{a}_0 \mathbf{e}_{d+2} - \hat{a} \mathbf{e}_{d+2}}{\phi_2(\mathbf{x})} + \hat{b}_0 - \hat{b} \right)\right| \\
            & \;\;\;\;\;\;\;\; \leq \sqrt{2\left(d+1 \right)} \|\bar{\thetab}_{2a} - \hat{\thetab}\| + \left| \left(\hat{a}_0 + \hat{b}_0 \right) - \left( \hat{a} + \hat{b} \right) \right| \\
            & \;\;\;\;\;\;\;\; < \frac{8(d+1)}{\lambda_{2a}} \sqrt{\frac{ 1 + \ln T_{2a}}{\delta T_{2a}}} + \frac{\lambda_{2a} (1+p) \left| \ln \left( \frac{2p}{1-p} \right) \right| }{2p(1-p)}.
        \end{split}
    \end{equation}
    where the last inequality follows from eqs.~\ref{eq:sum_upper_bound_pos_2},~\ref{eq:weight_space_position_2}.
\end{proof}

As before, we obtain the following explicit corollary on the calibration and ``hard'' prediction of the second linear model.

\begin{corollary}\label{cor:pos_two}
    Consider running SGD (Algorithm~\ref{alg:reg_SGD}) for minimizing $L_{\mathcal{D}_{2a}(p), \lambda_{2a}}(\mathbf{w})=\mathbb{E}_{(\mathbf{x}, y) \sim \mathcal{D}_{2a}(p)} \left[ \ln \left( 1 + e^{- y \left( \innerprod{\mathbf{w}}{\phi_2(\mathbf{x})} + b\right)} \right) \right] + \frac{\lambda_{2a}}{2} \left( \|\mathbf{w}\|^2 + b^2 \right)$. Then, for any $\varepsilon > 0$, after $T_{2a} = \tilde{O} \left( \max\left\{\frac{d^{2}}{\delta \varepsilon^6},  \frac{d^{2}}{p^2\delta \varepsilon^4}  \right\} \right)$ iterations with $\lambda_{2a} = \tilde{\Theta}\left( \min\left\{ \frac{d^{\frac{1}{2}}p^{\frac{1}{2}} }{\delta^{1/4} T_{2a}^{1/4}}, \frac{d^{2/3}}{\delta^{1/3}T_{2a}^{1/3}} \right\} \right)$, with probability at least $1 - \delta$ over the sampled $\left\{(\mathbf{x}_i, y_i)\right\}_{i = 1}^{T_{2a}} \sim \mathcal{D}_{2a}(p)$, it holds:
    \begin{equation}
        \begin{split}    
            \left| \frac{1}{1+e^{-\innerprod{\bar{\mathbf{w}}_{2a}}{\mathbf{x}} - \bar{b}_{2a}}} - \frac{1-p}{1+p} \right| & < \varepsilon, \, \forall \; \mathbf{x} \in \{ \pm 1 \}^{d+1} \text{ s.t. } x_1 x_{d+1} = +1, \\
            \frac{1}{1+e^{\innerprod{\bar{\mathbf{w}}_{2a}}{\mathbf{x}} + \bar{b}_{2a}}} & < \varepsilon, \forall \; \mathbf{x} \in \{ \pm 1 \}^{d+1} \text{ s.t. } x_1 x_{d+1} = -1.
        \end{split}
    \end{equation}
    If additionally $\varepsilon < \left| \ln \left( \frac{1-p}{2p} \right) \right|$, then with probability at least $1 - \delta$ we have:
    \begin{equation}
        \mathrm{sgn} \left( \innerprod{\bar{\mathbf{w}}_{2a}}{\phi_2(\mathbf{x})} + \bar{b}_{2a} \right)  = \mathrm{sgn} \left( \ln \left( \frac{1-p}{2p} \right) \right) = \begin{cases}
            -1, \, \text{ if } p\geq1/3, \\
            +1, \, \text{ if } p<1/3,
        \end{cases}
    \end{equation}
    for all $\mathbf{x} \in \{ \pm 1 \}^{d+1}$ with $x_{1} x_{d+1} = 1$.
\end{corollary}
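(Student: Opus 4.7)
My plan is to derive Corollary~\ref{cor:pos_two} directly from the preceding Proposition by choosing $\lambda_{2a}$ and $T_{2a}$ so that each of the three upper bounds (the two calibration bounds and the parameter-space bound) is smaller than $\varepsilon$. Let me denote the shared optimization-error term $A := \frac{2(d+1)}{\lambda_{2a}}\sqrt{\frac{1+\ln T_{2a}}{\delta T_{2a}}}$. The two calibration bounds read
\begin{equation*}
    A + \sqrt{\tfrac{\lambda_{2a}}{1-p}} < \varepsilon \quad \text{(for } x_1 x_{d+1}=-1\text{)}, \qquad A + \tfrac{\lambda_{2a}(1+p)\,|\ln(2p/(1-p))|}{8p(1-p)} < \varepsilon \quad \text{(for } x_1 x_{d+1}=+1\text{)},
\end{equation*}
and the parameter-space bound differs only by a factor of $4$ in each term. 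First, I would use $0<p<3/4$ to bound $1-p \geq 1/4$ and $\frac{(1+p)|\ln(2p/(1-p))|}{8p(1-p)} \leq C |\ln p|/p$ for an absolute constant $C$, so the two regularization-bias terms simplify to $\Theta(\sqrt{\lambda_{2a}})$ and $\tilde{\Theta}(\lambda_{2a}/p)$ up to logs in $p$.

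Next, I would balance these terms against $A$. To drive the first regularization-bias term below $\varepsilon/2$ requires $\lambda_{2a} \lesssim \varepsilon^2$, which combined with $A<\varepsilon/2$ forces $T_{2a} \gtrsim d^2/(\delta\varepsilon^6)$ (hiding log factors), and correspondingly $\lambda_{2a} \sim d^{2/3}/(\delta T_{2a})^{1/3}$. To drive the second regularization-bias term below $\varepsilon/2$ requires $\lambda_{2a} \lesssim \varepsilon p/|\ln p|$, which forces $T_{2a} \gtrsim d^2/(p^2 \delta \varepsilon^4)$, and correspondingly $\lambda_{2a} \sim d^{1/2} p^{1/2}/(\delta T_{2a})^{1/4}$. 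Taking $T_{2a}$ to be the maximum of the two thresholds recovers the stated scaling; taking $\lambda_{2a}$ as the min of the two balancing values ensures that whichever bound is active in the current $(p,\varepsilon)$ regime is satisfied, while the other bound remains slack. A small case analysis in $p \lessgtr \varepsilon$ confirms that the pair stated in the Corollary simultaneously makes both calibration bounds $<\varepsilon$ with probability at least $1-\delta$.

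Finally, for the greedy-decoding claim on inputs with $x_1 x_{d+1}=+1$, I would invoke the parameter-space bound of the Proposition, which, under the same choices, gives
\begin{equation*}
    \bigl| \innerprod{\bar{\mathbf{w}}_{2a}}{\phi_2(\mathbf{x})} + \bar{b}_{2a} - (\hat{a}_0 + \hat{b}_0) \bigr| \;\lesssim\; \varepsilon,
\end{equation*}
with $\hat{a}_0 + \hat{b}_0 = \ln((1-p)/(2p))$. When $\varepsilon < |\ln((1-p)/(2p))|$ (up to the hidden constant), the triangle inequality forces the sign of $\innerprod{\bar{\mathbf{w}}_{2a}}{\phi_2(\mathbf{x})} + \bar{b}_{2a}$ to agree with $\mathrm{sgn}(\ln((1-p)/(2p)))$, which is $+1$ when $p<1/3$ and $-1$ when $p \geq 1/3$, as claimed.

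The main obstacle is the bookkeeping in step~2: the $\lambda_{2a}/p$ term in the second bound blows up as $p \to 0$, so the same $\lambda_{2a}$ cannot balance both bounds uniformly. Handling this requires splitting into the two regimes $p \gtrsim \varepsilon$ and $p \lesssim \varepsilon$, verifying that the min-choice of $\lambda_{2a}$ in the statement selects the correct branch in each, and absorbing the $|\ln p|$ factor into the $\tilde{\Theta}$ notation. The remaining steps are routine substitution and application of the Lipschitzness of the sigmoid already used in the Proposition.
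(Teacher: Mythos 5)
Your proposal is correct and follows essentially the same route as the paper's proof: instantiate the two (dominating) bounds from the preceding Proposition, balance the regularization-bias term against the $\tilde{O}\bigl(\tfrac{d}{\lambda_{2a}}\sqrt{(1+\ln T_{2a})/(\delta T_{2a})}\bigr)$ optimization term separately for each regime to get the two $(\lambda,T)$ pairs, take the max of the iteration counts and the min of the regularizers, and then obtain the sign claim from the parameter-space bound plus the triangle inequality around $\hat{a}_0+\hat{b}_0=\ln\!\left(\tfrac{1-p}{2p}\right)$. The only cosmetic difference is that the paper observes the parameter-space bound dominates the calibration bound in the $x_1x_{d+1}=+1$ case (so only one quantity per regime needs balancing), which is the same bookkeeping you describe via the factor-of-4 remark.
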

\begin{proof}
    Recall that from eqs.~\ref{eq:pos_2a_guarantees_one},~\ref{eq:pos_2a_guarantees_two}, we have:
    for all $\mathbf{x} \in \{ \pm 1 \}^{d+1}$ with $x_1 x_{d+1} = -1$:
    \begin{equation}
        \begin{split}
            \frac{1}{1+e^{\innerprod{\bar{\mathbf{w}}_{2a}}{\mathbf{x}} + \bar{b}_{2a}}} & < \frac{2(d+1)}{\lambda_{2a}} \sqrt{\frac{ 1 + \ln T_{2a} }{\delta T_{2a}}} + \sqrt{\frac{\lambda_{2a}}{1-p}} \\ & < \frac{2(d+1)}{\lambda_{2a}} \sqrt{\frac{ 1 + \ln T_{2a} }{\delta T_{2a}}} + \sqrt{2\lambda_{2a}} : = Q_a(\lambda_{2a}, T_{2a}),
        \end{split}
    \end{equation}
    while for $\mathbf{x} \in \{ \pm 1 \}^{d+1}$ such that $x_1 x_{d+1} = +1$, it holds:
    \begin{equation}\label{eq:pos_2a_cor_temp}
        \begin{split}
            \left| \frac{1}{1+e^{-\innerprod{\bar{\mathbf{w}}_{2a}}{\mathbf{x}} - \bar{b}_{2a}}} - \frac{1-p}{1+p} \right| & < \frac{2(d+1)}{\lambda_{2a}} \sqrt{\frac{ 1 + \ln T_{2a}}{\delta T_{2a}}} + \frac{\lambda_{2a} (1+p) \left| \ln \left( \frac{2p}{1-p} \right) \right| }{8p(1-p)}, \\
            \left| \innerprod{\bar{\mathbf{w}}_{2a} - \hat{a}_0 \mathbf{e}_{d+2}}{\phi_2(\mathbf{x})} + \bar{b}_{2a} - \hat{b}_0 \right| & < \underbrace{\frac{8(d+1)}{\lambda_{2a}} \sqrt{\frac{ 1 + \ln T_{2a}}{\delta T_{2a}}} + \frac{\lambda_{2a} (1+p) \left| \ln \left( \frac{2p}{1-p} \right) \right| }{2p(1-p)}}_{Q_b(\lambda_{2a}, T_{2a})}.
        \end{split}
    \end{equation}
    We want to upper bound all three quantities by $\varepsilon \in (0, 1)$. In~\eqref{eq:pos_2a_cor_temp}, the second inequality upper bounds the first one. Thus, it suffices to only consider that one. Let $\lambda_a, \lambda_b$ be the optimal $\lambda_{2a}$'s for the two previous expressions (i.e., quantities $Q_a(\lambda_{2a}, T_{2a}), Q_b(\lambda_{2a}, T_{2a})$) and $T_a, T_b$ the corresponding number of iterations to get the expressions less than $\varepsilon$. We have: $\lambda_a = \tilde{\Theta} \left( \frac{d^{2/3}}{\delta^{1/3}T_{a}^{1/3}}  \right)$, while $\lambda_b = \tilde{\Theta}\left( \frac{p^{\frac{1}{2}}d^{\frac{1}{2}}}{\delta^{1/4} T_{b}^{1/4}} \right)$ (by balancing the two terms of each of $Q_a, Q_b$). For these values of $\lambda_a, \lambda_b$, the two expressions are equal to:
    \begin{equation}
        \begin{split}
            Q_a(\lambda_a, T_{a}) = \tilde{\Theta} \left( \frac{d^{1/3}}{\delta^{1/6} T_{a}^{1/6}}\right), \\
            Q_b(\lambda_b, T_{b}) = \tilde{\Theta} \left( \frac{d^{1/2}}{p^{\frac{1}{2}}\delta^{1/4} T_{b}^{1/4}}\right).
        \end{split}
    \end{equation}
    This implies that it is sufficient to take $T_{a} = \tilde{O} \left( \frac{d^{2}}{\delta \varepsilon^6} \right)$ and $T_{b} = \tilde{O} \left( \frac{d^{2}}{p^2\delta \varepsilon^4} \right)$ to satisfy $Q_a(\lambda_a, T_{a}) < \varepsilon$ and $Q_b(\lambda_b, T_{b}) < \varepsilon$, respectively. In other words, it is sufficient to take $\lambda_{2a} = \tilde{\Theta}\left( \min\left\{ \frac{d^{\frac{1}{2}}p^{\frac{1}{2}} }{\delta^{1/4} T_{2a}^{1/4}}, \frac{d^{2/3}}{\delta^{1/3}T_{2a}^{1/3}} \right\} \right)$ and $T_{2a} = \tilde{O} \left(\max\left\{\frac{d^{2}}{\delta \varepsilon^6}, \frac{d^{2}}{p^2\delta \varepsilon^4} \right\}\right)$. This proves the first part of the claim. For the decision term $\mathrm{sgn}\left( \innerprod{\bar{\mathbf{w}}_{2a}}{\phi_2(\mathbf{x})} + b_{2a}\right)$, we have:
    \begin{equation}
        \begin{split}    
            \mathrm{sgn} \left( \innerprod{\bar{\mathbf{w}}_{2a}}{\phi_2(\mathbf{x})} + b_{2a} \right) & = \mathrm{sgn} \left( \innerprod{\bar{\mathbf{w}}_{2a} - \hat{a}_0 \mathbf{e}_{d+2}}{\phi_2(\mathbf{x})} + \bar{b}_{2a} - \hat{b}_0 + \innerprod{\hat{a}_0  \mathbf{e}_{d+2}}{\phi_2(\mathbf{x})} + \hat{b}_0 \right) \\
            & = \mathrm{sgn} \left( \innerprod{\bar{\mathbf{w}}_{2a} - \hat{a}_0 \mathbf{e}_{d+2}}{\phi_2(\mathbf{x})} + \bar{b}_{2a} - \hat{b}_0 + \hat{a}_0 x_1 x_{d+1} + \hat{b}_0 \right).
        \end{split}
    \end{equation}
    In particular, when $x_1 x_{d+1} = 1$, we have:
    \begin{equation}
        \begin{split}    
            \mathrm{sgn} \left( \innerprod{\bar{\mathbf{w}}_{2a}}{\phi_2(\mathbf{x})} + b_{2a} \right)
            & = \mathrm{sgn} \left( \innerprod{\bar{\mathbf{w}}_{2a} - \hat{a}_0 \mathbf{e}_{d+2}}{\phi_2(\mathbf{x})} + \bar{b}_{2a} - \hat{b}_0 + \hat{a}_0 + \hat{b}_0 \right) \\
            & = \mathrm{sgn} \left( \innerprod{\bar{\mathbf{w}}_{2a} - \hat{a}_0 \mathbf{e}_{d+2}}{\phi_2(\mathbf{x})} + \bar{b}_{2a} - \hat{b}_0 + \ln \left( \frac{1-p}{2p} \right) \right).
        \end{split}
    \end{equation}
    If $\left| \ln \left( \frac{1-p}{2p} \right) \right| > \left| \innerprod{\bar{\mathbf{w}}_{2a} - \hat{a}_0 \mathbf{e}_{d+2}}{\phi_2(\mathbf{x})} + \bar{b}_{2a} - \hat{b}_0 \right|$, then the following holds: $\mathrm{sgn} \left( \innerprod{\bar{\mathbf{w}}_{2a}}{\phi_2(\mathbf{x})} + b_{2a} \right) = \mathrm{sgn} \left( \ln \left( \frac{1-p}{2p} \right) \right) = \mathrm{sgn} \left( 1/3 - p \right)$. For this to happen, it suffices to additionally have $\varepsilon < \left| \ln \left( \frac{1-p}{2p} \right) \right|$.

\end{proof}

\paragraph{Position 2b} For the second part of position 2, we consider a binary classification problem where all the data come from the ``long'' path. As this is similar to positions 3 to d, we treat all of these positions together next. We will denote the second part of position 2 simply as position $2$ in the next subsubsection for ease of presentation.

\subsubsection{Positions 2 to d}

For the $l$'th position of the output, $2 \leq l \leq d$, we consider a binary classification problem where $\mathcal{X}_l = \{\pm 1\}^{d+l-1}, \mathcal{Y}_l = \{\pm 1\}$ with a distribution $\mathcal{D}_l(p)$ over $\mathcal{X}_l \times \mathcal{Y}_l$ such that: $x_1, \ldots, x_{d} \sim \mathrm{Rad}(1/2)$, $\left(x_{d+1}, \ldots, x_{d+l-1}\right) = \left(x_1, \ldots, \prod_{i = 1}^{l-1} x_i\right)$ and $y = \prod_{i = 1}^l x_i$. We consider a linear hypothesis class $\mathcal{H}_l = \left\{ \mathbf{x} \mapsto \innerprod{\mathbf{w}}{\phi_l(\mathbf{x})}: \mathbf{w} \in \mathbb{R}^{2d + l -1} \right\}$, where the feature map $\phi_l: \mathbf{x} \mapsto \begin{bmatrix} x_1 & \ldots & x_{d+l-1} & x_{d+l-1} x_{1} & \ldots x_{d+l-1} x_{d}
\end{bmatrix}^T \in \{ \pm 1 \}^{2d+l-1}$ augments the input with the second degree monomials that involve the inputs bits and the bit from the previous ($l-1$'th) position. We consider learning with the logistic loss plus an additional $\ell_2$ regularization term: $l^{(l)}(\mathbf{w}, (\mathbf{x}, y)) = \ln \left( 1 + e^{- y  \innerprod{\mathbf{w}}{\phi_l(\mathbf{x})}} \right) + \frac{\lambda_l}{2} \|\mathbf{w}\|^2,\, \lambda_l > 0$. 

\begin{proposition}
    For any $l \in \{2, \ldots, d\}$, consider running SGD (Algorithm~\ref{alg:reg_SGD}) for minimizing $L_{\mathcal{D}_l(p), \lambda_l}(\mathbf{w})=\mathbb{E}_{(\mathbf{x}, y) \sim \mathcal{D}_l(p)} \left[ \ln \left( 1 + e^{- y \innerprod{\mathbf{w}}{\phi_l(\mathbf{x})}} \right) \right] + \frac{\lambda_l}{2}\|\mathbf{w}\|^2$ with $\lambda_l > 0$. Then, after $T_l$ iterations and for $\delta \in (0, 1)$, with probability at least $1 - \delta$ over the sampled $\left\{(\mathbf{x}_i, y_i)\right\}_{i = 1}^{T_l}$, it holds:
    \begin{equation}\label{eq:position_l_guarantees}
        \begin{split}
            \frac{1}{1+e^{-\innerprod{\bar{\mathbf{w}}_l}{\phi_l(\mathbf{x})}}} & < \frac{2d+l-1}{2\lambda_l} \sqrt{\frac{ 1 + \ln T_l}{\delta T_l}} +  \sqrt{\lambda_l},
        \end{split}
    \end{equation}
    for all $\mathbf{x} \in \{ \pm 1 \}^{d+l-1}$ such that $x_{d+l-1} = \prod_{i = 1}^{l-1} x_{i} \land \prod_{i = 1}^l x_i = -1$, and
    \begin{equation}
        \begin{split}
            \frac{1}{1+e^{\innerprod{\bar{\mathbf{w}}_l}{\phi_l(\mathbf{x})}}} & < \frac{2d+l-1}{2\lambda_l} \sqrt{\frac{ 1 + \ln T_l}{\delta T_l}} +  \sqrt{\lambda_l},
        \end{split}
    \end{equation}
    for all $\mathbf{x} \in \{ \pm 1 \}^{d+l-1}$ such that $x_{d+l-1} = \prod_{i = 1}^{l-1} x_{i} \land \prod_{i = 1}^l x_i = 1$, where $\bar{\mathbf{w}}_l$ is the output of SGD.
\end{proposition}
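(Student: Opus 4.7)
The plan is to follow the template of the two preceding propositions essentially verbatim, since the learning problem at position $l$ is once again a regularized binary classification. First I would verify convexity and Lipschitzness of $l^{(l)}(\mathbf{w},(\mathbf{x},y)) = \ln(1+e^{-y\langle\mathbf{w},\phi_l(\mathbf{x})\rangle})$: convexity in $\mathbf{w}$ is standard, and the gradient norm is bounded by $\|\phi_l(\mathbf{x})\| = \sqrt{2d+l-1}$ since $\phi_l(\mathbf{x}) \in \{\pm 1\}^{2d+l-1}$. Applying Theorem~\ref{thm:sgd-str-convex} and invoking $\lambda_l$-strong convexity of the regularized objective then yields, with probability at least $1-\delta$, the parameter-space bound
\begin{equation}\label{eq:param_l_sketch}
\|\bar{\mathbf{w}}_l-\hat{\mathbf{w}}\|^2 \leq \frac{4(2d+l-1)(1+\ln T_l)}{\lambda_l^2\,\delta\,T_l},
\end{equation}
where $\hat{\mathbf{w}} = \arg\min L_{\mathcal{D}_l(p),\lambda_l}$.

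The second step is to identify the best-in-class minimizer. On the support of $\mathcal{D}_l(p)$ the chain-of-thought constraint enforces $x_{d+l-1}=\prod_{i=1}^{l-1} x_i$, so the label $y=\prod_{i=1}^l x_i$ coincides exactly with the feature $x_{d+l-1}\,x_l$, that is, with the $(d+2l-1)$-st coordinate of $\phi_l(\mathbf{x})$. A short computation using $l\geq 2$ and independence of $x_1,\ldots,x_d \sim \mathrm{Rad}(1/2)$ shows $\mathbb{E}[y\cdot\phi_l(\mathbf{x})_k]=0$ for every $k\neq d+2l-1$ (each such expectation reduces to a product of at least one independent centered Rademacher) and equals $1$ for $k=d+2l-1$. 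By the same symmetry argument used at position 1, the stationarity equation for $\hat{\mathbf{w}}$ admits a unique solution of the form $\hat{\mathbf{w}}=\hat{\alpha}\,\mathbf{e}_{d+2l-1}$, where $\hat{\alpha}\geq 0$ is the unique root of the scalar equation
\begin{equation}\label{eq:alpha_hat_eq_sketch}
\lambda_l\,\hat{\alpha}\,\bigl(1+e^{\hat{\alpha}}\bigr) = 1.
\end{equation}

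The main step where the analysis departs from positions 1 and 2a --- and the place I expect to be the primary obstacle --- is bounding the best-in-class probability error. Here the target is a noiseless function of the features, so the unregularized loss is minimized only in the limit $\|\mathbf{w}\|\to\infty$; consequently $\hat{\alpha}\to\infty$ as $\lambda_l\to 0$ and there is no explicit closed form for $\hat{\alpha}$, only the implicit relation~\eqref{eq:alpha_hat_eq_sketch}. On the support, $\langle\hat{\mathbf{w}},\phi_l(\mathbf{x})\rangle = \hat{\alpha}\,y$, so in both sign cases the quantity to control reduces to $\tfrac{1}{1+e^{\hat{\alpha}}} = \lambda_l\hat{\alpha}$ (using~\eqref{eq:alpha_hat_eq_sketch}). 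To upgrade this to an $O(\sqrt{\lambda_l})$ bound I would show $\hat{\alpha}\leq 1/\sqrt{\lambda_l}$ by monotonicity of $u\mapsto \lambda_l u(1+e^u)$: evaluating at $u=1/\sqrt{\lambda_l}$ gives $\sqrt{\lambda_l}\bigl(1+e^{1/\sqrt{\lambda_l}}\bigr)\geq 1$ for all $\lambda_l\leq 1$, so the unique root lies below $1/\sqrt{\lambda_l}$, yielding $\lambda_l\hat{\alpha}\leq\sqrt{\lambda_l}$.

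Finally I would combine~\eqref{eq:param_l_sketch} with the $1/4$-Lipschitzness of the sigmoid and $\|\phi_l(\mathbf{x})\|=\sqrt{2d+l-1}$ to get
\begin{equation}
\Bigl|\sigma\bigl(\langle\bar{\mathbf{w}}_l,\phi_l(\mathbf{x})\rangle\bigr)-\sigma\bigl(\langle\hat{\mathbf{w}},\phi_l(\mathbf{x})\rangle\bigr)\Bigr| \leq \frac{\sqrt{2d+l-1}}{4}\,\|\bar{\mathbf{w}}_l-\hat{\mathbf{w}}\| \leq \frac{2d+l-1}{2\lambda_l}\sqrt{\frac{1+\ln T_l}{\delta T_l}},
\end{equation}
and conclude both claimed estimates by the triangle inequality with the $\sqrt{\lambda_l}$ approximation-bias term derived above. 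The only substantive asymmetry relative to the previous propositions is that the noiseless-target regime forces the weaker $O(\sqrt{\lambda_l})$ scaling of the bias, in place of the linear $\lambda\cdot|\ln(\cdot)|$ scaling that arose when the Bayes-optimal predictor was itself a finite-margin classifier.
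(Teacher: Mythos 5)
Your proposal matches the paper's proof essentially step for step: the same convex--Lipschitz verification with constant $\sqrt{2d+l-1}$, the same application of the SGD guarantee plus $\lambda_l$-strong convexity to get the parameter-space bound, the same identification of the regularized minimizer as $\hat{\alpha}\,\mathbf{e}_{d+2l-1}$ via the stationarity/symmetry argument and the scalar equation $\frac{1}{1+e^{\hat{\alpha}}}=\lambda_l\hat{\alpha}$, the same root bound $\hat{\alpha}<1/\sqrt{\lambda_l}$ giving the $\sqrt{\lambda_l}$ bias term, and the same sigmoid-Lipschitz combination. It is correct as stated (the restriction $\lambda_l\leq 1$ in your root bound is unnecessary but harmless), so there is nothing substantive to add.
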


\begin{proof}
    We first show that learning $\mathcal{H}_l$ with respect to $l^{(l)}(\mathbf{w}, (\mathbf{x}, y))=\ln \left( 1 + e^{- y \innerprod{\mathbf{w}}{\phi_l(\mathbf{x})} } \right) + \frac{\lambda_l}{2} \|\mathbf{w}\|^2$ corresponds to a Convex and Lipschitz learning problem. 
    The loss is $\lambda_l$-strongly convex with respect to its first argument. For the Lipschitz constant, we have for all $\mathbf{x}, y$ and $\mathbf{w} \in \mathbb{R}^{2d+l-1}$:
    \begin{equation}
        \begin{split}    
        \left\| \nabla_{\mathbf{w}} \ln \left( 1 + e^{-y \innerprod{\mathbf{w}}{\phi_l(\mathbf{x})}} \right) \right\| & = \left\| -\frac{y \phi_l(\mathbf{x})}{1 + e^{y \innerprod{\mathbf{w}}{\phi_l(\mathbf{x})}}} \right\| \leq \sqrt{2d+l-1}.
        \end{split}
    \end{equation}
    Therefore, applying Theorem~\ref{thm:sgd-str-convex}, we have that SGD after $T_l$ iterations returns a hypothesis $\bar{\mathbf{w}}_l$ such that for any $\delta \in (0, 1)$ with probability at least $1 - \delta$ it holds:
    \begin{equation}
        L_{\mathcal{D}_l(p), \lambda_l}(\bar{\mathbf{w}}_l) \leq L_{\mathcal{D}_l(p), \lambda_l} (\hat{\mathbf{w}}_l) + \frac{2 \left(2d+l-1 \right)}{\lambda_l \delta T_l} \left(1+\ln T_l \right),
    \end{equation}
    where $\hat{\mathbf{w}}_l = \arg \min_{\mathbf{w} \in \mathcal{H}_l} L_{\mathcal{D}_l(p), \lambda_l} (\mathbf{w})$.
    From the strong convexity of $L_{\mathcal{D}_l(p), \lambda_l}$, this implies:
    \begin{equation}\label{eq:weight_space_position_l}
        \left\| \bar{\mathbf{w}}_l - \hat{\mathbf{w}}_l \right\|^2 \leq \frac{2}{\lambda_l} \left( L_{\mathcal{D}_l(p), \lambda_l}(\bar{\mathbf{w}}_l) - L_{\mathcal{D}_l(p), \lambda_l} (\hat{\mathbf{w}}_l) \right) \leq \frac{4 \left( 2d + l-1 \right)}{\lambda_l^2 \delta T_l} \left( 1 + \ln T_l \right).
    \end{equation}
    The previous bound on the parameter space can be translated to a guarantee on the calibration of the model. First, we find the
    optimal solution $\hat{\mathbf{w}}_l = \arg \min_{\mathbf{w} \in \mathcal{H}_l} L_{\mathcal{D}_l(p), \lambda_l} (\mathbf{w})$. We set the gradient of $L_{\mathcal{D}_l(p), \lambda_l} (\mathbf{w})$ to zero:
    \begin{equation}
        \mathbb{E}_{(\mathbf{x}, y) \sim \mathcal{D}_l(p)} \left[ \frac{- y \phi_l(\mathbf{x})}{1+e^{y \innerprod{\hat{\mathbf{w}}_l}{\phi_l(\mathbf{x})}}} \right] + \lambda_l \hat{\mathbf{w}}_l = 0.
    \end{equation}
     The objective $L_{\mathcal{D}_l(p), \lambda_l} (\mathbf{w})$ is strongly convex, so it admits a unique solution. We observe that this solution is of the form $\hat{\mathbf{w}}_l = \alpha \mathbf{e}_{d+2l-1}, \, \alpha \in \mathbb{R}$, where $\mathbf{e}_{d+2l-1} = \begin{bmatrix}
        0,
        \ldots, 0, 1, 0, \ldots, 0
    \end{bmatrix}^T$\footnote{One way to ``guess'' the optimal solution is the following. Suppose the solution had an additional non-zero coefficient: $\hat{\mathbf{w}}_l = \alpha \mathbf{e}_{d+2l-1} + \beta \mathbf{e}_{j}, \, j \neq d+2l-1$. Take, for concreteness, $j = 1$ (the argument is invariant to the choice of $j$). Then, the optimality conditions yield: $\mathbb{E}_{\mathbf{x}} \left[ \frac{- \prod_{i = 1}^l x_i  \prod_{i = 1}^l x_i  }{1+e^{\prod_{i = 1}^l x_i \left( \alpha \prod_{i = 1}^l x_i  + \beta x_1 \right) }} \right] + \lambda_l \alpha = 0$, and $\mathbb{E}_{\mathbf{x}} \left[ \frac{- x_1  \prod_{i = 1}^l x_i  }{1+e^{\prod_{i = 1}^l x_i \left( \alpha \prod_{i = 1}^l x_i  + \beta x_1 \right) }} \right] + \lambda_l \beta = 0$. By summing up and subtracting the equations, we get: $\frac{1}{1+e^{\alpha+\beta}} - \lambda_l (\alpha+\beta) = 0$ and $\frac{1}{1+e^{\alpha-\beta}} - \lambda_l (\alpha-\beta) = 0$. However, since the equation $\frac{1}{1+e^{u}} - \lambda_l u = 0$ has a unique root for any $\lambda_l > 0$, it must be: $\alpha+\beta = \alpha - \beta$, or equivalently, $\beta = 0$.}. Indeed, the optimality conditions become:
    \begin{equation}
        \begin{cases}
            \mathbb{E}_{x_1, \ldots, x_d \sim \mathrm{Rad}(1/2)} \left[ \frac{- \prod_{i = 1}^l x_i \prod_{i = 1}^l x_i}{1+e^{\alpha \prod_{i = 1}^l x_i \prod_{i = 1}^l x_i}} \right] + \lambda_l \alpha = 0, \\
            \mathbb{E}_{x_1, \ldots, x_d \sim \mathrm{Rad}(1/2)} \left[ \frac{- \phi_l(\mathbf{x})_{j}\prod_{i = 1}^l x_i }{1+e^{\alpha \prod_{i = 1}^l x_i \prod_{i = 1}^l x_i  }} \right] = 0, \, j \neq d+2l-1.
        \end{cases}
    \end{equation}
    All but the $d+2l-1$'th equation are satisfied as $\mathbb{E}_{x_1, \ldots, x_d \sim \mathrm{Rad}(1/2)} \left[ \prod_{i = 1}^l x_i \phi_l(\mathbf{x})_{j} \right] = 0$ (there is always at least one zero-mean bit that survives in the product). The $d+2l-1$'th equation can be simplified to:
    \begin{equation}\label{eq:root_pos_l}
        g(\alpha) : = \frac{1}{1+e^{\alpha}} - \lambda_l \alpha = 0.
    \end{equation}
    This equation has, indeed, a unique root as $g$ is continuous and $g^\prime (u) = -\frac{e^{-\alpha}}{\left(1+e^{-\alpha}\right)^2} - \lambda_l < 0$ for all $\lambda_l > 0, u \in \mathbb{R}$. Furthermore, $g(0) = 1 > 0$, while $g\left( \sqrt{\frac{1}{\lambda_l}} \right) = \frac{1}{1+e^{\sqrt{\frac{1}{\lambda_l}}}} - \sqrt{\lambda_l} < 0$, as for any $\lambda_l> 0$ it holds: $1+e^{\sqrt{\frac{1}{\lambda_l}}} \geq \sqrt{\frac{1}{\lambda_l}} + 2 > \sqrt{\frac{1}{\lambda_l}}$. From the intermediate value theorem, we get that $\alpha \in \left(0, \sqrt{\frac{1}{\lambda}}\right)$. Hence, for the output probabilities of the optimal model $\hat{\mathbf{w}}_l$, we have:
    \begin{equation}
        \begin{split}
            p_{1}(\hat{\mathbf{w}}_l \mid \mathbf{x}) & = \sigma(\innerprod{\hat{\mathbf{w}}_l}{\phi_l(\mathbf{x})}) = \frac{1}{1+ e^{- \innerprod{\hat{\mathbf{w}}_l}{\phi_l(\mathbf{x})}}} = \frac{1}{1+ e^{- \alpha \prod_{i = 1}^l x_i}}, \\
            p_{-1}(\hat{\mathbf{w}}_l \mid \mathbf{x}) & = \sigma(\innerprod{\hat{\mathbf{w}}_l}{\phi_l(\mathbf{x})}) = \frac{1}{1+ e^{\innerprod{\hat{\mathbf{w}}_l}{\phi_l(\mathbf{x})}}} = \frac{1}{1+ e^{\alpha \prod_{i = 1}^l x_i}}.
        \end{split}
    \end{equation}
    For any $\mathbf{x} \in \{ \pm 1 \}^{d+l-1}$ with $x_{d+l-1} = \prod_{i = 1}^{l-1} x_{i}$, the above together with eq.~\eqref{eq:root_pos_l} imply:
    \begin{equation}
        \begin{split}
            p_{1}\left(\hat{\mathbf{w}}_l \Bigg | \prod_{i = 1}^l x_i = -1\right) & = \frac{1}{1+ e^{\alpha }} = \lambda_l \alpha  < \sqrt{\lambda_l}, \\
            p_{-1}\left(\hat{\mathbf{w}}_l \Bigg | \prod_{i = 1}^l x_i = 1\right) & = \frac{1}{1+ e^{\alpha }} = \lambda_l \alpha  < \sqrt{\lambda_l}
        \end{split}
    \end{equation}
    Finally, combining with eq.~\eqref{eq:weight_space_position_l}, we have:
    \begin{equation}
        \begin{split}
            p_{1} \left( \bar{\mathbf{w}}_l \Bigg| \prod_{i = 1}^l x_i = -1 \right) & = \frac{1}{1+e^{\innerprod{\bar{\mathbf{w}}_l}{\phi_l(\mathbf{x})}}} \\
            & = \frac{1}{1+e^{\innerprod{\bar{\mathbf{w}}_l}{\phi_l(\mathbf{x})}}} - \frac{1}{1+e^{\innerprod{\hat{\mathbf{w}}_l}{\phi_l(\mathbf{x})}}} + \frac{1}{1+e^{\innerprod{\hat{\mathbf{w}}_l}{\phi_l(\mathbf{x})}}} \\
            & < \frac{\sqrt{2d+l-1}}{4} \left\| \bar{\mathbf{w}}_l - \hat{\mathbf{w}}_l \right\| + \sqrt{\lambda_l} \\
            & \leq \frac{2d+l-1}{2\lambda_l} \sqrt{\frac{ 1 + \ln T_l}{\delta T_l}} +  \sqrt{\lambda_l},
            \end{split}
    \end{equation}
    and, similarly:
    \begin{equation}
        p_{-1} \left( \bar{\mathbf{w}}_l \Bigg| \prod_{i = 1}^l x_i = 1 \right) < \frac{2d+l-1}{2\lambda_l} \sqrt{\frac{ 1 + \ln T_l}{\delta T_l}} +  \sqrt{\lambda_l}.
    \end{equation}
\end{proof}

We obtain the following corollary on the calibration and ``hard'' prediction of the hypothesis returned by SGD for the $l$'th position of the output.

\begin{corollary}\label{cor:pos_l}
    Consider running SGD (Algorithm~\ref{alg:reg_SGD}) for minimizing $L_{\mathcal{D}_l(p), \lambda_l}(\mathbf{w})=\mathbb{E}_{(\mathbf{x}, y) \sim \mathcal{D}_l(p)} \left[ \ln \left( 1 + e^{- y \innerprod{\mathbf{w}}{\mathbf{x}}} \right) \right] + \frac{\lambda_l}{2}\|\mathbf{w}\|^2$. Then, for any $\varepsilon>0$, if $\lambda_l = \tilde{\Theta}\left(\frac{d^{2/3}}{\delta^{1/3} T_l^{1/3}}\right)$, and after $T_l = \tilde{O} \left( \frac{d^2}{\delta \varepsilon^6} \right)$ iterations, with probability at least $1 - \delta$ over the sampled $\left\{(\mathbf{x}_i, y_i)\right\}_{i = 1}^{T_{l}} \sim \mathcal{D}_{l}(p)$, we have:
    \begin{equation}
        \begin{split}
            \frac{1}{1+e^{-\innerprod{\bar{\mathbf{w}}_l}{\mathbf{x}}}} & < \varepsilon, \, \forall \; \mathbf{x} \in \{ \pm 1 \}^{d+l-1} \text{ s.t. } x_{d+l-1} = \prod_{i = 1}^{l-1} x_{i} \land \prod_{i = 1}^l x_i = -1, \\
            \frac{1}{1+e^{\innerprod{\bar{\mathbf{w}}_l}{\mathbf{x}}}} & < \varepsilon, \, \forall \; \mathbf{x} \in \{ \pm 1 \}^{d+l-1} \text{ s.t. } x_{d+l-1} = \prod_{i = 1}^{l-1} x_{i} \land \prod_{i = 1}^l x_i = 1.
        \end{split}
    \end{equation}
    If additionally, $\varepsilon < 1/2$, then with probability at least $1 - \delta$, we have:
    \begin{equation}
        \mathrm{sgn} \left( \innerprod{\bar{\mathbf{w}}_l}{\phi_l(\mathbf{x})} \right)  = \prod_{i = 1}^l x_i,
    \end{equation}
    for all $\mathbf{x} \in \{ \pm 1 \}^{d+l-1}$ with $x_{d+l-1} = \prod_{i = 1}^{l-1} x_{i}$.
\end{corollary}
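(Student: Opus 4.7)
The plan is to take the two probability bounds established in the preceding proposition as a black box, optimize the choice of $\lambda_l$ to balance the two terms, and then read off the required number of iterations $T_l$. The preceding proposition gives, with probability at least $1-\delta$, a bound of the form $\frac{2d+l-1}{2\lambda_l}\sqrt{(1+\ln T_l)/(\delta T_l)}+\sqrt{\lambda_l}$ in both relevant cases, so a single optimization handles both claims simultaneously.

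First, I would balance the two terms in the upper bound. Setting the first-term-in-$\lambda_l^{-1}$ equal to the second-term-in-$\lambda_l^{1/2}$ yields the equation $\lambda_l^{3/2}=\tfrac{2d+l-1}{2}\sqrt{(1+\ln T_l)/(\delta T_l)}$. Using $l\leq d$ and absorbing polylogarithmic factors of $T_l$ into $\tilde{\Theta}$, this gives $\lambda_l=\tilde{\Theta}(d^{2/3}/(\delta^{1/3}T_l^{1/3}))$, matching the choice in the statement. Substituting this $\lambda_l$ back into the bound, each of the two summands equals $\sqrt{\lambda_l}=\tilde{\Theta}(d^{1/3}/(\delta^{1/6}T_l^{1/6}))$, so the total bound on the relevant (mis)probability is $\tilde{O}(d^{1/3}/(\delta^{1/6}T_l^{1/6}))$. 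Forcing this to be at most $\varepsilon$ and solving for $T_l$ yields $T_l=\tilde{O}(d^2/(\delta\varepsilon^6))$, exactly as claimed. This establishes the two calibration inequalities in the corollary.

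For the sign statement, I would argue directly from the probability bounds. Fix $\mathbf{x}\in\{\pm1\}^{d+l-1}$ with $x_{d+l-1}=\prod_{i=1}^{l-1}x_i$. If $\prod_{i=1}^{l}x_i=-1$, the first bound says $\sigma(\innerprod{\bar{\mathbf{w}}_l}{\phi_l(\mathbf{x})})<\varepsilon<1/2$, and since $\sigma$ is strictly increasing with $\sigma(0)=1/2$, this forces $\innerprod{\bar{\mathbf{w}}_l}{\phi_l(\mathbf{x})}<0$, i.e.\ $\mathrm{sgn}(\innerprod{\bar{\mathbf{w}}_l}{\phi_l(\mathbf{x})})=-1=\prod_{i=1}^{l}x_i$. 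If $\prod_{i=1}^{l}x_i=+1$, the second bound gives $1-\sigma(\innerprod{\bar{\mathbf{w}}_l}{\phi_l(\mathbf{x})})<\varepsilon<1/2$, so the logit is positive and the sign equals $+1$. Both cases together give the claimed identity on the entire subdomain where $x_{d+l-1}=\prod_{i=1}^{l-1}x_i$.

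There is no hard step here: everything reduces to the proposition plus an elementary tuning of $\lambda_l$ and $T_l$, and the sign claim is a free consequence of $\varepsilon<1/2$ combined with monotonicity of $\sigma$. The only mild subtlety is to keep track of the polylogarithmic factors absorbed into $\tilde{\Theta}$ when solving for $\lambda_l$ in closed form, which is why the statement is phrased with $\tilde{\Theta}$ and $\tilde{O}$ rather than explicit constants.
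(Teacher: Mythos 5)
Your proposal is correct and follows essentially the same route as the paper's proof: instantiate the proposition's bound, balance the two terms to pick $\lambda_l$, solve for $T_l$, and deduce the sign claim from $\varepsilon<1/2$ via monotonicity of the sigmoid (the paper phrases this as $\mathrm{sgn}(u)=\mathrm{sgn}(\sigma(u)-\tfrac12)$). If anything, your balancing that keeps the $\delta^{-1/3}$ factor inside $\lambda_l$ is slightly more faithful to the corollary's stated $\tilde{\Theta}$ than the paper's own choice, which drops $\delta$ from $\lambda_l$ and absorbs the resulting $\delta^{-1/2}$ loosely; the final rates are the same.
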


\begin{proof}
    Recall that from \eqref{eq:position_l_guarantees}, we have:
    \begin{equation}
        \begin{split}
            \frac{1}{1+e^{-\innerprod{\bar{\mathbf{w}}_l}{\mathbf{x}}}} & < \frac{2d+l-1}{2\lambda_l} \sqrt{\frac{ 1 + \ln T_l}{\delta T_l}} +  \sqrt{\lambda_l}, \, \forall \mathbf{x} \in \{ \pm 1 \}^d \text{ s.t. } \prod_{i = 1}^l x_i = -1,\\
            \frac{1}{1+e^{\innerprod{\bar{\mathbf{w}}_l}{\mathbf{x}}}} & < \frac{2d+l-1}{2\lambda_l} \sqrt{\frac{ 1 + \ln T_l}{\delta T_l}} +  \sqrt{\lambda_l}, \, \forall \mathbf{x} \in \{ \pm 1 \}^d \text{ s.t. } \prod_{i = 1}^l x_i = 1,
        \end{split}
    \end{equation}
    Let $\lambda_l = \left( \frac{2d+l-1}{2} \right)^{2/3} \left( \frac{1+\ln T_l}{T_l} \right)^{1/3}$, then the upper bound becomes:
    \begin{equation}
        \frac{2d+l-1}{2\lambda_l} \sqrt{\frac{ 1 + \ln T_l}{\delta T_l}} +  \sqrt{\lambda_l} = 2 \left( \frac{2d+l-1}{2} \right)^{1/3} \left( \frac{1+\ln T_l}{\delta T_l} \right)^{1/6}.
    \end{equation}
    The first claim follows by solving the following inequality for $T_l$:
    \begin{equation}
        2 \left( \frac{2d+l-1}{2} \right)^{1/3} \left( \frac{1+\ln T_l}{\delta T_l} \right)^{1/6} < \varepsilon.
    \end{equation}
    For the second claim, observe that by a standard property of the logistic function, we have:
    \begin{equation}
        \mathrm{sgn} \left( \innerprod{\bar{\mathbf{w}}_l}{\phi_l(\mathbf{x})} \right) = \mathrm{sgn} \left( \frac{1}{1+ e^{- \innerprod{\bar{\mathbf{w}}_l}{\phi_l(\mathbf{x})}}} - \frac{1}{2} \right).
    \end{equation}
    Assume that $\varepsilon < 1/2$, then we consider the cases:
    \begin{itemize}
        \item[-] If $\prod_{i = 1}^l x_i = -1$, then we have:
        \begin{equation}
            \frac{1}{1+e^{-\innerprod{\bar{\mathbf{w}}_l}{\phi_l(\mathbf{x})}}} < \varepsilon < 1/2,
        \end{equation}
        which implies: $\mathrm{sgn} \left( \frac{1}{1+ e^{- \innerprod{\bar{\mathbf{w}}_l}{\phi_l(\mathbf{x})}}} - \frac{1}{2} \right) = -1$.
        \item[-] If $\prod_{i = 1}^l x_i = 1$, then we have:
        \begin{equation}
            \frac{1}{1+e^{\innerprod{\bar{\mathbf{w}}_l}{\phi_l(\mathbf{x})}}} < \varepsilon < 1/2,
        \end{equation}
        which implies:
        \begin{equation}
            \begin{split}
            \mathrm{sgn} \left( \frac{1}{1+ e^{- \innerprod{\bar{\mathbf{w}}_l}{\phi_l(\mathbf{x})}}} - \frac{1}{2} \right) & = \mathrm{sgn} \left( \frac{1}{1+ e^{- \innerprod{\bar{\mathbf{w}}_l}{\phi_l(\mathbf{x})}}} - 1 + 1 - \frac{1}{2} \right) \\
            & = \mathrm{sgn} \left( -\frac{1}{1+ e^{ \innerprod{\bar{\mathbf{w}}_l}{\phi_l(\mathbf{x})}}} + \frac{1}{2} \right) \\ & = 1.
            \end{split}
        \end{equation}
    \end{itemize}
    Therefore, for all $\mathbf{x} \in \{ \pm 1 \}^{d+l-1}$ with $x_{d+l-1} = \prod_{i = 1}^{l-1} x_{i}$, it holds:
    \begin{equation}
        \mathrm{sgn} \left( \innerprod{\bar{\mathbf{w}}_l}{\phi_l(\mathbf{x})} \right)  = \prod_{i = 1}^l x_i.
    \end{equation}
\end{proof}

\subsubsection{End-to-end pre-training result}\label{sssec:end_of_pretraining}

\sloppy
\begin{theorem}\label{thm:pretrain_appendix}
    Let $d \geq 2,\, 0 < \epsilon, \delta < 1$ and $0 < p < 3/4$. 
    Let $\mathcal{D}(p)$ be a (parameterized) distribution over sequences defined as in \eqref{eq:Dpcot_app}. Consider SGD (Algorithm~\ref{alg:autoregressive_SGD}) with per-position iterations $T_1 = \tilde{O} \left( \frac{d^7}{\delta \varepsilon^4} \right), T_{2a} = \tilde{O} \left( \max\left\{\frac{d^{9}}{\delta \varepsilon^6},  \frac{d^{7}}{p^2\delta \varepsilon^4}  \right\} \right), T_2, \ldots, T_l = \tilde{O} \left( \frac{d^9}{\delta \varepsilon^6} \right)$, total iterations $T = \max \left\{ T_1, T_{2a}, \frac{2 \max_{l \in \{2, \ldots, d\}} T_l}{p}, \frac{8}{p} \ln \frac{2}{\delta} \right\}$ and with regularization coefficients $\lambda_1 = \tilde{\Theta}\left(\frac{d^{3/4}}{\delta^{1/4} T_1^{1/4}}\right), \lambda_{2a} = \tilde{\Theta}\left( \min\left\{ \frac{d^{\frac{3}{4}}p^{\frac{1}{2}} }{\delta^{1/4} T_{2a}^{1/4}}, \frac{d}{\delta^{1/3}T_{2a}^{1/3}} \right\} \right), \lambda_l = \tilde{\Theta}\left(\frac{d}{\delta^{1/3} T_l^{1/3}}\right)$ for all $2 \leq l \leq d$. Then, with probability at least $1-\delta$, SGD returns hypotheses $\Bar{\mathbf{w}}_1 \in \mathcal{H}_1, \Bar{\thetab}_{2a} \in \mathcal{H}_{2a}, \Bar{\mathbf{w}}_2 \in \mathcal{H}_2, \ldots, \Bar{\mathbf{w}}_d \in \mathcal{H}_d$ that induce $h \in \mathcal{H}^{\mathrm{Lin}}_{\mathrm{AR}}$ for which for all $\mathbf{x} \in \{ \pm 1 \}^d$ it holds:
    \begin{equation}
        \begin{split}    
            \left| \pi_h \left( \left( x_1, x_1x_2, \ldots, \prod_{i = 1}^d x_i, \code{<EOS>} \right) \middle | \mathbf{x} \right) - p \right| \lesssim \varepsilon, \\
            \left| \pi_h \left( \left( \prod_{i = 1}^d x_i, \code{<EOS>} \right) \middle | \mathbf{x} \right) - \frac{1-p}{2} \right| \lesssim \varepsilon.
        \end{split}
    \end{equation}
    Furthermore, if $\varepsilon < \left( d + 1 \right)\min \left\{\frac{1}{2}, \ln\left({\frac{1+p}{1-p}}\right), \left|\ln\left({\frac{1-p}{2p}}\right)\right| \right\}$, then it holds:
    \begin{equation}
        \hat{h}\left(\mathbf{x}; \{\Bar{\mathbf{w}}_1, \Bar{\thetab}_{2a}, \Bar{\mathbf{w}}_2, \ldots, \Bar{\mathbf{w}}_d \}\right) = \begin{cases}
            \left(x_1, \code{<EOS>}\right), \, \text{if } p < 1/3, \\
            \left(x_1, x_1x_2, \ldots, \prod_{i = 1}^d x_i, \code{<EOS>}\right), \,  \text{otherwise.}
        \end{cases}
    \end{equation}
\end{theorem}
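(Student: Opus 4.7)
The plan is to reduce the joint analysis to independent analyses of each of the $d+2$ linear components $\bar{\mathbf{w}}_1, \bar{\thetab}_{2a}, \bar{\mathbf{w}}_2, \ldots, \bar{\mathbf{w}}_d$, each of which is already controlled by Corollaries~\ref{cor:pos_one},~\ref{cor:pos_two}, and~\ref{cor:pos_l}. First, I instantiate each corollary with failure probability $\delta/(d+2)$; the sample complexities $T_1, T_{2a}, T_l$ and regularizers $\lambda_\bullet$ stated in the theorem follow from the corollaries' bounds after replacing $\delta$ with $\delta/(d+2)$ and folding logarithmic terms into $\tilde{O}(\cdot)$. A union bound over the $d+2$ per-position events then yields all calibration and hard-decision guarantees simultaneously with probability at least $1-\delta/2$.

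Next, I need to handle the asymmetry in how the linear models receive training data: the first-position and position-$2a$ models update on every sampled sequence, while $\bar{\mathbf{w}}_l$ for $l \geq 2$ only updates on long sequences. The number of long samples among the $T$ SGD steps is distributed as $\mathrm{Bin}(T, p)$, and a Chernoff bound shows that once $T \geq 2 T_l/p$ this count exceeds $T_l$ with probability at least $1 - e^{-pT/8}$. The theorem's setting $T \geq (8/p)\ln(2/\delta)$ together with a final union bound ensures overall failure probability at most $\delta$.

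For length calibration, I factorize $\pi_h(y\mid\mathbf{x})$ as a product of per-token conditional probabilities. For the long correct sequence, Corollary~\ref{cor:pos_one} controls the first factor $\approx (1+p)/2$; Corollary~\ref{cor:pos_two} controls the $2a$-factor at an input with $x_1 x_{d+1}=+1$, giving $\approx 2p/(1+p)$; and Corollary~\ref{cor:pos_l} shows each remaining factor is within $\varepsilon$ of $1$. A telescoping triangle inequality (using that each factor lies in $[0,1]$) yields $|\pi_h(\text{long correct}\mid\mathbf{x}) - p| \lesssim d\,\varepsilon$, which is the target up to the $\lesssim$ constant. For the short correct sequence, a case split on whether $\prod_i x_i = x_1$ or $-x_1$ is required, since this determines the sign of $x_1 x_{d+1}$ at the position-$2a$ input; the two \emph{complementary} calibration bounds of Corollary~\ref{cor:pos_two} (namely $\approx (1-p)/(1+p)$ when $x_1 x_{d+1}=+1$ and $\approx 1$ when $x_1 x_{d+1}=-1$) combine with the matching position-$1$ probability to give $\approx (1-p)/2$ in both cases.

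For greedy decoding, I apply the hard-decision parts of the corollaries inductively along the generation path. Position $1$ outputs $x_1$ (requiring per-position error below $\ln((1+p)/(1-p))$); conditional on $y_1 = x_1$, the position-$2a$ input satisfies $x_1 x_{d+1}=+1$, so its hard decision equals $\mathrm{sgn}(1/3 - p)$ by Corollary~\ref{cor:pos_two} (requiring error below $|\ln((1-p)/(2p))|$), producing \code{<EOS>} iff $p < 1/3$; when $p \geq 1/3$, greedy at positions $2, \ldots, d$ inductively sees inputs consistent with the long path, so Corollary~\ref{cor:pos_l} outputs the correct partial parity at each step (requiring error below $1/2$). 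Summing the per-position error budgets explains the $(d+1)$ factor in the theorem's condition on $\varepsilon$. The main technical obstacle is this inductive \emph{correct-history} argument: one must verify that under the per-position guarantees, the input actually presented to each downstream model at inference time satisfies the $x_{d+l-1} = \prod_{i=1}^{l-1} x_i$ hypothesis required by Corollary~\ref{cor:pos_l}, and that the aggregate $\varepsilon$-budget is correctly apportioned so that the worst per-position margin is still respected.
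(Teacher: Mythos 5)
Your proposal follows essentially the same route as the paper's proof: per-position analysis via Corollaries~\ref{cor:pos_one},~\ref{cor:pos_two},~\ref{cor:pos_l}, a multiplicative Chernoff bound to guarantee enough long samples reach the models at positions $2,\ldots,d$, a union bound, the product factorization of $\pi_h$ (with the case split on whether the first generated token equals $x_1$ for the short sequence), and the inductive correct-history argument for greedy decoding. The structure, including the role of the three per-position decision margins in the $(d+1)\min\{\cdot\}$ condition, matches the paper.

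There is, however, one concrete inaccuracy in how you apportion the accuracy parameter. You state that the theorem's iteration counts follow from the corollaries ``after replacing $\delta$ with $\delta/(d+2)$ and folding logarithmic terms into $\tilde O(\cdot)$,'' and later that the telescoping bound gives $\left|\pi_h(\text{long correct}\mid\mathbf{x}) - p\right| \lesssim d\,\varepsilon$, ``which is the target up to the $\lesssim$ constant.'' Neither claim is right as written: $d\varepsilon$ is not within a universal constant of $\varepsilon$, and the $\delta$-rescaling alone only inflates $T_1=\tilde O(d^2/(\delta\varepsilon^4))$ to $\tilde O(d^3/(\delta\varepsilon^4))$, not to the stated $\tilde O(d^7/(\delta\varepsilon^4))$ (similarly $d^2 \to d^9$ for positions $2,\ldots,d$). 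The paper instead invokes each corollary at per-position accuracy $\varepsilon/(d+1)$ and confidence $\delta/(2(d+1))$ (plus $\delta/2$ for the long-sample event); the $(d+1)^4$ and $(d+1)^6$ blow-ups from the $\varepsilon$-rescaling, together with the extra factor of $d$ from the $\delta$-rescaling, are exactly what produce $d^7$ and $d^9$, and the product of $d+1$ factors each accurate to $O(\varepsilon/(d+1))$ is what yields the claimed $\lesssim\varepsilon$ calibration. Your later remark about ``summing per-position error budgets'' for the greedy thresholds shows you have the right idea for the hard decisions, but the calibration part of your argument must use the same $\varepsilon/(d+1)$ apportionment, and the sample-complexity accounting must include it; your union-bound arithmetic (claiming $1-\delta/2$ from $d+2$ events at level $\delta/(d+2)$, then adding the Chernoff failure) also needs the constants retuned to land at overall failure probability $\delta$.
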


\begin{proof}
    The proof strategy is to bound the probability of either not sampling enough ‘long’ sequences or of SGD returning a ‘problematic’ hypothesis at some position.
    We invoke \Cref{cor:pos_one,cor:pos_two,cor:pos_l} for $\frac{\varepsilon}{d+1}, \frac{2}{2(d+1)}$.
    \textbf{First position} 
    First, we calculate the probability of outputting the parity $\prod_{i = 1}^d x_i$ in the first position for the unregularized solution $\mathbf{w}_0$:
    \begin{equation}
        \begin{split}
            \mathbb{P} \left[ \tilde{h}_1 (\mathbf{x}; \mathbf{w}_0) = \prod_{i = 1}^d x_i \right] & = \frac{1+p x_1}{2} \frac{\prod_{i = 1}^d x_i + 1}{2} + \frac{1-p x_1}{2} \frac{1-\prod_{i = 1}^d x_i}{2} \\
            & = \frac{1+p \prod_{i = 2}^{d} x_i}{2}.
        \end{split}
    \end{equation}
    From Corollary~\ref{cor:pos_one}, after $T \geq T_1\left(\frac{\varepsilon}{d+1}, \frac{\delta}{2(d+1)} \right)$ iterations for $\lambda_1 = \tilde{\Theta}\left(\frac{d^{3/4}}{\delta^{1/4} T_1^{1/4}}\right)$, SGD returns $\bar{\mathbf{w}}_1$ such that the following hold with probability less than $\frac\delta{2(d+1)}$:
    \begin{equation}
        \begin{split}
            \left| \mathbb{P} \left[ \tilde{h}_1 (\mathbf{x}; \Bar{\mathbf{w}}_1) = x_1 \right] - \frac{1+p}{2} \right| \geq \frac{\varepsilon}{d+1},
        \end{split}
    \end{equation}
    and
    \begin{equation}
        \begin{split}
            & \left|\mathbb{P} \left[ \tilde{h}_1 (\mathbf{x}; \Bar{\mathbf{w}}_1) = \prod_{i = 1}^d x_i \right] - \frac{1+p \prod_{i = 2}^{d} x_i}{2} \right| \\ & \;\; = \left| \frac{\prod_{i = 1}^d x_i + 1}{2} \left( \frac{1}{1+e^{-\innerprod{\Bar{\mathbf{w}}_1}{\mathbf{x}}}} - \frac{1+p x_1}{2} \right) + \frac{1-\prod_{i = 1}^d x_i}{2} \left( \frac{1}{1+e^{\innerprod{\Bar{\mathbf{w}}_1}{\mathbf{x}}}} - \frac{1+px_1}{2} \right) \right| \\
            & \;\; \geq \frac{\varepsilon}{d+1},
        \end{split}
    \end{equation}
    and, since $\frac{\varepsilon}{d+1} <  \ln \left({\frac{1+p}{1-p}}\right)$, it also holds:
    \begin{equation}\label{eq:greedy_pos1}
        \hat{h}_1(\mathbf{x};\bar{\mathbf{w}}_1) =\mathrm{sgn} \left( \innerprod{\bar{\mathbf{w}}_1}{\mathbf{x}} \right) = x_1.
    \end{equation}
    \textbf{Second position (2a)} We calculate the probability of stopping the generation given that the first token has been equal to the parity $\prod_{i = 1}^d x_i$ for the unregularized solution $\thetab_0$:
    \begin{equation}
        \begin{split}
            \mathbb{P} \left[ \tilde{h}_{2a} (\mathbf{x}; \thetab_{0}) =  \code{<EOS>} \middle | \tilde{h}_1 (\mathbf{x}; \Bar{\mathbf{w}}_1) = \prod_{i = 1}^d x_i \right] & = \frac{1-p}{1+p} \mathds{1} \left\{ \prod_{i = 1}^d x_i = x_1 \right\} + \mathds{1} \left\{ \prod_{i = 1}^d x_i = - x_1 \right\} \\
            & = \frac{1-p}{1+p}  \frac{1+\prod_{i = 2}^{d} x_i}{2} + \frac{1-\prod_{i = 2}^{d} x_i}{2} \\
            & = \frac{1 - p \prod_{i = 2}^d x_i}{1+p}.
        \end{split}
    \end{equation}
    From Corollary~\ref{cor:pos_two}, after $T \geq T_{2a}\left(\frac{\varepsilon}{d+1}, \frac{\delta}{2(d+1)} \right)$ iterations for $\lambda_{2a} = \tilde{\Theta}\left( \min\left\{ \frac{d^{\frac{3}{4}}p^{\frac{1}{2}} }{\delta^{1/4} T_{2a}^{1/4}}, \frac{d}{\delta^{1/3}T_{2a}^{1/3}} \right\} \right)$, SGD returns $\bar{\mathbf{w}}_{2a}, \bar{b}_{2a}$ such that, it holds with probability less than $\frac\delta{2(d+1)}$:
    \begin{equation}
        \begin{split}    
        \left| \mathbb{P} \left[ \tilde{h}_{2a}(\mathbf{x};\Bar{\thetab}_{2a}) = \epsilon \middle |  \tilde{h}_1 (\mathbf{x}; \Bar{\mathbf{w}}_1) = x_1 \right] - \frac{2p}{1+p} \right| & \geq \frac{\varepsilon}{d+1},
        \end{split}
    \end{equation}
    and
    \begin{equation}
        \begin{split}
            & \left|\mathbb{P} \left[ \tilde{h}_{2a}(\mathbf{x};\Bar{\thetab}_{2a}) = \code{<EOS>} \middle |  \tilde{h}_1 (\mathbf{x}; \Bar{\mathbf{w}}_1) = \prod_{i = 1}^d x_i \right] - \frac{1-p \prod_{i = 2}^{d} x_i}{1+p} \right| \\
            & \;\; = \left| \frac{\prod_{i = 2}^{d} x_i + 1}{2} \left( \frac{1}{1+e^{-\innerprod{\Bar{\mathbf{w}}_{2a}}{\mathbf{x}} - \Bar{b}_{2a}}} - \frac{1-p}{1+p} \right) + \frac{1-\prod_{i = 2}^{d} x_i}{2} \left( \frac{1}{1+e^{-\innerprod{\Bar{\mathbf{w}}_{2a}}{\mathbf{x}} - \Bar{b}_{2a}}} - 1 \right) \right| \\
            & \;\; \geq \frac{\varepsilon}{d+1},
        \end{split}
    \end{equation}
    and, since $\frac{\varepsilon}{d+1} <  \left|\ln \left({\frac{1-p}{2p}} \right)\right|$, it holds:
    \begin{equation}\label{eq:greedy_pos2a}
        \text{sgn}\left(\innerprod{\bar{\mathbf{w}}_{2a}}{\phi_2\left(\mathbf{x} \right)} + \bar{b}_{2a}\right)  = \mathrm{sgn} \left( \ln \left( \frac{1-p}{2p} \right) \right) = \begin{cases}
            -1, \, \text{ if } p\geq1/3, \\
            +1, \, \text{ if } p<1/3
        \end{cases},
    \end{equation}
    for all $\mathbf{x} \in \{ \pm 1 \}^{d+1}$ with $x_{1}x_{d+1} = 1$.
    
    \textbf{Count of long sequences} 
    Let $z_1, \ldots, z_T \sim \mathcal{D}(p)$ and let $T_{\mathrm{long}} = \sum_{i = 1}^T \mathds{1} \left\{ |z_i| > d+2 \right\}$ be the count of ``long'' samples. By the definition of $\mathcal{D}(p)$, it is: $\mathbb{E} \left[ T_{\mathrm{long}} \right] = Tp$. By the multiplicative Chernoff bound, it holds:
    \begin{equation}
        \mathbb{P} \left[ T_{\mathrm{long}} \leq \frac{Tp}{2} \right] \leq e^{- \frac{Tp}{8}},
    \end{equation}
    hence with probability at least $1-\frac\delta2$, we have:
    \begin{equation}
        T_{\mathrm{long}} > \frac{Tp}{2},
    \end{equation}
    as long as $T > \frac{8}{p} \ln \frac{2}{\delta}$. If further, $T \geq \frac{2 \max_{l \in \{2, \ldots, d\} T_l}}{p}$, then with probability at least $1-\frac\delta2$, we have: $T_{\mathrm{long}} > T_l$ for all $l \in \{2, \ldots, d\}$.
    
    \textbf{Positions 2 to $d$} 
    From Corollary~\ref{cor:pos_l}, after $T_{\mathrm{long}} \geq T_l\left(\frac{\varepsilon}{d+1}, \frac{\delta}{2(d+1)} \right)$ iterations for $\lambda_l = \tilde{\Theta}\left(\frac{d}{\delta^{1/3} T_l^{1/3}}\right)$, SGD returns $\bar{\mathbf{w}}_l$ such that, with probability less than $\delta/2d$:
    \begin{equation}
        \begin{split}    
        \left| \mathbb{P} \left[ \tilde{h}_{l}(\mathbf{x};\bar{\mathbf{w}}_{l}) = \prod_{i = 1}^l x_i \middle | \tilde{h}_1 (\mathbf{x}; \Bar{\mathbf{w}}_1) = x_1, \ldots, \tilde{h}_{l-1} (\mathbf{x}; \Bar{\mathbf{w}}_{l-1}) = x_{l-1} \right] - 1 \right| & \geq \frac{\varepsilon}{d+1},
        \end{split}
    \end{equation}
    and, since $\frac{\varepsilon}{d+1} < \frac12$:
    \begin{equation}\label{eq:greedy_posl}
        \text{sgn}\left(\innerprod{\bar{\mathbf{w}}_l}{\phi_l\left(\mathbf{x} \right)}\right)  = \prod_{i = 1}^l x_i
    \end{equation}
    for all $\mathbf{x} \in \{ \pm 1 \}^{d+l-1}$ with $x_{d+1} = \prod_{i = 1}^{l-1} x_i$.
    
    \textbf{Union bound} 
    Let $E_{\mathrm{LONG}}$ denote the event that $T_{\mathrm{long}} > T_l$ for all $2 \leq l \leq d$ and denote by $P_i$ the event that the above guarantees hold for position $i$. Then, by union bound, we have:
    \begin{equation}
        \begin{split}
            \mathbb{P} \left[ E_{\mathrm{LONG}} \land P_1 \land P_{2a} \land P_2 \land \ldots P_d \right] & = \mathbb{P} \left[E_{\mathrm{LONG}} \right] \mathbb{P} \left[ P_1 \land P_{2a} \land P_2 \land \ldots P_d \middle | E_{\mathrm{LONG}} \right] \\
            & \geq \left(1 - \frac{\delta}{2}\right) \left( 1 - \mathbb{P} \left[ \exists i \in \{1, 2a, 2, \ldots, d\}: \neg P_i \right] \right) \\
            & \geq \left(1 - \frac{\delta}{2}\right) \left( 1 - \sum_{i \in \{1, 2a, 2, \ldots, d\}} \frac{\delta}{2(d+1)} \right) \geq 1 - \delta.
        \end{split}
    \end{equation}
    Thus, with probability at least $1-\delta$, SGD returns a hypothesis $h$ whose probability of correct, long and short, sequences is:
    \begin{equation}
        \begin{split}
            \pi_h & \left( \left( x_1, x_1x_2, \ldots, \prod_{i = 1}^d x_i, \code{<EOS>} \right) \middle | \mathbf{x} \right) = \\
            & =
            \mathbb{P} \left[ \tilde{h}_1 (\mathbf{x}; \mathbf{w}_1) = x_1 \right] \cdot \ldots \cdot \mathbb{P} \left[ \tilde{h}_{d}(\mathbf{x};\mathbf{w}_d) = \prod_{i = 1}^d x_i \middle | \tilde{h}_1 (\mathbf{x}; \Bar{\mathbf{w}}_1) = x_1, \ldots, \tilde{h}_{l-1} (\mathbf{x}; \Bar{\mathbf{w}}_{l-1}) = x_{l-1}  \right] \\
            & = \left( \frac{1+p}{2} + \xi_1(\mathbf{x}) \right) \left( \frac{2p}{1+p} + \xi_{2a}(\mathbf{x}) \right) \left(1 - \xi_2 (\mathbf{x}) \right) \ldots \left( 1 - \xi_d(\mathbf{x}) \right),
        \end{split}
    \end{equation}
    and:
    \begin{equation}
        \begin{split}
            & \pi_h \left( \left( \prod_{i = 1}^d x_i, \code{<EOS>} \right) \middle | \mathbf{x} \right) \\
            & \;\;\;= \mathbb{P} \left[ \tilde{h}_1 (\mathbf{x}; \Bar{\mathbf{w}}_1) = \prod_{i = 1}^d x_i \right] \mathbb{P} \left[ \tilde{h}_{2a}(\mathbf{x};\Bar{\thetab}_{2a}) = \code{<EOS>} \middle |  \tilde{h}_1 (\mathbf{x}; \Bar{\mathbf{w}}_1) = \prod_{i = 1}^d x_i \right] \\
            & \;\;\;= \left( \frac{1+p\prod_{i = 2}^{d} x_i}{2} + \xi_1(\mathbf{x}) \right) \left( \frac{1-p \prod_{i = 2}^{d} x_i}{1+p} + \xi_{2a}(\mathbf{x}) \right),
        \end{split}
    \end{equation}    
    where $|\xi_1(\mathbf{x})|, \ldots, |\xi_d(\mathbf{x})| \in O(\varepsilon/d)$ for all $\mathbf{x}$. In other words,
    \begin{equation}
        \left| \pi_h \left( \left( x_1, x_1x_2, \ldots, \prod_{i = 1}^d x_i, \code{<EOS>} \right) \middle | \mathbf{x} \right) - p \right| \lesssim \varepsilon.
    \end{equation}
    and
    \begin{equation}
        \left| \pi_h \left( \left( \prod_{i = 1}^d x_i, \code{<EOS>} \right) \middle | \mathbf{x} \right) - \frac{1-p}{2} \right| \lesssim \varepsilon.
    \end{equation}
    Furthermore, from eqs.~\eqref{eq:greedy_pos1}, \eqref{eq:greedy_pos2a}, \eqref{eq:greedy_posl}, for all $\mathbf{x} \in \{ \pm 1\}^d$, we have:
    \begin{itemize}
        \item $\hat{h}_1(\mathbf{x};\bar{\mathbf{w}}_1) =\mathrm{sgn} \left( \innerprod{\bar{\mathbf{w}}_1}{\mathbf{x}} \right)  = x_1$,
        \item \begin{equation}
            \begin{split}
                \text{sgn}\left(\innerprod{\bar{\mathbf{w}}_{2a}}{\phi_2\left(\mathbf{x}, \hat{h}^{(1)}(\mathbf{x}; \bar{\mathbf{w}}_1) \right)} + \bar{b}_{2a}\right) &  = \mathrm{sgn} \left( \ln \left( \frac{1-p}{2p} \right) \right) \\ & = \begin{cases}
                    -1, \, \text{ if } 1/2 > p \geq 1/3, \\
                    +1, \, \text{ if } p < 1/3
                \end{cases},
            \end{split}
        \end{equation}
    hence $$\hat{h}_{2a}(\mathbf{x}; \bar{\thetab}_{2a}) = \begin{cases}
            \code{<EOS>}, \, \text{ if } 1/2 > p \geq 1/3, \\
            \epsilon, \, \text{ if } p<1/3,
        \end{cases}$$ and
    \item $\hat{h}_{l}(\mathbf{x}; \bar{\mathbf{w}}_l) = \begin{cases}
            \prod_{i = 1}^l x_i ,\, \text{if } \hat{h}_{2a}(\mathbf{x}; \bar{\thetab}_{2a}) \neq \code{<EOS>}, \\
            \epsilon,\, \text{if } \hat{h}_{2a}(\mathbf{x}; \bar{\thetab}_{2a}) = \code{<EOS>},
        \end{cases}$ for $2 \leq l \leq d$.
    \end{itemize}
    This implies:
    \begin{equation}
        \hat{h}\left(\mathbf{x}; \{\Bar{\mathbf{w}}_1, \Bar{\thetab}_{2a}, \Bar{\mathbf{w}}_2, \ldots, \Bar{\mathbf{w}}_d \}\right) = \begin{cases}
            \left(x_1, \code{<EOS>}\right), \, \text{if } p < 1/3, \\
            \left(x_1, x_1x_2, \ldots, \prod_{i = 1}^d x_i, \code{<EOS>}\right), \,  \text{otherwise.}
        \end{cases}
    \end{equation}
\end{proof}

Theorem~\ref{thm:pretraining_main} in the main text follows as a corollary by Theorem~\ref{thm:pretrain_appendix} for $p = p_{\mathrm{cot}}$. See Figure~\ref{fig:varepsilon_min} for the functional form of the error term $\varepsilon(p)$. In particular, the functional form also explains the difficulties faced in training transformers for $p_{\mathrm{cot}} \approx 1/3$.

\begin{figure}
    \centering
    \includegraphics[scale=0.3]{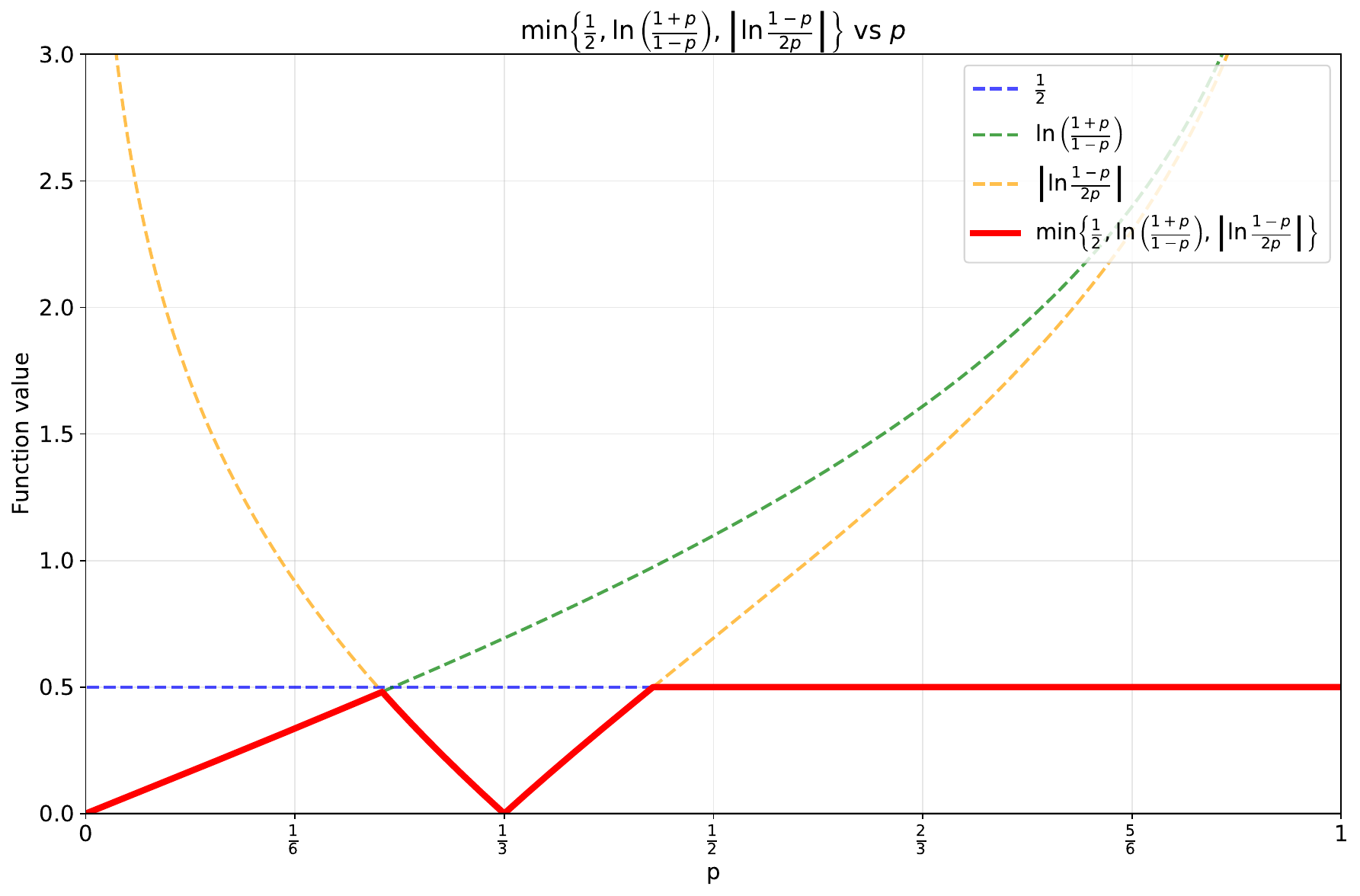}
    \caption{The functional form of the error term $\varepsilon$ vs $p \in (0, 1)$ in Theorem~\ref{thm:pretrain_appendix}.}
    \label{fig:varepsilon_min}
\end{figure}

\newpage

\subsection{Post-training}\label{app:post_train_proof}

We complement the analysis of the previous subsection to show now a positive learning result on the combined pre- \& post-training procedure. Precisely, we show that, when $p_{\mathrm{cot}} < 1/3$ which is the hard regime for pre-training, only $O \left(\log \frac{1-p_{\mathrm{cot}}}{p_{\mathrm{cot}}}\right)$ RL rounds suffice for the pre-trained model to reach perfect accuracy. Furthermore, we prove that the length of the answer grows in a predictable way.
\begin{theorem}\label{thm:posttrain_appendix}
    Let $d \geq 2, \, \delta \in (0, 1)$ and $p_{\mathrm{cot}} \in (0, 1/3)$. Let $\mathcal{D}(p_{\mathrm{cot}})$ be a (parameterized) distribution over sequences defined as in \eqref{eq:Dpcot_app}. Let $\varepsilon \leq c_0 \frac{p_{\mathrm{cot}}}{1-p_{\mathrm{cot}}}$ for some constant $c_0 > 0$. Consider pre-training (Algorithm~\ref{alg:autoregressive_SGD}) per-position iterations $T_1 = \tilde{O} \left( \frac{d^7}{\delta \varepsilon^4} \right), T_{2a} = \tilde{O} \left( \max\left\{\frac{d^{9}}{\delta \varepsilon^6},  \frac{d^{7}}{p^2\delta \varepsilon^4}  \right\} \right), T_2, \ldots, T_l = \tilde{O} \left( \frac{d^9}{\delta \varepsilon^6} \right)$, total iterations $T = \max \left\{ T_1, T_{2a}, \frac{2 \max_{l \in \{2, \ldots, d\}} T_l}{p}, \frac{8}{p_{\mathrm{cot}}} \ln \frac{2}{\delta} \right\}$ and with regularization coefficients $\lambda_1 = \tilde{\Theta}\left(\frac{d^{3/4}}{\delta^{1/4} T_1^{1/4}}\right), \lambda_{2a} = \tilde{\Theta}\left( \min\left\{ \frac{d^{\frac{3}{4}}p_{\mathrm{cot}}^{\frac{1}{2}} }{\delta^{1/4} T_{2a}^{1/4}}, \frac{d}{\delta^{1/3}T_{2a}^{1/3}} \right\} \right), \lambda_l = \tilde{\Theta}\left(\frac{d}{\delta^{1/3} T_l^{1/3}}\right)$ for all $2 \leq l \leq d$. Furthermore, consider post-training with the STaR algorithm solving Problem~\ref{eq:star_obj_linear} at each round and denote by $h^{(n)} \in \mathcal{H}_{\mathrm{AR}}^{\mathrm{Lin}}$ the model returned at the end of the $n$-th STaR round. Then, there exists $n^\star = O \left(\log \frac{1-p_{\mathrm{cot}}}{p_{\mathrm{cot}}}\right)$, such that, if:
    \begin{enumerate}
        \item SGD steps per-position per-STaR round are as follows: $T_1 = \tilde{O} \left( \frac{d^7}{\delta \varepsilon^4} \right)$, $T_{2a} = \tilde{O} \left( \max\left\{\frac{d^{9}}{\delta \varepsilon^6},  \frac{d^{7}}{p_{\mathrm{cot}}^2\delta \varepsilon^4}  \right\} \right), T_2, \ldots, T_l = \tilde{O} \left( \frac{d^9}{\delta \varepsilon^6} \right)$ and total iterations $T = \max \left\{ T_1, T_{2a}, \frac{2 \max_{l \in \{2, \ldots, d\}} T_l}{p_{\mathrm{cot}}}, \frac{8}{p_{\mathrm{cot}}} \ln \frac{2 \log \left(\frac{1-p_{\mathrm{cot}}}{p_\mathrm{cot}} \right)}{\delta} \right\}$, and
        \item Regularization strengths per-position per-STaR round are as follows: $\lambda_1 = \tilde{\Theta}\left(\frac{d^{3/4}}{\delta^{1/4} T_1^{1/4}}\right)$, $\lambda_{2a} = \tilde{\Theta}\left( \min\left\{ \frac{d^{\frac{3}{4}}p_{\mathrm{cot}}^{\frac{1}{2}} }{\delta^{1/4} T_{2a}^{1/4}}, \frac{d}{\delta^{1/3}T_{2a}^{1/3}} \right\} \right)$, $\lambda_l =  \tilde{\Theta}\left(\frac{d}{\delta^{1/3} T_l^{1/3}}\right)$ for all $2 \leq l \leq d$, 
    \end{enumerate}
    with probability at least $1 - \delta$ over sampling from $\mathcal{D}(p_{\mathrm{cot}})$ (during pre-training) and over sampling from models $\tilde{h}_{0}, \ldots, \tilde{h}_{n^\star}$ (during post-training), post-training returns $\Bar{\mathbf{w}}_1^{(1)}, \ldots, \Bar{\mathbf{w}}_1^{(n^\star)} \in \mathcal{H}_1, \Bar{\thetab}_{2a}^{(1)}, \ldots, \Bar{\thetab}_{2a}^{(n^\star)} \in \mathcal{H}_{2a}, \Bar{\mathbf{w}}_2^{(1)}, \ldots, \Bar{\mathbf{w}}_2^{(n^\star)} \in \mathcal{H}_2, \ldots, \Bar{\mathbf{w}}_d^{(1)}, \ldots, \Bar{\mathbf{w}}_d^{(n^\star)} \in \mathcal{H}_d$ that induce $h^{(1)}, \ldots, h^{(n^\star)} \in \mathcal{H}^{\mathrm{Lin}}_{\mathrm{AR}}: \mathcal{X} \mapsto \mathcal{Y}$ for which for all $\mathbf{x} \in \mathcal{X}$ it holds:
    \begin{equation}
        \left| \pi_{h^{(n)}} \left( \left( x_1, x_1x_2, \ldots, \prod_{i = 1}^d x_i, \code{<EOS>} \right) \middle| \mathbf{x} \right) - q_n \right| \lesssim \varepsilon,
    \end{equation}
    where $\left| q_{n} - p_{n} \right| \lesssim 2^n \varepsilon$ and $p_{n+1} = \frac{2p_{n}}{1+p_n}$ for all $n \leq n^\star$ with $p_0 = p_{\mathrm{cot}}$, and also:
    \begin{equation}
        \hat{h}^{(n^\star)} \left(\mathbf{x}; \left\{\Bar{\mathbf{w}}_1^{(n^\star)}, \Bar{\thetab}_{2a}^{(n^\star)}, \Bar{\mathbf{w}}_2^{(n^\star)}, \ldots, \Bar{\mathbf{w}}_d^{(n^\star)} \right\} \right) = \left(x_1, x_1x_2, \ldots, \prod_{i = 1}^d x_i, \code{<EOS>}\right).
    \end{equation}
\end{theorem}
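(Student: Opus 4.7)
The plan is to reduce each STaR round to an instance of pre-training on a reweighted version of the mixture distribution, and then iterate Theorem~\ref{thm:pretrain_appendix}. By Theorem~\ref{thm:pretrain_appendix} applied to the pre-training phase, the model $h^{(0)} := h_{\mathrm{pre}}$ generates, for every input $\mathbf{x}$, a long correct sequence with probability $p_0 + O(\varepsilon)$ and a short correct sequence with probability $(1-p_0)/2 + O(\varepsilon)$, where $p_0 = p_{\mathrm{cot}}$. Because $r_{\mathrm{cot}}$ accepts exactly these two kinds of sequences, conditioning on $r_{\mathrm{cot}} = 1$ produces a distribution that is $O(\varepsilon)$-close in total variation to $\mathcal{D}(q_1)$, where $q_1 = \frac{p_0 + O(\varepsilon)}{p_0 + (1-p_0)/2 + O(\varepsilon)} = \frac{2p_0}{1+p_0} + O(\varepsilon) = p_1 + O(\varepsilon)$ with $p_1 := \frac{2p_0}{1+p_0}$. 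Since the STaR objective~\eqref{eq:star_obj_linear} is precisely the next-token-prediction loss on the filtered distribution, and since each STaR round re-initializes the model at the origin, the first round amounts to running the pre-training procedure of Algorithm~\ref{alg:autoregressive_SGD} on $\mathcal{D}(q_1)$.

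Iterating this reduction inductively yields $q_{n+1} = f(q_n) + O(\varepsilon)$ with $f(p) := \frac{2p}{1+p}$, where $q_n$ is the effective mixture weight at round $n$. The noiseless trajectory $p_{n+1} = f(p_n)$ linearizes under the substitution $r_n := p_n/(1-p_n)$ into $r_{n+1} = 2 r_n$, giving $r_n = 2^n p_{\mathrm{cot}}/(1-p_{\mathrm{cot}})$. In particular, $p_n > 1/3$ (equivalently $r_n > 1/2$) is reached for the first time at $n^\star = \left\lceil \log_2 \frac{1-p_{\mathrm{cot}}}{2p_{\mathrm{cot}}} \right\rceil + 1 = O\!\left( \log \frac{1-p_{\mathrm{cot}}}{p_{\mathrm{cot}}} \right)$, and at that step $p_{n^\star} \in (1/3, 2/3) \subset (0, 3/4)$, so Theorem~\ref{thm:pretrain_appendix} remains applicable throughout. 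Provided $q_{n^\star}$ stays close enough to $p_{n^\star}$ that $q_{n^\star} > 1/3$, the ``greedy decoding succeeds'' branch of Theorem~\ref{thm:pretrain_appendix} fires in round $n^\star$ and produces $\hat{h}^{(n^\star)}(\mathbf{x}) = \left(x_1, x_1 x_2, \ldots, \prod_{i=1}^d x_i, \code{<EOS>}\right)$.

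The main technical obstacle is controlling the error accumulation across the $n^\star$ rounds. Since $|f'(p)| \leq 2$ on $[0, 1)$, the noisy recursion gives $|q_n - p_n| \leq 2 |q_{n-1} - p_{n-1}| + C\varepsilon$, and hence $|q_n - p_n| \lesssim 2^n \varepsilon$, matching the bound claimed in the theorem statement. To guarantee that (i) each $q_n$ stays inside the interval $(0, 3/4)$ required by Theorem~\ref{thm:pretrain_appendix} and (ii) the final $q_{n^\star}$ sits strictly above $1/3$ with room to spare (so the greedy conclusion is not overturned by the $O(\varepsilon)$ noise in $\pi_{h^{(n^\star)}}$), one needs $2^{n^\star}\varepsilon$ to be small relative to $p_{\mathrm{cot}}/(1-p_{\mathrm{cot}})$; since $2^{n^\star}$ scales like $(1-p_{\mathrm{cot}})/p_{\mathrm{cot}}$, this is exactly the hypothesis $\varepsilon \leq c_0 \, p_{\mathrm{cot}}/(1-p_{\mathrm{cot}})$ with $c_0$ small enough. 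Finally, a union bound over the pre-training run and the $n^\star$ post-training rounds (using failure probability $\delta/(n^\star+1)$ per invocation of Theorem~\ref{thm:pretrain_appendix}) yields the stated overall failure probability $\delta$; the extra $\log n^\star = O(\log\log((1-p_{\mathrm{cot}})/p_{\mathrm{cot}}))$ factor is absorbed into the $\ln\!\left(\log((1-p_{\mathrm{cot}})/p_{\mathrm{cot}})/\delta\right)$ term inside the prescribed iteration count $T$.
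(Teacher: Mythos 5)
Your proposal is correct and follows essentially the same route as the paper's proof: reducing each STaR round to next-token prediction on a reweighted mixture $\mathcal{D}(q_n)$, iterating Theorem~\ref{thm:pretrain_appendix}, bounding the drift $|q_n - p_n| \lesssim 2^n \varepsilon$ via the Lipschitz constant of the update map, solving the noiseless recursion in closed form (your odds-ratio substitution $r_n = p_n/(1-p_n)$ is the same computation as the paper's $u_n = 1/p_n$), and union-bounding over the $O(\log\frac{1-p_{\mathrm{cot}}}{p_{\mathrm{cot}}})$ rounds. The only slight imprecision is the phrase ``$2^{n^\star}\varepsilon$ small relative to $p_{\mathrm{cot}}/(1-p_{\mathrm{cot}})$'' — what is actually needed (and what your conclusion $\varepsilon \leq c_0\, p_{\mathrm{cot}}/(1-p_{\mathrm{cot}})$ correctly delivers) is that $2^{n^\star}\varepsilon$ be below an absolute constant so that $q_{n^\star}$ clears $1/3$ and all $q_n$ remain in $(0,3/4)$.
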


\begin{proof}
The proof proceeds in two main steps. We first observe that the STaR objective for the $n$'th round can be re-written as next-token prediction with respect to a shifted version $\mathcal{D}(q_n)$ of the original distribution $\mathcal{D}(p_{\mathrm{cot}})$, where $0< q_n < 1$. We then bound the deviation of sequence $q_n$ from a ``noise-less'' sequence $p_n$ that describes the evolution of the proportion of long data in the data mix, and whose closed form is available.

From Theorem~\ref{thm:pretrain_appendix}, for $p=p_{\mathrm{
cot}}$ and $\delta = \delta/2$, with probability at least $1-\delta/2$, we obtain model $h^{(0)} \in \mathcal{H}_{\mathrm{AR}}^{\mathrm{Lin}}$, such that for all $\mathbf{x} \in \{ \pm 1\}$ it holds:
\begin{equation}
    \begin{split}    
        \left| \pi_{h^{(0)}} \left( \left( x_1, x_1x_2, \ldots, \prod_{i = 1}^d x_i, \code{<EOS>} \right) \middle | \mathbf{x} \right) - p_{\mathrm{cot}} \right| \lesssim \varepsilon, \\
        \left| \pi_{h^{(0)}} \left( \left( \prod_{i = 1}^d x_i, \code{<EOS>} \right) \middle | \mathbf{x} \right) - \frac{1-p_{\mathrm{cot}}}{2} \right| \lesssim \varepsilon,
    \end{split}
\end{equation}
and
\begin{equation}
    \hat{h}^{(0)}\left(\mathbf{x}; \{\Bar{\mathbf{w}}_1, \Bar{\thetab}_{2a}, \Bar{\mathbf{w}}_2, \ldots, \Bar{\mathbf{w}}_d \}\right) = 
        \left(x_1, \code{<EOS>}\right).
\end{equation}
The previous point-wise guarantee also implies:
\begin{equation}\label{eq:expectation_calibration_guarantee}
    \begin{split}    
        \left| \mathbb{E}_{\mathbf{x} \sim \mathrm{Rad}(1/2)^{\otimes d}} \left[ \pi_{h^{(0)}} \left( \left( x_1, x_1x_2, \ldots, \prod_{i = 1}^d x_i, \code{<EOS>} \right) \middle | \mathbf{x} \right) \right] - p_{\mathrm{cot}} \right| \lesssim \varepsilon, \\
        \left| \mathbb{E}_{\mathbf{x} \sim \mathrm{Rad}(1/2)^{\otimes d}} \left[ \pi_{h^{(0)}} \left( \left( \prod_{i = 1}^d x_i, \code{<EOS>} \right) \middle | \mathbf{x} \right) \right] - \frac{1-p_{\mathrm{cot}}}{2} \right| \lesssim \varepsilon,
    \end{split}
\end{equation}
Recall the form of the STaR problem for the first round:
\begin{equation}\label{eq:star_obj_linear_first_round}
    \begin{split}
        \min_{\mathbf{w}_1, \mathbf{w}_{2a}, b, \mathbf{w}_2, \ldots, \mathbf{w}_d} \mathcal{L} \equiv \mathbb{E}_{\substack{\mathbf{x} \sim \mathrm{Rad}(1/2)^{\otimes d}, \\ y \sim \pi_{h^{(0)}}(\cdot \mid \mathbf{x}) }} \left[ \mathds{1} \left\{ |y| = 2 \right\} \mathcal{L}_{\mathrm{short}} + \mathds{1}\left\{ |y| = d+1 \right\} \mathcal{L}_{\mathrm{long}} \middle | r_{\mathrm{cot}}(\mathbf{x}, y) = 1 \right]
    \end{split}
\end{equation}
where $\mathcal{L}_{\mathrm{short}} = l^{(1)}(\mathbf{w}_1, (\mathbf{x}, y_1)) + l^{(2a)}\left(\left(\mathbf{w}_{2a}, b\right), (\begin{bmatrix} \mathbf{x}, y_1 \end{bmatrix}, \tilde{y}_2)\right)$ and $\mathcal{L}_{\mathrm{long}} =  l^{(1)}(\mathbf{w}_1, (\mathbf{x}, y_1)) + l^{(2a)}\left(\{\mathbf{w}_{2a}, b\}, (\left( \mathbf{x}, y_1 \right), \tilde{y}_2)\right) + \ldots + l^{(d)}(\mathbf{w}_d, (\left( \mathbf{x}, y_1, \ldots, y_{d-1} \right), y_d))$.
Note that, with probability at least $1 - \delta/2$, the objective can be re-written as:
\begin{equation}
    \begin{split}
    \mathcal{L} & = \mathbb{P}_{\mathbf{x}, y} \left[ y = \left( x_1, x_1 x_2, \ldots, \prod_{i = 1}^d x_i, \code{<EOS>} \right) \middle | r_{\mathrm{cot}}(\mathbf{x}, y) = 1 \right] \mathbb{E}_{\substack{\mathbf{x} \sim \mathrm{Rad}(1/2)^{\otimes d}, \\ y = \left(x_1, x_1 x_2, \ldots, \prod_{i=1}^d x_i, \code{<EOS>} \right)}} \left[ \mathcal{L}_{\mathrm{long}} \right] \\
    & \;\;\;\;\; + \mathbb{P}_{\mathbf{x}, y} \left[ y = \left( \prod_{i = 1}^d x_i, \code{<EOS>} \right) \middle | r_{\mathrm{cot}}(\mathbf{x}, y) = 1 \right] \mathbb{E}_{\substack{\mathbf{x} \sim \mathrm{Rad}(1/2)^{\otimes d}, \\ y = \left(x_1, x_1 x_2, \ldots, \prod_{i=1}^d x_i, \code{<EOS>} \right)}} \left[ \mathcal{L}_{\mathrm{short}} \right] \\
    & = \frac{\mathbb{E}_{\mathbf{x}} \left[ \pi_{h^{(0)}} \left( \left(x_1, x_1 x_2, \ldots, \prod_{i=1}^d x_i, \code{<EOS>} \right) \middle | \mathbf{x}\right)\right]}{\mathbb{E}_{\mathbf{x}} \left[ \pi_{h^{(0)}} \left( \left(x_1, x_1 x_2, \ldots, \prod_{i=1}^d x_i, \code{<EOS>} \right) \middle | \mathbf{x}\right)\right] + \mathbb{E}_{\mathbf{x}} \left[ \pi_{h^{(0)}} \left( \left(\prod_{i=1}^d x_i, \code{<EOS>} \right) \middle | \mathbf{x}\right)\right]} \mathbb{E}_{\substack{\mathbf{x} \sim \mathrm{Rad}(1/2)^{\otimes d}, \\ y = \left(x_1, x_1 x_2, \ldots, \prod_{i=1}^d x_i, \code{<EOS>} \right)}} \left[ \mathcal{L}_{\mathrm{long}} \right] \\
    & \;\;\;\;\; + \frac{\mathbb{E}_{\mathbf{x}} \left[ \pi_{h^{(0)}} \left( \left(\prod_{i=1}^d x_i, \code{<EOS>} \right) \middle | \mathbf{x}\right)\right]}{\mathbb{E}_{\mathbf{x}} \left[ \pi_{h^{(0)}} \left( \left(x_1, x_1 x_2, \ldots, \prod_{i=1}^d x_i, \code{<EOS>} \right) \middle | \mathbf{x}\right)\right] + \mathbb{E}_{\mathbf{x}} \left[ \pi_{h^{(0)}} \left( \left(\prod_{i=1}^d x_i, \code{<EOS>} \right) \middle | \mathbf{x}\right)\right]} \mathbb{E}_{\substack{\mathbf{x} \sim \mathrm{Rad}(1/2)^{\otimes d}, \\ y = \left(\prod_{i=1}^d x_i, \code{<EOS>} \right)}} \left[ \mathcal{L}_{\mathrm{short}} \right] \\
    & = \frac{p_0 + \zeta_0}{\frac{p_0+1}{2} + \zeta_0 + \xi_0} \mathbb{E}_{\substack{\mathbf{x} \sim \mathrm{Rad}(1/2)^{\otimes d}, \\ y = \left(x_1, x_1 x_2, \ldots, \prod_{i=1}^d x_i, \code{<EOS>} \right)}} \left[ \mathcal{L}_{\mathrm{long}} \right] \\ & \;\;\;\;\; + \left( 1 - \frac{p_0 + \zeta_0}{\frac{p_0+1}{2} + \zeta_0 + \xi_0} \right) \mathbb{E}_{\substack{\mathbf{x} \sim \mathrm{Rad}(1/2)^{\otimes d}, \\ y = \left(\prod_{i=1}^d x_i, \code{<EOS>} \right)}} \left[ \mathcal{L}_{\mathrm{short}} \right]  \\
    & = \mathbb{E}_{(\mathbf{x}, y) \sim \mathcal{D}(q_1)} \left[ \mathds{1} \left\{ |y| = 2 \right\} \mathcal{L}_{\mathrm{short}} + \mathds{1}\left\{ |y| = d+1 \right\} \mathcal{L}_{\mathrm{long}}\right],
    \end{split}
\end{equation}
where $q_1 := \frac{p_0 + \zeta_0}{\frac{p_0+1}{2} + \zeta_0 + \xi_0}$ with $|\zeta_0|, |\xi_0| \lesssim \varepsilon$ (from eq.~\eqref{eq:expectation_calibration_guarantee}) and the first and second inequality holds from the law of total expectation and the definition of the conditional probability.
In other words, we showed that the reinforcement learning risk with reward $r_{\mathrm{cot}}$ during the first round of STaR is equal to a next-token prediction risk, where the distribution is the mixture of long and short strings but with shifted mixture weight $q_1$. Thus, from Theorem~\ref{thm:pretrain_appendix} for $p = q_1$ and $\delta = \delta_1 \in (0, 1)$ with probability $\left(1 - \delta_1\right) \left( 1 - \delta/2 \right)$, SGD returns model $h^{(1)} \in \mathcal{H}_{\mathrm{AR}}^{\mathrm{Lin}}$, such that:
\begin{equation}
    \begin{split}    
        \left| \pi_{h^{(1)}} \left( \left( x_1, x_1x_2, \ldots, \prod_{i = 1}^d x_i, \code{<EOS>} \right) \middle | \mathbf{x} \right) - q_1 \right| \lesssim \varepsilon, \\
        \left| \pi_{h^{(1)}} \left( \left( \prod_{i = 1}^d x_i, \code{<EOS>} \right) \middle | \mathbf{x} \right) - \frac{1-q_1}{2} \right| \lesssim \varepsilon,
    \end{split}
\end{equation}
and
\begin{equation}
    \hat{h}^{(1)}\left(\mathbf{x}; \{\Bar{\mathbf{w}}_1, \Bar{\thetab}_{2a}, \Bar{\mathbf{w}}_2, \ldots, \Bar{\mathbf{w}}_d \}\right) = 
        \begin{cases}
            \left(x_1, \code{<EOS>}\right), \, \text{if } q_1 < 1/3, \\
            \left(x_1, x_1x_2, \ldots, \prod_{i = 1}^d x_i, \code{<EOS>}\right), \,  \text{otherwise.}
        \end{cases}.
\end{equation}

Therefore, by induction, as long as $q_n < 1/2$, invoking Theorem~\ref{thm:pretraining_main} $n$ times for $\delta=\delta_i \in (0, 1)$ and $p = q_i$, with probability at least $\prod_{i = 0}^n \left( 1- \delta_i \right) \geq 1 - \sum_{i = 0}^n \delta_i$, post-training returns model $h^{(n)} \in \mathcal{H}_{\mathrm{AR}}^{\mathrm{Lin}}$ such that:
\begin{equation}
    \begin{split}    
        \left| \pi_{h^{(n)}} \left( \left( x_1, x_1x_2, \ldots, \prod_{i = 1}^d x_i, \code{<EOS>} \right) \middle | \mathbf{x} \right) - q_n \right| \lesssim \varepsilon, \\
        \left| \pi_{h^{(n)}} \left( \left( \prod_{i = 1}^d x_i, \code{<EOS>} \right) \middle | \mathbf{x} \right) - \frac{1-q_n}{2} \right| \lesssim \varepsilon,
    \end{split}
\end{equation}
and
\begin{equation}
    \hat{h}^{(n)}\left(\mathbf{x}; \{\Bar{\mathbf{w}}_1, \Bar{\thetab}_{2a}, \Bar{\mathbf{w}}_2, \ldots, \Bar{\mathbf{w}}_d \}\right) = 
        \begin{cases}
            \left(x_1, \code{<EOS>}\right), \, \text{if } q_n < 1/3, \\
            \left(x_1, x_1x_2, \ldots, \prod_{i = 1}^d x_i, \code{<EOS>}\right), \,  \text{otherwise,}
        \end{cases}
\end{equation}
where $q_{n+1} = \frac{2\left(q_n + \zeta_n\right)}{q_{n} + 1 + 2 \left( \zeta_n + \xi_n \right)}$ with $|\zeta_n|, |\xi_n| \lesssim \varepsilon$ with $q_0 = p_0$.

We now analyze the perturbed sequence $q_n$. Let $f(q_n, \zeta_n, \xi_n) : = q_{n+1} = \frac{2 (q_n+ \zeta_n)}{q_n + 1 + 2(\zeta_n + \xi_n)}$. We Taylor expand $f$ around $(q_n, 0, 0)$:
\begin{equation}
    \begin{split}    
        f(q_n, \zeta_n, \xi_n) & = f(q_n, 0, 0) + \zeta_n \frac{\partial f}{\partial \zeta_n}\Biggr|_{(q_n, 0, 0)} + \xi_n \frac{\partial f}{\partial \xi_n}\Biggr|_{(q_n, 0, 0)} + R_f(\mathbf{u}, \zeta_n, \xi_n) \\
        & = \frac{2 q_n}{q_n + 1} + \zeta_n \frac{2-2 q_n}{(q_n+1)^2} + \xi_n \frac{-4q_n}{(q_n+1)^2} + R_{f}(\mathbf{u}, \zeta_n, \xi_n) \\
        & = \frac{2 q_n}{q_n + 1} + \frac{2\zeta_n -2 \zeta_n q_n - 4\xi_n q_n}{(q_n+1)^2} + R_{f}(\mathbf{u}, \zeta_n, \xi_n),
    \end{split}
\end{equation}
where we used the Lagrange form of the remainder: $R_{f}(\mathbf{u}, \zeta_n, \xi_n) = \frac{1}{2} H(0, \mathbf{u}) \left( \zeta_n^2 + \xi_n^2\right)$, where $\mathbf{u} \in \mathbb{R}^2$ such that: $|u_1| < |\zeta_n|$ and $|u_2| < |\xi_n|$ and $H(q, \zeta_n, \xi_n)$ is the Hessian of $f$ evaluated at $(q, \zeta_n, \xi_n)$. 
We now bound the deviation of the perturbed sequence $q_{n+1}$ from the un-perturbed one $p_{n+1}$:
\begin{equation}
    \begin{split}
        \left| q_{n+1} - p_{n+1} \right| & = \left| f(q_n, \zeta_n, \xi_n) - \frac{2p_n}{p_n+1} \right| \\
        & = \left| \frac{2 q_n}{q_n + 1} + \frac{2\zeta_n -2 \zeta_n q_n - 4\xi_n q_n}{(q_n+1)^2} + R_{f}(\mathbf{u}, \zeta_n, \xi_n) - \frac{2p_n}{p_n+1} \right| \\
        & \leq \left| \frac{2 q_n}{q_n + 1} - \frac{2p_n}{p_n+1} \right| + \left| \frac{2\zeta_n -2 \zeta_n q_n - 4\xi_n q_n}{(q_n+1)^2} \right| + \left| R_{f}(\mathbf{u}, \zeta_n, \xi_n) \right| \\
        & \leq 2 \left| q_n - p_n \right| + \left| \frac{2\zeta_n -2 \zeta_n q_n - 4\xi_n q_n}{(q_n+1)^2} \right| + \left| R_{f}(\mathbf{u}, \zeta_n, \xi_n) \right|, \, \text{provided } q_n>0.
    \end{split}
\end{equation}
Since $q_n < 1$ and $|\zeta_n|, |\xi_n| \in O(\varepsilon)$ and $\left| R_{f}(\mathbf{u}, \zeta_n, \xi_n) \right| \lesssim \varepsilon^2$, if we denote the error sequence by $e_n = \left| q_n - p_n \right|$, then: 
\begin{equation}
    e_{n+1} - 2 e_n \lesssim \varepsilon.
\end{equation}
Since $e_0 = 0$, this implies that the error at step $n$ is bounded as:
\begin{equation}
    e_{n} \lesssim 2^n \varepsilon.
\end{equation}
We want to find an integer $n^\star$ such that model $h^{(n^\star)}$ generates long answers. For this, it suffices to have $q_n > 1/3$, or equivalently:
\begin{equation}\label{eq:hitting_time_lower_bound}
    p_{n^\star} - 2^{n^\star} C^\prime \varepsilon > \frac{1}{3}.
\end{equation}
We show that the unperturbed sequence $p_n=\frac{2p_{n-1}}{p_{n-1}+1}$ admits a closed form solution. Let $u_n = \frac{1}{p_n}$, then:
\begin{equation}
    \begin{split}
        u_n & = \frac{ u_{n-1} +1 }{2} = 1+\frac{u_0 - 1}{2^n},
    \end{split}
\end{equation}
or, for the original sequence $p_n$:
\begin{equation}
    \begin{split}
        p_n & = \left(1 + \frac{1}{2^n} \left( \frac{1}{p_0} - 1 \right) \right)^{-1} = \frac{2^n p_0}{2^np_0 + 1 - p_0}  = \frac{p_0}{p_0 + \left(1-p_0 \right) 2^{-n}}.
    \end{split}
\end{equation}
Returning back to eq.~\eqref{eq:hitting_time_lower_bound}, we seek $n^\star \geq 1$ such that:
\begin{equation}
    \frac{p_0}{p_0 + \left(1-p_0 \right) 2^{-n^\star}} - 2^{n^\star} C^\prime \varepsilon > \frac{1}{3}.
\end{equation}
Assume that $2^{n^\star} C^\prime \varepsilon < \frac{1}{15}$ for some $\varepsilon$ to be specified later. Then, it suffices to have:
\begin{equation}
    \frac{p_0}{p_0 + \left(1-p_0 \right) 2^{-n^\star}} - \frac{1}{15} > \frac{1}{3},
\end{equation}
or
\begin{equation}
    n^\star > \log \left( \frac{2(1-p_0)}{3p_0} \right).
\end{equation}
In that case, the error parameter $\varepsilon$ needs to be as small as:
\begin{equation}
    \varepsilon < \frac{1}{C} \frac{p_0}{1-p_0},
\end{equation}
for some $C > 0$. Finally, we pick the smallest $n^\star$ such that $q_{n^\star} > 1/3 + \mu$, where $\mu \geq \omega(e^{-d})$\footnote{We require $q_{n^\star}$ to not be exponentially close to 1/3, so that we do not pay $1/\varepsilon$ sample complexity for $\varepsilon < (d+1) \ln \frac{2 q_{n^\star}}{1-q_{n^\star}} \asymp (d+1) \left(q_{n^\star} - 1/3\right)$}. Note that $p_n < 1/3 \implies p_{n+1} < 1/2$, hence Theorem~\ref{thm:pretrain_appendix} can be invoked for $q_{n^\star} < 3/4$ if $C$ is sufficiently small. We set $\delta_0 = \delta / 2$ and $\delta_1, \ldots, \delta_{n^\star} = \frac{\delta}{2n^\star}$\footnote{Note that for $\delta_i = \frac{\delta}{2n^\star}$ the regularization terms and SGD iterations grow together with a factor $O\left( \ln \frac{1-p_{\mathrm{cot}}}{p_{\mathrm{cot}}} \right)$, but this gets suppressed by the $\tilde{O} (\cdot)$ notation.}, then with probability at least $1-\delta$, we have:
\begin{equation}
    \hat{h}^{(n^\star)} \left(\mathbf{x}; \left\{\Bar{\mathbf{w}}_1^{(n^\star)}, \Bar{\thetab}_{2a}^{(n^\star)}, \Bar{\mathbf{w}}_2^{(n^\star)}, \ldots, \Bar{\mathbf{w}}_d^{(n^\star)} \right\} \right) = \left(x_1, x_1x_2, \ldots, \prod_{i = 1}^d x_i, \code{<EOS>}\right).
\end{equation}
\end{proof}

\begin{remark}
    Note that the guarantees of Theorem~\ref{thm:posttrain_appendix} for the number of SGD iterations required per STaR round are very pessimistic: 1) they depend on the original mixture weight $p_{\mathrm{cot}}$ rather than the per-round updated weight $q_n$, and 2) they assume that the models are re-initialized at the origin prior to each STaR round, whereas of course in practice we continue fine-tuning the previously obtained model which speeds up convergence.
\end{remark}

\begin{remark}
    One can also analyze an (almost offline) version of REINFORCE algorithm with no reward normalization, instead of STaR, by noting that its loss also corresponds to the next-token prediction objective but scaled by a different per-round coefficient.
\end{remark}

In particular, if the proportion of long data in the data mixture is not exponentially small, i.e., there exists constant $\kappa \in \mathbb{N}$ such that: $p_{\mathrm{cot}} \in \Omega(d^{- \kappa})$, then by following the previous pre- \& post-training recipe on data coming from $\mathcal{D}\left( p_{\mathrm cot} \right)$, we can obtain a model that emits a long, correct sequence and predicts the parity of $d$ bits after $O \left( \mathrm{poly}(d) \right)$ SGD iterations. 

\end{document}